\documentclass{article}

\usepackage{microtype}
\usepackage{graphicx}
\usepackage{subcaption}
\usepackage{booktabs} % for professional tables

\usepackage{hyperref}

\usepackage[accepted]{icml2026}

\usepackage{amsmath}
\usepackage{amssymb}
\usepackage{mathtools}
\usepackage{amsthm}
\usepackage{bbm,bm,enumerate}
\usepackage{enumitem}

\usepackage[capitalize,noabbrev]{cleveref}
\usepackage[textsize=tiny]{todonotes}
\usepackage{xurl}   % 让 \url 更容易断行（强烈推荐）
\usepackage{url}    % 有些模板需要，xurl 会增强 url 的断行
\theoremstyle{plain}
\newtheorem{theorem}{Theorem}[section]
\newtheorem{proposition}[theorem]{Proposition}
\newtheorem{lemma}[theorem]{Lemma}

\theoremstyle{definition}
\newtheorem{definition}[theorem]{Definition}

\theoremstyle{remark}
\newtheorem{remark}[theorem]{Remark}

\newcommand{\T}{\intercal}

\newcommand{\diag}{\operatorname{diag}}
\newcommand{\Var}{\operatorname{Var}}

\icmltitlerunning{Submission and Formatting Instructions for ICML 2026}
\begin{document}

% \maketitle
\twocolumn[
  \icmltitle{Focus and Dilution: The Multi-stage Learning Process of Attention}

  % It is OKAY to include author information, even for blind submissions: the
  % style file will automatically remove it for you unless you've provided
  % the [accepted] option to the icml2026 package.

  % List of affiliations: The first argument should be a (short) identifier you
  % will use later to specify author affiliations Academic affiliations
  % should list Department, University, City, Region, Country Industry
  % affiliations should list Company, City, Region, Country

  % You can specify symbols, otherwise they are numbered in order. Ideally, you
  % should not use this facility. Affiliations will be numbered in order of
  % appearance and this is the preferred way.
  \icmlsetsymbol{equal}{*}

  \begin{icmlauthorlist}
    \icmlauthor{Zheng-An Chen}{equal,math}
    \icmlauthor{Pengxiao Lin}{equal,math,ins}
    \icmlauthor{Zhi-Qin John Xu}{math,ins,jigou,Sere}
    \icmlauthor{Tao Luo}{math,ins,jigou,CMA}
    % \icmlauthor{Firstname5 Lastname5}{yyy}
    % \icmlauthor{Firstname6 Lastname6}{sch,yyy,comp}
    % \icmlauthor{Firstname7 Lastname7}{comp}
    % %\icmlauthor{}{sch}
    % \icmlauthor{Firstname8 Lastname8}{sch}
    % \icmlauthor{Firstname8 Lastname8}{yyy,comp}
    %\icmlauthor{}{sch}
    %\icmlauthor{}{sch}
  \end{icmlauthorlist}

  \icmlaffiliation{math}{School of Mathematical Sciences, Shanghai Jiao Tong University.}
  \icmlaffiliation{ins}{Institute of Natural Sciences, Shanghai Jiao Tong University.}
  \icmlaffiliation{jigou}{MOE-LSC, Shanghai Jiao Tong University.}
  \icmlaffiliation{CMA}{CMA-Shanghai, Shanghai Jiao Tong University}
  \icmlaffiliation{Sere}{Shanghai Seres Information Technology Co., Ltd, Shanghai 200040, China.}

  % \icmlaffiliation{sch}{School of ZZZ, Institute of WWW, Location, Country}

  \icmlcorrespondingauthor{Zhi-Qin John Xu}{xuzhiqin@sjtu.edu.cn}
  \icmlcorrespondingauthor{Tao Luo}{luotao41@sjtu.edu.cn}

  % You may provide any keywords that you find helpful for describing your
  % paper; these are used to populate the "keywords" metadata in the PDF but
  % will not be shown in the document
  \icmlkeywords{Attention mechanism, Training dynamics, Multi-stage analysis, Condensation}

  \vskip 0.3in
]
\printAffiliationsAndNotice{\icmlEqualContribution.}
% \printAffiliationsAndNotice{* Equal contribution.}
% this must go after the closing bracket ] following \twocolumn[ ...

% This command actually creates the footnote in the first column listing the
% affiliations and the copyright notice. The command takes one argument, which
% is text to display at the start of the footnote. The \icmlEqualContribution
% command is standard text for equal contribution. Remove it (just {}) if you
% do not need this facility.

% Use ONE of the following lines. DO NOT remove the command.
% If you have no special notice, KEEP empty braces:
% \printAffiliationsAndNotice{}  % no special notice (required even if empty)
% Or, if applicable, use the standard equal contribution text:
% \printAffiliationsAndNotice{\icmlEqualContribution}

\begin{abstract}
% Transformer attention undergoes phases of focusing and diluting during training, yet a mechanistic explanation rooted in the dynamics of embedding and attention coupling is still limited.
% We provide a stage-wise gradient-flow analysis showing that even a minimal one-layer Transformer trained on Markov next-token prediction follows a recurrent focus--dilution learning cycle. From small initialization, embeddings and output projections rapidly condense to rank one along directions determined by the stationary distribution, while attention parameters stay small. The trajectory then reaches a second critical point where a single explicit unstable mode in the attention subsystem triggers exponential growth of attention parameters and creates a frequency-driven focus toward high frequency tokens. Beyond linearization, restricting the flow to an invariant rank-one manifold yields a closed reduced system that exposes an embedding mass-redistribution mechanism responsible for dilution. Finally, slight asymmetry among low-frequency tokens breaks a degenerate critical point through Lyapunov--Schmidt reduction, opening new embedding directions that initiate subsequent cycles. Experiments on synthetic Markov data and on WikiText/TinyStories corroborate the predicted stages.
Transformer-based models have achieved remarkable success across a wide range of domains, yet our understanding of their training dynamics remains limited. In this work, we identify a recurrent focus–dilution cycle in attention learning and provide a rigorous explanation in a one-layer Transformer setting for Markovian data via gradient-flow analysis. Using stage-wise linearization around critical points, we show that a single focus–dilution cycle can be decomposed into a sequence of distinct stages. First, embedding and projection rapidly condense to a rank-one structure, while attention parameters remain effectively frozen. Then, the attention parameters begin to increase, inducing a frequency-driven focus toward high-frequency tokens. As attention continues to evolve, it generates next-order perturbations in embeddings, leading to a mass-redistribution mechanism that progressively dilutes this focus. Finally, small asymmetries among low-frequency tokens lift a degenerate critical point, opening new embedding directions and initiating the next cycle. Experiments on synthetic Markovian data as well as WikiText and TinyStories corroborate the predicted stages and cyclical dynamics.
\end{abstract}
\section{Introduction}

Transformer models \citep{vaswani2017attention} have become the dominant architecture for sequence modeling. While their approximation power is now well understood in a variety of regimes \citep{pérez2018on,Yun2020Are,yun2020n}, we still lack a mechanistic theory for how attention itself evolves during training. Most existing analyses gain tractability by introducing additional technical condition, such as  reparameterizations \citep{zhang2024trained} or proxy dynamics \citep{tarzanagh2023transformers}, which may obscure the native coupling among embeddings, projection, and attention.
Moreover, recent work suggests that Transformer training often undergoes multiple stages \citep{chang-etal-2024-characterizing, varre2025learning}, and that attention can shift from highly concentrated to more diffuse patterns \citep{tian2024joma}. These observations point to a need for a dynamical picture that remains faithful to the coupled dynamics and can explain both attention amplification and its subsequent dissipation.

In this work, we combine theory and experiments to show that attention can be understood as a cyclical learning process. Within each cycle, attention first amplifies a frequency-driven preference over tokens (\emph{focus}),
and then gradually weakens this preference as the embedding structure adapts (\emph{dilution}).
We identify the dynamical origin of each stage and explain how the interaction between embeddings and attention progressively decomposes the learning problem.

To keep the analysis tractable while preserving essential sequential structure, we study population gradient flow for a one-layer Transformer trained by cross-entropy on Markov data \citep{chang-etal-2024-characterizing,makkuva2024local,makkuva2025attention}.
Our explanation is stage-wise and built on linearizations around critical points.
Under small initialization, the trajectory is first governed by the linearization near the origin, which forces a rank-one condensation of the embedding and projection components, consistent with the condensation phenomenon in \citet{chen2025from}.
We further observed that the condensed direction is explicitly determined by the stationary distribution.
In contrast, the attention parameters remain small in this initial stage because the leading-order driving term for $(W_Q,W_K)$ vanishes at the origin.

After condensation, the trajectory follows the same low-rank ray until it reaches a second critical point.
We show that this point is generically a saddle: the Jacobian admits a local block decomposition into
(i) a contracting embedding/output subsystem and (ii) an attention subsystem with a single unstable mode.
Consequently, once the trajectory enters this neighborhood, $(W_Q,W_K)$ align with the unstable eigendirection and grow exponentially,
and attention acquires a bias toward high-frequency tokens, initiating the focus phase.

Going beyond the focus phase requires a more refined description than the local saddle analysis.
Once $(W_Q, W_K)$ align with the unstable direction, dynamics enters to a rank-one invariant manifold and induces a closed reduced system. This reduced flow exposes a mass-redistribution mechanism in the embeddings. As the attention amplitude evolves, it generates next-order perturbations, causing the embeddings of the main token and the remaining tokens to move in opposite directions. As a result, the earlier high-frequency focus is gradually weakened, leading to an attention dilution phase.

Finally, the model must learn new embedding directions that distinguish low-frequency tokens. However, we show that the training dynamics become trapped at a degenerate critical point on the rank-one manifold, where the driving forces vanish and no new directions can emerge. To model realistic asymmetries and eliminate degeneration, we introduce a small symmetry-breaking perturbation among low-frequency tokens and analyze the resulting bifurcation of critical points. This mechanism explains how new embedding directions are unlocked, thereby initiating the next focus–dilution cycle. Experiments on synthetic Markovian data as well as WikiText and TinyStories corroborate the predicted stages and cyclical dynamics.

\paragraph{Our contributions.}
\begin{enumerate}[leftmargin = *]
    \item We identify a focus–dilution cycle in the training dynamics of attention and introduce a minimal tractable setting that captures this phenomenon.
    \item We develop a stage-wise analysis based on linearization at critical points that explains the different stages within single cycle.
    \item We empirically validate the predicted stages and transitions, demonstrating that the focus--dilution cycle persists on synthetic Markov data as well as realistic data.
\end{enumerate}

\begin{figure*}[!htb]
    \centering
    \includegraphics[width=0.9\linewidth]{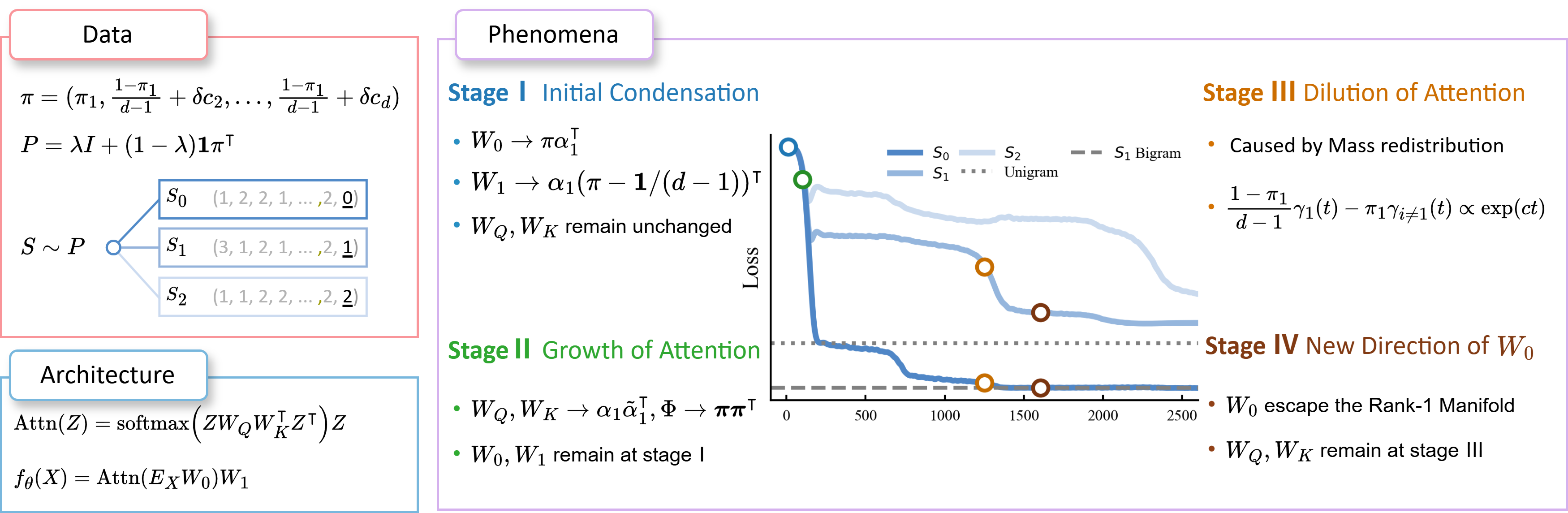}
    \caption{\textbf{Overview of the setting and the focus--dilution training pattern.}
    (\emph{Left}) Sequences are generated by a Markov chain with stationary distribution $\pi$. We extract dataset $S_0, S_1, S_2$ from the training set, which differ only in the identity of the final token in each sequence. (\emph{Right}) The loss curves exhibit four stages: initial condensation, attention growth, attention dilution, and the emergence of a new direction.}
    \label{fig:Flowchart}
\end{figure*}

\section{Preliminaries}
\label{sec::Preliminary}
\subsection{Basic Notations.}
% For any $N \in \mathbb{N}$, let $[N] = \{ 1, \dots, N\}$. 
% Let $\mathcal{V}$ be the vocabulary and $d := |\mathcal{V}|$ be the vocabulary size. 
% For a vector $\alpha \in \mathbb{R}^n$, $\| \alpha \|_2$ denotes the $L^2$ norm and 
% $\| \alpha\|_{C}$ denotes the norm $\sqrt{\alpha^\T C \alpha}$ induced by positive semidefinite matrix $C$. 
% We let $\Var (\alpha) := \diag(\alpha) - \alpha \alpha^\T$ to denote the variance matrix generated by a vector $\alpha$. We use $\{e_i\}_{i=1}^{d}$ to denote the canonical basis (one-hot vectors) indexed by tokens.
For any $N\in\mathbb{N}$, let $[N]:=\{1,\dots,N\}$.
Let $\mathcal{V} := [d]$ be the vocabulary set with $d \ge 2$. We identify tokens with indices in $[d]$ and write $\{e_i\}_{i=1}^d$ for the canonical basis of $\mathbb{R}^d$. For $\alpha \in \mathbb{R}^n$ and $A \in \mathbb{R}^{n \times n}$, $\|\alpha\|_2$ and $\| A\|_{F}$ denote Euclidean norm and Frobenius norm separately, with the subscript omitted when clear from context. We write $ \|\alpha\|_{C} := \sqrt{\alpha^\T C\,\alpha} $
for the seminorm induced by positive semidefinite matrix $C \succeq 0$. For $\alpha\in\mathbb{R}^n$, define the variance matrix $ \Var(\alpha) := \diag(\alpha) - \alpha\alpha^\T$.

% When $\alpha$ is a probability vector, $\Var(\alpha)$ is the covariance matrix of a one-hot draw with distribution $\alpha$.

\subsection{Markov data generation}
% We consider a next-token prediction dataset $\mathcal{D}=\{(X_i,y_i)\}_{i=1}^N$, where each input
% $X_i=(x_{i,1},\dots,x_{i,s})$ is a length-$s$ token sequence with $x_{i,j}\in\mathcal{V}$, and the label
% is the next token $y_i=x_{i,s+1}$. The sequence is generated from a Markov chain.

% \begin{definition}[Markov data]
% \label{defi::data}
% For each $i\in[N]$, sample $x_{i,1}\sim \operatorname{Unif}(\mathcal{V})$. For $j \in [2:s+1]$, sample $x_{i,j} \sim P_{x_{i,j-1}}$,
% where $P\in\mathbb{R}^{d\times d}$ is a fixed transition matrix and $P_u$ denotes the row distribution conditioned on the
% previous token being $u$. Finally, set $y_i:=x_{i,s+1}$.
% \end{definition}
We generate the dataset $\mathcal{D}=\{(X_i,y_i)\}_{i=1}^N$ with $X_i=(x_{i,1},\dots,x_{i,s})\in\mathcal{V}^s$ and
$y_i:=x_{i,s+1}$, by a Markov chain.

\begin{definition}[Markovian data]
\label{defi::data}
Let $P\in\mathbb{R}^{d\times d}$ be row-stochastic. For each $i\in[N]$, sample $x_{i,1}\sim\mathrm{Unif}(\mathcal{V})$ and
$x_{i,j}\sim P_{x_{i,j-1}}$ for $j=2,\dots,s+1$. Set $X_i=(x_{i,1},\dots,x_{i,s})$ and $y_i=x_{i,s+1}$.
\end{definition}

To model one high-frequency token together with a group of low-frequency tokens that may exhibit mild
heterogeneity, we consider a stationary distribution of the form $\pi^\T (\delta) := (\pi_1, \dots, \pi_d)$ with
\begin{equation}
\label{equ:pi_def}
    \pi_i = \frac{1 - \pi_1}{d-1} + c_i \delta, \quad \forall \ 2 \le i \le d,
\end{equation}
% \begin{equation}
% \label{equ:pi_def}
% \pi^\T(\delta)=\left(\pi_1,\frac{1-\pi_1}{d-1},\dots,\frac{1-\pi_1}{d-1}\right) + \left( 0, c_2 \delta, \dots, c_d \delta\right),
% \end{equation}
where $ \frac{1-\pi_1}{d-1}<\pi_1<1$, $\sum_{i=2}^d c_i = 0$, and $\delta\ge 0$ is a small parameter chosen so that $\pi(\delta)$ remains entrywise nonnegative.
The first term describes two-group setting, one high-frequency token and the rest symmetrical low-frequency tokens.
The second term is an $O(\delta)$ perturbation that breaks symmetry within the low-frequency group. Unless stated
otherwise, we treat the first term as the leading-order component and regard the second term as a small perturbation
that can be neglected in early-stage analyses.

The transition matrix is defined as
\begin{equation}
\label{equ:P_def}
    P=\lambda I + (1-\lambda)\mathbf{1}\pi^\T,\qquad 0<\lambda<1,
\end{equation}
where $\mathbf{1}\in\mathbb{R}^d$ is the all-ones vector. A direct computation verifies that
$\pi^\T P = \pi^\T$, hence $\pi$ is stationary for $P$.

\subsection{One-layer transformer}
Since the next token depends only on the current token under the Markov assumption, a single attention block is sufficient to capture the relevant dependency. We therefore study a one-layer Transformer and its training dynamics.

\begin{definition}[One-layer transformer]
\label{defi::model}
Given input sequence $X=(x_1,\dots,x_s)$, let $E_X=(e_{x_1},\dots,e_{x_s})^\T\in\mathbb{R}^{s\times d}$.
Let $W_0\in\mathbb{R}^{d\times m}$ be the embedding matrix and define the embedded
sequence $ E_X W_0\in\mathbb{R}^{s\times m}$. For any $Z \in \mathbb{R}^{s \times m}$, the attention block is
\begin{equation*}
    \operatorname{Attn}(Z)=
    \operatorname{softmax} \!\Big(Z W_Q W_K^{\T} Z^{\T}\Big) Z.
\end{equation*}
Let $W_1\in\mathbb{R}^{m\times d}$ be the output projection. The output logits are
\begin{equation*}
\begin{aligned}
    f_{\theta}(X) &= \operatorname{Attn} (E_X W_0) W_1 
\end{aligned}
\end{equation*}
For notational convenience, we define $W_{QK} := W_Q W_K^\T$, $\Phi := W_0 W_Q W_K^{\T} W_0^{\T}$ and $M:=W_0 W_1$.
\end{definition}

\subsection{Training objective and gradient-flow dynamics}
% Given $(X_i,y_i)\in\mathcal{D}$, we use the cross-entropy loss evaluated at the last position:
% \begin{equation}
% \label{equ:empirical_loss}
% \begin{aligned}
%     \mathcal{L}(\theta) &=\frac{1}{N}\sum_{i=1}^N \ell(f_\theta(X_i)_s,y_i) \\
%     &=-\frac{1}{N}\sum_{i=1}^N \log\frac{\exp(f_\theta(X_i)_{s,y_i})}{\sum_{j=1}^{d}\exp(f_\theta(X_i)_{s,j})}.
% \end{aligned}
% \end{equation}
% We study the gradient flow $\dot\theta=-\nabla\mathcal{L}(\theta)$.

Given $(X_i,y_i)\in\mathcal{D}$, define the cross-entropy at the last token
$
\ell \left(f_\theta(X_i)_s,y_i \right)
:=
-\log\frac{\exp \left(f_\theta(X_i)_{s,y_i}\right)}{\sum_{j=1}^{d}\exp \left(f_\theta(X_i)_{s,j}\right)}.
$
Then
\begin{equation}
\label{equ:empirical_loss}
\mathcal{L}(\theta)=\frac{1}{N}\sum_{i=1}^N \ell \left(f_\theta(X_i)_s,y_i\right).
\end{equation}
We study the gradient flow $\dot\theta=-\nabla\mathcal{L}(\theta)$.

\begin{proposition}[Gradient flow and population-gradient limit]
\label{prop:gf_and_limit}
The gradient flow dynamics satisfy
\begin{equation}
\label{equ::dynamics}
\left\{
\begin{aligned}
    &\frac{\mathrm{d} W_0}{\mathrm{d} t}
    = - \frac{\partial \mathcal{L}}{\partial M} W_1^\T  - \frac{\partial \mathcal{L}}{\partial \Phi} W_0 W_{QK}^\T 
       - \left(\frac{\partial \mathcal{L}}{\partial \Phi}\right)^{\T} W_0 W_{QK}, \\
    &\frac{\mathrm{d} W_1}{\mathrm{d} t} = - W_0^{\T} \frac{\partial \mathcal{L}}{\partial M}, \\
    &\frac{\mathrm{d} W_Q}{\mathrm{d} t} = - W_0^{\T} \frac{\partial \mathcal{L}}{\partial \Phi} W_0 W_K,\\
    &\frac{\mathrm{d} W_K}{\mathrm{d} t} = - W_0^{\T} \left(\frac{\partial \mathcal{L}}{\partial \Phi} \right)^{\T} W_0 W_Q.
\end{aligned}
\right.
\end{equation}

Moreover, define the token-level proxy attention matrix $\mathbb{A}\in\mathbb{R}^{d\times d}$ by $\mathbb{A}_{i,j} =\frac{\pi_j \exp(e_i^{\T}\Phi e_j)}{\sum_{j'}\pi_{j'}\exp(e_i^{\T}\Phi e_{j'})}$ 
% \begin{equation}
% \label{equ::A_i}
%      \mathbb{A}_{i,j}
%     =\frac{\pi_j \exp(e_i^{\T}\Phi e_j)}{\sum_{j'}\pi_{j'}\exp(e_i^{\T}\Phi e_{j'})},
% \end{equation}
and the model output distribution $\mathbb{P}\in\mathbb{R}^{d\times d}$ by 
$\mathbb{P}_{i,j}
    =\frac{\exp(\mathbb{A}_{i} M e_j)}{\sum_{j'}\exp(\mathbb{A}_{i} M e_{j'})}$,
% \begin{equation}
% \label{equ::P_i}
%  \mathbb{P}_{i,j}
%     =\frac{\exp(\mathbb{A}_{i} M e_j)}{\sum_{j'}\exp(\mathbb{A}_{i} M e_{j'})}  , 
% \end{equation}
where $\mathbb{A}_{i}$ and $\mathbb{P}_{i}$ denote the $i$-th row. 

Then, in the large sample-size and long-context limit $(N,s)\to\infty$, the empirical gradients converge to
\begin{equation}
\label{equ::gradient}
\begin{aligned}
    \lim_{N,s\rightarrow\infty}\frac{\partial\mathcal{L}}{\partial M}
    &= -\sum_{i=1}^{d}\pi_i\,\mathbb{A}_i^{\T}\big(P_i-\mathbb{P}_i\big),\\
    \lim_{N,s\rightarrow\infty}\frac{\partial\mathcal{L}}{\partial \Phi}
    &= -\sum_{i=1}^{d}\pi_i\,e_i\big(P_i-\mathbb{P}_i\big)M^{\T} \Var (\mathbb{A}_i).
\end{aligned}
\end{equation}
\end{proposition}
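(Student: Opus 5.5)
The plan has two independent parts: a chain-rule computation that gives the gradient-flow system \eqref{equ::dynamics}, and a law-of-large-numbers argument that identifies the limits \eqref{equ::gradient}.

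\textbf{Part 1: the gradient-flow system.} The parameters enter $f_\theta(X)_s$ only through $M=W_0W_1$ and $\Phi=W_0W_{QK}W_0^\T$. Indeed, the $s$-th row of $\operatorname{Attn}(E_XW_0)W_1$ is
\[
a^\T E_X M,\qquad a=\operatorname{softmax}\big(e_{x_s}^\T \Phi E_X^\T\big)^\T .
\]
So I write $\mathcal{L}=\tilde{\mathcal{L}}(M,\Phi)$ and apply the matrix chain rule.

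From $dM = dW_0\,W_1 + W_0\,dW_1$ I get $\partial_{W_0}\mathcal{L} \ni \frac{\partial\mathcal{L}}{\partial M}W_1^\T$ and $\partial_{W_1}\mathcal{L}=W_0^\T\frac{\partial\mathcal{L}}{\partial M}$.

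From
\[
d\Phi = dW_0\,W_{QK}W_0^\T + W_0W_{QK}\,dW_0^\T + W_0\,dW_Q\,W_K^\T W_0^\T + W_0W_Q\,dW_K^\T W_0^\T ,
\]
I read off the Frobenius duals term by term, with $G:=\partial\mathcal{L}/\partial\Phi$:
\begin{itemize}
\item the two $dW_0$ terms contribute $GW_0W_{QK}^\T + G^\T W_0W_{QK}$ to $\partial_{W_0}\mathcal{L}$;
\item the $dW_Q$ term gives $W_0^\T G W_0 W_K$;
\item the $dW_K$ term gives $W_0^\T G^\T W_0 W_Q$.
\end{itemize}
Negating these yields \eqref{equ::dynamics}.

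\textbf{Part 2: per-sample gradients.} Fix a sample with last token $x_s=i$ and label $y$. Let $p=\operatorname{softmax}(a^\T E_XM)$ be the model's predicted distribution at the last position. The logit gradient is $p-e_y$, so
\[
\frac{\partial\ell}{\partial M}=E_X^\T a\,(p-e_y)^\T .
\]
For $\Phi$, write the scores as $u_j=e_i^\T\Phi e_{x_j}$. The softmax Jacobian is $\Var(a)$, which gives
\[
\frac{\partial\ell}{\partial u}=\Var(a)E_XM(p-e_y)
\quad\text{and}\quad
\frac{\partial\ell}{\partial\Phi}=e_i\,(p-e_y)^\T M^\T E_X^\T\Var(a)E_X .
\]
The key identities aggregate positions into tokens. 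Define $\hat\pi^{(X)}_k$ as the fraction of positions holding token $k$, and set $\hat A_i := (E_X^\T a)^\T$. Then:
\begin{itemize}
\item $(E_X^\T a)_k=\sum_{j:x_j=k}a_j$, so $\hat A_{i,k}$ equals $\hat\pi^{(X)}_k e^{e_i^\T\Phi e_k}$ normalized over $k$;
\item $E_X^\T\diag(a)E_X=\diag(E_X^\T a)$.
\end{itemize}
Together these give $E_X^\T\Var(a)E_X=\Var(\hat A_i)$ and $a^\T E_XM=\hat A_iM$. Thus each per-sample gradient is a continuous bounded function of $(\hat\pi^{(X)},x_s,y)$ for fixed $\theta$.

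\textbf{Part 3: passing to the limit.} Because $P=\lambda I+(1-\lambda)\mathbf 1\pi^\T$ with $\lambda<1$, the chain is uniformly ergodic; in fact the uniform initial law is forgotten at geometric rate $\lambda^t$. Hence $\hat\pi^{(X)}\to\pi$ in probability (ergodic LLN), $\hat A_i\to\mathbb A_i$, and the law of $x_s$ tends to $\pi$. Conditional on $x_s=i$, the label satisfies $y\sim P_i$.

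The limit $N\to\infty$ replaces empirical averages by expectations (strong LLN, bounded summands). The limit $s\to\infty$ then follows by dominated convergence, since $a$, $p$ and $E_X^\T a$ all lie in bounded simplices. Averaging $p-e_y$ over $y\sim P_i$ gives $\mathbb P_i-P_i$, and weighting by $\pi_i$ yields exactly the two formulas in \eqref{equ::gradient}.

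\textbf{Expected obstacle.} I expect the hardest step to be making the joint limit $(N,s)\to\infty$ rigorous: the order of limits, and the fact that $\hat\pi^{(X)}$ is correlated with $x_s$. I would first take $N\to\infty$ at fixed $s$, and then use that the dependence of $\hat\pi^{(X)}$ on the single last position is $O(1/s)$.
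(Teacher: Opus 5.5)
Your proposal is correct and follows essentially the paper's route: a total-differential/chain-rule computation for the flow, then collapsing positions into token-level quantities and invoking ergodicity and the law of large numbers. Where the paper simply substitutes $\pi$ for the empirical frequencies ``for sufficiently large $s$'', your exact finite-$s$ identities $a^\T E_X M=\hat A_i M$ and $E_X^\T\Var(a)E_X=\Var(\hat A_i)$, followed by bounded convergence, make the limit rigorous. The obstacle you anticipate is milder than you fear. Since $\hat\pi^{(X)}\to\pi$ in probability to a deterministic limit, and the per-sample gradient is uniformly continuous in $\hat\pi^{(X)}$ on the simplex, its correlation with $x_s$ is irrelevant. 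A Hoeffding bound that is uniform in $s$ (the summands are bounded independently of $s$) then handles the joint $(N,s)$ limit in probability.
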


% \paragraph{Population objective.}
% In the limit $(N,s)\to\infty$, the dynamics can be viewed as gradient descent on the population loss
% \begin{equation}
% \label{equ:population_loss}
% \begin{aligned}
%     \tilde{\mathcal{L}}(\theta)
%     &=-\sum_{i=1}^{d}\pi_i\sum_{j=1}^{d}P_{i,j}\log \mathbb{P}_{i,j} \\
%     &=\sum_{i=1}^{d}\pi_i\,\operatorname{CrossEntropy}(P_i,\mathbb{P}_i).
% \end{aligned}
% \end{equation}

\section{Theoretical results}
\label{sec::Theoretical_results}

\subsection{Idea: stage-wise linearization around saddle points}
\label{sec::theoretical_idea}
% Under small initialization, attention training exhibits a multi-stage pattern: the parameters stay close to a low-dimensional manifold for a long time, and then suddenly changes direction.
% Our theoretical explanation is based on a stage-wise analysis around \emph{successive critical points}.
% At each stage, the parameters enter a neighborhood of a saddle point, where the gradient flow can be
% approximated by its \emph{linearization}. The linearized system reveals (i) contracting directions that keep
% the trajectory close to a low-dimensional structure, and (ii) unstable directions that eventually dominate
% and trigger the transition to the next stage.

Under small initialization, attention training often exhibits a multi-stage pattern: the trajectory spends a long time near a low-dimensional structure and then abruptly departs in a new direction. We explain this behavior via a stage-wise analysis around successive critical points. At each stage, the parameters enter a neighborhood of a saddle point where the gradient flow is well-approximated by its linearization. The linearized dynamics exposes (i) stable directions that keep the trajectory confined to a low-dimensional manifold, and (ii) unstable directions that eventually dominate and trigger the transition to the next stage.

Concretely, we consider the gradient flow $\dot{\theta}=-\nabla \mathcal{L}(\theta)$.
Let $\theta_*$ be a critical point, and define $\Delta\theta := \theta-\theta_*$.
A Taylor expansion yields
\[
 \dfrac{\mathrm{d}}{\mathrm{d}t}\Delta\theta = -\nabla^2\mathcal{L}(\theta_*)\,\Delta\theta + \text{higher-order terms}.
\]
The next lemma characterizes the linearization in which the nonlinear flow is governed by the linearized system,
and formalizes the alignment with the most unstable direction.

\begin{lemma}[Linearization near a saddle point]
\label{lem:linearization_window}
Let $\dot{\theta}=F(\theta)$ be an ODE with $F\in C^2$, and let $\theta_*$ satisfy $F(\theta_*)=0$.
Let $J:=DF(\theta_*)$ and assume there exist $r>0$ and $L>0$ such that for all $\|\Delta \theta \|\le r$,
\begin{equation}
\label{eq:quadratic_remainder}
\|F(\theta_*+\Delta \theta )-J\Delta \theta \| \le L\|\Delta \theta \|^2.
\end{equation}
Let $\theta(t)$ be the solution with $\|\Delta\theta(0)\|=\varepsilon\le r/2$, and
$\tilde{\Delta} \theta(t):= \mathrm{e}^{Jt}\Delta\theta(0)$ be the solution of the linearized system $\dot{\tilde{\Delta}} \theta=J\tilde{\Delta} \theta$.
Define $\mu := \sup\{\Re(\lambda) : \lambda \in \sigma(J)\}$. Then for all $t$ such that $\|\tilde{\Delta} \theta (t)\|\le r/2$,
\begin{equation}
\label{eq:linearization_error_bound}
\|\Delta\theta(t)-\tilde{\Delta} \theta (t)\|
\;\le\;
C\,\varepsilon^2\, \mathrm{e}^{2\mu t}
\end{equation}
for some constant $C=C(J,L)$.
In particular, if $\mu>0$, then the nonlinear dynamics is well-approximated by the linearized dynamics
up to times $t=\Theta(\log(1/\varepsilon))$.

Moreover, suppose $J$ is symmetric and has a simple eigenvalue $\mu>0$ with eigenvector $v_u$ and a  spectral gap $\rho>0$ in the sense that
$\lambda\le \mu-\rho$ for all $\lambda\in\sigma(J)\setminus\{\mu\}$.
Then for any initialization with $\langle \Delta\theta(0),v_u\rangle\neq 0$,
\begin{equation}
\label{eq:alignment_unstable}
\frac{\Delta\theta(t)}{\| \Delta\theta(t) \|} \to \pm \frac{v_u}{\| v_u\|}
\end{equation}
for any sequence $t = t(\varepsilon)$ with $t (\varepsilon) \rightarrow \infty$ and $\varepsilon \mathrm{e}^{\mu t(\varepsilon)} \rightarrow 0$.
\end{lemma}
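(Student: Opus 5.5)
The plan is to write the error $e(t):=\Delta\theta(t)-\tilde{\Delta}\theta(t)$ in Duhamel form and close a bootstrap estimate. Set $R(\Delta\theta):=F(\theta_*+\Delta\theta)-J\Delta\theta$. Then $\Delta\theta$ satisfies $\frac{\mathrm{d}}{\mathrm{d}t}\Delta\theta=J\Delta\theta+R(\Delta\theta)$, so
\[
e(t)=\int_0^t \mathrm{e}^{J(t-\tau)}R(\Delta\theta(\tau))\,\mathrm{d}\tau .
\]
The first ingredient is a bound $\|\mathrm{e}^{Jt}\|\le K\mathrm{e}^{\mu t}$ with $K=K(J)$. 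This holds with $K=1$ when $J$ is symmetric, which is the case for a gradient flow. In general one uses the Jordan form, possibly replacing $\mu$ by $\mu+\eta$ for an arbitrarily small $\eta>0$ when $J$ has nontrivial Jordan blocks on the top eigenvalue. In particular $\|\tilde{\Delta}\theta(t)\|\le K\varepsilon\mathrm{e}^{\mu t}$.

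Next I run a continuity (bootstrap) argument. Let $T^*$ be the largest time such that, on $[0,T^*]$, we have $\|\Delta\theta\|\le r$ and $\|e(\tau)\|\le K\varepsilon\mathrm{e}^{\mu\tau}$. On this interval $\|\Delta\theta(\tau)\|\le 2K\varepsilon\mathrm{e}^{\mu\tau}$, so the hypothesis \eqref{eq:quadratic_remainder} gives $\|R\|\le 4LK^2\varepsilon^2\mathrm{e}^{2\mu\tau}$. Inserting this into the Duhamel formula:
\[
\|e(t)\|\le \int_0^t K\mathrm{e}^{\mu(t-\tau)}\,4LK^2\varepsilon^2\mathrm{e}^{2\mu\tau}\,\mathrm{d}\tau\le \frac{4LK^3}{\mu}\varepsilon^2\mathrm{e}^{2\mu t}.
\]
The integral uses $\mu>0$. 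For $\mu\le 0$ one bounds $\mathrm{e}^{2\mu t}\le 1$ and obtains linear growth in $t$ instead, which is handled by restricting to bounded windows; I would state the main case as $\mu>0$.

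The constant is $C=4LK^3/\mu$, and this bound strictly improves the bootstrap assumption as long as $C\varepsilon\mathrm{e}^{\mu t}<K$. For the same reason $\|\Delta\theta\|$ stays below $r$ whenever $\|\tilde{\Delta}\theta\|\le r/2$, provided $\varepsilon$ is small compared with $1/C$. By continuity the estimate therefore extends to all $t$ in the stated window.

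\textbf{Main obstacle.} The delicate step is reconciling the window condition $\|\tilde{\Delta}\theta\|\le r/2$ with the requirement $C\varepsilon\mathrm{e}^{\mu t}\lesssim 1$. Both conditions amount to $\varepsilon\mathrm{e}^{\mu t}\lesssim \min(r,1/C)$. I would absorb this into the constant by shrinking $r$ if necessary, which yields validity up to $t=\Theta(\log(1/\varepsilon))$.

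For the alignment claim, $J$ is symmetric, so decompose $\Delta\theta(0)=a v_u/\|v_u\|+w$ with $w\perp v_u$. Then $\tilde{\Delta}\theta(t)=a\mathrm{e}^{\mu t}\hat v_u+\mathrm{e}^{Jt}w$, and $\|\mathrm{e}^{Jt}w\|\le \mathrm{e}^{(\mu-\rho)t}\|w\|$. Dividing by $|a|\mathrm{e}^{\mu t}$, the linear part deviates from $\pm\hat v_u$ by at most $\mathrm{e}^{-\rho t}\|w\|/|a|$, which tends to $0$ as $t\to\infty$. The nonlinear error relative to the main term is
\[
\frac{C\varepsilon^2\mathrm{e}^{2\mu t}}{|a|\mathrm{e}^{\mu t}}=\frac{C\varepsilon}{|a|}\,\varepsilon\mathrm{e}^{\mu t}.
\]
Under the natural reading that $a$ is a fixed fraction of $\varepsilon$, i.e.\ $\Delta\theta(0)=\varepsilon u_0$ with $\langle u_0,v_u\rangle\ne0$, this is $O(\varepsilon\mathrm{e}^{\mu t})\to0$. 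Both deviations vanish along any sequence $t(\varepsilon)\to\infty$ with $\varepsilon\mathrm{e}^{\mu t(\varepsilon)}\to0$, and normalizing gives \eqref{eq:alignment_unstable}.
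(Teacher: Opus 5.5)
Your proposal is correct and follows essentially the same route as the paper. Both arguments use the Duhamel representation of the error, the semigroup bound $\|\mathrm{e}^{Jt}\|\le K\mathrm{e}^{\mu t}$ (with $K=1$ for symmetric $J$), a bootstrap giving $C=4K^3L/\mu$, and, for the alignment, splitting off the $v_u$-component under the reading $|\langle\Delta\theta(0),v_u\rangle|\asymp\varepsilon$. Your bootstrap hypothesis $\|e(\tau)\|\le K\varepsilon\mathrm{e}^{\mu\tau}$ is in fact more robust than the paper's $\|\Delta\theta\|\le 2\|\tilde{\Delta}\theta\|$, which silently needs a lower bound $\|\tilde{\Delta}\theta(t)\|\gtrsim\varepsilon\mathrm{e}^{\mu t}$. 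Like the paper, what you actually prove is the estimate on the window $\varepsilon\mathrm{e}^{\mu t}\le c(J,L,r)$ rather than on the whole set $\{t:\|\tilde{\Delta}\theta(t)\|\le r/2\}$; that window is exactly what the $\Theta(\log(1/\varepsilon))$ and alignment conclusions use.
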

At initialization, each entry of every parameter matrix is sampled i.i.d.\ from
$\mathcal{N}(0,\varepsilon^2)$ with $\varepsilon\ll 1$.
Thus $\theta(0)$ lies in an $\mathcal{O}(\varepsilon)$-neighborhood of the origin, which is a critical point of the gradient flow.
By Lemma~\ref{lem:linearization_window}, the dynamics in the early time window of length
$\Theta(\log(1/\varepsilon))$ is governed by the linearization at $\theta=0$.
A key consequence is that the linearized system admits a single unstable direction, so trajectories rapidly
align with a rank-one direction. In our setting, this direction is not arbitrary: it is explicitly pinned down by the stationary distribution $\pi$ of the underlying token Markov chain.

\begin{theorem}[Initial condensation (rephrased from Thm. 2 in \citep{chen2025from})]
\label{thm:initial_condensation}
The origin is a critical point and
\begin{equation}
    \left.\frac{\partial \mathcal{L}}{\partial M}\right|_{\theta=0}
    = - \pi \left(\pi - \frac{1}{d} \mathbf{1} \right)^\T,
    \qquad
    \left.\frac{\partial \mathcal{L}}{\partial \Phi}\right|_{\theta=0} = 0.
\end{equation}
The effective dynamics near $\theta=0$ is
\begin{equation}
\label{eq:linear_origin}
\begin{aligned}
    \frac{\mathrm{d} \Delta W_0}{\mathrm{d} t} = - \left.\frac{\partial \mathcal{L}}{\partial M}\right|_{\theta=0}  \Delta W_1^{\T}, 
    &
    \frac{\mathrm{d} \Delta W_1}{\mathrm{d} t} = - \Delta W_0^{\T} \left.\frac{\partial \mathcal{L}}{\partial M}\right|_{\theta=0}
\end{aligned}
\end{equation}
Consequently, there exist a vector $\alpha_1$ such that the following limit holds as $\varepsilon \rightarrow 0$ at $t = \boldsymbol{\Theta}(\log\frac{1}{\varepsilon})$:
\begin{equation}
\label{eq:condensation_limits}
\begin{aligned}
\frac{W_0}{\| W_0\|} \rightarrow
\frac{\pi}{\| \pi\|} \alpha_1^{\T}, \quad 
\frac{W_1}{\|W_1 \|} 
\rightarrow
\alpha_1 \frac{\pi^\T - \frac{1}{d} \mathbf{1}^\T}{\|\pi^\T - \frac{1}{d} \mathbf{1}^\T \|}. 
\end{aligned}
\end{equation}
\end{theorem}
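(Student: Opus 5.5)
The plan has three parts: evaluate the population gradients of Proposition~\ref{prop:gf_and_limit} at $\theta=0$, linearize the flow \eqref{equ::dynamics} there, and then apply Lemma~\ref{lem:linearization_window}.

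\textbf{Gradients at the origin.} At $\theta=0$ we have $\Phi=0$ and $M=0$. Hence $\mathbb{A}_i=\pi^\T$ for every $i$, so $\mathbb{P}_i=\frac1d\mathbf 1^\T$.
\begin{itemize}
\item For $\partial\mathcal L/\partial\Phi$: the formula carries a factor $M^\T=0$, so this gradient vanishes.
\item For $\partial\mathcal L/\partial M$: we get $-\sum_i\pi_i\,\pi(P_i-\frac1d\mathbf 1^\T)=-\pi(\pi^\T P-\frac1d\mathbf 1^\T)$, using $\sum_i\pi_i=1$. Stationarity $\pi^\T P=\pi^\T$ then gives $-\pi(\pi-\frac1d\mathbf 1)^\T$.
\end{itemize}
Every term on the right of \eqref{equ::dynamics} carries at least one parameter factor, so the origin is a critical point.

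\textbf{Linearization.} Write $G:=\partial\mathcal L/\partial M|_{0}$ and expand the right-hand side to first order in $\Delta\theta$.
\begin{itemize}
\item In $\dot W_0$, the term $-\frac{\partial\mathcal L}{\partial M}W_1^\T$ gives $-G\,\Delta W_1^\T$. The $\Phi$-terms are at least quadratic, since $\partial\mathcal L/\partial\Phi$ is $O(\|\theta\|^2)$ (it contains $M$) and it is multiplied by $W_0W_{QK}$.
\item In $\dot W_1$ the linear part is $-\Delta W_0^\T G$.
\item $\dot W_Q$ and $\dot W_K$ are at least cubic, so their linear part is zero.
\end{itemize}
This yields \eqref{eq:linear_origin}. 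The remainder bound \eqref{eq:quadratic_remainder} holds because $F$ is smooth (softmax compositions), so Lemma~\ref{lem:linearization_window} applies.

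\textbf{Spectral analysis.} Set $u:=\pi/\|\pi\|$, $v:=(\pi-\frac1d\mathbf 1)/\|\pi-\frac1d\mathbf1\|$ and $\sigma:=\|\pi\|\,\|\pi-\frac1d\mathbf 1\|>0$. Here $v\neq0$ because $\pi_1>\frac{1-\pi_1}{d-1}$ forces $\pi\neq\frac1d\mathbf1$. Then $-G=\sigma uv^\T$ is rank one.

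The linear system is $\dot W_0=\sigma uv^\T W_1^\T$, $\dot W_1^\T=\sigma vu^\T W_0$. It is symmetric, being minus the Hessian of the bilinear form $-\sigma\,\mathrm{tr}(u^\T W_0W_1 v)$. Acting column-wise on pairs $(W_0 a,\,W_1^\T a)$ for $a\in\mathbb R^m$, it reduces to the $2\times2$ block $\sigma\begin{pmatrix}0&uv^\T\\ vu^\T&0\end{pmatrix}$. This has eigenvalues $\pm\sigma$ on $\mathrm{span}\{(u,\pm v)\}$ and $0$ elsewhere.

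The unstable subspace is therefore $\{(u\alpha^\T,\ \alpha v^\T)\}$ for $\alpha\in\mathbb R^m$, and $\mu=\sigma$ with spectral gap $\sigma$. $W_Q$ and $W_K$ sit in the zero eigenspace.

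\textbf{Main obstacle: degeneracy.} The eigenvalue $\sigma$ has multiplicity $m$, not $1$, so the alignment part of Lemma~\ref{lem:linearization_window} cannot be applied verbatim. I would instead rerun its proof with a spectral projector $\Pi_u$ onto the whole unstable eigenspace.

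The linear solution is $e^{\sigma t}\Pi_u\Delta\theta(0)+O(\varepsilon)$, and $\Pi_u\Delta\theta(0)=(u\alpha_1^\T,\alpha_1v^\T)$ with $\alpha_1\propto W_0(0)^\T u+W_1(0)v$ (up to normalization). This $\alpha_1$ is nonzero almost surely under Gaussian initialization.

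Take $t(\varepsilon)$ with $\varepsilon e^{\sigma t}\to0$ slowly, say $e^{\sigma t}=\varepsilon^{-1/2}$. Then:
\begin{itemize}
\item the unstable part has size $\varepsilon e^{\sigma t}$;
\item the neutral and stable parts stay $O(\varepsilon)$;
\item the nonlinear error is $O(\varepsilon^2e^{2\sigma t})$.
\end{itemize}
Both of the latter are $o(\varepsilon e^{\sigma t})$. Normalizing gives $W_0/\|W_0\|\to u\alpha_1^\T/\|\alpha_1\|$ and $W_1/\|W_1\|\to \alpha_1 v^\T/\|\alpha_1\|$, which is \eqref{eq:condensation_limits} after absorbing normalization into $\alpha_1$.

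The delicate point is the randomness of $\alpha_1$: the statement should be read as holding almost surely, with $\alpha_1$ depending on the initialization direction, and $\|\alpha_1\|$ bounded below with high probability.
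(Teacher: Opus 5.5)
Your proposal is correct. The gradient evaluation at the origin is exactly the paper's argument: substitute $\mathbb A_i=\pi^\T$ and $\mathbb P_i=\frac1d\mathbf 1^\T$, then use $\sum_i\pi_iP_i=\pi^\T$. That is in fact all the paper's proof does; for the linearization and the condensation limits it simply appeals to Lemma~\ref{lem:linearization_window} and to Thm.~2 of \citet{chen2025from}.

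You instead carry out the linearization and the spectral decomposition yourself, and in doing so you catch something the paper glosses over. The top eigenvalue $\sigma=\|\pi\|\,\|\pi-\frac1d\mathbf 1\|$ of $J$ has multiplicity $m$, so the alignment clause of Lemma~\ref{lem:linearization_window}, which assumes a simple eigenvalue, does not apply as stated.

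Your fix is the right one. You project onto the whole eigenspace $\{(u\alpha^\T,\alpha v^\T):\alpha\in\mathbb R^m\}$, on which $e^{Jt}$ acts as the scalar $e^{\sigma t}$ because $J$ is symmetric. Every element of that eigenspace already has the claimed rank-one form. This uses only the error-bound part of the lemma, which does not need simplicity.

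Your route buys a self-contained proof. It also gives an explicit, initialization-dependent vector $\alpha_1=\frac12\bigl(W_0(0)^\T u+W_1(0)v\bigr)$. That makes precise the paper's vague ``there exist a vector $\alpha_1$'' and shows the statement should be read almost surely over the Gaussian initialization. The paper's route buys brevity by outsourcing exactly this part.
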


Theorem~\ref{thm:initial_condensation} characterizes the first stage of training dynamics in our model.
Although it is rephrased from Thm. 2 in \citep{chen2025from}, we emphasize a more concrete interpretation relevant to data. As a result, $(W_0,W_1)$ rapidly condense onto a $\pi$-driven rank-one structure within time
$T_1=\Theta(\log(1/\varepsilon))$.
In contrast, the attention block $(W_Q,W_K)$ stays $\mathcal{O}(\varepsilon)$ throughout this stage because the linear term
in its dynamics vanishes at the origin, i.e., $\left.\partial\mathcal{L}/\partial\Phi\right|_{\theta=0}=0$.

\subsection{Focus of Attention}
\label{sec:Increase_dynamics_rewrite}

After initial condensation stage,
outer parameters $(W_0,W_1)$ rapidly become approximately rank-one,
while the attention parameters $(W_Q,W_K)$ remain $O(\varepsilon)$.
Empirically, the trajectory then stays close to the rank-one condensation ray where the outer parameters evolve along the same direction until it enters a neighborhood of a second critical point.

\begin{proposition}[Existence of a second critical point on the condensation ray]
\label{prop::second_critical_point}
Assume $\pi_1>\pi_2=\cdots=\pi_d$.
Then there exists $\kappa_1>0$ such that the parameter tuple $\theta_c^1$
\begin{equation}
\label{eq:second_crit_point_rewrite}
W_0=\kappa_1 \frac{\pi}{\| \pi\|} \alpha_1^\T,
W_1=\kappa_1 \alpha_1 \frac{\pi^\T-\frac{1}{d}\mathbf 1^\T}{\big\|\pi-\frac{1}{d}\mathbf 1\big\|},W_Q, W_K=0,
\end{equation}
satisfies $\mathbb P_i=\pi^\T$ for all $i$ and $\left.\frac{\partial \mathcal L}{\partial M}\right|_{\theta = \theta_c^1}=0$.
Moreover, $\theta_c^1$ is a critical point of the full gradient flow.
\end{proposition}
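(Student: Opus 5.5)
The plan is to verify the three claims directly from the population gradients in Proposition~\ref{prop:gf_and_limit}, using the fact that $W_Q=W_K=0$ forces $\Phi=0$. Then every attention row is the stationary distribution, $\mathbb{A}_i=\pi^\T$ for all $i$. Hence $\mathbb{A}_i M$ is the same row vector for every $i$, and so is $\mathbb{P}_i$.

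Substituting the ansatz \eqref{eq:second_crit_point_rewrite}, write $u:=\pi/\|\pi\|$, $v:=(\pi-\frac1d\mathbf 1)/\|\pi-\frac1d\mathbf 1\|$ and $\|\alpha_1\|=1$ (rescaling $\kappa_1$ otherwise). Then
\[
M=W_0W_1=\kappa_1^2\, u v^\T,
\qquad
\mathbb{A}_iM=\frac{\kappa_1^2\|\pi\|}{\|\pi-\frac1d\mathbf 1\|}\,\Big(\pi-\tfrac1d\mathbf 1\Big)^\T .
\]
Set $c(\kappa):=\kappa^2\|\pi\|/\|\pi-\frac1d\mathbf 1\|$, which ranges continuously and monotonically over $[0,\infty)$. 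The task reduces to a scalar equation: find $c>0$ with $\operatorname{softmax}\big(c(\pi-\frac1d\mathbf 1)\big)=\pi$. Softmax is invariant under constant shifts, so the condition is that $\log\pi_j - c\,\pi_j$ is independent of $j$.

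Under the two-group assumption $\pi_2=\cdots=\pi_d=:q<\pi_1$, only one condition remains:
\[
\log\pi_1-\log q=c\,(\pi_1-q),\qquad c=\frac{\log(\pi_1/q)}{\pi_1-q}>0.
\]
Then $\kappa_1$ is recovered from $c(\kappa_1)=c$. This is the main point where the symmetry assumption is used: for general $\pi$, the logits $c\pi_j$ cannot match $\log\pi_j$ up to a constant, because $\log$ is not affine. I expect this to be the only genuinely substantive step, and it is settled by the explicit formula above.

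With $\mathbb{P}_i=\pi^\T$ for all $i$, the first formula in \eqref{equ::gradient} gives
\[
\frac{\partial\mathcal L}{\partial M}=-\sum_i\pi_i\,\pi(P_i-\pi^\T)=-\pi(\pi^\T P-\pi^\T)=0
\]
by stationarity, $\pi^\T P=\pi^\T$. For criticality of the full flow I will check each line of \eqref{equ::dynamics}. The $\partial\mathcal L/\partial M$ terms vanish by the computation just made. The remaining terms in $\dot W_0$ contain $W_{QK}=0$. The right-hand sides of $\dot W_Q$ and $\dot W_K$ contain $W_K=0$ and $W_Q=0$ respectively. So all four derivatives vanish. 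Alternatively, $\partial\mathcal L/\partial\Phi$ itself vanishes by the same stationarity computation, since $\Var(\mathbb A_i)=\Var(\pi)$ is independent of $i$. Hence $\theta_c^1$ is a critical point of the full gradient flow.
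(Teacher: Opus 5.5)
Your main argument is correct and is essentially the paper's proof. In both, $\Phi=0$ forces $\mathbb{A}_i=\pi^\T$, and the two-group symmetry reduces $\mathbb{P}_i=\pi^\T$ to one scalar condition. Stationarity then kills $\partial\mathcal L/\partial M$, and $W_Q=W_K=0$ kills every remaining term in \eqref{equ::dynamics}.

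The only difference is how $\kappa_1$ is found. The paper uses the intermediate value theorem ($\mathbb{P}_{i,1}=1/d<\pi_1$ at $\kappa=0$, and $\mathbb{P}_{i,1}\to1$ as $\kappa\to\infty$). You instead solve $c=\log(\pi_1/q)/(\pi_1-q)$ in closed form. This additionally gives uniqueness of $\kappa_1$ on the ray. It also agrees with the critical value $\frac{1}{c_\pi}\log\frac{(d-1)\pi_1}{1-\pi_1}$ of $z=\kappa^2$ used in the appendix's cross-stage analysis. Your logit scale $\kappa^2\|\pi\|/\|\pi-\frac1d\mathbf 1\|$ is the correct one; the paper's displayed $\kappa^2\|\pi\|^2$ is a harmless slip.

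However, you should drop your closing ``alternatively'' sentence, because it is false. In $\partial\mathcal L/\partial\Phi=-\sum_i\pi_i e_i(P_i-\mathbb P_i)M^\T\Var(\mathbb A_i)$, the factor $e_i$ keeps the rows separate, so stationarity cannot collapse the sum. Instead $\sum_i\pi_i e_i(P_i-\pi^\T)=\diag(\pi)P-\pi\pi^\T=\lambda\Var(\pi)$. Using $\Var(\pi)\mathbf 1=0$, one gets
\[
\left.\frac{\partial\mathcal L}{\partial\Phi}\right|_{\theta_c^1}=-\frac{\lambda\kappa_1^2}{\|\pi\|\,\|\pi-\frac1d\mathbf 1\|}\,\big(\Var(\pi)\pi\big)\big(\Var(\pi)\pi\big)^\T\neq 0,
\]
since $\Var(\pi)\pi=\pi\odot\pi-\|\pi\|^2\pi$ vanishes only for uniform $\pi$ (cf.\ Proposition~\ref{prop:second_critical_point_property_rewrite}).

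This nonzero matrix is precisely what generates the unstable attention mode in Proposition~\ref{prop:effective_attention_dynamics_rewrite}. So $\theta_c^1$ is critical \emph{only} because $W_Q=W_K=0$ annihilate the $\Phi$-terms, not because the $\Phi$-gradient vanishes. Your primary argument already relies on exactly that, so the proof stands once the side remark is removed.
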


The key point is that $\theta_c^1$ is typically a saddle:
the $(W_0,W_1)$-subsystem is contracting (or neutrally stable due to symmetry),
while the $(W_Q,W_K)$-subsystem contains an unstable mode.
% Thus, once the trajectory enters a small neighborhood of $\theta_c^1$,
% Lemma~\ref{lem:linearization_window} implies the subsequent evolution is governed by the linearized flow,
% and the unstable attention mode will dominate after a time $\Theta(\log(1/\varepsilon))$.

% In particular, linearization shows that the outer layer parameters and attention parameters can be decoupled. Let $\Delta\theta:=\theta-\theta_c^1$.
% Because $\left.\frac{\partial \mathcal L}{\partial M}\right|_{\theta_c^1}=0$ and $W_Q=W_K=0$ at $\theta_c^1$,
% the linearized dynamics splits into two decoupled blocks:
% \begin{enumerate}
%     \item an outer block $(\Delta W_0,\Delta W_1)$ driven by the first variation $\mathrm d(\partial\mathcal L/\partial M)$;
% \item an attention block $(\Delta W_Q,\Delta W_K)$ driven by the constant matrix
% $\left.\frac{\partial \mathcal L}{\partial \Phi}\right|_{\theta_c^1}$.
% \end{enumerate}
% Moreover, the effective dynamics can be viewed as a subsystem of $(W_Q, W_K)$ since $(W_0, W_1)$ subsystem is contracting.

\begin{proposition}[Linearized dynamics and its unique unstable direction]
\label{prop:effective_attention_dynamics_rewrite}
At critical point $\theta_c^1$ defined in Prop. \ref{prop::second_critical_point}, the linearization of the gradient flow admits the block form
\begin{equation}
    \frac{\mathrm d}{\mathrm dt}
\begin{pmatrix}
\Delta W_0\\ \Delta W_1\\ \Delta W_Q\\ \Delta W_K
\end{pmatrix}
=
\begin{pmatrix}
J_{\mathrm{out}} & 0\\
0 & J_{\mathrm{att}}
\end{pmatrix}
\begin{pmatrix}
\Delta W_0\\ \Delta W_1\\ \Delta W_Q\\ \Delta W_K
\end{pmatrix}
\end{equation}
where $J_{\mathrm{out}}$ is negative semi-definite and $J_{\mathrm{att}}$ is positive semi-definite. It indicates that the current dynamics are dominated by the attention subsystem. In particular, the attention block satisfies the explicit closed system
\begin{equation}
\label{eq:effective_attention_2x2}
\begin{aligned}
\frac{\mathrm{d}}{\mathrm{d} t}\Delta W_Q
= c\,\alpha_1\alpha_1^\T \Delta W_K,
\qquad
\frac{\mathrm{d}}{\mathrm{d} t} \Delta W_K
= c\,\alpha_1\alpha_1^\T \Delta W_Q,    
\end{aligned}
\end{equation}
where $c = \lambda \kappa_1^4
\frac{\| \pi\|_{\Var(\pi)}^4}{\big\| \pi - \frac{1}{d} \mathbf{1}\big\| \,\|\pi\|^3} >0$.
% \begin{equation}
% \label{eq:c1_def_rewrite}
% c_1
% = 
% \lambda \kappa_1^4
% \frac{1}{\big\| \pi - \frac{1}{d} \mathbf{1}\big\| \,\|\pi\|^3}
% \left( \pi^\T \Var (\pi)\, \pi \right)^2
% \;>\;0.
% \end{equation}
Therefore the attention block has an exponentially unstable mode.
\end{proposition}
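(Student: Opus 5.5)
The plan is to linearize \eqref{equ::dynamics} at $\theta_c^1$ term by term, using two facts from Prop.~\ref{prop::second_critical_point}: $\left.\partial\mathcal L/\partial M\right|_{\theta_c^1}=0$ and $W_Q=W_K=0$. The key observation is that $W_{QK}=W_QW_K^\T$ is bilinear in $(W_Q,W_K)$ and vanishes at $\theta_c^1$. Hence its first variation is $\Delta W_Q W_K^\T+W_Q\Delta W_K^\T=0$, so $W_{QK}=O(\|\Delta\theta\|^2)$ and $\Phi=W_0W_{QK}W_0^\T=O(\|\Delta\theta\|^2)$.

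\emph{Outer block.} In the $W_0$ equation, the two attention terms $\frac{\partial\mathcal L}{\partial\Phi}W_0W_{QK}^\T$ and its transpose analogue contain $W_{QK}$, so they are quadratic and drop out of the linearization. Since $\partial\mathcal L/\partial M$ vanishes at $\theta_c^1$, the linear part of $-\frac{\partial\mathcal L}{\partial M}W_1^\T$ is $-\,\mathrm d\!\left(\frac{\partial\mathcal L}{\partial M}\right)W_1^\T$, and similarly for $W_1$. Because $\Phi$ moves only at second order, this first variation depends only on $\Delta M=\Delta W_0W_1+W_0\Delta W_1$. Hence the $(\Delta W_0,\Delta W_1)$ block is closed and decoupled from $(\Delta W_Q,\Delta W_K)$. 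It equals $-\nabla^2_{(W_0,W_1)}\mathcal L$. At a point where $\partial\mathcal L/\partial M=0$, the chain rule gives
\[
\nabla^2_{(W_0,W_1)}\mathcal L=D^\T\,\nabla^2_M\mathcal L\,D, \qquad D:(\Delta W_0,\Delta W_1)\mapsto\Delta M .
\]
With $\Phi=0$ the proxy attention is $\mathbb A_i=\pi^\T$, so $\mathbb P_i=\operatorname{softmax}(\pi^\T M)$. The loss is therefore a cross-entropy composed with a linear map of $M$, hence convex in $M$, and $\nabla^2_M\mathcal L\succeq0$. It follows that $J_{\mathrm{out}}\preceq0$.

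\emph{Attention block.} Since $\partial\mathcal L/\partial\Phi$ multiplies $W_K$ or $W_Q$, which vanish at $\theta_c^1$, the linearization freezes it at its value $G:=\left.\partial\mathcal L/\partial\Phi\right|_{\theta_c^1}$. This gives
\[
\dot{\Delta W_Q}=-W_0^\T G W_0\,\Delta W_K,\qquad \dot{\Delta W_K}=-W_0^\T G^\T W_0\,\Delta W_Q .
\]
Substituting $W_0=\kappa_1\frac{\pi}{\|\pi\|}\alpha_1^\T$ yields $W_0^\T G W_0=\frac{\kappa_1^2}{\|\pi\|^2}(\pi^\T G\pi)\,\alpha_1\alpha_1^\T$, and the same holds with $G^\T$ since $\pi^\T G^\T\pi=\pi^\T G\pi$.

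To compute $G$ from \eqref{equ::gradient}, we use $\mathbb A_i=\pi^\T$, $\mathbb P_i=\pi^\T$, and $P_i=\lambda e_i^\T+(1-\lambda)\pi^\T$, so that $P_i-\mathbb P_i=\lambda(e_i-\pi)^\T$. We also use $M=\kappa_1^2\,\pi(\pi-\frac1d\mathbf1)^\T/(\|\pi\|\,\|\pi-\frac1d\mathbf1\|)$. Then
\[
M^\T\Var(\pi)\pi=\frac{\kappa_1^2\,\|\pi\|_{\Var(\pi)}^2}{\|\pi\|\,\|\pi-\frac1d\mathbf1\|}\Big(\pi-\tfrac1d\mathbf1\Big).
\]
Next, $(e_i-\pi)^\T(\pi-\frac1d\mathbf1)=\pi_i-\|\pi\|^2$, because the $\mathbf1$-terms cancel as $\mathbf 1^\T\pi=1$. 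Moreover $\sum_i\pi_i^2(\pi_i-\|\pi\|^2)=\pi^\T\Var(\pi)\pi$. Together these give
\[
\pi^\T G\pi=-\lambda\kappa_1^2\,\frac{\|\pi\|_{\Var(\pi)}^4}{\|\pi\|\,\|\pi-\frac1d\mathbf1\|},
\]
which yields \eqref{eq:effective_attention_2x2} with exactly the stated $c$.

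The positivity $c>0$ follows because $\Var(\pi)\succeq0$ has kernel $\mathrm{span}(\mathbf1)$, and $\pi\notin\mathrm{span}(\mathbf1)$ since $\pi_1>\pi_2$. The attention block is the symmetric operator $\begin{pmatrix}0&cA\\ cA&0\end{pmatrix}$ with $A=\alpha_1\alpha_1^\T$. Its eigenvalues are $\pm c\|\alpha_1\|^2$ and $0$, and the eigenvalue $c\|\alpha_1\|^2>0$ has eigenvectors with $\Delta W_Q=\Delta W_K\in\alpha_1\,\mathbb R^{1\times m}$. This is the exponentially unstable mode.

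The main obstacle is the sign bookkeeping in the computation of $\pi^\T G\pi$: the whole argument hinges on the identity $\sum_i\pi_i^2(\pi_i-\|\pi\|^2)=\pi^\T\Var(\pi)\pi$. I will also state carefully in what sense $J_{\mathrm{att}}$ is "positive semidefinite". It is indefinite, since it also has the eigenvalue $-c\|\alpha_1\|^2$, so the proof will assert only what is needed: a nonnegative quadratic form on the symmetric subspace $\Delta W_Q=\Delta W_K$, containing the single unstable direction. That direction dominates the contracting $J_{\mathrm{out}}$ by Lemma~\ref{lem:linearization_window}.
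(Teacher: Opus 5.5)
Your proof is correct. For the attention block it is the paper's computation: freeze $G=\partial\mathcal L/\partial\Phi|_{\theta_c^1}$ and insert the rank-one $W_0$. The paper packages the algebra as $\diag(\pi)P-\pi\pi^\T=\lambda\Var(\pi)$, i.e.\ $G=-\lambda\Var(\pi)M^\T\Var(\pi)$, which is equivalent to your identity $\sum_i\pi_i^2(\pi_i-\|\pi\|^2)=\pi^\T\Var(\pi)\pi$.

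The outer block is where you genuinely differ. The paper works in coordinates. It checks that $\Delta W_0v$ and $v^\T\Delta W_1$ are frozen for $v\perp\alpha_1$, writes the $(\Delta W_0\alpha_1,\Delta W_1^\T\alpha_1)$ system as $-\kappa_1^2A$ with an explicit block matrix $A$, and shows $A\succeq 0$ by Cauchy--Schwarz in the $\Var(\pi)$-seminorm. You instead use two facts:
\begin{itemize}
\item the Gauss--Newton form $D^\T\nabla^2_M\mathcal L\,D$ of the Hessian at a point where $\partial\mathcal L/\partial M=0$;
\item convexity of $M\mapsto\sum_i\pi_i\,\mathrm{CE}(P_i,\operatorname{softmax}(\pi^\T M))$, which is a log-sum-exp of the logits $\pi^\T M$ minus a linear term.
\end{itemize}
Your route is shorter and handles the normal and $\alpha_1$ directions at once. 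It also never uses the rank-one form of $(W_0,W_1)$: it works at any point with $W_Q=W_K=0$ and $\partial\mathcal L/\partial M=0$. The paper's computation, in exchange, gives the explicit matrix and hence the neutral directions. You should add one sentence noting that \eqref{equ::dynamics} with \eqref{equ::gradient} is exactly the gradient flow of this population loss, so that ``Hessian'' is meaningful; the paper relies on the same fact.

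Your caveat about $J_{\mathrm{att}}$ is right, and it points to a defect in the statement, not in your argument. The loss sees $(W_Q,W_K)$ only through the bilinear $W_QW_K^\T$. Hence at $W_Q=W_K=0$ the attention block is purely off-diagonal with spectrum $\{\pm c\|\alpha_1\|^2,0\}$, so it is not positive semidefinite. The paper's proof never establishes that claim. It only uses $c>0$ to produce the unstable mode, which is exactly what you prove.

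One minor point: the eigenvalue $c\|\alpha_1\|^2$ has multiplicity equal to the number of columns of $W_Q$. So the simple-eigenvalue clause of Lemma~\ref{lem:linearization_window}, with the obvious modification of its Step 5, yields alignment with the eigenspace $\{(\alpha_1w^\T,\alpha_1w^\T)\}$, not with a single vector.
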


By Lemma~\ref{lem:linearization_window}, once the trajectory enters an $O(\varepsilon)$-neighborhood of $\theta_c^1$, the dynamics is governed by the linearization for a duration $\Theta(\log(1/\varepsilon))$. It implies that the attention parameters $(W_Q, W_K)$ converge into the direction depending on the condensation direction. As a result, the attention structure prioritizes tokens that appear frequently in the steady-state distribution, indicating that the attention mechanism has become specific.

\begin{theorem}[High frequency token bias]
\label{thm:growth_of_attention}
Suppose the trajectory enters an $O(\varepsilon)$-neighborhood of $\theta_c^1$.
Within the linearization neighborhood of Lemma~\ref{lem:linearization_window}, there exists a unit vector
$\tilde{\alpha}_1\in \mathbb{R}^m$ such that, for generic small initialization of $(W_Q,W_K)$,
\begin{equation}
\label{eq:WQWK_alignment}
\frac{W_Q(t)}{\|W_Q(t)\|}
\ \rightarrow\
\alpha_1 \tilde{\alpha}_1^\T,
\qquad
\frac{W_K(t)}{\|W_K(t)\|}
\ \rightarrow\
\alpha_1 \tilde{\alpha}_1^\T .
\end{equation}
Consequently, along the unstable ray $W_Q=W_K=\kappa\,\alpha_1\tilde{\alpha}_1^\T$, the attention score matrix satisfies
\begin{equation}
\label{eq:Phi_direction}
\frac{\Phi}{\|\Phi\|}
=\frac{W_0 W_Q W_K^\T W_0^\T}{\|W_0 W_Q W_K^\T W_0^\T\|}
\ =\
\pi\pi^\T.
\end{equation}
Then for each $i\in[d]$ the attention distribution
exhibits a high-frequency bias:
\begin{equation}
\label{eq:high_freq_limit}
\lim_{\kappa \to \infty}
\mathbb{A}_{i}\!\left(W_0, W_1, \kappa \alpha_1 \tilde{\alpha}_1^\T, \kappa \alpha_1 \tilde{\alpha}_1^\T\right)
= e_1^\T.
\end{equation}
\end{theorem}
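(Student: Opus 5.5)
The plan is to solve the closed attention system \eqref{eq:effective_attention_2x2} explicitly. Then I transfer the result to the nonlinear flow via Lemma~\ref{lem:linearization_window}, and finally evaluate $\Phi$ and $\mathbb{A}_i$ along the resulting ray.

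\textbf{Step 1 (explicit linear solution).} Write $S:=\Delta W_Q+\Delta W_K$ and $D:=\Delta W_Q-\Delta W_K$. By Prop.~\ref{prop:effective_attention_dynamics_rewrite},
\[
\dot S = c\,\alpha_1\alpha_1^\T S,\qquad \dot D=-c\,\alpha_1\alpha_1^\T D .
\]
Take $\alpha_1$ of unit norm, absorbing the scale into $\kappa_1$. Then the projector $\alpha_1\alpha_1^\T$ gives
\[
S(t)=e^{ct}\alpha_1\alpha_1^\T S(0)+(I-\alpha_1\alpha_1^\T)S(0),\qquad
D(t)=e^{-ct}\alpha_1\alpha_1^\T D(0)+(I-\alpha_1\alpha_1^\T)D(0).
\]
Define $\tilde\alpha_1:=S(0)^\T\alpha_1/\|S(0)^\T\alpha_1\|$. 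This is well defined for generic Gaussian initialization, since $S(0)^\T\alpha_1\neq 0$ almost surely. It follows that
\[
\Delta W_Q(t)=\tfrac12\big(S(t)+D(t)\big)=\tfrac12 e^{ct}\|S(0)^\T\alpha_1\|\,\alpha_1\tilde\alpha_1^\T+O(\varepsilon),
\]
and the same holds for $\Delta W_K(t)$. Since $W_Q=W_K=0$ at $\theta_c^1$, $W_Q=\Delta W_Q$ and $W_K=\Delta W_K$. Thus the normalized matrices tend to $\alpha_1\tilde\alpha_1^\T$ as soon as $e^{ct}\to\infty$.

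\textbf{Step 2 (transfer to the nonlinear flow).} The unstable eigenvalue $c$ of $J_{\mathrm{att}}$ is not simple. Its eigenspace is $\{(\alpha_1\beta^\T,\alpha_1\beta^\T):\beta\in\mathbb{R}^m\}$, of dimension $m$. The remaining spectrum of the full Jacobian is contained in $\{0,-c\}$ from the attention block together with $\sigma(J_{\mathrm{out}})\subset(-\infty,0]$. So there is a spectral gap $\rho=c$ between the top eigenspace and the rest.

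I will rerun the argument of Lemma~\ref{lem:linearization_window} with the rank-one projector replaced by the spectral projector onto this eigenspace. The error bound \eqref{eq:linearization_error_bound} gives a deviation $C\varepsilon^2e^{2ct}$. The unstable component has size $\asymp\varepsilon e^{ct}$ and the rest stays $O(\varepsilon)$. Hence along any $t(\varepsilon)$ with $e^{ct}\to\infty$ and $\varepsilon e^{ct}\to 0$, the direction of $\Delta\theta$ converges to the unstable eigenspace. Within that eigenspace the direction is pinned to $\beta\propto\tilde\alpha_1$, because at leading order the projection is $(\Delta W_Q,\Delta W_K)\approx\tfrac12 e^{ct}(\alpha_1\alpha_1^\T S(0),\,\alpha_1\alpha_1^\T S(0))$. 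Meanwhile $(W_0,W_1)$ remain within $O(\varepsilon)$ of their values at $\theta_c^1$, because $J_{\mathrm{out}}\preceq 0$ and the nonlinear error stays small in the window. This proves \eqref{eq:WQWK_alignment}.

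\textbf{Step 3 (attention score direction and high-frequency bias).} Substitute $W_0=\kappa_1\pi\alpha_1^\T/\|\pi\|$ and $W_Q=W_K=\kappa\alpha_1\tilde\alpha_1^\T$. Using $\alpha_1^\T\alpha_1=1$ and $\tilde\alpha_1^\T\tilde\alpha_1=1$,
\[
\Phi=\frac{\kappa^2\kappa_1^2}{\|\pi\|^2}\,\pi\pi^\T .
\]
So $\Phi/\|\Phi\|$ is the normalized matrix $\pi\pi^\T$. I read \eqref{eq:Phi_direction} as equality of directions, since $\|\pi\pi^\T\|_F=\|\pi\|^2$. Put $\beta:=\kappa^2\kappa_1^2/\|\pi\|^2$. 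For each row $i$ and each $j\neq 1$,
\[
\frac{\mathbb{A}_{i,j}}{\mathbb{A}_{i,1}}=\frac{\pi_j}{\pi_1}\exp\big(\beta\pi_i(\pi_j-\pi_1)\big).
\]
This tends to $0$ as $\kappa\to\infty$, because $\pi_i>0$ and $\pi_1>\pi_j$ under the hypothesis of Prop.~\ref{prop::second_critical_point}. Since each row of $\mathbb{A}$ sums to $1$, $\mathbb{A}_i\to e_1^\T$, which is \eqref{eq:high_freq_limit}.

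\textbf{Main obstacle.} The delicate part is Step 2. Lemma~\ref{lem:linearization_window} is stated for a simple top eigenvalue, so I must redo its alignment argument for a degenerate eigenspace. I must also argue that the selected vector $\tilde\alpha_1$ is inherited from the initialization, at leading order, rather than being perturbed by the $O(\varepsilon^2e^{2ct})$ nonlinear error.

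The first approach I would try is to project the Duhamel formula onto the eigenspace and its complement, and bound the projected error by $C\varepsilon^2e^{2ct}$. Relative to the leading term $\varepsilon e^{ct}$ this is $O(\varepsilon e^{ct})\to 0$, which gives the required alignment. The $\kappa\to\infty$ limit in Step 3 is taken along the exact ray, so it needs no further approximation.
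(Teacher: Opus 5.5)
Your proposal is correct and follows the paper's route: the linearized attention block of Proposition~\ref{prop:effective_attention_dynamics_rewrite}, alignment via Lemma~\ref{lem:linearization_window}, then direct evaluation of $\Phi$ and $\mathbb{A}_i$ along the ray (the paper only sketches this in the text before the theorem). Your added care fills real gaps in that sketch: the unstable eigenvalue $c$ has the $m$-dimensional eigenspace $\{(\alpha_1 b^\T,\alpha_1 b^\T): b\in\mathbb{R}^m\}$, so the lemma's simple-eigenvalue alignment must indeed be rerun with the spectral projector; your $S/D$ decomposition identifies $\tilde\alpha_1$ and shows $W_Q$ and $W_K$ align with the same sign, which is what gives $\Phi\propto+\pi\pi^\T$; and \eqref{eq:Phi_direction} should indeed be read as $\Phi/\|\Phi\|=\pi\pi^\T/\|\pi\|^2$.
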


\subsection{Dilution of Attention}
\label{sec::decay_dynamics}
Sec.~\ref{sec:Increase_dynamics_rewrite} shows that the second critical point $\theta_c^1$ is a saddle whose unique unstable direction lies in the attention subsystem: after a transient of length
$\Theta(\log(1/\varepsilon))$, the attention parameters become approximately rank-1 and aligned while the outer parameters remain close to their initial values on the condensation ray. In this subsection, we stay in the same neighborhood of $\theta_c^1$ but go beyond linearization: Conditioned on the rank-1 manifold, we resolve the next-order perturbations in embeddings induced by the evolution of attention. This refinement reveals a redistribution effect in the embeddings that gradually undermines the previously formed focus, leading to the dilution phase.

Motivated by the alignment result in Sec.~\ref{sec:Increase_dynamics_rewrite}, we model the post-transient phase by the rank-one parametrization
\begin{equation}
\label{eq:rank1_manifold_param_rewrite}
\begin{aligned}
W_0 = \gamma(t)\,\alpha_1^\T,&\qquad
W_1 = \alpha_1\,\beta(t)^\T,\qquad \\
W_Q = \lambda_Q(t)\,\alpha_1\,\tilde\alpha_1^\T, &\qquad
W_K = \lambda_K(t)\,\alpha_1\,\tilde\alpha_1^\T,    
\end{aligned}
\end{equation}
where $\gamma(t),\beta(t)\in\mathbb R^{d}$ and $\lambda_Q(t),\lambda_K(t)\in\mathbb R$.
At the entry time $t_0$ of this phase,
\begin{equation}
\begin{aligned}
\gamma(t_0)=\kappa_1\frac{\pi}{\|\pi\|},
&\qquad
\beta(t_0)=\kappa_1\frac{\pi-\frac{1}{d}\mathbf 1}{\big\|\pi-\frac{1}{d}\mathbf 1\big\|}, \\
\lambda_Q (t_0) &= \lambda_K (t_0) = o (1).
\end{aligned}    
\end{equation}
For notational convenience, we also define the attention amplitude $\eta(t):=\lambda_Q(t)\lambda_K(t)$ which is the only combination that enters the reduced dynamics below.

\begin{proposition}[Invariant rank-one manifold]
\label{prop:rank1_invariant_manifold}
Assume $\pi_1>\pi_2=\cdots=\pi_d$. Define
\[
\mathcal W:=\Big\{\theta \ \text{satisfying \eqref{eq:rank1_manifold_param_rewrite} for some }
(\gamma,\beta,\lambda_Q,\lambda_K)\Big\}.
\]
If $(W_0,W_1,W_Q,W_K)\in\mathcal W$ at time $t_0$, then the gradient flow \eqref{equ::dynamics}
remains in $\mathcal W$ for all $t\ge t_0$.
Moreover, if  $\gamma_2 (t_0) = \dots = \gamma_{d} (t_0)$ and  $\beta_2 (t_0) = \dots = \beta_{d} (t_0)$, it will be preserved for any $t \ge t_0$.
\end{proposition}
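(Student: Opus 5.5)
Since $\mathcal W$ is itself a smooth manifold, the plan is to show that the vector field \eqref{equ::dynamics}, evaluated at any point of $\mathcal W$, is tangent to $\mathcal W$. Then the ODE restricted to the coordinates $(\gamma,\beta,\lambda_Q,\lambda_K)$ has a solution starting from the data at $t_0$. By uniqueness of solutions of the full (locally Lipschitz) flow, this solution coincides with the true trajectory.

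\textbf{Computing the vector field on $\mathcal W$.} Take a point of $\mathcal W$ and normalize $\|\alpha_1\|=\|\tilde\alpha_1\|=1$. Then
\[
W_{QK}=\lambda_Q\lambda_K\,\alpha_1\alpha_1^\T=\eta\,\alpha_1\alpha_1^\T,\qquad
\Phi=\eta\,\gamma\gamma^\T,\qquad
M=\gamma\beta^\T .
\]
Set $G_M:=\partial\mathcal L/\partial M$ and $G_\Phi:=\partial\mathcal L/\partial\Phi$, using the population forms \eqref{equ::gradient}. Substituting into \eqref{equ::dynamics} gives:
\begin{itemize}
\item $\dot W_0=-G_M\beta\,\alpha_1^\T-\eta\,(G_\Phi+G_\Phi^\T)\gamma\,\alpha_1^\T$, which has the form $\dot\gamma\,\alpha_1^\T$;
\item $\dot W_1=-\alpha_1\,\gamma^\T G_M$, which has the form $\alpha_1\dot\beta^\T$;
\item $\dot W_Q=-\alpha_1(\gamma^\T G_\Phi\gamma)\,\alpha_1^\T(\lambda_K\alpha_1\tilde\alpha_1^\T)=-\lambda_K(\gamma^\T G_\Phi\gamma)\,\alpha_1\tilde\alpha_1^\T$;
\item $\dot W_K=-\lambda_Q(\gamma^\T G_\Phi\gamma)\,\alpha_1\tilde\alpha_1^\T$, by the same computation.
\end{itemize}
Each velocity is therefore tangent to $\mathcal W$. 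This yields the closed reduced system
\[
\dot\gamma=-G_M\beta-\eta(G_\Phi+G_\Phi^\T)\gamma,\qquad
\dot\beta=-G_M^\T\gamma,\qquad
\dot\lambda_Q=-\lambda_K\,\gamma^\T G_\Phi\gamma,\qquad
\dot\lambda_K=-\lambda_Q\,\gamma^\T G_\Phi\gamma .
\]
Since $G_M$ and $G_\Phi$ depend only on $(\Phi,M)$, they are smooth functions of $(\gamma,\beta,\eta)$. The reduced ODE has a unique local solution, and embedding it back into $\mathcal W$ solves the full flow. Uniqueness of the full flow then gives invariance for as long as the solution exists, and gradient-flow solutions do not blow up in finite time.

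\textbf{Preserving the symmetry $\gamma_2=\dots=\gamma_d$, $\beta_2=\dots=\beta_d$.} Let $\sigma$ be a permutation of $\{2,\dots,d\}$ fixing token $1$, with permutation matrix $S$. Because $\pi_2=\dots=\pi_d$, we have $S\pi=\pi$ and $SPS^\T=P$ from \eqref{equ:P_def}. If $\Phi\mapsto S\Phi S^\T$ and $M\mapsto SMS^\T$, then from the definitions $\mathbb A\mapsto S\mathbb A S^\T$ and $\mathbb P\mapsto S\mathbb P S^\T$. Consequently $G_M$ and $G_\Phi$ are equivariant: $G_M(S\Phi S^\T,SMS^\T)=S\,G_M(\Phi,M)\,S^\T$, and likewise for $G_\Phi$. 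This follows by reindexing the sum over $i$ in \eqref{equ::gradient} and using $\Var(S^\T a)=S^\T\Var(a)S$.

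It follows that the reduced vector field $F$ satisfies $F(S\gamma,S\beta,\lambda_Q,\lambda_K)=(S\dot\gamma,S\dot\beta,\dot\lambda_Q,\dot\lambda_K)$. Here the $\lambda$-components are invariant because $(S\gamma)^\T SG_\Phi S^\T(S\gamma)=\gamma^\T G_\Phi\gamma$. The fixed-point subspace $\{S\gamma=\gamma,\ S\beta=\beta\ \forall S\}$ is exactly the symmetric set, and fixed-point subspaces of an equivariant field are invariant. Concretely, if $x(t)$ is a solution then so is $Sx(t)$, and both have the same initial data, so uniqueness forces $Sx(t)=x(t)$.

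The main obstacle is carrying out the equivariance verification for $G_\Phi$ carefully, including the transposes and the $\Var(\mathbb A_i)$ factor. The tangency computation is a direct substitution.
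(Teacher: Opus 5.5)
Your proposal is correct and follows essentially the same route as the paper. The paper also substitutes the rank-one parametrization into \eqref{equ::dynamics} to show each velocity stays in the same span, and then uses a symmetry-plus-ODE-uniqueness argument for the low-frequency equalities. The only cosmetic difference is that the paper obtains equivariance from invariance of the population loss under $\theta\mapsto(\Pi_\sigma W_0, W_1\Pi_\sigma^\T, W_Q, W_K)$, a scalar check that sidesteps the transpose and $\Var(\mathbb A_i)$ bookkeeping you flag, whereas you verify equivariance of $\partial\mathcal L/\partial M$ and $\partial\mathcal L/\partial\Phi$ directly on the reduced coordinates.
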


Restricting the gradient flow to $\mathcal W$ yields a closed system in $(\gamma,\beta,\eta)$:
\begin{equation}
\label{eq:rank1_reduced_dynamics_rewrite}
\begin{aligned}
\dot\gamma
&=
- \frac{\partial \mathcal{L}}{\partial M}\,\beta
-\eta\Big(\frac{\partial \mathcal{L}}{\partial \Phi}+\Big(\frac{\partial \mathcal{L}}{\partial \Phi}\Big)^\T\Big)\gamma,
\\
\dot\beta
&=
- \Big(\frac{\partial \mathcal{L}}{\partial M}\Big)^\T\gamma,\quad \dot\eta=- 2\eta\,\gamma^\T \frac{\partial \mathcal{L}}{\partial \Phi}\gamma.
\end{aligned}
\end{equation}
By Proposition~\ref{prop:rank1_invariant_manifold}, it suffices to track two-group coordinates
\begin{equation*}
    \begin{aligned}
&\gamma_1 ,\quad \gamma_2=\cdots=\gamma_{|\mathcal V|} := \gamma_{i \neq 1}, \\
&\beta_1 ,\quad \beta_2=\cdots=\beta_{|\mathcal V|} := \beta_{i \neq 1},        
    \end{aligned}
\end{equation*}
and denote $\Delta\gamma:= \gamma_1 - \gamma_{i \neq 1}$, $\Delta\beta:= \beta_1 - \beta_{i \neq 1}$.
Intuitively, this reduces the post-alignment dynamics to an effective two-group system (token~1 versus all others).
Importantly, we are \emph{still} analyzing the flow near the same critical point $\theta_c^1$; the difference from
Sec.~\ref{sec:Increase_dynamics_rewrite} is that we can keep the attention direction fixed and resolve the
next-order feedback that governs redistribution on the rank-one manifold.

\begin{theorem}[Mass redistribution]
\label{thm:quality_redistribution_rewrite}
Consider the linearization of reduced dynamics (\ref{eq:rank1_reduced_dynamics_rewrite}) on $\mathcal{W}$ at critical point corresponding to $\theta_c^1$. There exists $c > 0$ such that
\begin{equation}
\label{equ::quality_redistribution}
(1-\pi_1) \gamma_1 (t) - (d -1) \pi_1 \gamma_{i \neq 1} (t)  \propto \exp (c t).
\end{equation}
Consequently, $\gamma_1(t)$ and $\gamma_{i\neq1}(t)$ cannot move in the same direction:
a weighted contrast between high-frequency token and the remaining tokens is exponentially amplified.
\end{theorem}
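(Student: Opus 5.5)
I plan to linearize the reduced system \eqref{eq:rank1_reduced_dynamics_rewrite} at the point corresponding to $\theta_c^1$, namely $\gamma_*=\kappa_1\pi/\|\pi\|$, $\beta_*=\kappa_1(\pi-\frac1d\mathbf 1)/\|\pi-\frac1d\mathbf 1\|$, $\eta_*=0$. I will restrict to the two-group coordinates $(\gamma_1,\gamma_{i\neq1},\beta_1,\beta_{i\neq1},\eta)$, which is legitimate by Proposition~\ref{prop:rank1_invariant_manifold}. The claim asserts that the functional $\ell(\gamma):=(1-\pi_1)\gamma_1-(d-1)\pi_1\gamma_{i\neq1}$ is driven by an unstable mode. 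The key observation is that $\ell$ is (up to scale) the $\gamma$-component orthogonal to $\gamma_*\propto\pi$ within the symmetric subspace: indeed $\pi^\T(x,y,\dots,y)$ involves $\pi_1x+(1-\pi_1)y$, and the vector $(1-\pi_1,-\pi_1/(d-1)\cdot(d-1),\dots)$ direction is exactly the contrast killed by, or complementary to, the radial direction. I will therefore decompose $\Delta\gamma$ into a radial part along $\pi$ and the contrast $\ell$.

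First, I compute the linearization. Since $\partial\mathcal L/\partial M=0$ at $\theta_c^1$ (Proposition~\ref{prop::second_critical_point}) and $\eta_*=0$, the $\eta$-equation linearizes to $\dot\eta=-2\,(\gamma_*^\T G\gamma_*)\,\eta$, where $G:=\partial\mathcal L/\partial\Phi|_{\theta_c^1}$. From the computation behind Proposition~\ref{prop:effective_attention_dynamics_rewrite}, $-\gamma_*^\T G\gamma_*>0$. This follows from \eqref{equ::gradient} with $\mathbb A_i=\pi^\T$, $\mathbb P_i=\pi^\T$ and $P_i-\pi^\T=\lambda(e_i^\T-\pi^\T)$, which gives $G=-\lambda\,\Var(\pi)M^\T\Var(\pi)$ after summing $\sum_i\pi_i e_i(e_i-\pi)^\T=\Var(\pi)$. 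Hence $\eta$ grows like $e^{2c' t}$ with $2c'=-2\gamma_*^\T G\gamma_*$. Next, the $\gamma$-equation linearizes to
\[
\dot{\Delta\gamma}=-\,\mathrm d\Big(\tfrac{\partial\mathcal L}{\partial M}\Big)\beta_*-\eta\,(G+G^\T)\gamma_*,
\]
together with the coupled $(\Delta\gamma,\Delta\beta)$ outer block, which is $J_{\mathrm{out}}$ restricted and negative semidefinite by Proposition~\ref{prop:effective_attention_dynamics_rewrite}.

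Second, I will apply $\ell$ to this equation. I expect that the outer-block contribution acts on $\ell$ only through stable or neutral modes, while the forcing $-\eta\,\ell\big((G+G^\T)\gamma_*\big)$ is a nonzero multiple of $\eta$. Using the explicit $G$, I will compute $(G+G^\T)\gamma_*\propto\Var(\pi)\pi\cdot(\text{scalar})+\Var(\pi)\beta_*(\dots)$, and evaluate its two-group components. The result of the statement then follows: $\ell(\Delta\gamma)$ solves $\dot x=-ax+b\,\eta$ with $a\ge0$ and $b\ne0$, so $x(t)\propto e^{ct}$ with $c=2c'$ after the transient, because the forced response dominates. Since $\ell(\gamma_*)=0$ (to be checked: $(1-\pi_1)\pi_1-(d-1)\pi_1\frac{1-\pi_1}{d-1}=0$, which holds), the quantity in \eqref{equ::quality_redistribution} equals $\ell(\Delta\gamma)$. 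The ``opposite directions'' corollary follows because the radial combination $\pi_1\gamma_1+(1-\pi_1)\gamma_{i\neq1}$ stays bounded (it lies in the contracting or neutral outer block), so an exponentially growing $\ell$ forces $\Delta\gamma_1$ and $\Delta\gamma_{i\neq1}$ to have opposite signs.

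The main obstacle is the second step: showing that the outer-block linear part does not itself feed $\ell$ with an unstable or resonant term, and that $b\neq0$. For the first point, I will use the two-group reduction to write the $4\times4$ outer block explicitly and verify that its eigenvalues are $\le0$ (as implied by $J_{\mathrm{out}}\preceq0$). For the second, I will compute $\ell(\Var(\pi)\pi)=\pi_1(1-\pi_1)\big(\pi_1-\tfrac{1-\pi_1}{d-1}\big)\cdot(\text{positive})$, which is nonzero precisely because $\pi_1>\pi_2$.
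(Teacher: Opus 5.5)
Your skeleton matches the paper's. You apply the contrast $\ell(\gamma)=(1-\pi_1)\gamma_1-(d-1)\pi_1\gamma_{i\neq1}$ to the linearized $\gamma$-equation, show that only an $\eta$-forcing survives, and integrate against $\eta^1\propto e^{ct}$ using $\ell(\gamma(t_0))=0$. However, the step you call the main obstacle is handled with the wrong tool, and the forcing term is incomplete.

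\textbf{The outer block is not autonomous.} You identify $-\mathrm d(\partial\mathcal L/\partial M)\beta_*$ with the $J_{\mathrm{out}}$ block of Proposition~\ref{prop:effective_attention_dynamics_rewrite}. That decoupling relied on $\Phi$ being quadratic in $(W_Q,W_K)$. In the reduced coordinates $\Phi=\eta\gamma\gamma^\T$ is linear in $\eta$, so $\mathrm d\mathbb A_i=\eta^1\gamma_{*,i}\gamma_*^\T\Var(\pi)\neq 0$. The row residuals $P_i-\pi^\T=\lambda(e_i-\pi)^\T$ do not vanish individually, so both $\Delta\gamma$ and $\Delta\beta$ are driven by $\eta^1$. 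The outer state therefore grows like $e^{ct}$ as well. Any dependence of $\dot\ell$ on $(\Delta\gamma,\Delta\beta)$ would then be of the same order as $b\eta$ and could cancel it. Checking $J_{\mathrm{out}}\preceq 0$ does not rule this out, and the scalar model $\dot x=-ax+b\eta$ is unjustified.

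\textbf{What actually closes the argument is an exact annihilation.} At $\theta_c^1$,
\[
\mathrm d\Big(\frac{\partial\mathcal L}{\partial M}\Big)=-\sum_i\pi_i\,\mathrm d\mathbb A_i^\T(P_i-\pi^\T)+\pi\sum_i\pi_i\,\mathrm d\mathbb P_i .
\]
The second piece times $\beta_*$ lies in $\mathrm{span}(\pi)=\ker\ell$; you already noted $\ell(\pi)=0$. Hence $\dot\ell$ does not depend on $(\Delta\gamma,\Delta\beta)$ at all, so $a=0$. This is exactly the paper's cancellation of the $\hat p_i^{\,1}$ terms in $(1-\pi_1)\dot\gamma_1^1-(d-1)\pi_1\dot\gamma_{i\neq1}^1$.

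\textbf{The missing forcing term.} The first piece above is not killed by $\ell$. It contributes $\lambda\eta\,(\gamma_*^\T\Var(\pi)\beta_*)\Var(\pi)\gamma_*$, which you omit when you take the forcing to be only $-\eta(G+G^\T)\gamma_*$. Use $\Var(\pi)\gamma_*=\pi_1(1-\pi_1)\Delta\gamma^0 v$ and $\Var(\pi)\beta_*=\pi_1(1-\pi_1)\Delta\beta^0 v$, where $v=(1,-\tfrac{1}{d-1},\dots,-\tfrac{1}{d-1})^\T$, $\ell(v)=1$, $\Delta\gamma^0=\gamma_{*,1}-\gamma_{*,2}$ and $\Delta\beta^0=\beta_{*,1}-\beta_{*,2}$. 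The correct equation is then $\dot\ell=3\lambda\pi_1^2(1-\pi_1)^2\Delta\beta^0(\Delta\gamma^0)^2\,\eta^1$. The $\Phi$-term supplies the factor $2$ and the attention variation inside $\partial\mathcal L/\partial M$ supplies the remaining $1$. The sign and nonvanishing are unaffected, so the displayed claim survives once both corrections are made.

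\textbf{The ``opposite directions'' argument fails.} The radial combination $\pi_1\gamma_1+(1-\pi_1)\gamma_{i\neq1}$ is not bounded, because it is itself driven by $\eta$:
\begin{itemize}
\item the $\eta$-forcing has radial component $\langle\pi,v\rangle=\pi_1-\pi_2\neq 0$;
\item the term $\pi\sum_i\pi_i\,\mathrm d\mathbb A_i M\Var(\pi)\beta_*$ is purely radial and proportional to $\eta$;
\item the radial part is coupled to the $\eta$-driven $\Delta\beta$ through $-\pi\,\pi^\T\gamma_*\,\Delta\beta^\T\Var(\pi)\beta_*$.
\end{itemize}
It therefore generically grows at the same rate $c$, and no sign conclusion for $\Delta\gamma_1$ versus $\Delta\gamma_{i\neq1}$ follows. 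The paper's proof establishes only the exponential amplification of the weighted contrast and treats ``cannot move in the same direction'' as its interpretation. You should not claim more from this argument.
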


Theorem~\ref{thm:quality_redistribution_rewrite} explains the mechanism behind the dilution phase.
After alignment, the attention direction is essentially fixed, and the attention amplitude $\eta(t)$ feeds back
into $\dot\gamma$ through the $\eta(\partial\mathcal L/\partial\Phi)\gamma$ term in
\eqref{eq:rank1_reduced_dynamics_rewrite}.
The redistribution effect forces a growing separation between $\gamma_1$ and $\gamma_{i\neq1}$, so the embedding mass cannot remain concentrated along $\pi$.

Therefore, the logit difference that causes high-frequency bias gradually weakens: the attention weights corresponding to low-frequency tokens are no longer concentrated on high-frequency tokens.
This marks a shift from focus to dilution.

\subsection{Emergence of a new direction via data asymmetry}
\label{sec:new_direction}

In Sec. \ref{sec::decay_dynamics}, the dynamics collapses onto a rank-one invariant
manifold, effectively reducing learning to a ``token~1 vs.\ all others'' two-group system.
Further learning requires separating low-frequency states, which demands growth of embeddings
along directions that distinguish low-frequency tokens.

However, for perfectly symmetric data among low-frequency tokens, the rank-one manifold may contain a degenerate critical point where both driving forces vanish: $\partial \mathcal L/\partial M=0$ and $\partial \mathcal L/\partial\Phi=0$. Crucially, the degeneracy is not only tangential, but also transverse. As a consequence, linearization does not generate a mechanism that pushes the trajectory away from the rank-one manifold.

\begin{proposition}[Degenerate critical point]
\label{prop:bad_local_min_main}
Assume perfect symmetry among low-frequency tokens. On the rank-one invariant manifold $\mathcal{W}$ (Proposition~\ref{prop:rank1_invariant_manifold}),
there exists a critical point, which is a neutrally stable equilibrium for the linearized dynamics, such that
$\frac{\partial \mathcal{L}}{\partial M}=0$ and $\frac{\partial \mathcal{L}}{\partial\Phi}=0$.
\end{proposition}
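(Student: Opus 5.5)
We construct the critical point explicitly inside the two-group subspace of $\mathcal W$ guaranteed by Proposition~\ref{prop:rank1_invariant_manifold}, using the reduced coordinates $(\gamma_1,\gamma_{i\neq1},\beta_1,\beta_{i\neq1},\eta)$ of \eqref{eq:rank1_reduced_dynamics_rewrite}. The natural candidate is a point where the model prediction matches the target as well as a two-group model can. By symmetry, on this subspace $\mathbb A_i$ and $\mathbb P_i$ depend only on whether $i=1$ or $i\neq1$. Moreover, $\mathbb P_i$ is of the form $(p^{(i)}_1, q^{(i)},\dots,q^{(i)})$. The target rows are $P_1=\lambda e_1^\T+(1-\lambda)\pi^\T$ and, for $i\ne1$, $P_i=\lambda e_i^\T+(1-\lambda)\pi^\T$. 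A two-group model cannot reproduce $P_i$ exactly for $i\neq1$. We therefore aim not for $\mathbb P_i=P_i$ but for the first-order conditions.

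\textbf{Step 1 (reduce the gradients).} We plug the symmetric ansatz into \eqref{equ::gradient}. Because $M=\gamma\beta^\T\|\alpha_1\|^2$ is rank one, $\partial\mathcal L/\partial M=0$ is implied by two scalar identities. The first is $\sum_i\pi_i\mathbb A_i^\T(P_i-\mathbb P_i)\beta$-type contractions. The second is the vanishing of the group-averaged residual $\sum_i \pi_i \mathbb A_i^\T (P_i-\mathbb P_i)$, projected onto the span of $e_1$ and $\mathbf 1-e_1$. Its component orthogonal to that span must also vanish. This holds because $\sum_{i\ne1}\pi_i(P_i-\mathbb P_i)$ restricted to tokens $2,\dots,d$ is symmetric, hence proportional to $\mathbf 1-e_1$. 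In particular, the $\lambda e_i$ parts average to $\lambda\pi_2(\mathbf 1-e_1)$. The problem thus reduces to matching the two-group marginals. These are $\sum_i\pi_i \mathbb A_{i,\cdot}$-weighted residual masses on token $1$ and on the rest. It also requires vanishing of the attention-weighted residual $\sum_i\pi_i(\mathbb A_{i1}-\bar{\mathbb A})(P_{i}-\mathbb P_i)$. Similarly, $\partial\mathcal L/\partial\Phi=-\sum_i\pi_i e_i (P_i-\mathbb P_i)M^\T\Var(\mathbb A_i)$ reduces to two scalars, one for $i=1$ and one for $i\ne1$. These are $(P_i-\mathbb P_i)\beta\cdot\gamma^\T\Var(\mathbb A_i)$. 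Each vanishes if $(P_i-\mathbb P_i)\beta=0$ in the grouped sense, i.e.\ if the predicted probability of token $1$ equals $P_{i1}$ for each group.

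\textbf{Step 2 (solve).} We choose $\beta$ with $\beta_1-\beta_{i\neq1}=\Delta\beta$. Then $\mathbb P_{i1}$ is a logistic function of $\mathbb A_i\gamma\,\Delta\beta\,\|\alpha_1\|^2$ plus a $\log(d-1)$ shift. We need $\mathbb P_{11}=\lambda+(1-\lambda)\pi_1$ and $\mathbb P_{i1}=(1-\lambda)\pi_1$ for $i\neq1$. This requires the two logits $\mathbb A_1\gamma$ and $\mathbb A_2\gamma$ to differ. They are controlled by $\eta$ through $\Phi=\eta\|\alpha_1\|^2\|\tilde\alpha_1\|^2\gamma\gamma^\T$. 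We solve the two equations in the unknowns $(\Delta\gamma,\Delta\beta)$ at fixed $\eta$ (or in $(\eta,\Delta\beta)$) by an intermediate value or monotonicity argument. Here $\mathbb A_1\gamma-\mathbb A_2\gamma$ ranges over $(0,\infty)$ as the contrast grows. When both grouped residuals vanish, each grouped component of $P_i-\mathbb P_i$ is zero, and $\partial\mathcal L/\partial\Phi=0$ follows. With the symmetry argument of Step 1, $\partial\mathcal L/\partial M=0$ follows as well. Then every term on the right of \eqref{equ::dynamics} vanishes, so the point is critical.

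\textbf{Step 3 (neutral stability).} We linearize \eqref{equ::dynamics} at this point. Since both gradients vanish, the Jacobian is the Hessian of the loss composed with the parametrization. This Hessian is $-\,D^\T H D$, with $H\succeq0$ the Hessian of cross-entropy in the outputs and $D$ the parameter-to-$(M,\Phi)$ derivative. Hence the Jacobian is negative semidefinite. This gives no unstable direction, and there are zero modes along scaling symmetries $(W_0,W_1)\mapsto(sW_0,W_1/s)$ and $(W_Q,W_K)\mapsto(sW_Q,W_K/s)$. Transverse directions separating low-frequency tokens enter $D$ only at second order. The reason is that first-order changes of $M$ and $\Phi$ off the symmetric subspace pair with zero residuals. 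This makes the degeneracy transverse, as claimed.

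\textbf{Main obstacle.} The hard part is Step 2, namely the existence of a solution with finite parameters. The targets are strictly inside $(0,1)$, so finiteness is plausible. However, showing the two grouped equations are simultaneously solvable needs that the map $(\eta,\Delta\beta)\mapsto(\mathbb P_{11},\mathbb P_{21})$ covers the target. I would first try fixing $\gamma$ on the ray and doing a continuity argument in $\eta$.
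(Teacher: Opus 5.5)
Your Steps 1--2 are sound. The conditions $\mathbb P_{1,1}=P_{1,1}$ and $\mathbb P_{2,1}=P_{2,1}$ give $(P_i-\mathbb P_i)\beta=0$ for every $i$, hence $\partial\mathcal L/\partial\Phi=0$. Together with $P_1=\mathbb P_1$, $\mathbb A_i=\mathbb A_2$ and $\sum_{i\neq1}(P_i-\mathbb P_i)=0$, they also give $\partial\mathcal L/\partial M=0$. So you obtain a genuine critical point at finite $\eta$.

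\textbf{The gap is Step~3.} Vanishing of $\partial\mathcal L/\partial M$ and $\partial\mathcal L/\partial\Phi$ only removes the curvature of the maps $\theta\mapsto M,\Phi$. What remains is the Hessian of $\mathcal L$ as a function of $(M,\Phi)$, and this is \emph{not} positive semidefinite. Cross-entropy is convex in the logits $z_i=\mathbb A_iM$, not in $(M,\Phi)$. The logits are nonlinear in $(M,\Phi)$, and, as you note yourself, the residuals $P_i-\mathbb P_i$ are nonzero for $i\neq1$. So the term $2\sum_i\pi_i\,\mathrm d\mathbb A_i\,\mathrm dM\,(\mathbb P_i-P_i)^\top$ survives. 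Your claim that symmetry-breaking directions enter only at second order is false.

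Concretely, take $d=3$, $v=e_2-e_3$, and perturb on $\mathcal W$ by $\gamma\mapsto\gamma+av$, $\beta\mapsto\beta+\tfrac c2v$ at fixed $\eta$. Then $m_1=\mathbb A_1\gamma$ moves only at $O(a^2)$. But $m_2,m_3$ move by $\pm a\kappa$, with $\kappa=\eta\,\xi_2(1-\xi_2)(\Delta\gamma)^2\neq0$. These shifts pair with the opposite residuals $P_2-\mathbb P_2=-(P_3-\mathbb P_3)=\tfrac\lambda2v^\top$, giving
\[
\mathcal L-\mathcal L^*=\pi_2\kappa^2\|\beta\|^2_{\Var(\mathbb P_2)}\,a^2-\pi_2\lambda\kappa\,ac+\Big(\tfrac{\pi_1}{4}m_1^2\mathbb P_{1,2}+\tfrac{\pi_2}{2}m_2^2\mathbb P_{2,2}\Big)c^2+O(3).
\]
Since $m_i\Delta\beta$ and $\|\beta\|^2_{\Var(\mathbb P_2)}/(\Delta\beta)^2$ are fixed by the targets, the sign of the discriminant does not depend on $\kappa$. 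For $\pi=(\tfrac12,\tfrac14,\tfrac14)$ and $\lambda=0.8$, the relevant comparison is $\pi_2\lambda^2=0.16$ against roughly $0.065$, so the form is indefinite. Hence \emph{every} finite-$\eta$ solution of your Step~2 is a strict saddle of the gradient flow, not a neutrally stable point. Separately, $(W_0,W_1)\mapsto(sW_0,W_1/s)$ is not a symmetry, since $\Phi$ scales by $s^2$.

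The paper avoids this by placing the point in the attention-saturated regime. It takes $\eta$ large enough that $\mathbb A_1=e_1$ and $\mathbb A_{i\neq1}=\hat e_1$, and uses these as exact. Then $\Var(\mathbb A_1)=0$ and $\gamma^\top\Var(\hat e_1)=0$, so $\mathrm d\mathbb A_1=0$, $\mathrm d\mathbb A_2=\mathrm d\mathbb A_3$ and $\mathrm d\mathbb A_i\gamma=0$. This kills both $\mathrm d(\partial\mathcal L/\partial\Phi)$ and the residual cross term; it is exactly the case $\kappa=0$ above. The linearization then becomes the explicit sum of squares of Lemma~\ref{lem:J0_matrix_app}, which is negative semidefinite with $\mathrm d\gamma_-$ in its kernel. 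So the ingredient you are missing is saturation, not a generic Gauss--Newton argument. Note also that neutral stability is really a property of this limiting configuration: the unstable eigenvalue exhibited above is of order $\kappa^2$ and only vanishes as $\eta\to\infty$.
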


The solution to remove degeneracy is to introduce perturbations that breaks the symmetry. In practice, low-frequency tokens rarely have identical frequencies. To model a minimal asymmetry while keeping calculations simple, we focus on $d=3$ and perturb the stationary distribution by a small parameter $\delta$:
\[
\pi^\T=(\pi_1,\tfrac{1-\pi_1}{2},\tfrac{1-\pi_1}{2})
\ \Rightarrow \
\tilde\pi^\T=(\pi_1,\tfrac{1-\pi_1}{2}+\delta,\tfrac{1-\pi_1}{2}-\delta)
\]
We study stationary points of the perturbed gradient field
\(
-\nabla_\theta \mathcal L(\theta,\delta)=0
\)
near the degenerate critical point.
After shifting coordinates so that $\theta=0$ corresponds to the degenerate critical point, a formal expansion takes the form
\begin{equation}
\label{eq:grad_expansion_main}
-\nabla \mathcal{L} (\theta, \delta)
=
J_0\theta+\delta f_1 + \text{h.o.t.},
\end{equation}
where $J_0=-\nabla^2_\theta\mathcal L$ at $\delta=0$.

If $J_0$ were invertible, the implicit function theorem would apply, and we could directly obtain the solution $\theta(\delta)$. Unfortunately, due to symmetry, $J_0$ is degenerate, so we use the standard Lyapunov–Schmidt reduction. Let $Q_K=(k_1,\dots,k_{d_K})$ and $Q_R=(q_1,\dots,q_{d_R})$ be orthonormal bases for the kernel and range subspaces of
$J_0$:
\[
\mathbb{R}^{p}=\operatorname{Ker}(J_0)\oplus \operatorname{Ran}(J_0),\qquad \theta = Q_K x + Q_R y,
\]
where $p$ is the parameter dimension.
Projecting the stationarity condition onto the range and kernel yields the equivalent system
\begin{equation}
    - Q_R^\T \nabla \mathcal{L} (\theta, \delta) = 0, \quad - Q_K^\T \nabla \mathcal{L} (\theta, \delta) = 0.
\end{equation}
The range equation can be solved by the implicit function theorem since
$Q_R^\T J_0 Q_R$ is invertible, yielding a smooth map $y=\zeta(x,\delta)$.
Substituting back into the kernel equation produces a reduced low-dimensional problem in $x$ whose solutions
describe nearby stationary points.

The key effect of the perturbation is that it splits the previously flat transverse directions. A genuinely transverse positive eigenvalue of order $\Theta(\delta)$ appears, while tangential instability is at most $\mathcal{O} (\delta^2)$. 
This fast transverse instability is what drives the trajectory away from the rank-one manifold and seeds a new embedding direction.

\begin{theorem}[Asymmetry lifts degeneracy and induces a new direction]
\label{thm:new_direction_main}
Consider the perturbed stationary distribution with parameter $\delta$ above.
There exists a point $\theta(\delta)$ near the degenerate critical point such that
\begin{equation}
\|\nabla_\theta \mathcal L(\theta(\delta),\delta)\| = O(\delta^3).    
\end{equation}
Moreover, the Hessian at $\theta(\delta)$ exhibits two distinct scales:

1. Slow tangential instability. Any positive eigenvalues created from the previously degenerate directions are at most $O(\delta^2)$.

2. Fast normal instability. Under mild condition, in directions transverse to the rank-one manifold, there exists
a positive eigenvalue of order $\Theta(\delta)$.
\end{theorem}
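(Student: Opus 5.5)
We follow the Lyapunov--Schmidt scheme set up just before the statement, and build $\theta(\delta)$ as a truncated power series rather than an exact zero. This is enough because only an $O(\delta^3)$ residual is claimed. Shift coordinates so that $\theta=0$ is the degenerate critical point $\theta_*$ of Proposition~\ref{prop:bad_local_min_main}. Then expand
\[
-\nabla\mathcal L(\theta,\delta)=J_0\theta+\delta f_1+\tfrac12 B(\theta,\theta)+\delta\, G\theta+\tfrac12\delta^2 f_2+O(\|\theta\|^3+\delta^3),
\]
where $B$ is the symmetric bilinear second derivative, $G=\partial_\delta J$ at $(0,0)$, and $f_2=-\partial_\delta^2\nabla\mathcal L$. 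Write $\theta=\delta\theta_1+\delta^2\theta_2$ with $\theta_i=Q_Kx_i+Q_Ry_i$.

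\textbf{Residual.} At order $\delta$ we need $J_0\theta_1+f_1=0$. This is solvable iff $Q_K^\T f_1=0$. We would verify that condition by a symmetry argument. At $\delta=0$ the loss is invariant under the token swap $\sigma:2\leftrightarrow3$ acting on rows of $W_0$ and columns of $W_1$. Changing $\delta\mapsto-\delta$ is the same as applying $\sigma$. Hence $f_1$ is $\sigma$-odd, and it suffices to check that the kernel directions of $J_0$ relevant here (the neutral directions of Proposition~\ref{prop:bad_local_min_main}) can be handled explicitly. Concretely, we would compute $J_0$ in the $d=3$ rank-one coordinates $(\gamma,\beta,\eta)$ plus transverse blocks. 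We would show that any $\sigma$-odd kernel component of $f_1$ lies in a transverse block on which $J_0$ vanishes identically, and that such a component can be absorbed by choosing $x_1$. Then set $y_1=-(Q_R^\T J_0Q_R)^{-1}Q_R^\T f_1$. At order $\delta^2$ the range equation fixes $y_2$ via the invertible $Q_R^\T J_0 Q_R$. The kernel equation,
\[
Q_K^\T\big[\tfrac12B(\theta_1,\theta_1)+G\theta_1+\tfrac12 f_2\big]=0,
\]
is the reduced bifurcation equation for $x_1$, and we would solve it for a nondegenerate root. This root is the ``mild condition''. With these choices all terms through $\delta^2$ cancel, giving $\|\nabla\mathcal L(\theta(\delta),\delta)\|=O(\delta^3)$.

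\textbf{Hessian scales.} By Weyl/Kato perturbation theory,
\[
H(\delta)=-\nabla^2\mathcal L(\theta(\delta),\delta)=J_0+\delta\big(B(\theta_1,\cdot)+G\big)+O(\delta^2).
\]
Eigenvalues bifurcating from $0$ are, to first order, the eigenvalues of the compressed matrix $K_1:=Q_K^\T\big(B(\theta_1,\cdot)+G\big)Q_K$. We split $\operatorname{Ker}J_0$ into the tangential part $T$ (inside $\mathcal W$) and the normal part $N$. For the tangential claim, restrict to $\mathcal W$. There $\delta$ enters the reduced system~\eqref{eq:rank1_reduced_dynamics_rewrite} only through $\pi$. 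By $\sigma$-symmetry the two-group reduction of Proposition~\ref{prop:rank1_invariant_manifold} sees $\delta$ only through even combinations, because $\sum c_i=0$ makes $\pi_2+\pi_3$ exact. So the restricted Hessian is even in $\delta$, its $O(\delta)$ correction vanishes, and the tangential eigenvalues are $O(\delta^2)$. The residual second-order shift $\delta^2 Q_K^\T(\cdot)(\text{range-resolvent})(\cdot)Q_K$ is bounded likewise. For the normal claim, compute $K_1|_N$ explicitly. Its entries come from $\partial_\delta\partial\mathcal L/\partial M$ and from $\partial_\delta\partial\mathcal L/\partial\Phi$, which are proportional to $(e_2-e_3)$ couplings. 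These produce a symmetric block of the form $\delta\,\begin{pmatrix}0&a\\ a&0\end{pmatrix}$ on the transverse pair $(\Delta W_0\ \text{along } (e_2-e_3)\alpha_2^\T,\ \Delta W_1\ \text{along } \alpha_2(e_2-e_3)^\T)$, analogous to the origin analysis of Theorem~\ref{thm:initial_condensation}. This block has eigenvalue $|a|\delta$. Requiring $a\neq0$ is the mild condition.

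\textbf{Main obstacle.} We expect the hardest step to be the explicit computation of $J_0$'s kernel and of the compressions $Q_K^\T f_1$ and $K_1|_N$ at the degenerate point. This requires the second derivatives of $\mathbb A$ and $\mathbb P$ through the softmax at a point where $\Phi\neq0$. We would first evaluate everything in the $d=3$ two-group coordinates, using $\Var(\mathbb A_i)$ and the fact that $P_i=\mathbb P_i$ there. We would keep only terms linear in $(e_2-e_3)$, since by parity these are the only ones surviving at order $\delta$.
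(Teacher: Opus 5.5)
Your Lyapunov--Schmidt and Schur-complement skeleton matches the paper's, but your argument for the slow tangential claim fails. The swap $\sigma\colon 2\leftrightarrow 3$ combined with $\delta\mapsto-\delta$ gives only equivariance, $H(-\delta)=\sigma H(\delta)\sigma$ (using $\theta(-\delta)=\sigma\theta(\delta)$). So the first-order correction $H_1$ is $\sigma$-odd, not zero. Its compression $Q_K^\T H_1 Q_K$ therefore vanishes on the even--even block, which is all your two-group argument sees, and on the odd--odd block. It is unconstrained on the odd--even block.

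On $\mathcal W$ the kernel of $J_0$ contains the odd vector $k_1\propto((0,1,-1),(0,0,0))$ alongside the even vectors $k_2\propto((0,0,0),(1,1,1))$ and $k_3\propto(-\gamma,\beta)$. A nonzero $k_1^\T H_1 k_3$ would give tangential eigenvalues $\pm\Theta(\delta)$, which is exactly the mechanism you use for the transverse pair. Two-group coordinates cannot rule this out, for three reasons:
\begin{itemize}
\item they span only the $\sigma$-fixed part of $\mathcal W$;
\item they are no longer invariant once $\delta\neq0$;
\item $\theta(\delta)$ itself has $\beta_2\neq\beta_3$ at order $\delta$.
\end{itemize}
The paper closes this point by computing $Q_K^\T H_1 Q_K=0$ explicitly (Lemma~\ref{lem:QK_H1_QK_zero_app}). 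It uses structural identities of the blocks of $H_1$ and $J_1$, e.g.\ $B(1,1,1)^\T=B\beta=0$ and $(0,1,-1)B=0$.

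\textbf{Order-$\delta$ solvability.} A kernel component of $f_1$ cannot be ``absorbed by choosing $x_1$''. Since $J_0Q_K=0$, the kernel projection of $J_0\theta_1+f_1=0$ reads $Q_K^\T f_1=0$ whatever $x_1$ is, and parity removes only the $k_2,k_3$ components. You need the explicit facts that $\partial_\delta(\partial\mathcal L/\partial\Phi)=0$ and that every row of $\partial_\delta(\partial\mathcal L/\partial M)$ is a multiple of $(0,1,-1)$. That matrix then annihilates $\beta$ (since $\beta_2=\beta_3$), so $f_1\propto q_2$ lies in the range. $f_1$ has no transverse component, because the invariance of $\mathcal W$ in Proposition~\ref{prop:rank1_invariant_manifold} uses no symmetry and so holds for every $\delta$.

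\textbf{Order-$\delta^2$ kernel equation.} You leave it to ``a nondegenerate root'' and call that the mild condition, but the first claim of the statement is unconditional. The missing step is that $x_1=0$ already solves it. Set $N:=\lambda\gamma_{i\neq1}+(1-\lambda)(\pi_1\gamma_1+(1-\pi_1)\gamma_{i\neq1})$, $D:=\pi_1\gamma_1^2\mathbb P_{1,2}+(1-\pi_1)\gamma_{i\neq1}^2\mathbb P_{i\neq1,2}$ and $y_2=\sqrt2\,\delta N/D$. Then $\tfrac12Q_K^\T B(q_2y_2,q_2y_2)$ and $\delta Q_K^\T J_1(q_2y_2)$ both lie along $k_3$, with coefficients $\pm 2N^2\delta^2/D$ times $(\|\gamma\|^2+\|\beta\|^2)^{-1/2}$, so they cancel; and $f_2=0$.

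\textbf{Transverse block.} The $\Theta(\delta)$ size of $\partial\mathcal L/\partial M$ has two sources: the direct term $\partial_\delta(\partial\mathcal L/\partial M)$, and the shift $\mathrm d(\partial\mathcal L/\partial M)[\theta_1]$ along $q_2$. The latter is your $B(\theta_1,\cdot)$ term, which your description omits. The paper's mild condition is precisely that these two do not cancel. Also, since $\partial\mathcal L/\partial M\approx\delta\,w\,(0,1,-1)$ with $w\in\operatorname{span}\{e_1,(0,1,1)^\T\}$, the $W_0$-partner of the unstable mode is $w\,\alpha_{1,\perp}^\T$, not $(e_2-e_3)\alpha_{1,\perp}^\T$.
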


Theorem~\ref{thm:new_direction_main} shows that any generic low-frequency asymmetry lifts this degeneracy and produces a fast transverse unstable mode of size $\Theta(\delta)$.
Once the trajectory enters the neighborhood of $\theta(\delta)$, this transverse instability drives it away from the degenerate rank-one configuration and enables the emergence of a genuinely new embedding direction, allowing the model to further differentiate low-frequency tokens beyond the two-group description.

\section{Empirical evidence}
\begin{figure*}[!t]
    \centering
    \includegraphics[width=0.9\linewidth]{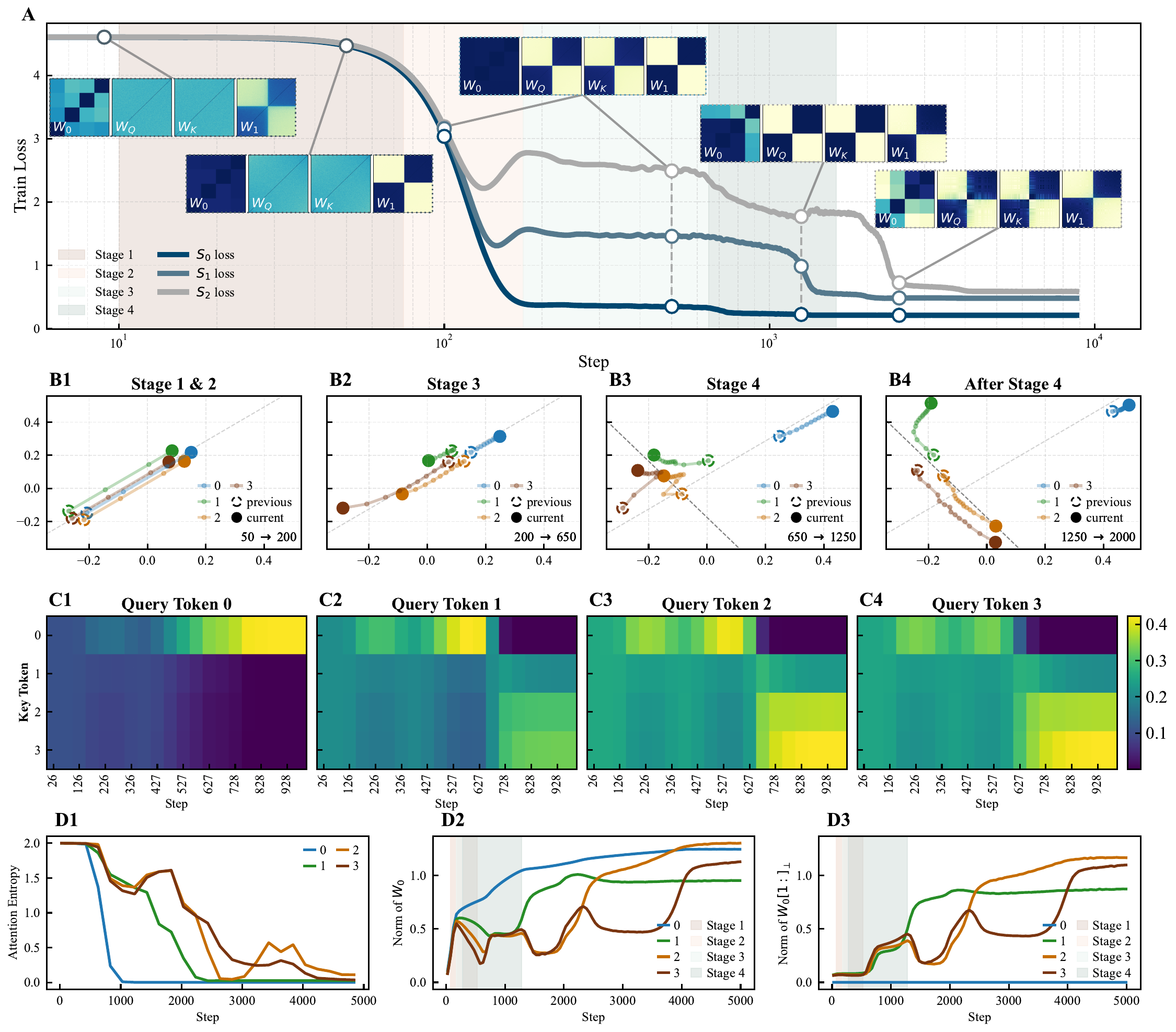}
    \caption{\textbf{Empirical focus--dilution cycle.} \textbf{(A)} Loss curves for $S_0, S_1, S_2$, accompanied by cosine-similarity matrices for $(W_0,W_Q,W_K,W_1)$.
    \textbf{(B)} PCA of embeddings reveals one-directional growth (Stage I $\&$ II), retraction during dilution (Stage~III), and expansion into new directions (Stage~IV).
    \textbf{(C)} Attention maps transition from focus to dilution. Between steps 350 and 550, the attention given to the first token by all query pairs increases synchronously.  Afterward, the first token focuses attention on itself, while other tokens reduce their attention towards it.
    \textbf{(D1)} Attention entropy; \textbf{(D2), (D3)} The embedding norm and the norm of perpendicular component onto the first token.}
    \label{fig:TotalPlot1}
\end{figure*}

\begin{figure*}[!htp]
    \centering

    \includegraphics[width=0.94\linewidth]{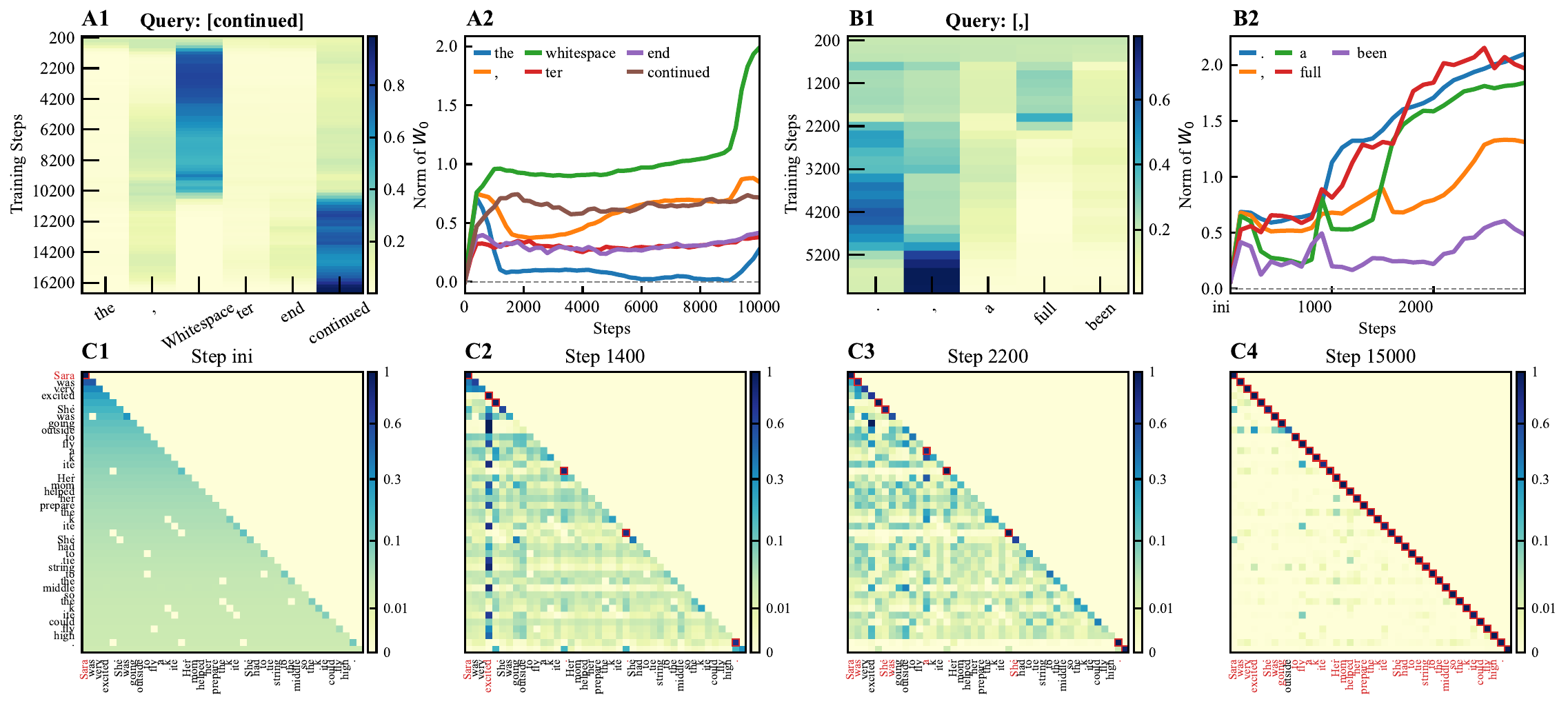}
    \caption{
    \textbf{Experimental results on real-world datasets.} (A) Results on WikiText. (A1) The attention evolution of the medium-frequency token 'continue' given the input sequence [the, \textit{comma}, \textit{whitespace}, ter, end, continue]. The attention scores exhibit a distinct four-phase transition: dilution → focus on the high-frequency whitespace → secondary dilution → final focus on continue itself. (A2) The evolution of $\Vert W_0\Vert_2$ for selected tokens. The dominant \textit{whitespace} token shows continuous growth, while other tokens display a clear retraction. (B) Results on TinyStories. We observe similar dynamics using the input sequence [the, \textit{comma}, a, full, been], mirroring the phenomena in (A). (C) Visualization of attention shifts. The evolution of attention for a single test sample across different training steps, with tokens exhibiting self-attention scores $\geq 0.75$ highlighted.
    %(A): wikitext 的实验结果，(B, C): Tinystory 的实验结果。(A1) 在训练过程中中频词汇 'continue' 在当前测试语句 ['the', ',', 'whitespace', 'ter', 'end', 'continue'] 下的 attention 表现。attention score 出现了 dilution -> focus on whitespace -> dilution -> focus on 'continue' 四个阶段。 (A2) 所选六个 token 的 $\Vert W_0 \Vert$，作为主导项的 whitespace token 持续增长，而其他 token 出现了回撤现象；(B) TinyStory 任务的实验结果，类似于 (A)，只不过修改了语句为 ['the', ',', 'a', 'full', 'been']，可以观察到类似的现象 (C) 不同 step 下，同一条测试样本的 attention 变化并高亮针对自身 attention > 0.75 的 token。
    }
    \label{fig:tinystory_and_wikitext}
\end{figure*}

In this section, leveraging the simplified transformer model, we analyze the training behavior on Markovian data and empirically validate the theoretical derivation describing the transition of attention from focus to dilution. In parallel, we evaluate the model on real-world WikiText corpora and on the TinyStories corpus, which exhibits basic linguistic structure, to assess whether our observations generalize to the training dynamics of large-scale language models in realistic settings.
% \subsection{Implementation Details}

% 试验基本设置：数据量 100，000，d_model=d_k=256, lr = 1.5e-4 采用 Adam 优化器训练。数据设置为共 4 个转移概率，其平稳分布和 \beta 设置为 xxx。我们的初始化借用 xx，具体设置为 xx。我们也尝试了使用 SGD 训练，发现 SGD 会在 Stage 1 停留非常长时间，这一观察和前文相似；而 Adam 可以显著帮助模型推理初始凝聚状态。
\subsection{Synthetic Experiments}

% We generate the synthetic dataset based on four distinct transition probabilities P, characterized by a stationary distribution $\pi = (0.75, 0.19, 0.05, 0.01)$. For comprehensive details regarding the experimental setup please refer to Appendix \ref{sec:appendix_setup}. 

% To investigate the structure of the model, we employ condensation analysis from \citep{chen2025from,xu2025overviewcondensationphenomenondeep} by measuring the cosine similarity between the input weights of neurons. Furthermore, we visualize the embedding trajectory evolution by concatenating embedding snapshots from all steps and applying Principal Component Analysis (PCA) for dimensionality reduction, following \citep{lorch2016visualizing,NEURIPS2018_7a576629}.

We construct synthetic datasets using four distinct transition matrices $P$, designed to share a common stationary distribution $\pi = (0.75, 0.19, 0.05, 0.01)$. 
To reveal the low-rank structure of parameters caused by condensation, we measured the cosine similarity between neuronal input weights for analysis \citep{chen2025from,xu2025overviewcondensationphenomenondeep}.

Additionally, we visualize the embedding trajectory evolution by applying Principal Component Analysis (PCA) to the concatenated embedding snapshots across all steps \citep{lorch2016visualizing,NEURIPS2018_7a576629}. Detailed experimental setups are provided in Appendix \ref{sec:appendix_setup}. The overall evolution of the training dynamics is visualized in Fig.~\ref{fig:TotalPlot1}. We identify four distinct stages during the training process. In the following, we provide a detailed analysis of each stage to demonstrate the consistency between our experimental observations and theoretical results.

\paragraph{Stage \uppercase\expandafter{\romannumeral1}: Initial Condensation} 
% 在这个阶段，我们的理论分析表明模型将在保持内层参数基本不变的情景下，使得外层发生明显的凝聚现象，即权重将从初始化的近乎满秩矩阵调整到低秩状态。Fig.~\ref{fig:TotalPlot1}(A) 展示了当模型处于 Stage I 的时候，the outer embedding matrices, $W_0$ and $W_1$, rapidly evolve into a low-rank structure，但是 attention matrices maintain the high-rank nature of their initialization. 同时 Fig.~\ref{fig:TotalPlot1}(B1) 说明了此时所有 token 的 embedding 正在朝向相同的方向演化，这证实了我们的凝聚理论分析。
Our theoretical analysis predicts that during this stage, the outer layers $W_0, W_1$ evolve from an initialized full-rank state to a low-rank structure, while the inner attention parameters remain largely invariant. This is depicted by Fig.~\ref{fig:TotalPlot1}(A), which demonstrates that the outer weights rapidly evolve into rank-1, whereas the $W_Q, W_K$ maintain the high-rank nature of their initialization. Simultaneously, Fig.~\ref{fig:TotalPlot1}(B1) illustrates that the embeddings of all tokens evolve towards a uniform direction, further validating our theoretical analysis.

\paragraph{Stage \uppercase\expandafter{\romannumeral2}: Growth of Attention} During this stage, the outer parameters remain largely invariant, while $W_Q$ and $W_K$ transition into a condensed state. This phase coincides with a significant drop in training loss, marking the evolution of parameters from the origin to the next critical point. According to our theory, the attention mechanism evolves such that high-frequency tokens are gradually focused by the remaining tokens. This phenomenon is clearly visualized in Fig.~\ref{fig:TotalPlot1}(C).
% 在阶段 2，外层参数可以保持近乎不动，$W_Q, W_K$ 进入凝聚状态。此时损失出现了明显的下降。根据我们的理论，此时 attention 会表现为高频 token 逐渐被其余 token 关注，这一点在 Fig.~\ref{fig:TotalPlot1}(C) 中得到展示。
\paragraph{Stage \uppercase\expandafter{\romannumeral3}: Dilution of Attention}
% 在阶段三根据 Fig.~\ref{fig:TotalPlot1}(B2) 我们可以观察到 although the parameters remain confined to the Rank-1 manifold (evidenced by unchanged condensation heatmap in Fig.~\ref{fig:TotalPlot1}(A)), all tokens except for token 0 exhibit a retraction trajectory. 这代表虽然训练动力学完全被限制在 rank 1 流形上，模型逐渐意识到要区分不同的 token，在 Fig.~\ref{fig:TotalPlot1}(C,D)中我们可以进一步观察到低频词对高频词 0 的关注被破坏，低频词的 embedding norm 出现了明显的下降。在这种情形下，网络的内外层参数重新回到了不稳定的状态。
In Stage \uppercase\expandafter{\romannumeral3}, although the parameters remain confined to the rank-1 manifold (evidenced by the unchanged condensation heatmap in Fig.~\ref{fig:TotalPlot1}(A)), Fig.~\ref{fig:TotalPlot1}(B2) reveals that all tokens, except for token 0, exhibit a retraction trajectory. This implies that while the training dynamics are strictly constrained within the low-rank manifold, the model begins to differentiate between tokens. 
% As shown in Fig.~\ref{fig:TotalPlot1}(C, D), this phase is characterized by the disruption of attention from low-frequency tokens to the high-frequency token 0, accompanied by a significant drop in the embedding norms of low-frequency tokens. 
As shown in Fig.~\ref{fig:TotalPlot1}(C, D), low-frequency tokens pay less attention to high-frequency token in this phase, accompanied by a significant drop in the embedding norms of low-frequency tokens.
Consequently, outer parameters of the network revert to an unstable state.

\paragraph{Stage \uppercase\expandafter{\romannumeral4}:  Emergence of New Direction}
In Stage \uppercase\expandafter{\romannumeral4}, the accumulated instability drives the model to escape the constraints of the rank-1 manifold, initiated by the growth of new directions in the outer layers. This transition is clearly observable in Fig.~\ref{fig:TotalPlot1}(B3). To further quantify this, in Fig.~\ref{fig:TotalPlot1}(D3), we project the embeddings of low-frequency tokens ($W_0[1:]$) onto the direction orthogonal to token 0 (denoted as $W_0[0]_\perp$) and calculate the projection norms. The results indicate that, for the first time, the remaining token embeddings significantly deviate from the direction of token 0.
% 阶段四中，这种不稳定性演化为模型脱离了 Rank-1 manifold 的控制，由外层首先增长出新的方向。这一现象可以在 Fig.~\ref{fig:TotalPlot1}(B3) 中观察到。我们同样在 Fig.~\ref{fig:TotalPlot1}(D3) 中将低频词的 embedding $W_0[1:]$ 投影到 0 token 的垂直方向上 $W_0[0]_\perp$，并计算其投影模长。可以发现在阶段四其余的 token embedding 首次脱离 token 0 的方向。

\paragraph{After Stage \uppercase\expandafter{\romannumeral4}: Subsequent Training Dynamics} Fig. \ref{fig:TotalPlot1}(D) further illustrates the later stages of the training process, revealing a distinct periodicity in the embedding norms. Specifically, the focus and dilution pattern repeats recursively: as the network proceeds to learn Token 1, the remaining tokens (2–3) undergo the same retraction and regrowth process, continuing sequentially until training concludes. We hypothesize that after Stage \uppercase\expandafter{\romannumeral4}, the model has effectively converged on Token 0. Consequently, the system evolves into a sub-dynamic regime governed by Tokens 1–3. In this reduced state, the parameter dynamics can be re-analyzed within our original theoretical framework.

\subsection{Real-world Experiments}
% \paragraph{实验结果} 我们在两个 real world 数据集，wikitext 和 tinystory 上验证定理的正确性。我们依然使用简化的 transformer 模型，基本的试验参数设置与 synthetic data 试验完全一致。为了观察 attention 的 focus and dilution 特征，我们选取了训练集中前三个高频词，并随机抽取了三个训练集词频 >10% 的中频词。如图所示，无论训练集是 wikitext 还是 tinystory，attention 均表现出了先关注高频词 the, a, 空格，随后丧失关注的过程。与此同时，我们观察了这些选取的词汇 embedding 的变化特征表现，可以明显看出 embedding 的回撤现象；由于真实语料数据采取 1-epoch 训练方式每个 batch 的数据组成差别较大，我们观察到某些情况下最高频词也会出现回撤。

% \paragraph{针对真实数据分布与 markov 分布的讨论} 之前很多的文章 \citep{xx} 发现 transformer 在学习语言文本的时候遵循 1-gram, 2-gram,..., n-gram 的顺序学习。为了证实这一点，我们观察了在 tinystory 训练过程中的 attention 表现，可以发现每个 token 逐渐从均匀的 attention 演化到只关注自己的形式。这证实了 transformer 模型确实在训练早期演化到了一个 2-gram 语言模型，即使训练语料并非满足 markov 性。这从侧面说明了我们 synthetic data 设置为 markov data 是恰当的近似。

\paragraph{Experimental Results.} We validate the correctness of our theorem on two real-world datasets: WikiText \citep{wikitext} and TinyStories \citep{tinystory}. We employ the same simplified Transformer architecture and maintain hyperparameter settings consistent with the synthetic data experiments. To investigate the ``focus-and-dilution" characteristics of the attention mechanism, we track the top-three most frequent tokens alongside three randomly sampled medium-frequency tokens (frequency $>10\%$) from the training set.

As illustrated in the figure, across both WikiText and TinyStories, the attention mechanism exhibits a consistent pattern: it initially prioritizes high-frequency tokens (e.g., ``the", ``a", and whitespace) before subsequently losing this focus—a process we term ``dilution." Concurrently, by monitoring the embedding evolution of these selected tokens, we observe a distinct ``retraction" phenomenon. Notably, due to the high variance in batch composition inherent to the 1-epoch training regime on real-world corpora, we occasionally observe this retraction even in the most frequent tokens.

\paragraph{Validity of the Markov Approximation.} Existing studies \citep{chang-bergen-2022-word, chang-etal-2024-characterizing} indicate a curriculum in Transformer learning, starting from 1-gram to n-gram statistics. Our analysis of attention patterns in TinyStories supports this: distinct tokens gradually shift from uniform attention to self-attention. This behavior indicates that the model functions as a pseudo-2-gram model during early training phases, despite the non-Markovian nature of real text. These observations indirectly validate our experimental design, confirming that our synthetic Markov data acts as a suitable proxy for understanding real-world training dynamics.

% \paragraph{后续训练过程} 图x(D)进一步展示了后续的训练过程，我们发现其 embedding norm 出现了明确的周期现象：当最高频 token 0 训练时，1~3 的 embedding norm 出现明显的先回撤再增长，当网络学习 token 1 的时候，2~3 重复了这样的回撤增长过程，以此类推直到网络学习完毕。我们推测，在 stage IV 的时候，token 0 已经学习完毕，此时的动力学可以视为由 1~3 token 控制的子动力学，在这种情况下模型参数的动力学可以重新回到我们的分析框架中。

% 大图要调整的内容：增加ABCD标注，A 增加 unigram loss 标注，B 的颜色稍微调整一下同时进行平滑化，注明 after stage 4，注明 stage 3 的 condense 图和 stage 2 一样，将 W_1 改名，W_0 要涵盖四个 token；B 图移动到最后，C 图标注 step xx -> xx，D 图的 xticks 有点问题；D 图没有体现出 focus

% % Acknowledgements should only appear in the accepted version.
% \section*{Acknowledgements}

% \textbf{Do not} include acknowledgements in the initial version of the paper
% submitted for blind review.

% If a paper is accepted, the final camera-ready version can (and usually should)
% include acknowledgements.  Such acknowledgements should be placed at the end of
% the section, in an unnumbered section that does not count towards the paper
% page limit. Typically, this will include thanks to reviewers who gave useful
% comments, to colleagues who contributed to the ideas, and to funding agencies
% and corporate sponsors that provided financial support.

\section{Conclusion}
This work advances the theoretical understanding of transformer training dynamics by providing a mechanistic account of how attention evolves. We identify a recurring focus–dilution cycle and develop a stage-wise gradient-flow framework that characterizes rank-one condensation, saddle-to-saddle transitions, and the impact of symmetry breaking, offering a rigorous basis for phenomena often reported empirically. While the formal analysis is derived in a restricted setting, the same qualitative signatures appear on real corpora, suggesting the framework remains a useful lens for interpreting early-stage attention dynamics beyond the idealized regime.

\section*{Limitation}
This paper provides a minimal analytical framework for studying the coupled evolution of embeddings and attention, and identifies the focus–dilution cycle as a concrete mechanism. Experiments on synthetic data and small-scale natural language data provide preliminary support for this mechanism. Owing to the limitations of the theoretical framework, our analysis mainly focuses on the training dynamics of single-layer attention, and thus does not yet capture the effects of layer interactions or multi-head structure. Extending the analysis to multi-layer and multi-head architectures, and studying how common components such as LayerNorm affect the dynamics, are important directions for future work.

\section*{Impact Statement}
This paper aims to advance the theoretical understanding of transformer training dynamics by providing a mechanistic analysis of attention evolution. While improved understanding of learning dynamics may inform future model design and training practices, we do not foresee any direct negative ethical or societal consequences arising specifically from this work.

\section*{Acknowledgements}
This work is sponsored by the National Key R$\&$D Program
of China Grant No. 2022YFA1008200 (T. L., Z. X.). We also
thank Shanghai Institute for Mathematics and Interdisciplinary Sciences (SIMIS) for their financial support. This
research was funded by SIMIS under grant number SIMISID-2025-ST (T. L.). The authors are grateful for the resources and
facilities provided by SIMIS, which were essential for the
completion of this work. This work is also sponsored by the National Natural Science Foundation of China Grant No. 92270001 (Z. X.), 12371511 (Z. X.), 12422119 (Z. X.), 2025 Key Technology R\&D Program ``New Generation Information Technology'' Project of Shanghai Municipal Science and Technology Commission (Z. X.).

% \clearpage
\bibliography{main}
\bibliographystyle{icml2026}

\newpage
\appendix
\onecolumn
\section{Related Works}\label{sec::RelaWork}  

\paragraph{Training dynamics of attention and multi-stage analysis}

Given the scale of modern models and the complexity of optimizers, studying the training dynamics of attention remains a challenging problem. A common practice is to introduce various simplifications to the research object, such as constructing task-specific synthetic data, utilizing reparameterization or simplified model and optimization function \citep{Sheen2024,Kim2024,varre2023on,Chen2024,Wu2025,Gao2024,Zhang2025,Vasudeva2025,Yang2025}. Among these, \citep{lu2021on} establishes key dynamical identities using a controllable text classification task where sentences consist of a ``topic word" plus random noise. \citep{Snell2021ApproximatingHS} suggests that models first capture word co-occurrence before adjusting attention to focus on relevant tokens. \citep{li2023transformerslearntopicstructure} examines the dynamical effects of fixing specific attention components within a topic-word task framework. Following these previous works, \cite{tian2024joma} proposed a novel mathematical framework for analyzing the joint dynamics of MLP and attention blocks, successfully explaining the sparsity of attention score matrices. \citep{zucchet2025the} also discussed the emergence of sparse attention and the timing of training dynamics. Furthermore, \citep{NEURIPS2024_520416e2} provides a clear and rigorous discussion of the two-stage training dynamics under classifiable text tasks. Similarly, \citep{chen2025from} offers a more rigorous proof of dynamical separation in more general scenarios. Regarding multi-stage analysis, \citep{xu2025understanding} analyzes LoRA's cross-stage dynamics, while \citep{wang2023understanding} provides a full-process characterization of two-layer ReLU networks across four distinct training phases, from initialization to convergence. \citep{varre2025learning} shows in an analyzable in-context n-gram setting that lower-order sub-n-gram solutions are near-stationary points, explaining the plateau-to-drop training dynamics as Transformers gradually learn to retrieve longer contexts. However, the aforementioned literature either relies on data settings that deviate significantly from real-world scenarios or requires overly stringent analytical conditions.
\paragraph{Transformers on Markov chains}

A significant body of influential work employs Markov chains to understand how Transformers, as probabilistic models, learn continuous linguistic data. \citep{chang-etal-2024-characterizing} discovers that LLM learning can be summarized as ``early n-gram learning followed by the gradual refinement of low-probability (tail) n-gram predictions." \citep{bietti2023birth} analyzes the formation mechanism of induction heads using Markov-like data. \citep{rajaraman2024an} investigates the impact of tokenization on Markovian data, proving that appropriate tokenization assists Transformers in modeling Markov processes. Additionally, \citep{makkuva2024local} and \citep{makkuva2025attention} explore training dynamics and convergence analysis specifically under Markovian data settings.
\paragraph{Small initialization}

The initialization of a neural network significantly affects its learning outcomes \cite{arora2019exact, williams2019gradient, mei2018mean, jacot2018neural, rotskoff2018parameters, zhang2020type}. Small initialization is a common setting investigated in the study of neural network optimization dynamics, which contrasts with the Neural Tangent Kernel (NTK) perspective prevalent in infinitely wide networks. For linear models, \cite{ji2018gradient} theoretically establish results regarding matrix alignment. For nonlinear models, \cite{zhou2022towards} found that small initialization similarly promotes parameter condensation, thereby reducing model complexity. Theoretically, \cite{luo2021phase, chen2023phase, zhou2023understanding, kumar2024early} have further deepened the understanding of this phenomenon. A recent survey article \cite{xu2025overviewcondensationphenomenondeep} systematically synthesizes these empirical and theoretical findings.

\section{Theoretical details in Sec. \ref{sec::Preliminary}}
\subsection{Property of markov process}
We will use two standard asymptotic properties of Markov chains. For the sake of completeness, we provide a detailed proof.

\begin{proposition}[Basic Markov properties]
\label{prop::markov}
Given the transition matrix $P$ in \eqref{equ:P_def} and any initial distribution $\mu_0$:
\begin{enumerate}
    \item \textbf{Convergence.} The marginal distribution converges to $\pi$. That is $\lim_{t\rightarrow\infty}\mu_0^\T P^{\,t}=\pi^\T$.
    \item \textbf{Ergodicity.} Along a single trajectory, the empirical state frequencies converge to $\pi$. That is $ \lim_{s\rightarrow\infty}\frac{1}{s}\sum_{j=1}^{s}\mathbbm{1}_{x_j}=\pi^\T$, where $\mathbbm{1}_{x_j}\in\mathbb{R}^d$ is the one-hot vector of token $x_j$.
\end{enumerate}
\end{proposition}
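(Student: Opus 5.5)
The argument exploits the explicit form $P=\lambda I+(1-\lambda)\mathbf 1\pi^\T$ from \eqref{equ:P_def}. Both properties then follow from elementary facts about finite, irreducible, aperiodic chains, and we can in fact get explicit rates.

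\textbf{Convergence.} We first compute $P^t$ in closed form. Since $\pi^\T\mathbf 1=1$, the matrix $\Pi:=\mathbf 1\pi^\T$ is idempotent, $\Pi^2=\mathbf 1(\pi^\T\mathbf 1)\pi^\T=\Pi$. Also $I$ and $\Pi$ commute. Writing $P=\lambda(I-\Pi)+\Pi$, where $I-\Pi$ and $\Pi$ are complementary idempotents with $(I-\Pi)\Pi=0$, a binomial expansion (or induction on $t$) gives
\[
P^t=\lambda^t(I-\Pi)+\Pi .
\]
For any probability vector $\mu_0$ we have $\mu_0^\T\Pi=(\mu_0^\T\mathbf 1)\pi^\T=\pi^\T$, hence
\[
\mu_0^\T P^t=\pi^\T+\lambda^t(\mu_0^\T-\pi^\T).
\]
Because $0<\lambda<1$, this tends to $\pi^\T$ geometrically fast, which proves part 1.

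\textbf{Ergodicity.} Fix a state $k$ and set $Y_j:=\mathbbm 1\{x_j=k\}$. We prove $\frac1s\sum_{j\le s}Y_j\to\pi_k$, first in $L^2$ via a covariance bound, then almost surely.

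The mean is handled by part 1. We have $\mathbb E Y_j=(\mu_0^\T P^{j-1})_k=\pi_k+O(\lambda^{j})$, so $\frac1s\sum_j\mathbb E Y_j\to\pi_k$, with an $O(1/s)$ error since the geometric series is summable.

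For the covariance, the Markov property gives, for $j<l$,
\[
\mathbb E[Y_jY_l]=\Pr(x_j=k)\,(P^{l-j})_{kk}.
\]
By the closed form, $(P^{l-j})_{kk}=\pi_k+\lambda^{l-j}(1-\pi_k)$, so
\[
|\mathrm{Cov}(Y_j,Y_l)|\le C\lambda^{l-j}+C\lambda^{j}.
\]
Summing over pairs gives $\Var\big(\sum_{j\le s}Y_j\big)\le C's$, hence $\Var\big(\frac1s\sum_j Y_j\big)=O(1/s)$. Chebyshev's inequality then yields convergence in probability, and in $L^2$.

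To upgrade to almost sure convergence, we have two options:
\begin{itemize}
\item Sample along the subsequence $s=n^2$, where Chebyshev bounds are summable so Borel--Cantelli applies. Then interpolate, using $0\le Y_j\le1$: for $n^2\le s<(n+1)^2$ the average changes by at most $O(1/n)$.
\item Alternatively, cite the standard ergodic theorem for finite irreducible chains. $P_{ij}>0$ whenever $\pi_j>0$, and $P_{ii}\ge\lambda>0$, so the chain restricted to the support of $\pi$ is irreducible and aperiodic.
\end{itemize}
Running the argument for each coordinate $k$ gives the vector statement $\frac1s\sum_{j}\mathbbm 1_{x_j}\to\pi^\T$.

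The main care point is the almost sure upgrade (the subsequence argument), together with a minor subtlety: states with $\pi_k=0$ are transient, which the formulas above already handle. Everything else is routine once the closed form for $P^t$ is in hand.
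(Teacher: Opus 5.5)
Your proof is correct, and in both parts it takes a genuinely different route from the paper.

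\textbf{Convergence.} The paper uses the same splitting $P=\Pi+Q$ with $\Pi=\mathbf 1\pi^\T$. It then gets $\rho(Q)<1$ abstractly from Perron--Frobenius, after asserting that $P$ is primitive. You instead observe that $Q=\lambda(I-\Pi)$ exactly. This gives $P^t=\lambda^t(I-\Pi)+\Pi$ with no spectral theory, plus the explicit rate $\lambda^t$.

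\textbf{Ergodicity.} The paper runs the classical regenerative argument:
\begin{enumerate}
\item excursions between returns to a reference state are i.i.d.\ by the strong Markov property;
\item the SLLN is applied to cycle lengths and visit counts, giving $N(T)/T\to 1/\mathbb E_1[\tau_1]$;
\item the limit is identified as $\pi_k$ through Kac's formula $\mathbb E_k[\tau_k]=1/\pi_k$, which the paper invokes rather than proves.
\end{enumerate}
Your second-moment argument replaces all of this with four elementary steps: covariance decay read off the closed form, Chebyshev, Borel--Cantelli along $s=n^2$, and the $O(1/n)$ interpolation from $0\le Y_j\le 1$. Your covariance bound holds; in fact $|\mathrm{Cov}(Y_j,Y_l)|\le\lambda^{l-j}+\lambda^{l-1}$, which implies yours.

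\textbf{What each route buys.} Yours is self-contained and quantitative, with $L^2$ error $O(s^{-1/2})$. It also never uses irreducibility, so it covers an edge case the paper's setup allows: $\pi(\delta)$ is only required to be entrywise nonnegative. If some $\pi_j=0$, the paper's claim that $P$ has strictly positive entries fails, and its renewal argument breaks for that transient reference state, while your formulas go through unchanged. The paper's route, in exchange, is the general ergodic theorem for any finite irreducible chain and does not rely on the special identity-plus-rank-one form of $P$.
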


\begin{proof}
Throughout, we work on the finite state space $\mathcal V=\{1,\dots,|\mathcal V|\}$. By the definition of the transition matrix $P$ defined in \eqref{equ:P_def}, $P$ is irreducible and aperiodic with strictly positive entries. Thus, $P$ is ergodic. In particular, $P$ admits a unique stationary distribution $\pi$ satisfying $\pi^\T P=\pi^\T$ and
$\pi_i>0$ for all $i$.

\paragraph{1) Convergence of marginals.}
Since $P$ is ergodic on a finite state space, it is primitive. By the Perron--Frobenius theorem,
the eigenvalue $1$ of $P$ is simple and all other eigenvalues satisfy $|\lambda|<1$.
Let $\mathbf 1\in\mathbb R^{|\mathcal V|}$ denote the all-ones vector.
Because $P$ is row-stochastic, we have $P\mathbf 1=\mathbf 1$; because $\pi$ is stationary, we have $\pi^\T P=\pi^\T$.
Define the rank-one projector
\[
\Pi := \mathbf 1\,\pi^\T.
\]
Then $\Pi^2=\Pi$, $P\Pi=\Pi P=\Pi$, and we can write
\[
P=\Pi + Q,\qquad\text{where } Q:=P-\Pi.
\]
Note that $Q\mathbf 1=0$ and $\pi^\T Q=0$. Moreover, the spectrum of $Q$ equals the spectrum of $P$
with the eigenvalue $1$ removed, hence its spectral radius satisfies $\rho(Q)<1$.
Therefore, $Q^t\to 0$ as $t\to\infty$ (in any matrix norm), and
\begin{equation}
\label{eq:Pt_limit}
P^t = (\Pi+Q)^t = \Pi + Q^t \;\xrightarrow[t\to\infty]{}\; \Pi = \mathbf 1\,\pi^\T.
\end{equation}
For any initial distribution $\mu_0$ (a row vector with nonnegative entries summing to $1$),
\[
\mu_0^\T P^t \;\xrightarrow[t\to\infty]{}\; \mu_0^\T(\mathbf 1\,\pi^\T) = (\mu_0^\T\mathbf 1)\,\pi^\T = \pi^\T,
\]
which proves the convergence claim.

\paragraph{2) Ergodicity of empirical frequencies.}
Let $(X_t)_{t\ge 0}$ be the Markov chain with transition matrix $P$ and arbitrary initial distribution $\mu_0$.
Fix a reference state, say state $1$, and define the (strict) return times
\[
\tau_0:=0,\qquad \tau_{k+1}:=\inf\{t>\tau_k:\,X_t=1\},\qquad k\ge 0.
\]
By irreducibility on a finite state space, the chain is positive recurrent, hence $\tau_k<\infty$ almost surely
for all $k$ and $\mathbb E_1[\tau_1]<\infty$.

For each cycle $k\ge 0$, define the cycle length and the state-$i$ visit count within the cycle:
\[
S_k := \tau_{k+1}-\tau_k,\qquad
R_k(i) := \sum_{t=\tau_k}^{\tau_{k+1}-1} \mathbbm{1}\{X_t=i\}.
\]
By the strong Markov property, conditional on $X_{\tau_k}=1$ the post-$\tau_k$ evolution is independent of the past,
and therefore the pairs $\{(S_k,R_k(i))\}_{k\ge 1}$ are i.i.d. under $\mathbb P_1^{\text{MC}}$ (and also after the chain first hits
state $1$ when started from an arbitrary $\mu_0$).
Let $N(T):=\max\{k:\tau_k\le T\}$ be the number of completed cycles up to time $T$.
Then for each fixed $i$,
\begin{equation}
\label{eq:decompose_counts}
\sum_{t=0}^{T-1}\mathbbm{1}\{X_t=i\}
=
\underbrace{\sum_{t=0}^{\tau_1-1}\mathbbm{1}\{X_t=i\}}_{\text{initial transient}}
+
\sum_{k=1}^{N(T)-1} R_k(i)
+
\underbrace{\sum_{t=\tau_{N(T)}}^{T-1}\mathbbm{1}\{X_t=i\}}_{\text{remainder}}.
\end{equation}
Divide by $T$. The initial transient term is $O(1/T)$ almost surely.
The remainder term is at most one cycle, hence bounded by $S_{N(T)}$, and thus also negligible after dividing by $T$
because $\tau_{N(T)}\le T<\tau_{N(T)+1}$ implies $S_{N(T)}\le \tau_{N(T)+1}$ and $\tau_{N(T)}\to\infty$.

It remains to analyze the dominant sum over complete cycles.
By the strong law of large numbers applied to the i.i.d. sequences $\{S_k\}$ and $\{R_k(i)\}$,
\begin{equation}
\label{eq:SLLN_cycles}
\frac{1}{n}\sum_{k=1}^n S_k \to \mathbb E_1[S_1]=\mathbb E_1[\tau_1],
\qquad
\frac{1}{n}\sum_{k=1}^n R_k(i) \to \mathbb E_1[R_1(i)]
\quad\text{a.s.}
\end{equation}
Moreover, by the definition of $\tau_k$, we know
\begin{equation*}
    \tau_{N(T)+1} = \sum_{k=0}^{N(T)} S_k \ge T, \quad  \tau_{N(T)} = \sum_{k=0}^{N(T)-1} S_k \le T -1
\end{equation*}
which implies 
\[
\frac{N(T)}{T}\to \frac{1}{\mathbb E_1[\tau_1]}\quad\text{a.s.}
\]
Combining with \eqref{eq:decompose_counts} and \eqref{eq:SLLN_cycles}, we obtain
\begin{equation}
\label{eq:ratio_limit}
\frac{1}{T}\sum_{t=0}^{T-1}\mathbbm{1}\{X_t=i\}
\;\xrightarrow[T\to\infty]{a.s.}\;
\frac{\mathbb E_1[R_1(i)]}{\mathbb E_1[\tau_1]}.
\end{equation}

Take $i = 1$, we get the right-hand side is $\pi_1$ by the computation about expectation of first return time. Since the above proof process is independent of the choice of the reference state, by considering all possible reference states, we obtain the result.
\end{proof}

\subsection{Gradient-flow dynamics}
In this section, we will supplement the proof details of Proposition \ref{prop:gf_and_limit}.
\begin{proof}
    We first derive Eq. (\ref{equ::dynamics}), using the standard trace theorem and chain rule. Taking the total differential of the loss, we get
    \begin{equation}
        \mathrm{d} \mathcal{L} = \left\langle \frac{\partial \mathcal{L}}{\partial M}, \mathrm{d} M \right\rangle + \left\langle \frac{\partial \mathcal{L}}{\partial \Phi}, \mathrm{d} \Phi \right\rangle
    \end{equation}
    Using the chain rule, we get
    \begin{equation}
        \begin{aligned}
            \left\langle \frac{\partial \mathcal{L}}{\partial M}, \mathrm{d} M \right\rangle &= \left\langle \frac{\partial \mathcal{L}}{\partial M}, \mathrm{d}W_0 W_1 + W_0 \mathrm{d} W_1 \right\rangle \\
            \left\langle \frac{\partial \mathcal{L}}{\partial \Phi}, \mathrm{d} \Phi \right\rangle &= \left\langle \frac{\partial \mathcal{L}}{\partial \Phi}, \mathrm{d} W_0 W_Q W_K^{\T} W_0^{\T} + W_0 \mathrm{d} W_Q W_K^{\T} W_0^{\T} + W_0 W_Q \mathrm{d} W_K^{\T} W_0^{\T}+ W_0 W_Q W_K^{\T} \mathrm{d} W_0^{\T}\right\rangle 
        \end{aligned}
    \end{equation}
    We derive the evolution equation for $W_0$, and the other derivations are similar. We collect items related to $\mathrm{d} W_0$:
    \begin{equation}
    \begin{aligned}
        &\ \left\langle \frac{\partial \mathcal{L}}{\partial M}, \mathrm{d}W_0 W_1 \right\rangle +  \left\langle \frac{\partial \mathcal{L}}{\partial \Phi}, \mathrm{d} W_0 W_Q W_K^{\T} W_0^{\T} + W_0 W_Q W_K^{\T} \mathrm{d} W_0^{\T}\right\rangle \\
        &=\operatorname{tr} \left( \left( \frac{\partial \mathcal{L}}{\partial M} \right)^{\T} \mathrm{d} W_0 W_1 \right) +  \operatorname{tr} \left( \left(\frac{\partial \mathcal{L}}{\partial \Phi} \right)^{\T} \left(\mathrm{d} W_0 W_Q W_K^{\T} W_0^{\T} + W_0 W_Q W_K^{\T} \mathrm{d} W_0^{\T} \right) \right) \\
        &=\operatorname{tr} \left( W_1 \left( \frac{\partial \mathcal{L}}{\partial M} \right)^{\T} \mathrm{d} W_0  \right) +  \operatorname{tr} \left( W_Q W_K^{\T} W_0^{\T} \left(\frac{\partial \mathcal{L}}{\partial \Phi} \right)^{\T} \mathrm{d}  W_0  \right) + \operatorname{tr} \left( W_K W_Q^{\T} W_0^{\T} \left(\frac{\partial \mathcal{L}}{\partial \Phi} \right) \mathrm{d}  W_0  \right)  \\
        &=\left\langle  \frac{\partial \mathcal{L}}{\partial M} W_1^\T + \frac{\partial \mathcal{L}}{\partial \Phi} W_0 W_K W_Q^{\T} + \left(\frac{\partial \mathcal{L}}{\partial \Phi}\right)^{\T} W_0 W_Q W_K^{\T}, \mathrm{d} W_0\right\rangle
    \end{aligned}
    \end{equation}
    Therefore, we obtain an expression for $\frac{\partial \mathcal{L}}{\partial W_0}$. Since we are considering gradient descent, the evolution of the parameters follows the direction of the negative gradient. Then we derive the expression of $\frac{\partial \mathcal{L}}{\partial M}$ and $\frac{\partial \mathcal{L}}{\partial \Phi}$ in large $N$ and $s$ limit. Taking the total differential of the loss and taking the term about $\mathrm{d} M$, we get
    \begin{equation}
    \label{equ::dLdM}
     \left\langle \frac{\partial \mathcal{L}}{\partial M}, \mathrm{d} M \right\rangle = \frac{1}{N} \sum_{i=1}^N -  \sum_{l=1}^s A_{s,l} (X_i)  e_{x,l}^{\T} \mathrm{d} M e_{y_i} +\frac{1}{N} \sum_{i=1}^N \sum_{j}  p_{y_j} (X_i) \sum_{l=1}^s A_{s,l} (X_i) e_{x,l}^{\T} \mathrm{d} M e_{y_j}
    \end{equation}
    Here, $A_{s,l} (X_i) = \frac{\exp (e_{x_{i,s}}^{\T} \Phi e_{x_{i,l}})}{\sum_{l'=1}^s \exp (e_{x_{i,s}}^{\T} \Phi e_{x_{i,l'}})}$. Based on Proposition \ref{prop::markov}, we find that 
    \begin{equation}
    \begin{aligned}
        A_{s,l} (X_i) &= \frac{\frac{1}{s} \exp (e_{x_{i,s}}^{\T} \Phi e_{x_{i,l}}) }{ \frac{1}{s}  \sum_{l'=1}^s \exp (e_{x_{i,s}}^{\T} \Phi e_{x_{i,l'}})} = \frac{\frac{1}{s} \exp (e_{x_{i,s}}^{\T} \Phi e_{x_{i,l}})  }{\sum_{j=1}^{d} \pi_j \exp(e_{x_{i,s}}^{\T} \Phi e_j)} 
    \end{aligned}
    \end{equation}
    Then, for sufficiently large sequence length $s$, 
    \begin{equation}
    \label{equ::proxy_attn}
    \begin{aligned}
        \sum_{l=1}^s A_{s,l} (X_i) e_{x_{i,l}}^{\T} &= \frac{1}{\sum_{j=1}^{d} \pi_j \exp(e_{x_{i,s}}^{\T} \Phi e_j)} \sum_{l=1}^s \frac{1}{s} \exp (e_{x_{i,s}}^{\T} \Phi e_{x_{i,l}}) e_{x_{i,l}}^\T \\
                    &= \frac{1}{\sum_{j=1}^{d} \pi_j \exp(e_{x_{i,s}}^{\T} \Phi e_j)} \sum_{j'=1}^{d} \pi_{j'} \exp (e_{x_{i,s}}^{\T} \Phi e_{j'}) e_{j'}^\T  = \mathbb{A}_{x_{i,s}}
    \end{aligned}
    \end{equation}
    It implies that the output probability $p(X_i)$ actually depends on the last token $x_{i,s}$. That is 
    \begin{equation}
        p_{j} (X_i) = \frac{\exp (\mathbb{A}_{x_{i,s}} M e_{y_j})}{\sum_{j'=1}^{d} \exp (\mathbb{A}_{x_{i,s}} M e_{y_j'})} = \mathbb{P}_{x_{i,s},j}
    \end{equation}
    Based on this fact and Eq. (\ref{equ::proxy_attn}), Eq. (\ref{equ::dLdM}) can be reformulated by using the notations about $\mathbb{A}$ and $\mathbb{P}$.
    \begin{equation}
        \left\langle \frac{\partial \mathcal{L}}{\partial M}, \mathrm{d} M \right\rangle = -\frac{1}{N} \sum_{i=1}^N \mathbb{A}_{x_{i,s}} \mathrm{d} M e_{y_i} + \frac{1}{N} \sum_{i=1}^N \mathbb{A}_{x_{i,s}} \mathrm{d} M \mathbb{P}_{x_{i,s}}^{\T}. 
    \end{equation}
    Based on Proposition \ref{prop::markov}, we have $x_{i,s} \sim \pi^{\T}$ when $s$ is sufficiently large. Thus, we get
    \begin{equation}
        \left\langle \frac{\partial \mathcal{L}}{\partial M}, \mathrm{d} M \right\rangle = - \sum_{i=1}^{d} \pi_i \mathbb{A}_i \mathrm{d} M (P_i - \mathbb{P}_{i})^{\T}
    \end{equation}
    Using the trace theorem again, we have 
    \begin{equation}
        \frac{\partial \mathcal{L}}{\partial M} = - \sum_{i=1}^{d} \pi_i \mathbb{A}_i^{\T} (P_i - \mathbb{P}_i).
    \end{equation}
    Then we derive $\frac{\partial \mathcal{L}}{\partial \Phi}$. By direct computation, we get
    \begin{equation}
    \begin{aligned}
       \left\langle \frac{\partial \mathcal{L}}{\partial \Phi}, \mathrm{d} \Phi \right\rangle &=  \frac{1}{N} \sum_{i=1}^N - \left( \sum_{l=1}^s \left( A_{s,l} \mathrm{d} \left( e_{x,s}^{\T} \Phi e_{x,l} \right) - A_{s,l} \sum_{l'} A_{s,l'}  \mathrm{d} \left( e_{x,s}^{\T} \Phi e_{x,l'} \right)\right) e_{x,l}^{\T} M e_{y_i} \right) \\
                              &+\frac{1}{N} \sum_{i=1}^N \sum_{j}  p_{y_j}  \left( \sum_{l=1}^s \left( A_{s,l} \mathrm{d} \left( e_{x,s}^{\T} \Phi e_{x,l} \right) - A_{s,l} \sum_{l'} A_{s,l'} \mathrm{d} \left( e_{x,s}^{\T} \Phi e_{x,l'} \right)\right) e_{x,l}^{\T} M e_{y_j} \right)
    \end{aligned}
    \end{equation}
    Using the notations we introduced, the derivative can be reformulated as 
    \begin{equation}
    \begin{aligned}
        \left\langle \frac{\partial \mathcal{L}}{\partial \Phi}, \mathrm{d} \Phi \right\rangle &= - \sum_i \pi_i \left( \sum_{l=1}^s  A_{x_i,l} e_{x_i}^{\T} \mathrm{d} \Phi (e_{l} - \mathbb{A}_i^{\T} ) \right)  e_l^{\T} M (P_i -\mathbb{P}_i)^{\T} \\
                               &= -\sum_i \pi_i e_{x_i}^{\T} \mathrm{d} \Phi \left( \operatorname{diag} (\mathbb{A}_i^{\T}) - \mathbb{A}_i^{\T} \mathbb{A}_i\right) M (P_i -\mathbb{P}_i)^{\T}
    \end{aligned}
    \end{equation}
    As a result, using the trace theorem, we get the expression about $\frac{\partial \mathcal{L}}{\partial \Phi}$ as follows:
    \begin{equation}
        \frac{\partial \mathcal{L}}{\partial \Phi} = - \sum_{i} \pi_i e_{x_i} (P_i - \mathbb{P}_i) M^{\T} \left( \operatorname{diag} (\mathbb{A}_i^{\T}) - \mathbb{A}_i^{\T} \mathbb{A}_i\right)
    \end{equation}
\end{proof}

\section{Theoretical details in Sec. \ref{sec::Theoretical_results}}

\subsection{Theoretical details in Sec. \ref{sec::theoretical_idea}}
\paragraph{Proof of Lemma \ref{lem:linearization_window}}
\begin{proof}
Let $\Delta \theta(t):=\theta(t)-\theta_*$. Since $F(\theta_*)=0$ and $J=DF(\theta_*)$, we can write
\begin{equation}
\label{eq:Delta_ode}
\dot{\Delta} \theta (t)=J\Delta \theta(t)+R(\Delta \theta (t)),
\qquad
R(\Delta \theta):=F(\theta_*+\Delta \theta)-J\Delta \theta.
\end{equation}
By assumption \eqref{eq:quadratic_remainder}, for all $\|\Delta \theta \|\le r$,
\begin{equation}
\label{eq:R_quadratic}
\|R(\Delta \theta)\|\le L\|\Delta \theta \|^2 .
\end{equation}

\paragraph{Step 1: Variation-of-constants representation.}
Let $\tilde{\Delta} \theta (t):=\mathrm{e}^{Jt} \Delta \theta(0)$ be the solution of the linearized system.
From \eqref{eq:Delta_ode}, the solution satisfies the Duhamel formula
\begin{equation}
\label{eq:Duhamel}
\Delta \theta(t)= \mathrm{e}^{Jt} \Delta \theta (0)+\int_0^t \mathrm{e}^{J(t-s)}R(\Delta \theta (s)) \mathrm{d} s
=\tilde{\Delta} \theta (t)+\int_0^t \mathrm{e}^{J(t-s)}R(\Delta \theta (s)) \mathrm{d} s .
\end{equation}
Define the linearization error $E(t):=\Delta \theta(t)-\tilde{\Delta} \theta(t)$. Then $E(0)=0$ and by \eqref{eq:Duhamel},
\begin{equation}
\label{eq:E_Duhamel}
E(t)=\int_0^t \mathrm{e}^{J(t-s)}R(\Delta \theta(s)) \mathrm{d} s .
\end{equation}

\paragraph{Step 2: A standard bound on the semigroup $\mathrm{e}^{Jt}$.}
Let $\mu:=\sup\{\Re(\lambda):\lambda\in\sigma(J)\}$.
In finite dimension, for the chosen operator norm there exists a constant $K\ge 1$ such that
\begin{equation}
\label{eq:semigroup_bound}
\|\mathrm{e}^{Jt}\|\le K \mathrm{e}^{\mu t},\qquad \forall t\ge 0.
\end{equation}

\paragraph{Step 3: Bootstrap control inside the neighborhood.}
Fix a time horizon $T>0$ such that
\begin{equation}
\label{eq:tilde_small_on_0T}
\|\tilde{\Delta} \theta(t)\|\le \frac r2,\qquad \forall t\in[0,T].
\end{equation}
We will show that for $\varepsilon:=\|\Delta \theta (0)\|$ sufficiently small (depending on $J,L,r$),
the trajectory stays in the ball $\|\Delta \theta (t)\|\le r$ on $[0,T]$, so that \eqref{eq:R_quadratic} applies.

Indeed, from \eqref{eq:Duhamel}, \eqref{eq:semigroup_bound}, and \eqref{eq:R_quadratic}, as long as
$\|\Delta \theta(s)\|\le r$ for all $s\in[0,t]$, we have
\begin{equation}
\label{eq:E_bound_integral}
\|E(t)\|
\le \int_0^t \|\mathrm{e}^{J(t-s)}\|\,\|R(\Delta \theta(s))\| \mathrm{d} s
\le KL\int_0^t \mathrm{e}^{\mu(t-s)}\|\Delta \theta(s)\|^2 \mathrm{d} s .
\end{equation}
Also $\|\tilde{\Delta} \theta(t)\|\le \|\mathrm{e}^{Jt}\|\,\|\Delta \theta(0)\|\le K\varepsilon \mathrm{e}^{\mu t}$.

We now bootstrap the bound
\begin{equation}
\label{eq:bootstrap_assumption}
\|\Delta \theta (t)\|\le 2\|\tilde{\Delta} \theta(t)\| \quad \text{for all } t\in[0,T].
\end{equation}
Assuming \eqref{eq:bootstrap_assumption} holds on $[0,t]$, then $\|\Delta \theta(s)\|\le 2\|\tilde{\Delta} \theta(s)\|\le r$
by \eqref{eq:tilde_small_on_0T}, so \eqref{eq:E_bound_integral} applies and yields
\begin{align}
\|E(t)\|
&\le KL\int_0^t \mathrm{e}^{\mu(t-s)}\big(2\|\tilde{\Delta} \theta (s)\|\big)^2 \mathrm{d} s
=4KL\int_0^t \mathrm{e}^{\mu(t-s)}\|\tilde{\Delta} \theta (s)\|^2 \mathrm{d} s \notag\\
&\le 4KL\int_0^t \mathrm{e}^{\mu(t-s)}\big(K\varepsilon \mathrm{e}^{\mu s}\big)^2 \mathrm{d} s
=4K^3L\,\varepsilon^2\, \mathrm{e}^{\mu t}\int_0^t \mathrm{e}^{\mu s} \mathrm{d} s. \label{eq:E_intermediate}
\end{align}
If $\mu>0$, then $\int_0^t \mathrm{e}^{\mu s}ds=(\mathrm{e}^{\mu t}-1)/\mu\le \mathrm{e}^{\mu t}/\mu$, and thus
\begin{equation}
\label{eq:E_bound_mu_pos}
\|E(t)\|\le \frac{4K^3L}{\mu}\,\varepsilon^2\,\mathrm{e}^{2\mu t}.
\end{equation}
If $\mu=0$, then $\int_0^t \mathrm{e}^{\mu s}ds=t$ and \eqref{eq:E_intermediate} gives
\begin{equation}
\label{eq:E_bound_mu_zero}
\|E(t)\|\le 4K^3L\,\varepsilon^2\,t .
\end{equation}
(For $\mu<0$, one may similarly bound the integral by a constant and obtain a uniform $O(\varepsilon^2)$ error.)

Now choose $\varepsilon$ small enough such that on $[0,T]$,
\begin{equation}
\label{eq:bootstrap_close}
\|E(t)\|\le \|\tilde{\Delta} \theta (t)\|\quad \forall t\in[0,T].
\end{equation}
This is possible because by \eqref{eq:tilde_small_on_0T} we have $\|\tilde{\Delta} \theta (t)\|\le r/2$,
while \eqref{eq:E_bound_mu_pos} shows $\|E(t)\|$ is $O(\varepsilon^2)$ times an exponential factor; e.g.
it suffices to require
\[
\frac{4K^3L}{\mu}\,\varepsilon\,\mathrm{e}^{\mu T}\le \frac12
\quad(\mu>0),
\qquad\text{or}\qquad
4K^3L\,\varepsilon\,T\le \frac12
\quad(\mu=0),
\]
and recall $T$ is such that $\|\tilde{\Delta} \theta (t)\|\le r/2$ on $[0,T]$, hence $\mathrm{e}^{\mu T}$ is at most on the
order of $1/\varepsilon$ when $\mu>0$.
Under \eqref{eq:bootstrap_close},
\[
\|\Delta \theta (t)\|\le \|\tilde{\Delta} \theta(t)\|+\|E(t)\|
\le 2\|\tilde{\Delta} \theta (t)\|\le r,
\]
so the bootstrap is self-consistent and \eqref{eq:E_bound_mu_pos} (or \eqref{eq:E_bound_mu_zero}) holds for all
$t\in[0,T]$.
This proves \eqref{eq:linearization_error_bound} with $C=\frac{4K^3L}{\mu}$ for $\mu > 0$ and $C=4K^3L$ for $\mu = 0$.
\paragraph{Step 4: The $\Theta(\log(1/\varepsilon))$ window when $\mu>0$.}
If $\mu>0$, then $\|\tilde{\Delta} \theta(t)\|\le K\varepsilon \mathrm{e}^{\mu t}$.
Therefore the condition $\|\tilde{\Delta} \theta(t)\|\le r/2$ holds at least up to times
\[
t\le \frac1\mu \log\frac{r}{2K\varepsilon}=\Theta(\log(1/\varepsilon)),
\]
which is exactly the linearization window claimed in the lemma.

\paragraph{Step 5: Alignment with the unstable eigenvector with positive spectral gap.}
Assume now that $J$ has a simple eigenvalue $\mu>0$ with eigenvector $v_u$ and a spectral gap:
$\Re(\lambda)\le \mu-\delta$ for all other eigenvalues.
Let $\Pi_u$ be the spectral projection onto $\mathrm{span}\{v_u\}$ and $\Pi_s=I-\Pi_u$.
Then there exist constants $K_u,K_s$ such that
\begin{equation}
\label{eq:proj_semigroup_bounds}
\|\Pi_u \mathrm{e}^{Jt}\|\le K_u \mathrm{e}^{\mu t},
\qquad
\|\Pi_s \mathrm{e}^{Jt}\|\le K_s \mathrm{e}^{(\mu-\delta)t},
\qquad \forall t\ge 0.
\end{equation}
Write $\tilde{\Delta} \theta (t)=\Pi_u\tilde{\Delta} \theta (t)+\Pi_s\tilde{\Delta} \theta(t)$.
If $\langle \Delta(0),v_u\rangle\neq 0$, then $\Pi_u\tilde{\Delta} \theta (t)=a_0 \mathrm{e}^{\mu t} v_u$ for some $a_0\neq 0$, while
\[
\|\Pi_s\tilde{\Delta} \theta (t)\|\le K_s \varepsilon \mathrm{e}^{(\mu-\delta)t}.
\]
Hence
\begin{equation}
\label{eq:linear_alignment}
\frac{\tilde{\Delta} \theta(t)}{\|\tilde{\Delta} \theta(t)\|}
\to \pm \frac{v_u}{\|v_u\|}
\quad \text{as } t\to\infty,
\end{equation}
and the convergence rate is $O(\mathrm{e}^{-\delta t})$.

For the nonlinear trajectory, decompose $\Delta \theta(t)=u(t)+s(t)$ with $u(t):=\Pi_u\Delta \theta(t)$ and $s(t):=\Pi_s\Delta \theta(t)$.
Projecting \eqref{eq:Duhamel} onto the two subspaces and using \eqref{eq:proj_semigroup_bounds} gives
\begin{equation}
\label{eq:whole}
u(t)
=\Pi_u\tilde{\Delta} \theta(t)+\int_0^t \Pi_u \mathrm{e}^{J(t-s)}R(\Delta \theta(s)) \ \mathrm{d} s,  \quad s(t)
=\Pi_s\tilde{\Delta} \theta(t)+\int_0^t \Pi_s \mathrm{e}^{J(t-s)}R(\Delta \theta(s)) \ \mathrm{d} s.
\end{equation}
% \begin{align}
% u(t)
% &=\Pi_u\tilde{\Delta} \theta(t)+\int_0^t \Pi_u \mathrm{e}^{J(t-s)}R(\Delta \theta(s)) \ \mathrm{d} s, \label{eq:u_duhamel}\\
% s(t)
% &=\Pi_s\tilde{\Delta} \theta(t)+\int_0^t \Pi_s \mathrm{e}^{J(t-s)}R(\Delta \theta(s)) \ \mathrm{d} s. \label{eq:s_duhamel}
% \end{align}
Inside the linearization window we have $\|\Delta \theta (s)\|\le r$, so \eqref{eq:R_quadratic} applies and,
using also $\|\Delta \theta(s)\|\lesssim \varepsilon \mathrm{e}^{\mu s}$ from the bootstrap in Step 3, we obtain
\[
\|R(\Delta(s))\|\le L\|\Delta \theta (s)\|^2 \;\lesssim\; L\varepsilon^2 \mathrm{e}^{2\mu s}.
\]
Plugging into \eqref{eq:whole} yields, for $t$ in the linearization window,
\begin{align}
\|u(t)-\Pi_u\tilde{\Delta} \theta(t)\|
&\le \int_0^t \|\Pi_u \mathrm{e}^{J(t-s)}\|\,\|R(\Delta \theta(s))\| \mathrm{d} s
\;\lesssim\; \varepsilon^2 \mathrm{e}^{2\mu t}, \label{eq:u_error}\\
\|s(t)-\Pi_s\tilde{\Delta} \theta(t)\|
&\le \int_0^t \|\Pi_s \mathrm{e}^{J(t-s)}\|\,\|R(\Delta \theta (s))\| \mathrm{d} s
\;\lesssim\; \varepsilon^2 \mathrm{e}^{2\mu t}. \label{eq:s_error}
\end{align}
Therefore,
\[
\|s(t)\|
\le \|\Pi_s\tilde{\Delta} \theta(t)\|+\|s(t)-\Pi_s\tilde{\Delta} \theta(t)\|
\;\lesssim\; \varepsilon \mathrm{e}^{(\mu-\delta)t}+\varepsilon^2 \mathrm{e}^{2\mu t},
\]
while
\[
\|u(t)\|
\ge \|\Pi_u\tilde{\Delta} \theta(t)\|-\|u(t)-\Pi_u\tilde{\Delta} \theta(t)\|
\;\gtrsim\; \varepsilon \mathrm{e}^{\mu t}-\varepsilon^2 \mathrm{e}^{2\mu t}.
\]
Hence, for times $t$ such that $\varepsilon \mathrm{e}^{\mu t}$ is still sufficiently small (which holds throughout a
$\Theta(\log(1/\varepsilon))$ interval inside the linearization window), we have
\begin{equation}
\label{eq:stable_vs_unstable_ratio}
\frac{\|s(t)\|}{\|u(t)\|}
\;\lesssim\; \mathrm{e}^{-\delta t}+\varepsilon \mathrm{e}^{\mu t}.
\end{equation}
Now let $t$ increase while remaining in the linearization window (so $t\to\infty$ is possible as $\varepsilon\to 0$),
and choose any sequence $t=t(\varepsilon)$ such that
\[
t(\varepsilon)\to\infty,
\qquad
\varepsilon \mathrm{e}^{\mu t(\varepsilon)}\to 0
\quad(\text{e.g. } t(\varepsilon)=\tfrac{1}{2\mu}\log(1/\varepsilon)).
\]
Then \eqref{eq:stable_vs_unstable_ratio} implies $\|s(t)\|/\|u(t)\|\to 0$, so
\[
\frac{\Delta \theta(t)}{\|\Delta \theta(t)\|}
=\frac{u(t)+s(t)}{\|u(t)+s(t)\|}
\to \pm \frac{v_u}{\|v_u\|}.
\]
This proves \eqref{eq:alignment_unstable}.
\end{proof}

\paragraph{Proof of Theorem \ref{thm:initial_condensation}}

\begin{proof}
The claim follows by a direct evaluation of the gradients at the origin.
By Lemma~\ref{lem:linearization_window}, the early-time dynamics is governed by the linearization at
$\theta=0$, so we substitute $\theta=0$ into \eqref{equ::gradient}.

At $\theta=0$, the definitions of $\mathbb{A}_i$ and $\mathbb{P}_i$ yield
\[
\mathbb{A}_i=\pi^\T,
\qquad
\mathbb{P}_i=\frac{1}{d}\mathbf{1}^\T,
\quad \text{for all } i.
\]
Moreover, since $\pi^\T$ is stationary, we have $\pi^\T P=\pi^\T$, and hence
\[
\sum_{i} \pi_i P_i \;=\; \pi^\T .
\]
Plugging these identities into the expression of $\frac{\partial \mathcal{L}}{\partial M}$ in
\eqref{equ::gradient}, we obtain
\[
\left.\frac{\partial \mathcal{L}}{\partial M}\right|_{\theta=0}
= -\,\pi\left(\pi-\frac{1}{d}\mathbf{1}\right)^\T.
\]
This is exactly the desired formula, completing the proof.
\end{proof}

\paragraph{Analysis of transition between stage I and II}
Although linearization reveals the characteristics of the first stage, gaps remain in the transition from the first to the second stage. At this point, simple linearization provides an error estimate that is too loose to accurately represent the stage transition. Therefore, a more refined error estimate is needed. To achieve this, we extend the techniques in \citep{chen2024multi} or \citep{xu2025understanding} and prove a rigorous stage transition. Without loss of generality, we assume $\frac{\pi^\T}{\|\pi\|} W_0, W_1 \frac{\pi - \frac{1}{d} \mathbf{1}}{\| \pi - \frac{1}{d} \mathbf{1}\|} \sim \Theta (\varepsilon)$. First, we introduce two quantities to show the magnitudes that distinguish the outer and inner parameters:
\begin{equation}
    W_{\text{out, max}} = \max \left\{ \| W_0\|_{\mathrm{F}}, \| W_1\|_{\mathrm{F}} \right\}, \quad W_{\text{in, max}} = \max \left\{ \| W_Q\|_{\mathrm{F}}, \| W_K\|_{\mathrm{F}} \right\}.
\end{equation}
Then, recalling the definition of dynamics \eqref{equ::dynamics}, we have the estimate of error term in transition of stage I and stage II:
\begin{equation}
    \left\| \frac{\partial \mathcal{L}}{\partial \Phi} W_0 W_{QK}^\T \right\|, \left\| \left( \frac{\partial \mathcal{L}}{\partial \Phi} \right)^\T W_0 W_{QK} \right\| \lesssim W_{\text{out, max}}^3 W_{\text{in, max}}^2,
\end{equation}
and 
\begin{equation}
    \left\| W_0^\T \frac{\partial \mathcal{L}}{\partial \Phi} W_0 W_{K} \right\|, \left\|  W_0^{\T} \left(\frac{\partial \mathcal{L}}{\partial \Phi} \right)^{\T} W_0 W_Q \right\| \lesssim W_{\text{out, max}}^4 W_{\text{in, max}}.
\end{equation}
Moreover, we decompose $W_0, W_1$ into condensation directions and normal directions:
\begin{equation}
    W_0 = \frac{\pi}{\|\pi\|} \frac{\pi^\T}{\| \pi \|} W_0 + R_0, \quad W_1 = W_1 \frac{\pi - \frac{1}{d} \mathbf{1}}{\| \pi - \frac{1}{d} \mathbf{1}\|} \frac{\pi^\T - \frac{1}{d} \mathbf{1}^\T}{\| \pi - \frac{1}{d} \mathbf{1}\|} + R_1.
\end{equation}
We begin the estimate by decomposing $\frac{\partial \mathcal{L}}{\partial M}$:
\begin{equation}
    \frac{\partial \mathcal{L}}{\partial M} = - \sum_{i=1}^d \pi_i \mathbb{A}_i (P_i - \mathbb{P}_i) = - \pi (\pi^\T - \mathbb{P}) + E_1,
\end{equation}
where $\mathbb{P}_j = \frac{\exp (\pi^\T M e_j)}{\sum_{j'} \exp(\pi^\T M e_{j'})}$ and $\| E_1\| \lesssim W_{\text{out, max}}^2 W_{\text{in, max}}^2$. Here, we use the expansion of $\mathbb{A}_i$. Next, we will further elaborate on $\mathbb{P}$. By the definition of $\mathbb{P}$, it can be viewed as $\operatorname{softmax} (\pi^\T M)$. Using the decompositions of $W_0$ and $W_1$, we get 
\begin{equation*}
    \mathbb{P} = \operatorname{softmax} \left( \pi^\T q q^\T W_0 W_1 u u^\T + \pi^\T R_0  W_1 u u^\T + \pi^\T q q^\T W_0 R_1 + R_0 R_1 \right).
\end{equation*}
Here we denote $q = \frac{\pi}{\| \pi\|}$ and $u = \frac{\pi - \frac{1}{d} \mathbf{1}}{\| \pi - \frac{1}{d} \mathbf{1} \|}$ for simplicity.
Using the fact that $\mathrm{d} \operatorname{softmax} (z) = \Var (\operatorname{softmax} (z))$, we have
\begin{equation}
\begin{aligned}
    \mathbb{P} &= p_{\mathrm{sym}} + \left( \pi^\T R_0  W_1 u u^\T + \pi^\T q q^\T W_0 R_1 + \pi^\T R_0 R_1 \right) \Var(p_{\mathrm{sym}}) + \tilde{E}_2 \\
    & = p_{\mathrm{sym}} + \left( \pi^\T R_0  W_1 u u^\T + \pi^\T q q^\T W_0 R_1\right) \Var(p_{\mathrm{sym}}) + E_2
\end{aligned}
\end{equation}
where $E_2$ satisfies $\| E_2\| \lesssim \|R_0\|^2 + \|R_1\|^2$. Substituting this equation into the expression of $\frac{\partial \mathcal{L}}{\partial M}$, we get
\begin{equation}
    \frac{\partial \mathcal{L}}{\partial M} = - \pi (\pi^\T - p_{\mathrm{sym}}) + \pi \left( \pi^\T R_0  W_1 u u^\T + \pi^\T q q^\T W_0 R_1\right) \Var(p_{\mathrm{sym}}) + E_1 + E_2
\end{equation}
Before we begin the error estimate, we emphasize the following two facts. First, $\pi^\T - p_{\mathrm{sym}} \propto u^\T$ under the setting of one high frequency token and other low frequency tokens. Second, for vector $u_\perp$ which is orthogonal to $u$ and $\mathbf{1}$, we have $\Var(p_{\mathrm{sym}}) u_{\perp} = p_{\mathrm{sym}, i \neq 1} u_{\perp}$. For $\mathbf{1}$, we have $\Var(p_{\mathrm{sym}}) \mathbf{1} = 0$. Both of these two facts can be verified directly. Then we begin the error estimate. For $q_{\perp}^\T W_0$, we have
\begin{equation}
    \frac{\mathrm{d}}{\mathrm{d} t} q_{\perp}^\T W_0 = q_{\perp}^\T (E_1 + E_2) + q_{\perp}^\T  \frac{\partial \mathcal{L}}{\partial \Phi} W_0 W_{QK}^\T + \left( \frac{\partial \mathcal{L}}{\partial \Phi} \right)^\T W_0 W_{QK}
\end{equation}
Thus, we have
\begin{equation}
    \left \|q_{\perp}^\T W_0 (t) \right\| \le \left \|q_{\perp}^\T W_0 (0)\right\| + C \int_0^t  W_{\text{out, max}}^2 W_{\text{in, max}}^2 + \|R_0\|^2 + \|R_1\|^2 \mathrm{d} s. 
\end{equation}
Sum them up, we get
\begin{equation}
    \left \|R_0 (t) \right\| \le \left \|R_0 (0)\right\| + C \int_0^t  W_{\text{out, max}}^2 W_{\text{in, max}}^2 + \|R_0\|^2 + \|R_1\|^2 \mathrm{d} s. 
\end{equation}
For $W_1 u_{\perp}$, we have
\begin{equation}
    \frac{\mathrm{d}}{\mathrm{d} t} W_1 u_{\perp} = - W_0^\T  \pi \left( \pi^\T R_0  W_1 u u^\T + \pi^\T q q^\T W_0 R_1\right) \Var(p_{\mathrm{sym}}) u_{\perp} - W_0^\T (E_1 + E_2) u_{\perp}
\end{equation}
Using the fact that $\Var(p_{\mathrm{sym}}) u_{\perp} = p_{\mathrm{sym}, i \neq 1} u_{\perp}$, we get
\begin{equation}
    \frac{\mathrm{d}}{\mathrm{d} t} W_1 u_{\perp} = - p_{\mathrm{sym}, i \neq 1} W_0^\T  \pi \left( \pi^\T q q^\T W_0 R_1\right)  u_{\perp} - W_0^\T (E_1 + E_2) u_{\perp}
\end{equation}
Then, we get
\begin{equation}
\begin{aligned}
    \frac{\mathrm{d}}{\mathrm{d} t} \| W_1 u_{\perp}\|^2 &= - 2 \| \pi\|^2 p_{\mathrm{sym}, i \neq 1} u_{\perp}^\T W_1^\T W_0^\T q q^\T W_0 W_1 u_{\perp} - 2 u_{\perp}^\T W_1^\T W_0^\T (E_1 + E_2) u_{\perp} \\
    & \lesssim \|R_1\| \left( W_{\text{out, max}}^2 W_{\text{in, max}}^2 + \|R_0\|^2 + \|R_1\|^2 \right)
\end{aligned}
\end{equation}
Sum them up, we get
\begin{equation}
    \frac{\mathrm{d}}{\mathrm{d} t} \| R_1\|^2 \lesssim \|R_1\| \left( W_{\text{out, max}}^2 W_{\text{in, max}}^2 + \|R_0\|^2 + \|R_1\|^2 \right)
\end{equation}
After establish the error estimate of the normal terms, We formally begin the proof of the phase transition. We divide the entire phase transition into two parts. The first part can be seen as a plateau period, where the condensation direction grows significantly and is about to become $O(1)$, while the normal direction remains small due to our fine error estimation. In the second part, the condensation direction rapidly reaches the neighborhood of the critical point.

We define 
\begin{equation}
    T_1 = \sup \left\{ t \ge 0 \ | \ W_{\text{out,max}} \lesssim \varepsilon^{\frac{1}{n}} \right\}
\end{equation}
Moreover, let \(T_p\) be the first time when the condensation component reaches the plateau scale:
\begin{equation}\label{eq:Tp-def}
T_p := \frac{1-\frac{1}{n}}{\| \pi\| \| \pi - \frac{1}{d} \mathbf{1}\|} \log \frac{1}{\varepsilon}.
\end{equation}

\begin{lemma}[Refined error estimate]\label{lem:refined-error}
For every \(t\le \min\{T_1,T_p\}\), one has
\begin{equation}\label{eq:bootstrap-inner}
\|W_Q(t)\|_{\mathrm F}+\|W_K(t)\|_{\mathrm F}\le C\varepsilon,
\end{equation}
and
\begin{equation}\label{eq:bootstrap-normal}
\|R_0(t)\|_{\mathrm F}+\|R_1(t)\|_{\mathrm F}\le C\varepsilon,
\end{equation}
provided the initialization satisfies
\(
\|W_Q(0)\|_{\mathrm F}+\|W_K(0)\|_{\mathrm F}
+\|R_0(0)\|_{\mathrm F}+\|R_1(0)\|_{\mathrm F}
\lesssim \varepsilon
\)
and \(\varepsilon>0\) is sufficiently small.
\end{lemma}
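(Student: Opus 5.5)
The plan is a bootstrap (continuity) argument on $[0,\min\{T_1,T_p\}]$. Let $T^*$ be the supremum of times $t\le\min\{T_1,T_p\}$ for which both \eqref{eq:bootstrap-inner} and \eqref{eq:bootstrap-normal} hold with constant $2C$. By the initialization assumption and continuity, $T^*>0$. We then show that on $[0,T^*]$ the bounds actually hold with constant $C$, which forces $T^*=\min\{T_1,T_p\}$. Throughout we use $W_{\text{out,max}}\lesssim\varepsilon^{1/n}$, which holds for $t\le T_1$, and $W_{\text{in,max}}\le 2C\varepsilon$ from the bootstrap hypothesis.

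\emph{Inner parameters.} From \eqref{equ::dynamics} and the bound $\|W_0^\T\frac{\partial\mathcal L}{\partial\Phi}W_0W_K\|\lesssim W_{\text{out,max}}^4W_{\text{in,max}}$, we obtain
\[
\frac{\mathrm d}{\mathrm dt}\big(\|W_Q\|_{\mathrm F}+\|W_K\|_{\mathrm F}\big)\lesssim \varepsilon^{4/n}\big(\|W_Q\|_{\mathrm F}+\|W_K\|_{\mathrm F}\big).
\]
Gr\"onwall then gives the growth factor $\exp(C'\varepsilon^{4/n}T_p)$. Since $T_p=\Theta(\log(1/\varepsilon))$, we have $\varepsilon^{4/n}\log(1/\varepsilon)\to0$, so the factor is $1+o(1)$. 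Hence $\|W_Q\|_{\mathrm F}+\|W_K\|_{\mathrm F}\le(1+o(1))\cdot\text{initial}\le C\varepsilon$ once $C$ is chosen larger than the initial constant.

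\emph{Normal parts.} Here we use the two differential inequalities derived just before the lemma. For $R_0$,
\[
\|R_0(t)\|\le\|R_0(0)\|+C\int_0^t\big(W_{\text{out,max}}^2W_{\text{in,max}}^2+\|R_0\|^2+\|R_1\|^2\big)\,\mathrm ds .
\]
Under the bootstrap, the integrand is $\lesssim \varepsilon^{2/n}\varepsilon^2+\varepsilon^2$, so the integral is $\lesssim\varepsilon^2\log(1/\varepsilon)=o(\varepsilon)$. For $R_1$, the inequality $\frac{\mathrm d}{\mathrm dt}\|R_1\|^2\lesssim\|R_1\|(\cdots)$ gives $\frac{\mathrm d}{\mathrm dt}\|R_1\|\lesssim(\cdots)$. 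The same integration then yields $\|R_1(t)\|\le\|R_1(0)\|+o(\varepsilon)$. This closes the bootstrap: $\|R_0\|+\|R_1\|\le C\varepsilon$ for small $\varepsilon$.

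\emph{Main obstacle.} The delicate point is that the sign-definite structure of the $R_1$ equation must really be exploited. The linear term
\[
-2\|\pi\|^2p_{\mathrm{sym},i\ne1}\,\|q^\T W_0W_1u_\perp\|^2\le0
\]
must be dropped as nonpositive; if it were treated as an absolute error, we would get $W_{\text{out,max}}^2$-rate exponential growth of $R_1$ over a $\log(1/\varepsilon)$ window. We must also check that the cross terms $\pi^\T R_0W_1uu^\T\Var(p_{\mathrm{sym}})u_\perp$ vanish because $u^\T u_\perp=0$, and that the $\mathbf 1$-component is killed by $\Var(p_{\mathrm{sym}})\mathbf 1=0$.

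A similar issue arises for $R_0$. Its linear driving term $q_\perp^\T\pi(\cdots)=0$ vanishes because $q_\perp\perp\pi$, but the attention-induced term $q_\perp^\T\frac{\partial\mathcal L}{\partial\Phi}W_0W_{QK}^\T$ must be bounded by $W_{\text{out,max}}^3W_{\text{in,max}}^2\lesssim\varepsilon^{2+3/n}$, which is harmless. I would also verify that $p_{\mathrm{sym}}$ is evaluated at the current condensed component, so it is not a fixed vector. The identities used above hold for every symmetric two-group softmax, and this symmetry is preserved because $\pi^\T qq^\T W_0W_1uu^\T$ is proportional to $u^\T$. Finally, the $\log(1/\varepsilon)$ factor from $T_p$ is always absorbed by an extra power of $\varepsilon$, and this is exactly why the window is cut at $T_p$ and $T_1$.
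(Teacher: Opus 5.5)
Your proposal is correct and follows the paper's proof. You use Gr\"onwall with the $\varepsilon^{4/n}$ coefficient for $(W_Q,W_K)$, then integrate the differential inequalities for $R_0$ and $R_1$ (after the $\pi$-, $u$- and $\mathbf 1$-cancellations) and close a bootstrap, and your explicit accounting of the $\log(1/\varepsilon)$ factors supplies detail the paper leaves implicit. One small aside: on this window $\|q^\T W_0\|^2\lesssim\varepsilon^{2/n}$, so the nonpositive linear term in the $R_1$ equation would be harmless even if bounded in absolute value, since the resulting Gr\"onwall factor is $\exp(O(\varepsilon^{2/n}\log(1/\varepsilon)))=1+o(1)$; its sign only becomes essential once $W_{\text{out,max}}$ is of order one, as in the later window.
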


\begin{proof}
We start from the equations
\[
\frac{dW_Q}{dt} = - W_0^\T \frac{\partial \mathcal L}{\partial \Phi} W_0 W_K,
\qquad
\frac{dW_K}{dt} = - W_0^\T \Bigl(\frac{\partial \mathcal L}{\partial \Phi}\Bigr)^\T W_0 W_Q.
\]
By the definition of \(W_{\mathrm{out,max}}\), on $[0,\min\{T_1,T_p\}]$ we have
\[
\Bigl\|W_0^\T \frac{\partial \mathcal L}{\partial \Phi} W_0\Bigr\|
+
\Bigl\|W_0^\T \Bigl(\frac{\partial \mathcal L}{\partial \Phi}\Bigr)^\T W_0\Bigr\|
\le C W_{\mathrm{out,max}}^4
\le C\varepsilon^{\frac{4}{n}}.
\]
Hence
\[
\frac{d}{dt}\bigl(\|W_Q\|_{\mathrm F}+\|W_K\|_{\mathrm F}\bigr)
\le C\varepsilon^{\frac{4}{n}}\bigl(\|W_Q\|_{\mathrm F}+\|W_K\|_{\mathrm F}\bigr).
\]
By Gronwall's inequality and the assumption on the initialization,
\eqref{eq:bootstrap-inner} follows.

Next, recall the expansion
\begin{equation}\label{eq:dLdM-expansion}
\frac{\partial \mathcal L}{\partial M}
=
-\pi(\pi^\T-p_{\mathrm{sym}}) 
+ \pi\Bigl(\pi^\T R_0\, W_1uu^\T
+\pi^\T qq^\T W_0 R_1\Bigr)\operatorname{Var}(p_{\mathrm{sym}})
+ E,
\end{equation}
where
\begin{equation}\label{eq:E-bound}
\|E\|
\le
C\Bigl(
W_{\mathrm{out,max}}^2W_{\mathrm{in,max}}^2
+\|R_0\|_{\mathrm F}^2
+\|R_1\|_{\mathrm F}^2
\Bigr).
\end{equation}
Projecting the \(W_0\)-equation to the orthogonal complement of \(q\), we obtain
\[
\frac{d}{dt}\bigl((I-qq^\T)W_0\bigr)
=
(I-qq^\T)\Bigl(
-\frac{\partial \mathcal L}{\partial M}W_1^\T
-\frac{\partial \mathcal L}{\partial \Phi}W_0W_{QK}^\T
-\Bigl(\frac{\partial \mathcal L}{\partial \Phi}\Bigr)^\T W_0W_{QK}
\Bigr).
\]
Since \(\pi(\pi^\T-p_{\mathrm{sym}})\) lies in the span of \(q\), its contribution vanishes under the projection \(I-qq^\T\). Thus
\[
\frac{d}{dt}\|R_0\|_{\mathrm F}
\le
C\Bigl(
W_{\mathrm{out,max}}^2W_{\mathrm{in,max}}^2
+\|R_0\|_{\mathrm F}^2
+\|R_1\|_{\mathrm F}^2
\Bigr).
\]
Similarly, projecting the \(W_1\)-equation to the orthogonal complement of \(u\), we get
\[
\frac{d}{dt}(W_1(I-uu^\T))
=
-W_0^\T \frac{\partial \mathcal L}{\partial M}(I-uu^\T).
\]
Using the key identities
\[
(\pi^\T-p_{\mathrm{sym}})(I-uu^\T)=0,
\qquad
\operatorname{Var}(p_{\mathrm{sym}})\mathbf 1 = 0,
\]
and the fact that \(\operatorname{Var}(p_{\mathrm{sym}})\) acts as a scalar on the subspace
\(\{v:\langle v,\mathbf 1\rangle=\langle v,u\rangle=0\}\),
we obtain
\[
\frac{d}{dt}\|R_1\|_{\mathrm F}^2
\le
C\|R_1\|_{\mathrm F}
\Bigl(
W_{\mathrm{out,max}}^2W_{\mathrm{in,max}}^2
+\|R_0\|_{\mathrm F}^2
+\|R_1\|_{\mathrm F}^2
\Bigr).
\]
Combining the two inequalities, using \eqref{eq:bootstrap-inner}, and applying a standard bootstrap argument yields \eqref{eq:bootstrap-normal}.
\end{proof}

Next, we show that $T_p \le T_1$ by error estimate.
\begin{proposition}
    \label{cor:Tp-le-T1}
For sufficiently small \(\varepsilon\), one has
\[
T_p \le T_1.
\]
\end{proposition}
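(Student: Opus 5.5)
We argue by contradiction through a continuity (bootstrap) argument on the interval $[0,\min\{T_1,T_p\}]$, where Lemma~\ref{lem:refined-error} already gives $\|W_Q\|_{\mathrm F}+\|W_K\|_{\mathrm F}\le C\varepsilon$ and $\|R_0\|_{\mathrm F}+\|R_1\|_{\mathrm F}\le C\varepsilon$. Suppose $T_1<T_p$. By definition of $T_1$ and continuity, at $t=T_1$ we have $W_{\text{out,max}}(T_1)\gtrsim\varepsilon^{1/n}$ (the defining bound becomes tight). We will show that on $[0,T_1]$ the outer parameters are controlled by their condensation components, and that those components cannot reach size $\varepsilon^{1/n}$ before time $T_p$. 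This contradicts the assumption.

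First, reduce the outer dynamics to two scalars. Write $a(t):=q^\T W_0 \in\mathbb R^{1\times m}$ and $b(t):=W_1u\in\mathbb R^{m}$, so that $W_0=q\,a+R_0$ and $W_1=b\,u^\T+R_1$. Then project the $W_0$- and $W_1$-equations in \eqref{equ::dynamics} onto $q$ and $u$, using the expansion \eqref{eq:dLdM-expansion}. While $W_{\text{out,max}}\lesssim\varepsilon^{1/n}$, the model output satisfies $p_{\mathrm{sym}}=\frac1d\mathbf 1^\T+O(\varepsilon^{2/n})$. Hence $-\pi(\pi^\T-p_{\mathrm{sym}})=-\|\pi\|\|\pi-\frac1d\mathbf 1\|\,q u^\T\,(1+O(\varepsilon^{2/n}))$. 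All other contributions are bounded by $C(W_{\text{out,max}}^2W_{\text{in,max}}^2+\|R_0\|^2+\|R_1\|^2)\,W_{\text{out,max}}$ plus the $\partial\mathcal L/\partial\Phi$ terms, which are $O(W_{\text{out,max}}^3W_{\text{in,max}}^2)$. By the refined lemma these are all $O(\varepsilon^2)$ uniformly on the interval. With $\sigma:=\|\pi\|\|\pi-\frac1d\mathbf 1\|$, this gives
\[
\dot a^\T=\sigma(1+O(\varepsilon^{2/n}))\,b+O(\varepsilon^{2}),\qquad \dot b=\sigma(1+O(\varepsilon^{2/n}))\,a^\T+O(\varepsilon^{2}).
\]
The quantity $s:=\|a\|+\|b\|$ then obeys $\dot s\le\sigma(1+C\varepsilon^{2/n})s+C\varepsilon^2$.

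Next, integrate with Gronwall's inequality from $s(0)=\Theta(\varepsilon)$:
\[
s(t)\le C\varepsilon\,e^{\sigma(1+C\varepsilon^{2/n})t}+C\varepsilon^2 t\,e^{\sigma(1+C\varepsilon^{2/n})t}.
\]
At $t\le T_p=\frac{1-1/n}{\sigma}\log\frac1\varepsilon$ the exponent is $(1-\frac1n)\log\frac1\varepsilon\,(1+C\varepsilon^{2/n})$. The correction $C\varepsilon^{2/n}\log\frac1\varepsilon$ tends to $0$, so $e^{\sigma(1+C\varepsilon^{2/n})t}\le C'\varepsilon^{-(1-1/n)}$. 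Hence $s(t)\le C''\varepsilon^{1/n}$, and the $\varepsilon^2 t$ term is negligible. Combined with $\|R_0\|+\|R_1\|\le C\varepsilon$, this gives $W_{\text{out,max}}(t)\le C''\varepsilon^{1/n}$ on $[0,\min\{T_1,T_p\}]$. Interpreting the $\lesssim$ in the definition of $T_1$ with a constant larger than $C''$, the defining inequality is strict up to $T_p$. So $T_1$ cannot be smaller than $T_p$, i.e.\ $T_p\le T_1$.

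The main obstacle is the constant bookkeeping. The rate $\sigma$ in $T_p$ must exactly match the linear growth rate of the condensation component. The perturbation of the rate must be small enough that $\varepsilon^{2/n}T_p\to0$, which holds since $\varepsilon^{2/n}\log(1/\varepsilon)\to0$; otherwise the exponent would overshoot by a power of $\varepsilon$. I would also check carefully that the cross term $\pi(\pi^\T R_0W_1uu^\T+\dots)\operatorname{Var}(p_{\mathrm{sym}})$ contributes only $O(\varepsilon\cdot W_{\text{out,max}}^2)$ after projection, so it stays lower order.
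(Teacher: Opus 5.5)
Your proposal is correct and follows essentially the paper's argument. You restrict to $[0,\min\{T_1,T_p\}]$, where Lemma~\ref{lem:refined-error} controls $W_Q,W_K,R_0,R_1$, project onto the condensation variables $a=q^\T W_0$, $b=W_1u$, and use that their growth rate $\|\pi\|\,\|\pi-\frac1d\mathbf 1\|$ is exactly the one built into $T_p$, so the outer weights are only $O(\varepsilon^{1/n})$ at $T_p$.

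The one technical difference is how that bound is obtained. The paper compares $(a,b)$ with the linearized flow at the origin via the Duhamel estimate of Lemma~\ref{lem:linearization_window}: an error of $\varepsilon^{2/n}$ against a linear solution of size $\varepsilon^{1/n}$, which also yields the lower bound later used as $z(T_p)=\Theta(\varepsilon^{2/n})$. Your one-sided Gronwall bound is all that is needed for $T_p\le T_1$, and it makes the continuity argument explicit, including the choice of the implicit constant in the definition of $T_1$.
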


\begin{proof}
By Lemma~\ref{lem:refined-error}, for all \(t\le \min\{T_1,T_p\}\),
\[
W_{\mathrm{in,max}}(t)\le C\varepsilon,
\qquad
\|R_0(t)\|_{\mathrm F}+\|R_1(t)\|_{\mathrm F}\le C\varepsilon.
\]
 The only possible mechanism that can terminate the bootstrap interval before \(T_1\) is the growth of the condensation component itself. 

Define the two condensation variables
\begin{equation}\label{eq:ab-def}
a(t):= q^\T W_0(t)\in \mathbb R^m,
\qquad
b(t):= W_1(t)u \in \mathbb R^m.
\end{equation}
By projecting the dynamics of \(W_0\) and \(W_1\) onto \(q\) and \(u\), and using \eqref{eq:dLdM-expansion}, we obtain
\begin{equation}\label{eq:ab-system-rough}
\dot a
=
\|\pi\|\,\|\pi-p_{\mathrm{sym}}(z)\|\, b + \mathcal E_a,
\qquad
\dot b
=
\|\pi\|\,\|\pi-p_{\mathrm{sym}}(z)\|\, a + \mathcal E_b,
\end{equation}
where
\begin{equation}\label{eq:z-def}
z(t):=\langle a(t),b(t)\rangle,
\end{equation}
and the error terms satisfy
\begin{equation}\label{eq:Eab-bound}
\|\mathcal E_a\|+\|\mathcal E_b\|
\le
C\Bigl(
W_{\mathrm{out,max}}^2 W_{\mathrm{in,max}}^2
+ W_{\mathrm{out,max}}^2 \|R_0\|_{\mathrm F}
+W_{\mathrm{out,max}}^2 \|R_1\|_{\mathrm F}
\Bigr).
\end{equation}

Considering the linearized version:
\begin{equation}
    \dot{\tilde{a}} = \lambda_* \tilde{b}, \quad \dot{\tilde{b}} = \lambda_* \tilde{a} 
\end{equation}
Since the linearized system is just the system linearized at $\theta = 0$, using the same argument as Lemma \ref{lem:linearization_window}, 
we get for $t \in [0, T_p]$
\begin{equation}
    \| a(t) - \tilde{a}(t) \| + \| b(t) - \tilde{b}(t) \| \lesssim \varepsilon^{\frac{2}{n}}.
\end{equation}
Considering the solution of linearized system, we get:
\begin{equation}
    \| \tilde{a} (T_p) \|, \| \tilde{b} (T_p) \| \sim \varepsilon^{\frac{1}{n}} 
\end{equation}
Combined this with the error estimate, we finish the proof.
\end{proof}
\begin{remark}
    At first glance, this proposition resembles the previous linearization result. However, the key distinction is that here we obtain a significantly stronger control of the dynamics in the normal directions. This stronger control is essential for carrying out the cross-stage analysis.
\end{remark}

To complete the cross-stage analysis, we need a dynamics so that we can work with a significant condensation direction. Specifically, observing the Eq. \eqref{eq:ab-system-rough}, we find that 
\begin{equation}
    \dot{z} = \|\pi\|\,\|\pi-p_{\mathrm{sym}}(z)\|\, (\| a \|^2 + \| b \|^2) + \langle \mathcal{E}_a, b \rangle + \langle \mathcal{E}_b, a \rangle
\end{equation}
Thus, if can establish the conservation law in this case. we can get the dynamics of $z$ which is solvable. For given small $\delta$, we define
\begin{equation}
    T_2 = \sup \left\{ t \ge 0 \ | \ \| \pi - p_{\mathrm{sym}}\| \ge \delta \right\},
\end{equation}
\begin{equation}
    T_3 = \sup \left\{ t \ge 0 \ | \ W_{\text{out,max}} \lesssim 1 \right\}
\end{equation}
and give a estimate of $T_2, T_3$ by 
\begin{equation}
    T_d = T_p + \frac{2}{\|\pi\|\sqrt{\frac{d}{d-1}}\,n\left(\pi_1-\frac1d\right)}
\log\frac{1}{\varepsilon}.
\end{equation}
Then first we get similar error estimate, the proof is similar to Lemma \ref{lem:refined-error}. Thus we omit its proof.
\begin{lemma}[Refined error estimate]\label{lem:refined-error1}
For every \(t\le \min\{T_2, T_3, T_d\}\), one has
\begin{equation}\label{eq:bootstrap-inner1}
\|W_Q(t)\|_{\mathrm F}+\|W_K(t)\|_{\mathrm F}\le C\varepsilon^\frac{2}{3},
\end{equation}
and
\begin{equation}\label{eq:bootstrap-normal1}
\|R_0(t)\|_{\mathrm F}+\|R_1(t)\|_{\mathrm F}\le C\varepsilon.
\end{equation}
\end{lemma}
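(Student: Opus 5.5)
The plan is to copy the bootstrap of Lemma~\ref{lem:refined-error}, now on the longer window $[T_p,\min\{T_2,T_3,T_d\}]$, where the outer parameters are no longer small. The bound $W_{\mathrm{out,max}}\lesssim\varepsilon^{1/n}$ is lost and only $W_{\mathrm{out,max}}\lesssim 1$ (from $t\le T_3$) is available. I will assume both claimed bounds on a maximal subinterval $[T_p,t^*)$, with constants $2C$ in place of $C$. I then show they improve to $C$, which by continuity forces $t^*=\min\{T_2,T_3,T_d\}$. The starting values at $T_p$ come from Lemma~\ref{lem:refined-error} and Proposition~\ref{cor:Tp-le-T1}: $W_{\mathrm{in,max}}(T_p)\le C\varepsilon$ and $\|R_0(T_p)\|+\|R_1(T_p)\|\le C\varepsilon$.

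\emph{Inner parameters.} From $\dot W_Q=-W_0^\T(\partial\mathcal L/\partial\Phi)W_0W_K$ and its counterpart for $W_K$, together with $W_{\mathrm{out,max}}\lesssim1$, I get
\[
\tfrac{d}{dt}\bigl(\|W_Q\|_{\mathrm F}+\|W_K\|_{\mathrm F}\bigr)\le \Lambda(t)\bigl(\|W_Q\|_{\mathrm F}+\|W_K\|_{\mathrm F}\bigr),\qquad \Lambda(t):=C\|W_0\|^4\,\|M\|\,\|P_i-\mathbb P_i\|.
\]
Gronwall alone gives $\varepsilon\exp(\int_{T_p}^t\Lambda)$, so the task is to show $\int_{T_p}^{T_d}\Lambda\le \tfrac13\log\frac1\varepsilon$. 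I will split the window in two parts. On the part where $\|a\|,\|b\|\le\varepsilon^{1/(3n)}$, $\Lambda$ is polynomially small and its integral is $o(1)$. On the remaining part, \eqref{eq:ab-system-rough} makes $z$ grow at a definite exponential rate until $\|\pi-p_{\mathrm{sym}}(z)\|$ drops to $\delta$, i.e.\ until $T_2$. The length of this part is at most $T_d-T_p=\frac{2}{\|\pi\|\sqrt{d/(d-1)}\,n(\pi_1-1/d)}\log\frac1\varepsilon$. Multiplying by $\sup\Lambda\lesssim 1$ gives a bound $c_n\log\frac1\varepsilon$ with $c_n\to0$ as $n\to\infty$. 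Choosing $n$ large makes $c_n\le\frac13$, which yields $W_{\mathrm{in,max}}\le C\varepsilon^{2/3}$. This timing estimate is the main obstacle. If the constant is not explicit enough, I will instead use the scalar ODE for $z$, $\dot z\ge \|\pi\|\,\|\pi-p_{\mathrm{sym}}(z)\|\,2z$ (by AM--GM, $\|a\|^2+\|b\|^2\ge 2z$), and bound the time spent at $z=O(1)$ directly.

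\emph{Normal components.} I reuse the projected equations from Lemma~\ref{lem:refined-error}. The same identities hold: $(\pi^\T-p_{\mathrm{sym}})(I-uu^\T)=0$, $\pi\pi^\T$ lies in the span of $q$, $\Var(p_{\mathrm{sym}})\mathbf 1=0$, and $\Var(p_{\mathrm{sym}})$ acts as a positive scalar on $u_\perp$. With these, the $R_1$ equation has a nonpositive leading part $-2\|\pi\|^2p_{\mathrm{sym},i\ne1}\|q^\T W_0R_1\|^2$. Its forcing is $\|R_1\|\bigl(W_{\mathrm{out,max}}^2W_{\mathrm{in,max}}^2+\|R_0\|^2+\|R_1\|^2\bigr)\lesssim \|R_1\|\bigl(\varepsilon^{4/3}+\varepsilon^2\bigr)$. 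The $R_0$ equation is forced by the same quantity. Integrating over a window of length $O(\log\frac1\varepsilon)$ gives an increment $O(\varepsilon^{4/3}\log\frac1\varepsilon)=o(\varepsilon)$, so the bound closes at $C\varepsilon$.

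One point to check is that $R_0$ has no linear self-interaction once $W_0$ is of order one. The cross term $\pi(\pi^\T R_0W_1uu^\T)\Var(p_{\mathrm{sym}})$ is projected away by $I-qq^\T$, so only the quadratic error terms survive. With this, both bootstrap bounds are strictly improved, and the continuity argument finishes the proof.
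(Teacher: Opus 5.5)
Your proposal is correct and is the argument the paper has in mind when it omits this proof as ``similar to Lemma~\ref{lem:refined-error}'': Gronwall for $(W_Q,W_K)$ and the projected bootstrap for $(R_0,R_1)$, now with only $W_{\mathrm{out,max}}\lesssim 1$ available. You also correctly isolate the one new ingredient the paper leaves implicit: the Gronwall factor over the window $T_d-T_p=O(\frac{1}{n}\log\frac{1}{\varepsilon})$ is $\varepsilon^{-O(1/n)}$, so the exponent $\frac{2}{3}$ requires $n$ large, while the $O(\varepsilon^{4/3})$ forcing of $R_0,R_1$ integrates to $o(\varepsilon)$; the bootstrap then closes at $C\varepsilon+o(\varepsilon)$ rather than exactly $C\varepsilon$, which is harmless.
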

Then we show the monotonicity of key variables. Before we do this, we derive the aprroximate conservation law:
\begin{equation}
\begin{aligned}
    \frac{\mathrm{d}}{\mathrm{d} t} \langle a,b \rangle - \frac{1}{2} (\| a\|^2 + \| b\|^2)  &= \|\pi\|\,\|\pi-p_{\mathrm{sym}}(z)\|\, (\| a \|^2 + \| b \|^2 - 2 \langle a, b\rangle) + \langle \mathcal{E}_a, b \rangle + \langle \mathcal{E}_b , a\rangle - \langle \mathcal{E}_a, b \rangle - \langle \mathcal{E}_b , a\rangle \\
    & \ge \langle \mathcal{E}_a, b \rangle + \langle \mathcal{E}_b , a\rangle - \langle \mathcal{E}_a, b \rangle - \langle \mathcal{E}_b , a\rangle 
\end{aligned}
\end{equation}
Then we have 
\begin{equation}
    \langle a,b \rangle (t) - \frac{1}{2} (\| a\|^2 + \| b\|^2)(t) \gtrsim - \varepsilon^{\frac{3}{n}} - \int_{T_p}^{t} |\langle \mathcal{E}_a, b \rangle| + |\langle \mathcal{E}_b , a\rangle| + | \langle \mathcal{E}_a, b \rangle| + |\langle \mathcal{E}_b , a\rangle | \mathrm{d} s
\end{equation}
For $t$ such that $\frac{1}{2} (\| a\|^2 + \| b\|^2)(t) \gtrsim \varepsilon^{\frac{2}{n}}$ and $t \le \min\{T_2, T_3, T_d\}$, the right hand can be controlled by 
\begin{equation}
    \varepsilon^{\frac{1}{2n}} (\| a\|^2 + \| b\|^2)
\end{equation}
As a result, we get for $t$ such that $\frac{1}{2} (\| a\|^2 + \| b\|^2)(t) \gtrsim \varepsilon^{\frac{2}{n}}$ and $t \le \min\{T_2, T_3, T_d\}$, there is 
\begin{equation}
    (1- \varepsilon^{\frac{1}{n}}) \frac{1}{2} (\| a\|^2 + \| b\|^2)(t) \le  \langle a,b \rangle (t) \le \frac{1}{2} (\| a\|^2 + \| b\|^2)(t)
\end{equation}
\begin{proposition}
\label{prop::mono}
    For \( T_p \le t\le \min\{T_2, T_3, T_d\}\), one has $\langle a, b\rangle$, $\| a\|^2 + \| b\|^2$ increase monotonically. 
\end{proposition}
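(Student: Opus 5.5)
We prove the proposition by differentiating the two quantities along the $(a,b)$ system \eqref{eq:ab-system-rough}. We treat the leading part as a positive-definite contribution and show the error terms $\mathcal E_a,\mathcal E_b$ cannot overturn its sign on $[T_p,\min\{T_2,T_3,T_d\}]$.

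Write $\lambda(z):=\|\pi\|\,\|\pi-p_{\mathrm{sym}}(z)\|$. For $t\le T_2$ we have $\|\pi-p_{\mathrm{sym}}\|\ge\delta$, so $\lambda(z)\ge \|\pi\|\delta>0$. Direct computation from \eqref{eq:ab-system-rough} gives
\[
\frac{\mathrm d}{\mathrm dt}\langle a,b\rangle=\lambda(z)\bigl(\|a\|^2+\|b\|^2\bigr)+\langle\mathcal E_a,b\rangle+\langle\mathcal E_b,a\rangle,
\qquad
\frac{\mathrm d}{\mathrm dt}\bigl(\|a\|^2+\|b\|^2\bigr)=4\lambda(z)\langle a,b\rangle+2\langle\mathcal E_a,a\rangle+2\langle\mathcal E_b,b\rangle .
\]
In both identities the main term is nonnegative. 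The first is manifestly so. For the second, we use the approximate conservation law established just above the proposition: whenever $\tfrac12(\|a\|^2+\|b\|^2)\gtrsim\varepsilon^{2/n}$, it gives $\langle a,b\rangle\ge(1-\varepsilon^{1/n})\tfrac12(\|a\|^2+\|b\|^2)$. Hence both main terms are bounded below by $c\,\delta\,(\|a\|^2+\|b\|^2)$ for some $c>0$.

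Next we bound the errors. By \eqref{eq:Eab-bound}, combined with Lemma~\ref{lem:refined-error1} ($W_{\mathrm{in,max}}\lesssim\varepsilon^{2/3}$, $\|R_0\|+\|R_1\|\lesssim\varepsilon$) and $W_{\mathrm{out,max}}\lesssim1$ for $t\le T_3$, we get $\|\mathcal E_a\|+\|\mathcal E_b\|\lesssim\varepsilon^{2/3}$. The error contributions are therefore at most $C\varepsilon^{2/3}(\|a\|+\|b\|)$. Monotonicity follows once $c\delta(\|a\|^2+\|b\|^2)>C\varepsilon^{2/3}(\|a\|+\|b\|)$, which holds when $\|a\|+\|b\|\gg\varepsilon^{2/3}/\delta$.

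To see that this is the case, note that at $t=T_p$ Proposition~\ref{cor:Tp-le-T1} gives $\|a\|,\|b\|\sim\varepsilon^{1/n}$. Taking $n\ge2$, this dominates $\varepsilon^{2/3}/\delta$ for small $\varepsilon$, and it also satisfies the threshold $\varepsilon^{2/n}$ required by the conservation law. We then run a continuity argument. Let $t^*$ be the first time after $T_p$ at which either derivative vanishes. On $[T_p,t^*]$ both quantities are nondecreasing, so $\|a\|^2+\|b\|^2$ stays at least $\sim\varepsilon^{2/n}$. The strict inequality above then forces both derivatives to be strictly positive at $t^*$, a contradiction. So no such $t^*$ exists in the interval, and both quantities increase monotonically.

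The main obstacle is making the conservation-law lower bound on $\langle a,b\rangle$ rigorous. That bound was derived assuming the integrated error stays below $\varepsilon^{1/(2n)}(\|a\|^2+\|b\|^2)$, and its proof in turn relies on the growth of $\|a\|^2+\|b\|^2$. This is circular, so the bootstrap must be closed jointly. I would do this with a single continuity argument on the combined hypothesis ``$\langle a,b\rangle\ge(1-\varepsilon^{1/n})\tfrac12(\|a\|^2+\|b\|^2)$ and both quantities nondecreasing.'' The integrated error is controlled using the finite window length $T_d-T_p=O(\log(1/\varepsilon))$, which gives $\int\|\mathcal E\|\,\|a\|\lesssim\varepsilon^{2/3}\log(1/\varepsilon)$. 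This is negligible against $\varepsilon^{1/(2n)}\cdot\varepsilon^{2/n}$ provided $n$ is large enough, for instance $n\ge 4$.
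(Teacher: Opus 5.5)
Your proposal is correct and follows essentially the paper's route: differentiate $\langle a,b\rangle$ and $\|a\|^2+\|b\|^2$ along \eqref{eq:ab-system-rough}, use the approximate conservation law to bound $\langle a,b\rangle$ below by nearly $\tfrac12(\|a\|^2+\|b\|^2)$, absorb the $\mathcal E_a,\mathcal E_b$ contributions using the size $\varepsilon^{1/n}$ of $a,b$ at $T_p$, and close with a continuity argument. Your explicit joint bootstrap, which bounds the integrated error through the $O(\log(1/\varepsilon))$ window length, makes precise the one-line continuity step the paper only asserts; also, your error term $2\langle\mathcal E_a,a\rangle+2\langle\mathcal E_b,b\rangle$ in the second identity is the correct one, whereas the paper's display has a typo there.
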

\begin{proof}
    We begin the discussion with $z = \langle a, b\rangle$. Taking the derivative, we get
    \begin{equation}
        \dot{z} = \|\pi\|\,\|\pi-p_{\mathrm{sym}}(z)\|\, (\| a \|^2 + \| b \|^2) + \langle \mathcal{E}_a, b \rangle + \langle \mathcal{E}_b , a\rangle
    \end{equation}
    From the estimate at $T_p$, we find that 
    \begin{equation}
        |\langle \mathcal{E}_a, b \rangle| \lesssim \varepsilon^{\frac{1}{2}} \delta \| b \|^2, \quad |\langle \mathcal{E}_b, a \rangle| \lesssim \varepsilon^{\frac{1}{2}} \delta \| a \|^2
    \end{equation}
    Here, we use the fact that $\|\mathcal{E}_a\|, \|\mathcal{E}_b\| \lesssim \varepsilon$ and $\| a\|, \| b\| \gtrsim \varepsilon^{\frac{1}{n}}$. Using the continuity, we get $z$ will increase monotonically at a time period. Then we consider $\| a\|^2 + \| b\|^2$:
    \begin{equation}
    \begin{aligned}
        \frac{\mathrm{d}}{\mathrm{d} t}\left( \| a\|^2 + \| b\|^2\right) &= 4 \| \pi\| \| \pi - p_{\mathrm{sym}}\| \langle a,b \rangle + 2\langle \mathcal{E}_a, b \rangle + 2\langle \mathcal{E}_b , a\rangle \\
        & \ge 2 \| \pi\| \| \pi - p_{\mathrm{sym}}\| (1 - \varepsilon^{\frac{1}{n}} - \varepsilon^{\frac{1}{2}}) (\|a\|^2 + \|b\|^2)
    \end{aligned}
    \end{equation}
    Thus $\| a\|^2 + \| b\|^2$ will increase at a time period. Since the above estimate we use will maintain if $z$ and $\| a\|^2 + \| b\|^2$ increase. We find that $z$ and $\| a\|^2 + \| b\|^2$ will increase during  \( T_p \le t\le \min\{T_2, T_3, T_d\}\).
\end{proof}
We are now ready to finish the entire cross-stage analysis. First, we find that during \( T_p \le t\le \min\{T_2, T_3, T_d\}\),  and there exists a constant
\begin{equation}\label{eq:cpi-def}
c_\pi
:=
\frac{\|\pi\|}{\bigl\|\pi-\frac1d\mathbf 1\bigr\|}\,(\pi_1-\pi_{i \neq 1})
\end{equation}
such that
\begin{equation}\label{eq:psym-explicit}
p_{\mathrm{sym},1}(z)
=
\frac{1}{1+(d-1)e^{-c_\pi z}},
\qquad
p_{\mathrm{sym},j}(z)
=
\frac{1-p_{\mathrm{sym},1}(z)}{d-1},
\quad j\ge 2.
\end{equation}
Consequently,
\begin{equation}\label{eq:pi-minus-psym}
\|\pi-p_{\mathrm{sym}}(z)\|
=
\sqrt{\frac{d}{d-1}}\,
\bigl(\pi_1-p_{\mathrm{sym},1}(z)\bigr).
\end{equation}
Thus, we get the following dynamics based on similar estimate used in Proposition \ref{prop::mono}
\begin{equation}\label{eq:z-ode-final}
(2 - \varepsilon^{\frac{1}{2n}} )\|\pi\|\sqrt{\frac{d}{d-1}}\,
z\Biggl(
\pi_1-\frac{1}{1+(d-1)e^{-c_\pi z}}
\Biggr) \le \dot z
\le 
(2 + \varepsilon^{\frac{1}{2n}} )\|\pi\|\sqrt{\frac{d}{d-1}}\,
z\Biggl(
\pi_1-\frac{1}{1+(d-1)e^{-c_\pi z}}
\Biggr).
\end{equation}

Here we abuse the notation between critical point and critical value. We identify the critical value 
\begin{equation}\label{eq:theta-c-1}
\theta_c^1
:=
\frac{1}{c_\pi}\log\frac{(d-1)\pi_1}{1-\pi_1},
\end{equation}
which is exactly the unique solution of
\[
p_{\mathrm{sym},1}(\theta_c^1)=\pi_1.
\]

Let
\begin{equation}
A_\pm := (2 \pm \varepsilon^{\frac{1}{2n}})\|\pi\|\sqrt{\frac{d}{d-1}},
\qquad
G(z) := \pi_1-\frac{1}{1+(d-1)e^{-c_\pi z}} .
\end{equation}
Then the reduced dynamics satisfies
\begin{equation}\label{eq:z-ode-bounds-stage2}
A_- z G(z) \le \dot z \le A_+ z G(z).
\end{equation}
Recall that the critical value is
\begin{equation}
\theta_c^1
=
\frac{1}{c_\pi}\log\frac{(d-1)\pi_1}{1-\pi_1},
\end{equation}
and \(G(\theta_c^1)=0\). Moreover, \(G(z)>0\) for \(0<z<\theta_c^1\), hence \(z(t)\) is strictly increasing as long as \(z(t)<\theta_c^1\).

For \(0<\delta\ll 1\), define \(z_\delta\in(0,\theta_c^1)\) by
\begin{equation}\label{eq:zdelta-def}
G(z_\delta)=\delta.
\end{equation}
Equivalently,
\begin{equation}\label{eq:zdelta-explicit}
z_\delta
=
\frac{1}{c_\pi}\log\frac{(d-1)(\pi_1-\delta)}{1-\pi_1+\delta}.
\end{equation}
Then the definition of $T_2$ can be rewritten as 
\begin{equation}
T_2
:=
\inf\{t\ge T_p:\ z(t)\ge z_\delta\}.
\end{equation}
This is exactly the first time when
\[
\pi_1-\frac{1}{1+(d-1)e^{-c_\pi z(t)}}=O(\delta).
\]

\begin{proposition}[Time to the \(O(\delta)\)-neighborhood of the critical point]
Assume
\[
z(T_p)=\Theta\!\left(\varepsilon^{\frac{2}{n}}\right).
\]
Then
\begin{equation}\label{eq:T2-integral-bounds}
\frac{1}{A_+}\int_{z(T_p)}^{z_\delta}\frac{dz}{z\,G(z)}
\le
T_2-T_p
\le
\frac{1}{A_-}\int_{z(T_p)}^{z_\delta}\frac{dz}{z\,G(z)}.
\end{equation}
Moreover, there exists a constant \(C>0\), independent of sufficiently small \(\varepsilon,\delta\), such that
\begin{equation}\label{eq:integral-asymptotic}
\left|
\int_{z(T_p)}^{z_\delta}\frac{dz}{z\,G(z)}
-
\frac{1}{\pi_1-\frac1d}\log\frac{1}{z(T_p)}
-
\frac{1}{\theta_c^1 c_\pi \pi_1(1-\pi_1)}\log\frac{1}{\delta}
\right|
\le C.
\end{equation}
Consequently,
\begin{equation}\label{eq:T2-final}
T_2-T_p
=
\frac{1}{2\|\pi\|\sqrt{\frac{d}{d-1}}}
\left[
\frac{2}{n\left(\pi_1-\frac1d\right)}\log\frac{1}{\varepsilon}
+
\frac{1}{\theta_c^1 c_\pi \pi_1(1-\pi_1)}\log\frac{1}{\delta}
\right]
+O(1).
\end{equation}
In particular, if \(\delta>0\) is fixed, then
\begin{equation}
T_2-T_p
=
\frac{1}{\|\pi\|\sqrt{\frac{d}{d-1}}\,n\left(\pi_1-\frac1d\right)}
\log\frac{1}{\varepsilon}
+O(1).
\end{equation}
\end{proposition}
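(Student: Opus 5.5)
The plan has three parts: integrate the differential inequality \eqref{eq:z-ode-bounds-stage2} by separation of variables, compute the asymptotics of $\int dz/(zG(z))$ near its two singular endpoints, and substitute the results.

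\textbf{Integral bounds \eqref{eq:T2-integral-bounds}.} On $[T_p,T_2]$ we have $z<z_\delta<\theta_c^1$, so $G(z)>0$ and $z>0$. By Proposition \ref{prop::mono}, $z$ is strictly increasing there. Hence $t\mapsto z(t)$ is a $C^1$ bijection onto $[z(T_p),z_\delta]$, and we may change variables $dt = dz/\dot z$. Dividing $A_- zG(z)\le \dot z\le A_+ zG(z)$ by the positive quantity $zG(z)$ gives
\[
\frac{1}{A_+}\le \frac{dt}{dz}\cdot zG(z)\le \frac{1}{A_-}.
\]
Integrating from $z(T_p)$ to $z_\delta$ yields \eqref{eq:T2-integral-bounds}. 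Here I use that $T_2\le\min\{T_3,T_d\}$, so the inequality holds on the whole interval. I would justify this by noting that the upper bound on $T_2-T_p$ obtained below is below $T_d-T_p$ up to $O(1)$, and that $W_{\text{out,max}}$ stays $O(1)$ while $z\le\theta_c^1$, via the approximate conservation law $\langle a,b\rangle\approx\frac12(\|a\|^2+\|b\|^2)$.

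\textbf{Asymptotics \eqref{eq:integral-asymptotic}.} Split the integrand by a partial-fraction-type decomposition. $G$ is smooth and decreasing, with $G(0)=\pi_1-\frac1d>0$ and a simple zero at $\theta_c^1$. Writing $\sigma(z)=1/(1+(d-1)e^{-c_\pi z})$, we have $G'(\theta_c^1)=-c_\pi\sigma(1-\sigma)=-c_\pi\pi_1(1-\pi_1)$. Then
\[
\frac{1}{zG(z)}-\frac{1}{G(0)\,z}-\frac{1}{\theta_c^1 c_\pi\pi_1(1-\pi_1)(\theta_c^1-z)}
\]
is bounded on $(0,\theta_c^1)$. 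Both singular parts are removed, because the residue at $0$ is $1/G(0)$ and the residue at $\theta_c^1$ is $-1/(\theta_c^1 G'(\theta_c^1))$. Integrating, the first singular term gives $\frac{1}{\pi_1-1/d}\log\frac{1}{z(T_p)}+O(1)$. The second gives $\frac{1}{\theta_c^1c_\pi\pi_1(1-\pi_1)}\log\frac{1}{\theta_c^1-z_\delta}+O(1)$. From $G(z_\delta)=\delta$ and the simple zero we get $\theta_c^1-z_\delta=\delta/|G'(\theta_c^1)|+O(\delta^2)$, so $\log\frac{1}{\theta_c^1-z_\delta}=\log\frac1\delta+O(1)$. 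This proves \eqref{eq:integral-asymptotic}.

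\textbf{Conclusion \eqref{eq:T2-final}.} Substitute $z(T_p)=\Theta(\varepsilon^{2/n})$, which gives $\log\frac{1}{z(T_p)}=\frac2n\log\frac1\varepsilon+O(1)$. Since $A_\pm=2\|\pi\|\sqrt{d/(d-1)}\,(1\pm O(\varepsilon^{1/2n}))$, replacing $1/A_\pm$ by $1/(2\|\pi\|\sqrt{d/(d-1)})$ costs a factor $O(\varepsilon^{1/2n}\log\frac1\varepsilon)$, which is $o(1)$ and so absorbed into $O(1)$; this also needs $\log\frac1\delta\ll\varepsilon^{-1/2n}$. This gives \eqref{eq:T2-final}, and fixing $\delta$ gives the last display.

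The main obstacle is the consistency of the time windows, that is, ensuring $T_2\le\min\{T_3,T_d\}$ so the bounds apply all the way to $z_\delta$. I would handle it with a continuity argument: suppose the earliest of the three times is $T_3$ or $T_d$, then show the integral bounds force $z$ to have already reached $z_\delta$, a contradiction.
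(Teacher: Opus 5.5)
Your proposal is correct and follows the paper's proof essentially step for step. You separate variables in $A_-zG(z)\le\dot z\le A_+zG(z)$, subtract the two simple poles at $z=0$ and $z=\theta_c^1$ (using $G(0)=\pi_1-\frac1d$ and $G'(\theta_c^1)=-c_\pi\pi_1(1-\pi_1)$), use $\theta_c^1-z_\delta=\delta/(c_\pi\pi_1(1-\pi_1))+O(\delta^2)$, and substitute $z(T_p)=\Theta(\varepsilon^{2/n})$. Your additional remarks make explicit two points the paper only asserts (via ``$A_\pm^{-1}=\dots+o(1)$'' and ``by direct computation''): absorbing the $A_\pm$ error into $O(1)$ requires $\varepsilon^{1/(2n)}\log\frac1\delta\to0$, and a bootstrap is needed to ensure $T_2\le\min\{T_3,T_d\}$.
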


\begin{proof}
Since \(G(z)>0\) on \((0,\theta_c^1)\), the solution is monotone increasing there. Separating variables in \eqref{eq:z-ode-bounds-stage2} yields \eqref{eq:T2-integral-bounds} immediately.

Hence it remains to estimate the integral
\[
I(z_0,z_\delta):=\int_{z_0}^{z_\delta}\frac{dz}{z\,G(z)},
\qquad z_0:=z(T_p).
\]
The point is that the integrand has two logarithmic singularities: one at \(z=0\), coming from the factor \(1/z\), and one at \(z=\theta_c^1\), coming from the simple zero of \(G\).

First, near \(z=0\),
\begin{equation}
G(z)
=
G(0)+O(z)
=
\left(\pi_1-\frac1d\right)+O(z),
\end{equation}
since
\[
G(0)=\pi_1-\frac{1}{1+(d-1)}=\pi_1-\frac1d.
\]
Therefore
\begin{equation}
\frac{1}{z\,G(z)}
=
\frac{1}{\pi_1-\frac1d}\frac{1}{z}
+
O(1),
\qquad z\to 0.
\end{equation}

Next, near \(z=\theta_c^1\), we use that \(G(\theta_c^1)=0\) and
\begin{equation}
G'(z)
=
-\frac{c_\pi(d-1)e^{-c_\pi z}}{\bigl(1+(d-1)e^{-c_\pi z}\bigr)^2}.
\end{equation}
By the defining relation of \(\theta_c^1\),
\[
(d-1)e^{-c_\pi\theta_c^1}=\frac{1-\pi_1}{\pi_1},
\]
hence
\begin{equation}
G'(\theta_c^1)=-c_\pi\pi_1(1-\pi_1).
\end{equation}
Thus
\begin{equation}
G(z)
=
c_\pi\pi_1(1-\pi_1)(\theta_c^1-z)+O\bigl((\theta_c^1-z)^2\bigr),
\qquad z\to\theta_c^1-,
\end{equation}
and so
\begin{equation}
\frac{1}{z\,G(z)}
=
\frac{1}{\theta_c^1 c_\pi\pi_1(1-\pi_1)}\frac{1}{\theta_c^1-z}
+
O(1),
\qquad z\to\theta_c^1-.
\end{equation}

Therefore, after subtracting the two poles,
\begin{equation}
R(z)
:=
\frac{1}{z\,G(z)}
-\frac{1}{\pi_1-\frac1d}\frac{1}{z}
-\frac{1}{\theta_c^1 c_\pi\pi_1(1-\pi_1)}\frac{1}{\theta_c^1-z}
\end{equation}
extends to a bounded function on \((0,\theta_c^1)\). Integrating, we obtain
\begin{equation}
\begin{aligned}
I(z_0,z_\delta)
&=
\frac{1}{\pi_1-\frac1d}\log\frac{z_\delta}{z_0}
+
\frac{1}{\theta_c^1 c_\pi\pi_1(1-\pi_1)}
\log\frac{\theta_c^1-z_0}{\theta_c^1-z_\delta}
+
O(1) \\
&=
\frac{1}{\pi_1-\frac1d}\log\frac{1}{z_0}
+
\frac{1}{\theta_c^1 c_\pi\pi_1(1-\pi_1)}
\log\frac{1}{\theta_c^1-z_\delta}
+
O(1),
\end{aligned}
\end{equation}
which is \eqref{eq:integral-asymptotic} up to the relation between \(\theta_c^1-z_\delta\) and \(\delta\).

Now from \eqref{eq:zdelta-explicit},
\begin{equation}
\theta_c^1-z_\delta
=
\frac{1}{c_\pi}
\log\frac{\pi_1(1-\pi_1+\delta)}{(1-\pi_1)(\pi_1-\delta)}
=
\frac{\delta}{c_\pi\pi_1(1-\pi_1)}+O(\delta^2),
\end{equation}
hence
\begin{equation}
\log\frac{1}{\theta_c^1-z_\delta}
=
\log\frac{1}{\delta}+O(1).
\end{equation}
Also, by assumption,
\begin{equation}
z(T_p)=\Theta\!\left(\varepsilon^{\frac{2}{n}}\right),
\qquad\text{so}\qquad
\log\frac{1}{z(T_p)}
=
\frac{2}{n}\log\frac{1}{\varepsilon}+O(1).
\end{equation}
Substituting these two estimates into \eqref{eq:integral-asymptotic} gives
\begin{equation}
I(z(T_p),z_\delta)
=
\frac{2}{n\left(\pi_1-\frac1d\right)}\log\frac{1}{\varepsilon}
+
\frac{1}{\theta_c^1 c_\pi\pi_1(1-\pi_1)}\log\frac{1}{\delta}
+O(1).
\end{equation}

Finally, substituting this into \eqref{eq:T2-integral-bounds} and using
\[
A_\pm^{-1}
=
\frac{1}{2\|\pi\|\sqrt{\frac{d}{d-1}}}+o(1)
\]
yields \eqref{eq:T2-final}.
\end{proof}
Finally, we can prove that $T_2 \le \min \{ T_3, T_d\}$ by direct computation. Thus, we finish the cross stage analysis and show that parameters will enter into a $\delta$ neighborhood of $\theta_c^1$.

\subsection{Theoretical details in Sec. \ref{sec:Increase_dynamics_rewrite}}
\paragraph{Proof of Proposition \ref{prop::second_critical_point}}
\begin{proof}
Since attention parameters $(W_Q,W_K)$ are chosen to be zero, for any $i$ we have $\mathbb A_i=\pi^\T$, and hence
$\mathbb P_i=\mathbb P_j$ for $i\neq j$.
By definition,
\begin{equation}
\mathbb P_{i,j}
=\frac{\exp\!\left(\kappa^2\|\pi\|^2(\pi_j-\frac{1}{|\mathcal V|})\right)}
{\sum_{j'}\exp\!\left(\kappa^2\|\pi\|^2(\pi_{j'}-\frac{1}{|\mathcal V|})\right)}
=\frac{\exp\!\left(\kappa^2\|\pi\|^2\pi_j\right)}
{\sum_{j'}\exp\!\left(\kappa^2\|\pi\|^2\pi_{j'}\right)}.
\end{equation}
As $\kappa\to 0$, $\mathbb P_i\to \frac{1}{|\mathcal V|}\mathbf 1^\T$; as $\kappa\to\infty$, $\mathbb P_i\to e_1^\T$
since $\pi_1=\max_j \pi_j$. The map $\kappa\mapsto \mathbb P_{i,1}$ is continuous, hence by the intermediate
value theorem there exists $\kappa_1 >0$ such that $\mathbb P_{i,1}=\pi_1$.
Together with the symmetry assumption $\pi_i=\pi_j$ for $i,j\ge 2$, this implies $\mathbb P_i=\pi^\T$.

Substituting $\mathbb P_i=\pi^\T$ and $\mathbb A_i=\pi^\T$ into
$\frac{\partial\mathcal L}{\partial M}=-\sum_{i=1}^{|\mathcal V|}\pi_i \mathbb A_i^\T(P_i-\mathbb P_i)$ yields
\begin{equation}
\frac{\partial\mathcal L}{\partial M}
=-\pi \sum_{i=1}^{|\mathcal V|}\pi_i(P_i-\pi^\T).
\end{equation}
Using $\pi^\T P=\pi^\T$, we obtain $\frac{\partial\mathcal L}{\partial M}=0$.
Finally, $W_Q=W_K=0$ at this point, so it is indeed a critical point.
\end{proof}

We first record the derivatives needed for the linearization.

\begin{proposition}[Derivatives at the second critical point]
\label{prop:second_critical_point_property_rewrite}
At $\theta_c^1$ in \eqref{eq:second_crit_point_rewrite}, we have
\begin{equation}
\label{eq:dLdPhi_theta_c1_rewrite}
\left.\frac{\partial\mathcal L}{\partial\Phi}\right|_{\theta_c^1}
=-\lambda \kappa_1^2 \Var(\pi)
\frac{\pi-\frac{1}{|\mathcal V|}\mathbf 1}{\big\|\pi-\frac{1}{|\mathcal V|}\mathbf 1\big\|}
\frac{\pi^\T}{\|\pi\|} \Var(\pi),
\end{equation}
and the total differential of $\frac{\partial\mathcal L}{\partial M}$ satisfies
\begin{equation}
\label{eq:d_dLdM_theta_c1_rewrite}
\left.\mathrm d\frac{\partial\mathcal L}{\partial M}\right|_{\theta_c^1}
=\pi\pi^\T\,\mathrm dM\,\Big(\diag(\pi)-\pi\pi^\T\Big).
\end{equation}
\end{proposition}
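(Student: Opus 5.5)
The plan is to evaluate everything directly from the population-gradient formulas \eqref{equ::gradient} of Proposition~\ref{prop:gf_and_limit}, using three structural facts at $\theta_c^1$: $\Phi=0$, $M$ is rank one, and $\mathbb P_i=\pi^\T$. First I record the base-point quantities. Since $W_Q=W_K=0$, we have $\Phi=0$, so $\mathbb A_i=\pi^\T$ for every $i$. By Proposition~\ref{prop::second_critical_point}, $\mathbb P_i=\pi^\T$ for every $i$. Writing $q=\pi/\|\pi\|$ and $u=(\pi-\frac1d\mathbf 1)/\|\pi-\frac1d\mathbf 1\|$, we have $M=W_0W_1=\kappa_1^2\,q u^\T$, using $\alpha_1^\T\alpha_1=1$. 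Finally, from \eqref{equ:P_def},
\[
P_i-\mathbb P_i=\lambda e_i^\T+(1-\lambda)\pi^\T-\pi^\T=\lambda(e_i-\pi)^\T .
\]

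For \eqref{eq:dLdPhi_theta_c1_rewrite}, I substitute these quantities into the second line of \eqref{equ::gradient}. This gives
\[
\frac{\partial\mathcal L}{\partial\Phi}=-\lambda\sum_i\pi_i e_i(e_i-\pi)^\T\,M^\T\Var(\pi),
\]
and the sum collapses to $\sum_i\pi_i e_i(e_i-\pi)^\T=\diag(\pi)-\pi\pi^\T=\Var(\pi)$. Inserting $M^\T=\kappa_1^2\,u q^\T$ yields exactly $-\lambda\kappa_1^2\Var(\pi)\,u\,q^\T\Var(\pi)$.

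For \eqref{eq:d_dLdM_theta_c1_rewrite}, I take the total differential of $-\sum_i\pi_i\mathbb A_i^\T(P_i-\mathbb P_i)$, which produces two terms:
\[
-\sum_i\pi_i\,\mathrm d\mathbb A_i^\T(P_i-\mathbb P_i)\quad\text{and}\quad+\sum_i\pi_i\,\mathbb A_i^\T\,\mathrm d\mathbb P_i .
\]
The key observation is that $\mathrm d\mathbb A_i$ vanishes to first order. Indeed, $\mathrm d\mathbb A_i=e_i^\T\,\mathrm d\Phi\,\Var(\mathbb A_i)$, and
\[
\mathrm d\Phi=\mathrm dW_0\,W_{QK}W_0^\T+W_0\,\mathrm dW_{QK}\,W_0^\T+W_0W_{QK}\,\mathrm dW_0^\T,
\qquad \mathrm dW_{QK}=\mathrm dW_Q\,W_K^\T+W_Q\,\mathrm dW_K^\T .
\]
Every term contains $W_Q$ or $W_K$, both of which are $0$, so $\mathrm d\Phi=0$. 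For the second term I use the softmax Jacobian, $\mathrm d\,\mathrm{softmax}(z)=\mathrm dz\,\Var(\mathrm{softmax}(z))$ for row vectors, with $z_i=\mathbb A_iM$. Since $\mathrm d\mathbb A_i=0$, this gives $\mathrm dz_i=\pi^\T\,\mathrm dM$, and hence $\mathrm d\mathbb P_i=\pi^\T\,\mathrm dM\,\Var(\pi)$ because $\mathbb P_i=\pi^\T$. Summing with $\mathbb A_i^\T=\pi$ and $\sum_i\pi_i=1$ then gives $\pi\pi^\T\,\mathrm dM\,(\diag(\pi)-\pi\pi^\T)$.

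The computation is routine. The only points needing care are the verification that $\mathrm d\Phi$ vanishes identically at $W_Q=W_K=0$, which kills all attention-dependent contributions to $\mathrm d(\partial\mathcal L/\partial M)$ and underlies the block decoupling used afterward, and the row/column convention of the softmax Jacobian. The latter is harmless because $\Var(\pi)$ is symmetric.
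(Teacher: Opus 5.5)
Your proposal is correct and follows essentially the same route as the paper. Like the paper, you substitute $\mathbb A_i=\mathbb P_i=\pi^\T$ and the rank-one $M=\kappa_1^2 q u^\T$ into the population gradients, collapse $\sum_i\pi_i e_i(P_i-\pi^\T)$ to $\lambda\Var(\pi)$ (the paper does this as the matrix identity $\diag(\pi)P-\pi\pi^\T=\lambda\Var(\pi)$), and obtain the differential by killing the $\mathrm d\mathbb A_i$ term via $W_Q=W_K=0$ and then applying the softmax Jacobian. Your term-by-term check that $\mathrm d\Phi=0$ is somewhat more explicit than the paper's one-line justification.
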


\paragraph{Proof of Proposition \ref{prop:second_critical_point_property_rewrite}}
\begin{proof}
First, we compute the specific expression of $\frac{\partial \mathcal{L}}{\partial \Phi}$. Using the expression derived in Eq. (\ref{equ::gradient}),
\begin{equation*}
    \frac{\partial \mathcal{L}}{\partial \Phi} = -\sum_i \pi_i e_{i} \left( P_i - \mathbb{P}_i \right) M^{\T} \left( \operatorname{diag} (\mathbb{A}_i^{\T}) - \mathbb{A}_i^{\T} \mathbb{A}_i\right) 
\end{equation*}
Using the fact that $\mathbb{P}_i = \pi^\T$ and $\mathbb{A}_i = \pi^\T$ and substituting $M = \kappa_1^2 \frac{\pi}{\| \pi\|} \frac{\pi^\T - \frac{1}{d} \mathbf{1}^\T}{\| \pi - \frac{1}{d} \mathbf{1}\|} $ into the equation, we get 
\begin{equation}
\label{equ::tmp1}
\begin{aligned}
    \frac{\partial \mathcal{L}}{\partial \Phi} &= - \kappa_1^2 \sum_i \pi_i e_i (P_i - \pi^\T) \frac{\pi - \frac{1}{d} \mathbf{1}}{\| \pi - \frac{1}{d} \mathbf{1}\|} \frac{\pi^\T}{\| \pi\|}  (\operatorname{diag} (\pi) - \pi \pi^{\T}) \\
    &= - \kappa_1^2 \left( \operatorname{diag} (\pi) P - \pi \pi^{\T} \right) \frac{\pi - \frac{1}{d} \mathbf{1}}{\| \pi - \frac{1}{d} \mathbf{1}\|} \frac{\pi^\T}{\| \pi\|}  (\operatorname{diag} (\pi) - \pi \pi^{\T})
\end{aligned}
\end{equation}
By the definition of $P$, we have 
\begin{equation}
\label{equ::tmp2}
\begin{aligned}
    \diag (\pi) P - \pi \pi^\T &=\diag(\pi) \left( \lambda I + (1-\lambda) \mathbf{1} \pi^\T \right) - \pi \pi^\T \\
        &= \lambda \left( \diag(\pi) - \pi \pi^\T \right)
\end{aligned}
 \end{equation}
Combining Eqs (\ref{equ::tmp1}) and (\ref{equ::tmp2}), we get the expression of $\frac{\partial \mathcal{L}}{\partial \Phi}$ at $\theta_c^1$.
    Then, we consider the total differential of the gradient of the loss function with respect to $M$. Firstly, using Eq. (\ref{equ::gradient}) again, we get
    \begin{equation*}
        \frac{\partial \mathcal{L}}{\partial M} = - \sum_i \pi_i \mathbb{A}_i^\T (P_i - \mathbb{P}_i)
    \end{equation*}
    By chain rule, we get
    \begin{equation}
    \label{equ::hessian_Phi}
        \mathrm{d} \frac{\partial \mathcal{L}}{\partial M} = - \sum_i \pi_i \mathrm{d} \mathbb{A}_i^{\T} (P_i - \mathbb{P}_i) - \sum_i \pi_i \mathbb{A}_i^\T (-\mathrm{d} \mathbb{P}_i).         
    \end{equation}
    Let's consider these two items separately. By the definition of $\mathbb{A}_i$, we find that 
    \begin{equation}
    \begin{aligned}
        \mathrm{d} \mathbb{A}_{i,j} &= \mathbb{A}_{i,j} e_i^\T \mathrm{d} \Phi e_j - \mathbb{A}_{i,j} \sum_{j'} \mathbb{A}_{i,j'} e_i^\T \mathrm{d} \Phi e_{j'} \\
            &= \mathbb{A}_{i,j} e_i^{\T} \mathrm{d} \Phi (e_j - \mathbb{A}_i^\T)
    \end{aligned}
    \end{equation}
    As a result, it can be verified that $\mathrm{d} \mathbb{A}_i = e_i^\T \mathrm{d} \Phi \left( \operatorname{diag} (\mathbb{A}_i^\T) - \mathbb{A}_i^\T \mathbb{A}_i\right)$. However, this term will be zero because it contains the intersection terms $W_Q W_K^\T$ which will be zero by the chain rule and the condition $W_Q = 0$ and $W_K = 0$. Thus, we focus on the second term. Recall the definition of $\mathbb{P}_{i,j} = \frac{\exp(\mathbb{A}_i M e_j)}{\sum_{j'} \exp(\mathbb{A}_i M e_{j'})}$ and take the total differential of it:
    \begin{equation}
    \begin{aligned}
        \mathrm{d} \mathbb{P}_{i,j} &= \mathbb{P}_{i,j} \mathrm{d} (\mathbb{A}_i M) e_j - \mathbb{P}_{i,j} \sum_{j'} \mathbb{P}_{i,j'} \mathrm{d} (\mathbb{A}_i M) e_{j'} \\
            &= \mathbb{P}_{i,j} \mathrm{d} (\mathbb{A}_i M) (e_j - \mathbb{P}_i^\T).
    \end{aligned}
    \end{equation}
    Similar to the derivation of $\mathrm{d} \mathbb{A}_i$, $\mathrm{d} \mathbb{P}_i$ can be reformulated as $\mathrm{d} (\mathbb{A}_i M) (\operatorname{diag} (\mathbb{P}_i^\T) - \mathbb{P}_i^\T \mathbb{P}_i)$. 
    In particular, at this critical point,
    \begin{equation}
        \mathrm{d} \mathbb{P}_i = \mathbb{A}_i \mathrm{d} M (\operatorname{diag} (\mathbb{P}_i^\T) - \mathbb{P}_i^\T \mathbb{P}_i) = \pi^\T \mathrm{d} M \Var(\pi).
    \end{equation}
    Substitute this expression into Eq. (\ref{equ::hessian_Phi}), we get 
    \begin{equation}
        \mathrm{d} \frac{\partial \mathcal{L}}{\partial M} =  \sum_i \pi_i \mathbb{A}_i^\T \pi^\T \mathrm{d} M \Var(\pi) = \pi \pi^\T \mathrm{d} M \Var(\pi).   
    \end{equation}
\end{proof}

\paragraph{Proof of Proposition \ref{prop:effective_attention_dynamics_rewrite}}
\begin{proof}
We linearize the gradient flow~\eqref{equ::dynamics} at $\theta_c^1$.
Since $\left.\frac{\partial \mathcal L}{\partial M}\right|_{\theta_c^1}=0$ and $W_{Q}=W_{K}=0$, the only first-order contribution in the $(W_0,W_1)$ subsystem comes from the first variation of $\frac{\partial\mathcal L}{\partial M}$, whereas the $(W_Q,W_K)$ subsystem is driven by the constant matrix $\left.\frac{\partial\mathcal L}{\partial\Phi}\right|_{\theta_c^1}$. More specifically, the linearized subsystems with respect to $(W_0, W_1)$ and $(W_Q, W_K)$ are two decoupled systems which separately follow 
\begin{equation}
\label{equ::W_0W_1_sub}
\begin{aligned}
    \frac{\mathrm{d} \Delta W_0}{\mathrm{d} t} &= \mathrm{d} \frac{\partial \mathcal{L}}{\partial M} W_1^\T \\
    \frac{\mathrm{d} \Delta W_1}{\mathrm{d} t} &= W_0^\T \mathrm{d} \frac{\partial \mathcal{L}}{\partial M} 
\end{aligned}
\end{equation}
and 
\begin{equation}
\label{equ::W_QW_K_sub}
\begin{aligned}
    \frac{\mathrm{d} \Delta W_Q}{\mathrm{d} t} &= - W_0^{\T} \frac{\partial \mathcal{L}}{\partial \Phi} W_0 \Delta W_K \\
    \frac{\mathrm{d} \Delta W_K}{\mathrm{d} t} &= - W_0^{\T} \left(\frac{\partial \mathcal{L}}{\partial \Phi} \right)^{\T} W_0 \Delta W_Q
\end{aligned}
\end{equation}

\paragraph{Step 1: the $(W_0,W_1)$-subsystem is contracting along $\alpha_1$.}
By Proposition~\ref{prop:second_critical_point_property_rewrite},
\[
\left.\mathrm d\!\left(\frac{\partial\mathcal L}{\partial M}\right)\right|_{\theta_c^1}
=
\pi\pi^\T\,\mathrm dM \Var (\pi)
\]
At $\theta_c^1$, Proposition~\ref{prop::second_critical_point} gives the rank-one form
\begin{equation}
\label{eq:rank1_theta0}
W_{0,0}=\kappa_1 q\,\alpha_1^\T,
\qquad
W_{1,0}=\kappa_1 \alpha_1\,u^\T,
\qquad
q:=\frac{\pi}{\|\pi\|},\quad
u:=\frac{\pi-\frac{1}{|\mathcal V|}\mathbf 1}{\big\|\pi-\frac{1}{|\mathcal V|}\mathbf 1\big\|}.
\end{equation}
A direct substitution into~ Eq. (\ref{equ::W_0W_1_sub}) shows that for any
$v\perp\alpha_1$,
\[
\frac{\mathrm d}{\mathrm dt}\Delta W_0\,v=0,
\qquad
\frac{\mathrm d}{\mathrm dt}v^\T \Delta W_1=0,
\]
i.e. the linearization is degenerate in the normal directions.

Therefore we focus on the $\alpha_1$-component and  calculate the specific expansion: 
    \begin{equation}
    \begin{aligned}
        \frac{\mathrm{d} \Delta W_0 \alpha_1}{\mathrm{d} t} &= - \pi \pi^\T (\Delta W_0 W_1 + W_0 \Delta W_1) \Var(\pi) W_1^\T \\
            &= - \kappa_1^2 \pi \pi^\T \Delta W_0 \alpha_1  \frac{\pi^\T - \frac{1}{d} \mathbf{1}^\T}{\| \pi - \frac{1}{d} \mathbf{1}\|} \Var(\pi) \frac{\pi - \frac{1}{d} \mathbf{1}}{\| \pi - \frac{1}{d} \mathbf{1}\|} \\
            & \ - \kappa_1^2 \pi \pi^\T \frac{\pi}{\| \pi\|} \alpha_1^\T \Delta W_1 \Var(\pi) \frac{\pi - \frac{1}{d} \mathbf{1}}{\| \pi - \frac{1}{d} \mathbf{1}\|}
    \end{aligned}
    \end{equation}
    and
    \begin{equation}
    \begin{aligned}
        \frac{\mathrm{d} \Delta  W_1^{\T} \alpha_1}{\mathrm{d} t} & = - \Var(\pi) (W_1^\T \Delta W_0^\T + \Delta W_1^\T W_0^\T) \pi \pi^\T W_0 \\
        & = - \kappa_1^2 \Var(\pi) \frac{\pi - \frac{1}{d} \mathbf{1}}{\| \pi - \frac{1}{d} \mathbf{1}\|} \alpha_1^\T \Delta W_0^\T \pi \pi^\T \frac{\pi}{\| \pi\|} \\
        & - \kappa_1^2 \Var(\pi) \Delta W_1^\T \alpha_1 \frac{\pi^\T}{\| \pi\|} \pi  \pi^\T \frac{\pi}{\| \pi\|}  
    \end{aligned}
    \end{equation}
    Finally, we have 
    \begin{equation}
    \label{equ::total_equ}
    \mathrm{d}\left(\begin{array}{c}
         \Delta W_0 \alpha_1 \\
         \Delta W_1^{\T} \alpha_1
    \end{array} \right) = - \kappa_1^2 \left( \begin{array}{cc} 
        \frac{\pi^\T - \frac{1}{d} \mathbf{1}^\T}{\| \pi - \frac{1}{d} \mathbf{1}\|} \Var(\pi) \frac{\pi - \frac{1}{d} \mathbf{1}}{\| \pi - \frac{1}{d} \mathbf{1}\|} \pi \pi^\T  &  \pi^\T \frac{\pi}{\| \pi\|} \pi \frac{\pi^\T - \frac{1}{d} \mathbf{1}^\T}{\| \pi - \frac{1}{d} \mathbf{1}\|}  \Var(\pi)  \\
         \pi^\T \frac{\pi}{\| \pi \|} \Var(\pi)  \frac{\pi - \frac{1}{d} \mathbf{1}}{\| \pi - \frac{1}{d} \mathbf{1}\|} \pi^\T & \frac{\pi^\T}{\| \pi\|} \pi  \pi^\T \frac{\pi}{\| \pi\|}    \Var(\pi)
    \end{array}\right) \left(\begin{array}{c}
         \Delta W_0 \alpha_1 \\
         \Delta W_1^{\T} \alpha_1
    \end{array} \right)
    \end{equation}
We introduce the notations:
\[
x:=\Delta W_0\alpha_1\in\mathbb R^{|\mathcal V|},
\qquad
y:=\Delta W_1^\T\alpha_1\in\mathbb R^{|\mathcal V|}.
\]
Eq. (\ref{equ::total_equ}) can be rewritten in the following concise form
\begin{equation}
\label{eq:xy_system}
\frac{\mathrm d}{\mathrm dt}
\begin{pmatrix}
x\\y
\end{pmatrix}
=
-\kappa_1^2 A
\begin{pmatrix}
x\\y
\end{pmatrix},
\end{equation}
where
\begin{equation}
\label{eq:A_def_revised}
A :=
\begin{pmatrix}
a\,\pi\pi^\T
& b\,\pi\,u^\T C \\[2pt]
b\,C u\,\pi^\T
& b^2 C
\end{pmatrix},
\qquad
a := u^\T C u,
\quad
b := \pi^\T q=\|\pi\|.
\end{equation}
The matrix $A$ is symmetric by construction. Moreover, for arbitrary $x,y$ define
$\alpha:=\pi^\T x$ and $\beta:=u^\T C y$. Then the quadratic form is
\[
\begin{pmatrix}x^\T & y^\T\end{pmatrix}
A
\begin{pmatrix}x\\y\end{pmatrix}
=
a\,\alpha^2+2b\,\alpha\beta+b^2\,y^\T C y.
\]
Introducing the $C$-inner product $\langle v,w\rangle_C:=v^\T C w$
(with seminorm $\|v\|_C=\sqrt{v^\T C v}$), we have $a=\|u\|_C^2\ge 0$ and
$|\beta|=|\langle u,y\rangle_C|\le \|u\|_C\|y\|_C=\sqrt{a}\sqrt{y^\T C y}$.
Hence
\[
a\,\alpha^2+2b\,\alpha\beta+b^2\,y^\T C y
\;\ge\;
\big(\sqrt a\,|\alpha|-b\sqrt{y^\T C y}\big)^2\ge 0,
\]
so $A\succeq 0$. Therefore all eigenvalues of $-\lambda^2A$ in~\eqref{eq:xy_system} are
non-positive, and the $(x,y)$-subsystem is contracting (or neutrally stable in the
degenerate directions).

\paragraph{Step 2: effective coupling for $(W_Q,W_K)$.}
Recall Eq. (\ref{equ::W_QW_K_sub}),
\begin{equation*}
\frac{\mathrm d}{\mathrm dt}\Delta W_Q
= -W_{0}^\T\left.\frac{\partial \mathcal L}{\partial\Phi}\right|_{\theta_0}W_{0}\,\Delta W_K,\quad \frac{\mathrm d}{\mathrm dt}\Delta W_K
= -W_{0}^\T\left.\left(\frac{\partial \mathcal L}{\partial\Phi}\right)^\T\right|_{\theta_0}W_{0} \Delta W_Q   
\end{equation*}
Substitute the expression of $\frac{\partial \mathcal{L}}{\partial \Phi}$ into above equation, we take $\frac{\mathrm{d} \Delta W_Q}{\mathrm{d} t}$ as an example:
\begin{equation}
\begin{aligned}
    \frac{\mathrm{d} \Delta W_Q}{\mathrm{d} t} &= \kappa_1^2 \alpha_1 \frac{\pi^\T}{\| \pi \|}  \lambda \kappa_1^2 \Big(\diag(\pi)-\pi\pi^\T\Big)
    \frac{\pi-\frac{1}{|\mathcal V|}\mathbf 1}{\big\|\pi-\frac{1}{|\mathcal V|}\mathbf 1\big\|}
    \frac{\pi^\T}{\|\pi\|}
    \Big(\diag(\pi)-\pi\pi^\T\Big) \frac{\pi}{\| \pi \|} \alpha_1^\T \Delta W_K \\
    &= c_1 \alpha_1 \alpha_1^\T \Delta W_K.
\end{aligned}
\end{equation}
Left-multiplying by $\alpha_1^\T$ yields the results. Moreover, $c_1$ is positive by its definition. Thus, it is an unstable direction. It implies that the effective dynamics near the critical point is the subsystem about $W_Q$ and $W_K$

\end{proof}

\subsection{Theoretical details in Sec. \ref{sec::decay_dynamics}}

\paragraph{Proof of Proposition \ref{prop:rank1_invariant_manifold}}
We complete the proof of Proposition 1 in two steps. First, we directly verify that a rank-one manifold is an invariant manifold. Then, we utilize data symmetry and permutation equivariance to prove the conservation of low-frequency tokens.

\begin{proof}
(i) Invariance of the rank-one form. Plug \eqref{eq:rank1_manifold_param_rewrite} into \eqref{equ::dynamics} and check that each right-hand side
remains in the same rank-one span.

Since $W_1^\T=\beta\,\alpha_1^\T$,
\[
-\frac{\partial\mathcal L}{\partial M}W_1^\T
=
-\Big(\frac{\partial\mathcal L}{\partial M}\beta\Big)\alpha_1^\T,
\]
which is of the form $\dot\gamma\,\alpha_1^\T$.

Next, using $\tilde\alpha_1^\T\tilde\alpha_1=1$,
\[
W_KW_Q^\T
=
\lambda_K\lambda_Q\,\alpha_1(\tilde\alpha_1^\T\tilde\alpha_1)\alpha_1^\T
=
\eta\,\alpha_1\alpha_1^\T,
\qquad
W_0W_KW_Q^\T
=
\eta\,\gamma\,(\alpha_1^\T\alpha_1)\alpha_1^\T
=
\eta\,\gamma\,\alpha_1^\T.
\]
Therefore the $\Phi$-driven terms in $\dot W_0$ satisfy
\[
-\frac{\partial\mathcal L}{\partial\Phi}W_0W_KW_Q^\T
=
-\eta\Big(\frac{\partial\mathcal L}{\partial\Phi}\gamma\Big)\alpha_1^\T,
\quad
-\Big(\frac{\partial\mathcal L}{\partial\Phi}\Big)^\T W_0W_QW_K^\T
=
-\eta\Big(\Big(\frac{\partial\mathcal L}{\partial\Phi}\Big)^\T\gamma\Big)\alpha_1^\T,
\]
so $\dot W_0$ stays in the span of $\{\cdot\,\alpha_1^\T\}$ and hence $W_0(t)=\gamma(t)\alpha_1^\T$.

Similarly, since $W_0^\T=\alpha_1\gamma^\T$,
\[
\dot W_1=-W_0^\T\frac{\partial\mathcal L}{\partial M}
= -\alpha_1\Big(\gamma^\T\frac{\partial\mathcal L}{\partial M}\Big),
\]
which is of the form $\alpha_1\,\dot\beta^\T$.

Finally,
\[
\dot W_Q
=
-\,W_0^\T\frac{\partial\mathcal L}{\partial\Phi}W_0W_K
=
-\lambda_K\big(\gamma^\T\frac{\partial\mathcal L}{\partial\Phi}\gamma\big)\,\alpha_1\tilde\alpha_1^\T,
\]
so $W_Q$ remains in the form $\lambda_Q(t)\alpha_1\tilde\alpha_1^\T$. The argument for $W_K$ is identical.
Thus the flow stays in $\mathcal W$.

(ii) Preservation of the low-frequency symmetry. Let
\begin{equation}
    G := \{ \sigma : \{ 1, \dots, V\} \rightarrow \{ 1, \dots, V\} |\ \sigma(1) = 1 \}.
\end{equation}
and let $\sigma \in G$ be the permutation matrix. Define the group action as 
$$
\rho_\sigma (\theta) := (\Pi_\sigma W_0, W_1 \Pi_\sigma^\T, W_Q, W_K).
$$
Under this action, ones check that $\mathbb{P}_{i,j} (\rho_\sigma (\theta)) = \mathbb{P}_{\sigma(i),\sigma(j)} (\theta)$. Since the loss function can be viewed as $\mathcal{L} (\theta) = - \sum_i \pi_i \sum_j P_{i,j} \log \mathbb{P}_{i,j}$, we find
    \begin{equation}
        \mathcal{L} (\rho_\sigma (\theta)) = - \sum_i \pi_i \sum_j P_{i,j} \log \mathbb{P}_{\sigma(i), \sigma(j)} = - \sum_i \pi_{\sigma^{-1} (i)} \sum_j P_{\sigma^{-1} (i),\sigma^{-1} (j)} \log \mathbb{P}_{i,j}. 
    \end{equation}
    Under the symmetry assumption on the data and the definition of the transition probability matrix $P$, $\mathcal{L} (\rho_\sigma (\theta)) = \mathcal{L} (\theta)$. Hence if $\theta (t)$ solves the gradient flow, so does $\rho_\sigma (\theta (t))$. If $\theta (t_0) = \rho_\sigma (\theta (t_0))$ for all $\sigma \in G$ which is equivalent to $\gamma_2 = \dots = \gamma_{d}$ and $\beta_2 = \dots = \beta_{d}$ at $t_0$, uniqueness of ODE solutions implies $\theta (t) = \rho_\sigma (\theta(t))$ for all $t \ge t_0$, which proves the symmetry is preserved.
\end{proof} 

\paragraph{Proof of Theorem \ref{thm:quality_redistribution_rewrite}}

We proceed with the proof of Theorem \ref{thm:quality_redistribution_rewrite}. First, we introduce some notation to show that the dynamics on a rank-one manifold will be further simplified in the case of low-frequency symmetry. Next, since we are still near the critical point described in Proposition 1, this means that we are also near the critical point for the dynamics on a rank-one manifold. Therefore, we continue using linearization methods to obtain the key conservation law results.

First, we find that the proxy attention matrix $\mathbb A$ has the form on $\mathcal W$ by direct computation,
\[
\mathbb A=
\begin{pmatrix}
\xi_1 & \frac{1-\xi_1}{|\mathcal V|-1} & \cdots & \frac{1-\xi_1}{|\mathcal V|-1}\\
\xi_2 & \frac{1-\xi_2}{|\mathcal V|-1} & \cdots & \frac{1-\xi_2}{|\mathcal V|-1}\\
\vdots & \vdots & & \vdots\\
\xi_2 & \frac{1-\xi_2}{|\mathcal V|-1} & \cdots & \frac{1-\xi_2}{|\mathcal V|-1}
\end{pmatrix},
\]
where
\begin{align}
\xi_1
&=
\frac{\pi_1\exp\big(\eta \gamma_1^2\big)}
{\pi_1\exp\big(\eta \gamma_1^2\big)+(1-\pi_1)\exp\big(\eta \gamma_1 \gamma_{i \neq 1}\big)},
\label{eq:xi1_def}\\
\xi_2
&=
\frac{\pi_1\exp\big(\eta \gamma_1 \gamma_{i \neq 1} \big)}
{\pi_1\exp\big(\eta \gamma_1 \gamma_{i \neq 1} \big)+(1-\pi_1)\exp\big(\eta \gamma_{i \neq 1}^2\big)}.
\label{eq:xi2_def}
\end{align}
Define the row-wise scalar projections
\[
m_1:=\mathbb A_1\gamma,\qquad m_2:=\mathbb A_2\gamma.
\]
Then
\[
m_1= \gamma_{i \neq 1}+\xi_1\Delta\gamma,\qquad m_2= \gamma_{i \neq 1}+\xi_2\Delta\gamma.
\]
Since $\mathbb A_i M=(\mathbb A_i\gamma)\beta^\T=m_i\beta^\T$, the model probability of predicting the first token is
\begin{equation}
\label{eq:phat_def}
\hat p_i:=\mathbb P_{i,1}
=
\frac{\exp(m_i \beta_1)}{\exp(m_i \beta_1)+(|\mathcal V|-1) \exp(m_i \beta_{i \neq 1})}
=
\sigma\!\big(m_i\Delta\beta-\log(|\mathcal V|-1)\big),
\qquad i\in\{1,2\},
\end{equation}
where $\sigma$ is the sigmoid function. Here, we only consider the first and second probability because $\mathbb{P}_{i,1} = \mathbb{P}_{j,1}$ for $i,j \neq 1$. Moreover, there exists a key term $(P_i - \mathbb{P}_i) \beta$ in the following computation. By direct computation, 
\begin{equation}
\begin{aligned}
    (P_i - \mathbb{P}_i) \beta &= (P_{i,1} - \mathbb{P}_{i,1}) \beta_1 + \left( (1-P_{i,1} - (1-\mathbb{P}_{i,1}))\right) \beta_{i \neq 1} \\
    &=(P_{i,1} - \mathbb{P}_{i,1}) \Delta \beta.
\end{aligned}
\end{equation}
It implies that $(P_i - \mathbb{P}_i) \beta = (P_j - \mathbb{P}_j) \beta$ for $i,j \neq 1$. Let the residuals be
\begin{equation}
\label{eq:r_def}
r_i:=P_{i,1}-\hat p_i,\qquad i\in\{1,2\}.
\end{equation}
For $i >2$, we let $r_i = r_2$.

We now derive the explicit dynamics for $\gamma_1$ and $\gamma_{i \neq 1}$ by expanding the two contributions in $\dot\gamma$ in \eqref{eq:rank1_reduced_dynamics_rewrite}.

\begin{enumerate}[leftmargin = *, label=(\arabic*).]
    \item The $M$-driven term $-\frac{\partial\mathcal L}{\partial M}\beta$. By the definition of $\frac{\partial \mathcal{L}}{\partial M}$, we obtain
    \begin{equation*}
    \begin{aligned}
    -\frac{\partial \mathcal{L}}{\partial M}\beta = \sum_i \pi_i \mathbb{A}_i^\T (P_i - \mathbb{P}_i) \beta =  \Delta\beta\Big(\pi_1 r_1 \mathbb A_1^\T+(1-\pi_1)r_2\mathbb A_2^\T\Big).       
    \end{aligned}
    \end{equation*}
    Taking the first coordinate and a generic low-token coordinate yields
    \begin{equation}
    \label{eq:ab_M_term}
    \dot \gamma_1 \big|_{M}=\Delta \beta \big[\pi_1 r_1 \xi_1+(1-\pi_1) r_2 \xi_2\big],
    \qquad
    \dot \gamma_{i \neq 1} \big|_{M}=\frac{\Delta \beta}{|\mathcal V|-1}
    \big[\pi_1 r_1 (1-\xi_1)+(1-\pi_1) r_2 (1-\xi_2)\big].
    \end{equation}
    \item The $\Phi$-driven term $-\eta[(\partial \mathcal L/\partial\Phi)+(\partial \mathcal L/\partial\Phi)^\T]\gamma$. Using $\frac{\partial \mathcal{L}}{\partial \Phi}=- \sum_i \pi_i e_i (P_i - \mathbb{P}_i) M^\T \Var(\mathbb{A}_i)$ and $(P_i-\mathbb P_i)M^\T=\Delta\beta\,r_i\,\gamma^\T$, we get
    \begin{equation*}
        \frac{\partial \mathcal{L}}{\partial \Phi} = -\Delta\beta\sum_i \pi_i r_i\, e_i\,\gamma^\T \Var(\mathbb A_i).
    \end{equation*}
    Thus, the $\Phi$-driven term is 
    \begin{equation*}
        -\eta \left[ \left(\frac{\partial \mathcal{L}}{\partial \Phi} \right)+ \left( \frac{\partial \mathcal{L}}{\partial \Phi} \right)^\T \right] \gamma =  \eta \Delta \beta \sum_i \pi_i r_i \left( e_i \gamma^\T \Var (\mathbb{A}_i) \gamma + \gamma_i \Var(\mathbb{A}_i) \gamma \right)
    \end{equation*}
    By direct computation, we find that 
    \begin{equation*}
    \begin{aligned}
    \Var (\mathbb{A}_i) \gamma &= \xi_i (1-\xi_i) \Delta \gamma \left( 1, -\frac{1}{d -1}, \dots,  -\frac{1}{d -1} \right)^\T, \\
    \gamma^\T \Var(\mathbb{A}_i) \gamma &= \xi_i (1- \xi_i) (\Delta \gamma)^2     
    \end{aligned}
    \end{equation*}
    Substituting into the equation, we get the $\Phi$- driven term:
    \begin{align}
    \label{eq:a_phi_term}
    \dot \gamma_1 \big|_{\Phi}
    &=
    \eta\Delta \beta\Big[
    \pi_1 r_1\,\xi_1(1-\xi_1)\big((\Delta\gamma)^2+a\Delta\gamma\big)
    +(1-\pi_1) r_2\,b\,\xi_2(1-\xi_2)\Delta\gamma
    \Big],
    \\
    \label{eq:b_phi_term}
    \dot \gamma_{i \neq 1} \big|_{\Phi}
    &=
    \frac{\eta\Delta \beta}{|\mathcal V|-1}\Big[
    -\pi_1 r_1\,a\,\xi_1(1-\xi_1)\Delta\gamma
    +(1-\pi_1) r_2\,\xi_2(1-\xi_2)\big((\Delta\gamma)^2-b\Delta\gamma\big)
    \Big].
    \end{align}
    
    Combining \eqref{eq:ab_M_term}--\eqref{eq:b_phi_term} gives the closed ODEs for $(a,b)$ on the invariant manifold.
\end{enumerate}

Plug the above equations into the dynamics of $\gamma_1, \gamma_{i \neq 1}$

% \textbf{(2) The $\Phi$-driven term $-\eta[(\partial \mathcal L/\partial\Phi)+(\partial \mathcal L/\partial\Phi)^\T]\gamma$.}
% Using $\frac{\partial \mathcal{L}}{\partial \Phi}=- \sum_i \pi_i e_i (P_i - \mathbb{P}_i) M^\T \Var(\mathbb{A}_i)$
% and $(P_i-\mathbb P_i)M^\T=\Delta\beta\,r_i\,\gamma^\T$, we get
% \[
% \frac{\partial \mathcal{L}}{\partial \Phi}
% =
% -\Delta\beta\sum_i \pi_i r_i\, e_i\,\gamma^\T \Var(\mathbb A_i).
% \]
% A direct computation gives $\Var(\mathbb A_i)\gamma=\xi_i(1-\xi_i)\Delta\gamma\,(1,-\frac{1}{|\mathcal V|-1},\dots)^\T$
% and $s_i:=\gamma^\T \Var(\mathbb A_i)\gamma=\xi_i(1-\xi_i)(\Delta\gamma)^2$.
% Substituting into the symmetric part yields the component-wise dynamics
% \begin{align}
% \label{eq:a_phi_term}
% \dot a\big|_{\Phi}
% &=
% \eta\Delta \beta\Big[
% \pi_1 r_1\,\xi_1(1-\xi_1)\big((\Delta\gamma)^2+a\Delta\gamma\big)
% +(1-\pi_1) r_2\,b\,\xi_2(1-\xi_2)\Delta\gamma
% \Big],
% \\
% \label{eq:b_phi_term}
% \dot b\big|_{\Phi}
% &=
% \frac{\eta\Delta \beta}{|\mathcal V|-1}\Big[
% -\pi_1 r_1\,a\,\xi_1(1-\xi_1)\Delta\gamma
% +(1-\pi_1) r_2\,\xi_2(1-\xi_2)\big((\Delta\gamma)^2-b\Delta\gamma\big)
% \Big].
% \end{align}

% Combining \eqref{eq:ab_M_term}--\eqref{eq:b_phi_term} gives the closed ODEs for $(a,b)$ on the invariant manifold.

We  now formally proceed with the proof of Theorem \ref{thm:quality_redistribution_rewrite}. We linearize the reduced system around the entry state of this phase and denote base values by superscript $0$ and first-order variations by superscript $1$.

\begin{proof}
The test for the critical point is the same as for Proposition \ref{prop::second_critical_point}, because the parameters are essentially located near the same minimum point.

We then linearize terms in \eqref{eq:ab_M_term}--\eqref{eq:b_phi_term} in a fixed order. 

\begin{enumerate}[leftmargin = *]
    \item Linearization about $M$-driven term. We take the expansion up to the first order about $\xi_i$ and $\hat{p}_i$ and then substitute then into the expression of $M$-driven term.
    \begin{enumerate}[label=(\arabic*).]
      \item Linearization of the proxy attention weights $\xi_1, \xi_2$. Take $\xi_1$ as an example, 
      \begin{equation*}
          \xi_1 = \frac{1}{1+ \frac{1-\pi_1}{\pi_1} \exp \left( - \eta \gamma_1 \Delta \gamma \right)} = \pi_1 + \pi_1 (1-\pi_1) \eta^1 \gamma_1^0 \Delta \gamma^0 + \mathcal{O} (\| \theta \|^2)
      \end{equation*}
      in which we use the fact that parameters locate near $\eta = 0$. Thus,
      \[
    \xi_1^{\,1}=\pi_1(1-\pi_1)\eta^1 \gamma_1^0 \Delta \gamma^0,
    \qquad
    \xi_2^{\,1}=\pi_1(1-\pi_1)\eta^1 \gamma_{i \neq 1}^0 \Delta \gamma^0.
    \]
    \item Linearization of the prediction probabilities $\hat{p}_i$ and residuals $r_i$. Recall $\hat p_i=\sigma(m_i\Delta\beta-\log(|\mathcal V|-1))$ with $m_i=b+\xi_i\Delta\gamma$.
    Expanding $\hat p_i$ to first order gives (writing $m_i^0$ for the base value)
    \begin{align*}
    \hat p_1^{\,1}
    &=
    \pi_1(1-\pi_1)\Big((b^{1}+\pi_1\Delta\gamma^{1})\Delta\beta^{0}
    +\pi_1(1-\pi_1)\eta\,a^0\Delta\gamma^0\Delta\beta^{0}
    +m_1^{0}\Delta\beta^{1}\Big),
    \\
    \hat p_2^{\,1}
    &=
    \pi_1(1-\pi_1)\Big((b^{1}+\pi_1\Delta\gamma^{1})\Delta\beta^{0}
    +\pi_1(1-\pi_1)\eta\,b^0\Delta\gamma^0\Delta\beta^{0}
    +m_2^{0}\Delta\beta^{1}\Big).
    \end{align*}
    Since $r_i=P_{i,1}-\hat p_i$, we have $r_i^{\,1}=-\hat p_i^{\,1}$.
    \end{enumerate}
    \item Linearization about $\Phi$-driven term. Using the fact that parameters locate near $\eta = 0$, the linearization of Eq \eqref{eq:ab_M_term}--\eqref{eq:b_phi_term} corresponds to the right-hand side except that eta takes a value at the initial point.
\end{enumerate}
Substituting the above expansions into \eqref{eq:ab_M_term}--\eqref{eq:b_phi_term}, and keeping only
first-order terms, yields
\begin{align*}
\dot \gamma_1^{1}
&=
\Delta\beta^{0}\big(\pi_1^2(-\hat p_1^{\,1})+\pi_1(1-\pi_1)(-\hat p_2^{\,1})\big)
+
3\lambda\,\eta^1 \pi_1^2(1-\pi_1)^2\,\Delta\beta^{0}(\Delta\gamma^{0})^2,
\\
\dot \gamma_{i \neq 1}^{1}
&=
\frac{\Delta\beta^{0}}{|\mathcal V|-1}\big(\pi_1(1-\pi_1)(-\hat p_1^{\,1})+(1-\pi_1)^2(-\hat p_2^{\,1})\big)
-
\frac{1}{|\mathcal V|-1}\,3\lambda \eta^1 \pi_1^2(1-\pi_1)^2\,\Delta\beta^{0}(\Delta\gamma^{0})^2.
\end{align*}
Taking the linear combination $(1-\pi_1)\dot \gamma_1^{1}-(|\mathcal V|-1)\pi_1\dot \gamma_{i \neq 1}^{1}$ cancels the
$(-\hat p_i^{\,1})$ terms and yields 
\begin{equation}
\label{equ::derivative_conservation_law}
    (1-\pi_1)\dot \gamma_1^{1}-(|\mathcal V|-1)\pi_1\dot \gamma_{i \neq 1}^{1} = 3 \lambda \pi_1^2 (1-\pi_1)^2 \Delta \beta^0 (\Delta \gamma^0)^2 \eta^1.
\end{equation}
Considering the linearized dynamics about $\eta$, there exists $c>0$ such that
\begin{equation*}
    \dot{\eta}^1 = c \eta^1,
\end{equation*}
which indicating that $\eta^1$ admits a solution as
\begin{equation}
    \eta^1 (t) = \eta^1(t_0) \exp{c (t-t_0)}. 
\end{equation}
Substituting the above equation into Eq. (\ref{equ::derivative_conservation_law}) and integrating both sides of the equation, we get 
\begin{equation}
    (1-\pi_1) \gamma_1^{1} (t) -(|\mathcal V|-1)\pi_1  \gamma_{i \neq 1}^{1} (t) = c' \left(\exp(c (t-t_0)) -1 \right)
\end{equation}
Here, we use the fact that 
\begin{equation*}
    (1-\pi_1) \gamma_1(t_0) - (d -1) \pi_1 \gamma_{i \neq 1} (t_0) = 0.
\end{equation*}
\end{proof}

\section{Theoretical details in Sec. \ref{sec:new_direction}}

\label{app:sec44}

This appendix provides detailed proofs for Section~4.4.
We focus on the minimal vocabulary size $d=3$ to exhibit the separation between secondary high frequency and secondary low frequency.
Throughout, we use the rank-one parametrization on the invariant manifold (cf.~Proposition~\ref{prop:rank1_invariant_manifold})
\[
M=\gamma\beta^\T,\qquad \Phi=\eta\,\gamma\gamma^\T,\qquad \theta=(\gamma,\beta)\in \mathbb{R}^3\times \mathbb{R}^3.
\]
Here, we do not need to consider $\eta$, because calculations show that its derivatives up to the second order are zero, so it will not affect our analysis.

\subsection{A degenerate critical point on the rank-one manifold}
\label{app:sec44:degenerate_min}

We first formalize the ``bad'' critical point on the rank-one manifold under symmetric frequencies.
This critical point is degenerate in the sense that the key driving terms $\partial \mathcal L/\partial M$ and $\partial \mathcal L/\partial \Phi$ vanish, hence linearization on the manifold cannot explain the escape to new embedding directions. The following is the proof of Proposition \ref{prop:bad_local_min_main}.
\begin{proof}
We construct a critical point on the rank-one manifold and show it is a local minimum for the linearized dynamics.

The critical point is constructed as follows.
Take $\gamma_1\gamma_{i\neq 1}<0$ as shown in Theorem \ref{thm:quality_redistribution_rewrite}.  When $\eta$ is sufficiently large, the attention proxy satisfies
$\mathbb A_1\approx e_1$ and $\mathbb A_{i\neq 1}\approx \hat e_1:=(0,\frac12,\frac12)$.
Choose $\beta_1>\beta_{i\neq 1}$ so that $\operatorname{softmax} (k\beta^\T)\to e_1$ as $k\to +\infty$ and
$\operatorname{softmax} (k\beta^\T)\to \hat e_1$ as $k\to -\infty$.
Thus we may choose $\gamma_1>0$ and $\gamma_{i\neq 1}<0$ so that
\[
\mathbb P_1 = P_1,
\qquad
\mathbb P_{i\neq 1} = \tfrac12(P_2+P_3).
\]

By direct computation and symmetry of the data,
we have $\frac{\partial \mathcal L}{\partial M}=0$ and $\frac{\partial \mathcal L}{\partial \Phi}=0$
at this point (refer to Lemma~\ref{lem:grad_sym_zero_app}). Substituting this fact into Eq. (\ref{equ::dynamics}), it implies that our construction gives a critical point.

To verify local minimality for the linearized dynamics,
we linearize the dynamics in Eq. (\ref{equ::dynamics}).
We compute $\mathrm d\big(\frac{\partial \mathcal L}{\partial M}\big)$ and
$\mathrm d\big(\frac{\partial \mathcal L}{\partial \Phi}\big)$.
At the constructed symmetric point, Lemma~\ref{lem:dA_dP_app} implies
$\sum_i \pi_i\,\mathrm d\mathbb A_i^\T(P_i-\mathbb P_i)=0$ and hence
\[
\mathrm d \frac{\partial \mathcal L}{\partial M}
= \sum_i \pi_i \mathbb A_i^\T\,\mathrm d\mathbb P_i \neq 0.
\]
Moreover, Lemma~\ref{lem:dA_dP_app} shows that
$\mathrm d\mathbb P_i = \mathbb A_i\,\mathrm dM\,\Var(\mathbb P_i)$. Since $\mathbb{A}_2 = \mathbb{A}_3$, it implies that $\mathrm d\mathbb P_2=\mathrm d\mathbb P_3$.
In addition, Lemma~\ref{lem:dA_dP_app} gives
$\mathrm d(\partial \mathcal L/\partial \Phi)=0$.

As a result, the linearized dynamics on $(\gamma,\beta,\eta)$ reduces to
\begin{equation}
\label{equ::new_linearization}
\frac{\mathrm d\Delta\gamma}{\mathrm dt} = -\mathrm{d} \left(\frac{\partial \mathcal L}{\partial M}\right)\beta,
\qquad
\frac{\mathrm d\Delta\beta}{\mathrm dt} = -\mathrm{d} \left(\frac{\partial \mathcal L}{\partial M}\right)^\T\gamma,
\qquad
\frac{\mathrm d\Delta\eta}{\mathrm dt}=0,
\end{equation}
and the Jacobian $J_0=-\nabla_\theta^2\mathcal L$ admits the explicit block form in
Lemma~\ref{lem:J0_matrix_app}.
In particular, $J_0$ is negative semidefinite with a nontrivial kernel. Hence, the critical point we constructed is a neutrally stable equilibrium for the linearized dynamics, which motivates the Lyapunov--Schmidt reduction in the main text.
\end{proof}

\subsection{Breaking the degeneracy: frequency perturbation and Lyapunov--Schmidt reduction}
\label{app:sec44:LS}

To eliminate the degeneracy, we perturb the frequencies between the two low-frequency states:
\begin{equation}
\label{eq:pi_perturb_app}
\pi^\T = \Big(c,\frac{1-c}{2},\frac{1-c}{2}\Big)
\quad\Rightarrow\quad
\tilde\pi^\T=\Big(c,\frac{1-c}{2}+\delta,\frac{1-c}{2}-\delta\Big).
\end{equation}
The dynamics becomes
\[
\dot\theta = -\nabla_\theta \mathcal L(\theta,\delta).
\]
We study the perturbed critical point by solving
\begin{equation}
\label{eq:crit_equation_app}
-\nabla_\theta \mathcal L(\theta,\delta)=0
\end{equation}
near the degenerate minimum, which we shift to $\theta=0$ for convenience.

We use the formal expansion (at $\theta=0$):
\begin{equation}
\label{eq:formal_expansion_app}
-\nabla_\theta \mathcal L(\theta,\delta)
= J_0\theta + \delta f_1 + \frac12 B(\theta,\theta)+\delta J_1\theta + \frac12 \delta^2 f_2 + \text{h.o.t.},
\end{equation}
where
\[
J_0=-\nabla_\theta^2\mathcal L,\quad
f_1=\partial_\delta(-\nabla_\theta \mathcal L),\quad
B(\cdot,\cdot)=-\nabla_\theta^3\mathcal L,\quad
J_1=\partial_\delta J_0,\quad
f_2=\partial_\delta^2(-\nabla_\theta \mathcal L).
\]
Since $J_0$ is singular, we apply Lyapunov--Schmidt reduction.

\subsubsection{Kernel/range decomposition of $J_0$}
\label{app:sec44:kernel_range}

\begin{proposition}[Kernel and range bases]
\label{prop:kernel_range_app}
Assume $\|\gamma\|=\|\beta\|$ and $\beta^\T \mathbf 1=0$ at the symmetric degenerate minimum.
Then $\dim\ker(J_0)=3$ and one convenient orthonormal basis is
\begin{equation}
\label{eq:kernel_basis_app}
\begin{aligned}
k_1 &= \frac{1}{\sqrt2}\,((0,1,-1),(0,0,0)),\\
k_2 &= \frac{1}{\sqrt{3}}\,((0,0,0),(1,1,1)),\\
k_3 &= \frac{1}{\sqrt{\|\gamma\|^2+\|\beta\|^2}}\,(-\gamma,\beta).
\end{aligned}
\end{equation}
An orthonormal basis for $\mathrm{Range}(J_0)$ can be taken as
\begin{equation}
\label{eq:range_basis_app}
\begin{aligned}
q_1 &= \frac{1}{\sqrt{4\gamma_2^2+2\gamma_1^2}}\,((-2\gamma_2,\gamma_1,\gamma_1),(0,0,0)),\\
q_2 &= \frac{1}{\sqrt2}\,((0,0,0),(0,1,-1)),\\
q_3 &= \frac{1}{\sqrt{\|\gamma\|^2+\|\beta\|^2}}\,(\gamma,\beta).
\end{aligned}
\end{equation}
\end{proposition}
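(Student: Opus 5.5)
The plan is to compute $J_0$ explicitly on $\theta=(\gamma,\beta)\in\mathbb{R}^3\times\mathbb{R}^3$ from the reduced linearization \eqref{equ::new_linearization}:
\[
\dot{\Delta\gamma}=-\mathrm d(\partial\mathcal L/\partial M)\beta,\qquad
\dot{\Delta\beta}=-\mathrm d(\partial\mathcal L/\partial M)^\T\gamma,
\]
with $\mathrm d(\partial\mathcal L/\partial M)=\sum_i\pi_i\mathbb A_i^\T\mathbb A_i\,\mathrm dM\,\Var(\mathbb P_i)$ and $\mathrm dM=\Delta\gamma\,\beta^\T+\gamma\,\Delta\beta^\T$. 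At the symmetric point we have $\mathbb A_1=e_1^\T$, $\mathbb A_2=\mathbb A_3=\hat e_1$, $\mathbb P_1=P_1$ and $\mathbb P_{2}=\mathbb P_3=\tfrac12(P_2+P_3)$. So $J_0$ is a sum of two Gauss--Newton type pieces:
\[
-J_0=\sum_{i\in\{1,2\}} w_i\, G_i^\T G_i,\qquad G_i(\Delta\gamma,\Delta\beta):=\Var(\mathbb P_i)^{1/2}\big(\beta\,\mathbb A_i\Delta\gamma+\Delta\beta\,\mathbb A_i\gamma\big),
\]
with $w_1=\pi_1$ and $w_2=1-\pi_1$. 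This form immediately gives $J_0\preceq0$, and it also gives $\ker J_0=\bigcap_i\ker G_i$. The proof of the proposition therefore reduces to identifying this joint kernel.

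First I check that $k_1,k_2,k_3$ lie in the kernel.
\begin{itemize}
\item For $k_1$: $\Delta\gamma=(0,1,-1)$ satisfies $e_1\Delta\gamma=0$ and $\hat e_1\Delta\gamma=0$, so $\mathbb A_i\Delta\gamma=0$ for both $i$ and $G_i(k_1)=0$.
\item For $k_2$: $\Delta\beta=\mathbf 1$ lies in $\ker\Var(\mathbb P_i)$, so $G_i(k_2)=0$.
\item For $k_3$: the direction $(-\gamma,\beta)$ gives $\mathrm dM=-\gamma\beta^\T+\gamma\beta^\T=0$. This is the rescaling symmetry $M=\gamma\beta^\T$.
\end{itemize}
Orthonormality of the three vectors requires $\langle k_2,k_3\rangle\propto\beta^\T\mathbf 1=0$, which is the assumption $\beta^\T\mathbf 1=0$. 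The pair $k_1,k_3$ is orthogonal because $\gamma_2=\gamma_3$ at the symmetric point.

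Next I show the kernel is exactly three-dimensional by checking that each $q_j$ is an image direction. Since $J_0$ is symmetric, it suffices to verify that $q_1,q_2,q_3$ are orthogonal to $k_1,k_2,k_3$ and that $J_0$ is injective on their span. Orthogonality holds as follows:
\begin{itemize}
\item $q_1$ has $\gamma$-block $(-2\gamma_2,\gamma_1,\gamma_1)$, which is orthogonal to $(0,1,-1)$ and to $-\gamma$, since $-2\gamma_2\gamma_1+2\gamma_1\gamma_2=0$.
\item $q_2$ has $\beta$-block $(0,1,-1)$, which is orthogonal to $\mathbf 1$ and to $\beta$ (using $\beta_2=\beta_3$).
\item $q_3$ is orthogonal to $k_3$ exactly when $\|\gamma\|=\|\beta\|$, which is where the balancedness assumption enters.
\end{itemize}
For injectivity, I evaluate $G_i$ on a general combination $\sum c_jq_j$ and show that $G_1=G_2=0$ forces $c=0$. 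The three directions separate cleanly:
\begin{itemize}
\item $q_2$ changes $\mathrm dM$ in the $(0,1,-1)$ output direction. Since $\Var(\mathbb P_i)$ is nondegenerate there, this yields a nonzero contribution proportional to $\mathbb A_i\gamma$.
\item $q_3$ scales $M$ and so changes $\mathbb A_iM$ along $\beta$, which is not in $\ker\Var(\mathbb P_i)$ because $\beta\perp\mathbf 1$ and $\beta\neq0$.
\item $q_1$ changes $\mathbb A_1\Delta\gamma=-2\gamma_2$ and $\hat e_1\Delta\gamma=\gamma_1$ in a way that is not proportional to $(\mathbb A_1\gamma,\mathbb A_2\gamma)=(\gamma_1,\gamma_2)$. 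Their ratio differs because $\gamma_1\gamma_2<0$, as established in the construction of Proposition~\ref{prop:bad_local_min_main}.
\end{itemize}
Using the two rows $i=1,2$ together, the resulting $2\times$ system in the coefficients has full rank.

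I expect this injectivity step to be the main obstacle. It requires tracking the output-space components of $\Var(\mathbb P_i)\beta$ and $\Var(\mathbb P_i)(0,1,-1)^\T$ at the specific $\mathbb P_i$. I would first decompose $\mathbb R^3$ (output) into $\mathbf 1$, $(0,1,-1)$ and $\beta$, and note the symmetric structure: by the $2\leftrightarrow3$ symmetry of $\mathbb P_i$, $\Var(\mathbb P_i)$ maps $(0,1,-1)$ to a multiple of itself. This makes the $q_2$ block decouple from the $q_1,q_3$ block, leaving only a $2\times2$ determinant in $(q_1,q_3)$ to show nonzero. That determinant reduces to $\gamma_1^2+2\gamma_2^2\neq0$ times a sign-definite factor coming from the $\gamma_1\gamma_2<0$ configuration.

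With injectivity on the span of the $q_j$ established, rank--nullity in $\mathbb R^6$ gives $\dim\ker J_0=3$, and $\{q_j\}$ is an orthonormal basis of $\operatorname{Ran}J_0=(\ker J_0)^\perp$.
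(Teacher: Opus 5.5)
Your proof is correct and follows the argument the paper relies on. The paper states this proposition without a separate proof and rests on the sum-of-squares form of $-J_0$ in Lemma~\ref{lem:J0_matrix_app}, which exhibits the $\mathrm d\gamma_-$ and rescaling kernel directions. Your orthogonality checks, together with injectivity of $J_0$ on $\operatorname{span}\{q_1,q_2,q_3\}$, supply the exact count $\dim\ker J_0=3$ that the paper leaves implicit.

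One small correction: the condition $\gamma_1\gamma_2<0$ is not needed. Since $\beta=\beta_2(-2,1,1)^\T$ and $(0,1,-1)^\T$ form a basis of $\mathbf 1^\perp$, on which each $\Var(\mathbb P_i)$ is injective, the conditions force the $q_2$-coefficient to vanish. What remains is a $2\times 2$ system in the $q_1,q_3$ coefficients whose determinant is proportional to $\gamma_1^2+2\gamma_2^2$, with no extra sign-dependent factor.
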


Let $Q_K=(k_1,k_2,k_3)$ and $Q_R=(q_1,q_2,q_3)$, and denote projections
$P_K=Q_KQ_K^\T$, $P_R=Q_RQ_R^\T$.
Write $\theta = Q_K x + Q_R y$.

\subsubsection{Solving the range equation}
\label{app:sec44:range_eq}
Recall the Lyapunov--Schmidt decomposition $\theta = Q_K x + Q_R y$ and define the range equation
\begin{equation}
\label{eq:range_equation_def}
F_R(x,y,\delta)\;:=\; -Q_R^\T \nabla_\theta \mathcal L(Q_K x + Q_R y,\delta) \;=\; 0 .
\end{equation}

\begin{proposition}[Range solution and first-order expansion]
\label{prop:zeta_range}
Given a perturbation of the data parameterized by $\delta$, the range equation
\eqref{eq:range_equation_def} admits a unique solution $y=\zeta(x,\delta)$ in a neighborhood of
$(x,\delta)=(0,0)$.
Moreover, it satisfies the expansion
\begin{equation}
\label{equ::zeta_expansion}
\zeta(x,\delta)
=
\delta\left(
\begin{array}{c}
0\\[2mm]
\displaystyle
\sqrt{2}\,
\frac{\lambda \gamma_{ i \neq 1} + (1-\lambda) (\pi_1 \gamma_1 + (1- \pi_1) \gamma_{i \neq 1})}
{\pi_1 \gamma_1^2 \mathbb{P}_{1,2} + (1- \pi_1) \gamma_{i \neq 1}^2 \mathbb{P}_{i \neq 1, 2}}
\\[2mm]
0
\end{array}
\right)
+\mathcal{O}(\delta^2+\|x\|^2),
\end{equation}
where the denominator
\[
c_1:=\pi_1 \gamma_1^2 \mathbb{P}_{1,2} + (1-\pi_1)\gamma_{i\neq 1}^2 \mathbb{P}_{i\neq 1,2}
\]
is strictly positive under our standing assumptions (in particular $\gamma_1,\gamma_{i\neq1}\neq 0$ and
$\mathbb P_{1,2},\mathbb P_{i\neq1,2}>0$).
\end{proposition}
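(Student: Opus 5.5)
The plan is to apply the implicit function theorem to $F_R$ at $(x,y,\delta)=(0,0,0)$ and then read off the first-order term from the expansion \eqref{eq:formal_expansion_app}.

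\textbf{Existence and uniqueness.} At the degenerate critical point we have $F_R(0,0,0)=0$, because $\nabla_\theta\mathcal L$ vanishes there (Proposition~\ref{prop:bad_local_min_main}). The Jacobian in $y$ is $\partial_y F_R(0,0,0)=Q_R^\T J_0 Q_R$. The matrix $J_0$ is symmetric (it is minus a Hessian), so $\operatorname{Ran}(J_0)=\operatorname{Ker}(J_0)^\perp$, and $J_0$ restricted to its range is invertible. Hence $Q_R^\T J_0 Q_R$ is invertible, and in fact negative definite, since $J_0\preceq 0$ by Lemma~\ref{lem:J0_matrix_app}. The loss is smooth (analytic) in $(\theta,\delta)$, because the softmax, $\pi(\delta)$ and $P(\delta)$ are all analytic. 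The implicit function theorem therefore gives a unique smooth solution $y=\zeta(x,\delta)$ near $(0,0)$.

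\textbf{First-order expansion.} Differentiating $F_R(x,\zeta(x,\delta),\delta)=0$ gives
\[
\partial_x\zeta(0,0)=-(Q_R^\T J_0Q_R)^{-1}Q_R^\T J_0Q_K=0,
\]
since $J_0Q_K=0$. It also gives
\[
\partial_\delta\zeta(0,0)=-(Q_R^\T J_0Q_R)^{-1}Q_R^\T f_1.
\]
Taylor's theorem then yields $\zeta=\delta\,\partial_\delta\zeta(0,0)+O(\delta^2+\|x\|^2)$; the mixed term $x\delta$ is absorbed by $|x\delta|\le\|x\|^2+\delta^2$.

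\textbf{Computing the coefficient.} First I would compute $f_1=-\partial_\delta\nabla_\theta\mathcal L$ at the degenerate point. Using \eqref{equ::gradient}, the $\delta$-dependence enters through $\pi_i$, $P_i$ and $\mathbb A_i$. Since $\partial_\delta\tilde\pi=(0,1,-1)^\T$, the perturbation is odd under swapping tokens $2$ and $3$. I would use the $2\leftrightarrow3$ antisymmetry, together with the symmetric structure of the base point ($\mathbb A_2=\mathbb A_3$, $\mathbb P_2=\mathbb P_3$, $\gamma_2=\gamma_3$, $\beta_2=\beta_3$), to show that $f_1$ is antisymmetric. Its $q_1$- and $q_3$-components vanish because $q_1$ and $q_3$ are swap-symmetric, so only the $q_2$-component survives. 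On the $\beta$-block it is proportional to $(0,1,-1)$. Explicitly, the $\beta$-component is $-\gamma^\T\partial_\delta(\partial\mathcal L/\partial M)$, and its entries $2,3$ come from $\sum_i\tilde\pi_i\mathbb A_i\gamma\,(P_{i,j}-\mathbb P_{i,j})$. The derivative of $P_{i,2}-P_{i,3}$ is $\lambda(\delta_{i2}-\delta_{i3})+2(1-\lambda)$. Plugging in $\mathbb A_1\gamma\approx\gamma_1$ and $\mathbb A_{i\ne1}\gamma\approx\gamma_{i\neq1}$ then produces a numerator of the form $\lambda\gamma_{i\ne1}+(1-\lambda)(\pi_1\gamma_1+(1-\pi_1)\gamma_{i\ne1})$.

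Second, I would check that $J_0$ maps $q_2$ to a multiple of itself and compute that multiple. From $\mathrm d\mathbb P_i=\mathbb A_i\,\mathrm dM\,\Var(\mathbb P_i)$, a perturbation $\Delta\beta=(0,1,-1)$ gives the eigenvalue
\[
-\sum_i\pi_i(\mathbb A_i\gamma)^2\,\mathbb P_{i,2},
\]
where I use $\Var(\mathbb P_i)(0,1,-1)^\T=\mathbb P_{i,2}(0,1,-1)^\T$, valid because $\mathbb P_{i,2}=\mathbb P_{i,3}$. This eigenvalue equals $-c_1$.

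Inverting then gives the $q_2$-entry as (numerator)$/c_1$, with the $\sqrt2$ coming from normalizing $q_2$. Positivity of $c_1$ follows because $\gamma_1,\gamma_{i\neq1}\ne0$ at the constructed point and softmax probabilities are strictly positive.

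\textbf{Main obstacle.} I expect the hardest part to be the bookkeeping in $f_1$, in particular confirming that the $\gamma$-block contributions and the contributions from $\partial_\delta\mathbb A_i$ either vanish or lie in the kernel directions. I would handle this by proving the antisymmetry lemma first, so that only the single scalar $q_2^\T f_1$ needs explicit computation.
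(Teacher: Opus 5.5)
Your proposal is correct and follows the paper's argument. You apply the implicit function theorem to $F_R$ with $\partial_y F_R=\Lambda_R=Q_R^\T J_0Q_R$, obtain $\zeta=-\Lambda_R^{-1}\delta\,Q_R^\T f_1+\mathcal O(\delta^2+\|x\|^2)$, and use that only $q_2^\T f_1$ survives while $J_0q_2=-c_1q_2$. Your justifications via symmetry of the Hessian (range $=\ker^\perp$, so $\Lambda_R$ is invertible) and via $2\leftrightarrow3$ antisymmetry of $f_1$ are tidier than the paper's direct computations for the same steps.

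One bookkeeping slip needs fixing: $\partial_\delta(P_{i,2}-P_{i,3})=2(1-\lambda)$ only. The $\lambda\gamma_{i\neq1}$ term in the numerator comes instead from the product-rule piece $(\partial_\delta\tilde\pi_i)(P_{i,2}-P_{i,3})|_{\delta=0}=\lambda(\delta_{i2}-\delta_{i3})^2$, using $\partial_\delta\mathbb A_i\gamma=0$ and $\partial_\delta\mathbb P_i=0$ at the symmetric point. Weighting your stated expression by $\pi_i$ would cancel the $\lambda$ part, because $\pi_2=\pi_3$, and you would lose that term.
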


\begin{proof}
We expand $F_R$ around $(x,y,\delta)=(0,0,0)$. Writing $\theta=Q_K x+Q_R y$ and using
\[
-\nabla_\theta \mathcal L(\theta,\delta)
=
J_0\theta + \delta f_1 + \mathcal{O}(\|\theta\|^2+\delta^2),
\]
we obtain
\begin{equation}
\label{eq:range_expand}
F_R(x,y,\delta)
=
Q_R^\T\!\left(J_0 Q_R\,y + \delta f_1\right)
+\mathcal{O}(\|\theta\|^2+\delta^2)
=
\Lambda_R\,y + \delta\, Q_R^\T f_1
+\mathcal{O}(\|x\|^2+\|y\|^2+\delta^2),
\end{equation}
where $\Lambda_R:=Q_R^\T J_0 Q_R$.

By Lemma~\ref{lem:QR_f1} we have an explicit expression for $Q_R^\T f_1$, and by
Lemma~\ref{lem:QR_J0_QR} the matrix $\Lambda_R$ is invertible on the range coordinates; in particular,
its $(2,2)$-entry equals $-c_1<0$ and hence $(\Lambda_R^{-1})_{22}=-1/c_1$.

Therefore, $\partial_y F_R(0,0,0)=\Lambda_R$ is invertible, and the implicit function theorem yields a
unique smooth function $y=\zeta(x,\delta)$ solving $F_R(x,\zeta(x,\delta),\delta)=0$ locally, with
\begin{equation}
\label{eq:zeta_first_order_general}
\zeta(x,\delta)
=
-\Lambda_R^{-1}\,\delta\,Q_R^\T f_1
+\mathcal{O}(\delta^2+\|x\|^2).
\end{equation}
Since $Q_R^\T f_1$ has only a nonzero second component (Lemma~\ref{lem:QR_f1}), and
$(\Lambda_R^{-1})_{22}=-1/c_1$ (Lemma~\ref{lem:QR_J0_QR}), the second coordinate of $\zeta$ equals
\[
\zeta_2(x,\delta)
=
-\Big(-\frac{1}{c_1}\Big)\,\delta\cdot
\sqrt{2}\big(\lambda \gamma_{ i \neq 1} + (1-\lambda) (\pi_1 \gamma_1 + (1- \pi_1) \gamma_{i \neq 1})\big)
+\mathcal{O}(\delta^2+\|x\|^2),
\]
which is exactly \eqref{equ::zeta_expansion}. This completes the proof.
\end{proof}

\subsubsection{Reduced kernel equation and approximate critical point}
\label{app:sec44:kernel_eq}

Plugging $y=\zeta(x,\delta)$ into the kernel equation gives
\[
-Q_K^\T \nabla \mathcal L(Q_Kx+Q_R\zeta(x,\delta),\delta)=0.
\]
Because $J_0Q_K=0$ and $Q_K^\T f_1=0$, the leading contributions are second order:
\[
Q_K^\T\Big(\frac12 B(\theta,\theta)+\delta J_1\theta + \frac12\delta^2 f_2\Big) + \text{h.o.t.}=0,
\qquad \theta=Q_Kx+Q_R\zeta(x,\delta).
\]

\begin{theorem}[Existence of an approximate critical point and its two-scale stability]
\label{thm:approx_crit_and_stability_app}
Let $\zeta(x,\delta)$ be given by Proposition~\ref{prop:zeta_range}.
Then $x=0$ is an approximate solution of the reduced kernel equation up to second order, i.e.
\[
\big\|\nabla_\theta \mathcal L(Q_R\zeta(0,\delta),\delta)\big\| = \mathcal O(\delta^3).
\]
Moreover, the linear stability splits into two scales:
\begin{enumerate}
\item \textbf{Slow manifold directions (within the rank-one manifold):}
any positive eigenvalues created from the kernel directions are at most $\mathcal O(\delta^2)$.
\item \textbf{Fast transverse directions (escaping the manifold):} Under condition in Lem. \ref{lem:fast_transverse_app},
there exists a transverse positive eigenvalue of order $\Theta(\delta)$.
\end{enumerate}
\end{theorem}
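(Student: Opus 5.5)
We follow the Lyapunov--Schmidt scheme of Section~\ref{app:sec44:LS}, using the expansion \eqref{eq:formal_expansion_app}, the kernel/range bases of Proposition~\ref{prop:kernel_range_app} and the range solution of Proposition~\ref{prop:zeta_range}. Set $\theta(\delta):=Q_R\zeta(0,\delta)$.

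\textbf{Approximate critical point.} By construction $F_R(0,\zeta(0,\delta),\delta)=0$, so only the kernel component $-Q_K^\T\nabla\mathcal L(\theta(\delta),\delta)$ needs to be controlled. Write $\theta(\delta)=\delta\theta_1+\delta^2\theta_2+O(\delta^3)$, where $\theta_1=Q_R\zeta_1$ is the explicit vector in \eqref{equ::zeta_expansion}. Its only nonzero entry lies along $q_2=((0,0,0),(0,1,-1))/\sqrt2$, i.e.\ it splits $\beta_2$ and $\beta_3$.

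Expand $Q_K^\T(J_0\theta+\delta f_1+\tfrac12B(\theta,\theta)+\delta J_1\theta+\tfrac12\delta^2f_2)$. The terms $Q_K^\T J_0=0$ and $Q_K^\T f_1=0$ vanish. The order-$\delta^2$ term is
\[
Q_K^\T\big(\tfrac12B(\theta_1,\theta_1)+J_1\theta_1+\tfrac12 f_2\big).
\]
We show it is zero using the residual symmetry of the problem. Consider the involution $\tau$ that swaps tokens $2\leftrightarrow3$ and simultaneously sends $\delta\to-\delta$. The loss satisfies $\mathcal L(\tau\theta,-\delta)=\mathcal L(\theta,\delta)$. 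The basis vectors $k_2$ and $k_3$ are $\tau$-even, while $k_1$, $q_2$ and $\theta_1$ are $\tau$-odd.

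\begin{itemize}
\item For the even kernel directions $k_2,k_3$: every order-$\delta^2$ contribution is even in $\delta$ and even in the odd vectors, so no symmetry cancellation occurs. We verify directly that these projections vanish. For $k_2$, use the softmax shift invariance $\beta\to\beta+c\mathbf 1$, which makes $\mathcal L$ constant along $k_2$ to all orders. For $k_3$, use the rescaling invariance $(\gamma,\beta)\to(s^{-1}\gamma,s\beta)$, which makes $M$ invariant.
\item For $k_1$: the $\delta^2$ term is odd under $\tau$, so its projection vanishes by the symmetry.
\end{itemize}

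This leaves an $O(\delta^3)$ residual, which proves the first claim. We expect this bookkeeping — confirming exact invariance versus mere orthogonality for each kernel vector — to be routine but delicate.

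\textbf{Slow tangential scale.} The Hessian at $\theta(\delta)$ is $J_0+\delta\big(B(\theta_1,\cdot)+J_1\big)+O(\delta^2)$. Eigenvalues bifurcating from $0$ are governed, to first order, by the $3\times3$ compression
\[
Q_K^\T\big(B(\theta_1,\cdot)+J_1\big)Q_K .
\]
Under $\tau$ this matrix is odd, so its diagonal blocks vanish. The even block on $\{k_2,k_3\}$ is zero by the two invariances above, since those are exact zero modes for every $\delta$. The only possible nonzero entries are the couplings between $k_1$ and $\{k_2,k_3\}$, and these also vanish because $k_2,k_3$ remain in the kernel exactly. Hence the first-order correction is zero. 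Second-order perturbation theory, which includes the Schur-complement term $-Q_K^\T X Q_R\Lambda_R^{-1}Q_R^\T X Q_K$, then gives eigenvalues of size $O(\delta^2)$, and their sign is irrelevant to the statement.

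\textbf{Fast transverse scale.} This is the main obstacle. Now leave the rank-one manifold $\mathcal W$: perturb $W_0\to\gamma\alpha_1^\T+v\,w^\T$ with $w\perp\alpha_1$, and similarly for $W_1$. At the symmetric point the transverse linearization vanishes, because $\partial\mathcal L/\partial M=0$ and $\partial\mathcal L/\partial\Phi=0$ (the same mechanism as in Step~1 of the proof of Proposition~\ref{prop:effective_attention_dynamics_rewrite}). By the same computation, the transverse block equals
\[
\begin{pmatrix}0 & -G\\ -G^\T & 0\end{pmatrix},\qquad G:=\left.\frac{\partial\mathcal L}{\partial M}\right|_{\theta(\delta)},
\]
plus the $\Phi$-terms. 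Expanding to first order, $G=\delta\big(\partial_\delta\partial_M\mathcal L+\mathrm d(\partial_M\mathcal L)[\theta_1]\big)+O(\delta^2)$. The eigenvalues of this block are $\pm$ the singular values of $G$, so a positive eigenvalue of order $\Theta(\delta)$ exists as soon as $G_1\neq0$.

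Computing $G_1$ explicitly, the $\theta_1$-contribution enters through $\mathrm d\mathbb P_i=\mathbb A_i\,\mathrm dM\,\Var(\mathbb P_i)$ and the $\tilde\pi$-contribution enters through $P_i$. The result has the form $u\,(0,1,-1)$ with a coefficient built from $\lambda,\pi_1,\gamma_1,\gamma_{i\neq1}$. The ``mild condition'' of Lemma~\ref{lem:fast_transverse_app} is precisely that this coefficient is nonzero; we would state it as that nondegeneracy assumption and check it generically. The delicate point is confirming that the $\Phi$-driven terms do not cancel $G_1$ at order $\delta$. They do not, because $\partial\mathcal L/\partial\Phi$ carries an extra factor $\Var(\mathbb A_i)\gamma$, which is $\tau$-even.
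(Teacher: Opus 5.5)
Your scaffolding matches the paper's. You use Lyapunov--Schmidt at $\theta(\delta)=Q_R\zeta(0,\delta)$, a Schur-complement argument for the slow eigenvalues, and, for the fast one, the off-manifold block driven by $G=\partial\mathcal L/\partial M|_{\theta(\delta)}$; your transverse step is essentially the paper's proof.

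You genuinely differ in \emph{why} the order-$\delta^2$ kernel residual and the compression $Q_K^\T H_1Q_K$ vanish. The paper argues by explicit computation:
\begin{itemize}
\item it computes $\tfrac12Q_K^\T B(q_2y_2,q_2y_2)$ and $Q_K^\T J_1q_2y_2$ (Lemmas~\ref{lem:kernel_reduction_terms_app} and~\ref{lem::Q_K_J_1});
\item it shows $f_2=0$ (Lemma~\ref{lem::f_2});
\item it finds that only the $k_3$-components are nonzero, and that they cancel once the explicit $y_2$ is substituted;
\item it gets $Q_K^\T H_1Q_K=0$ from row/column identities of the explicit blocks $B,C,A$ (Lemma~\ref{lem:QK_H1_QK_zero_app}).
\end{itemize}
You replace this with three symmetries. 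The involution $2\leftrightarrow3$, $\delta\mapsto-\delta$ kills the $k_1$-component and the diagonal blocks of the compression. The exact shift $\beta\mapsto\beta+c\mathbf 1$ kills everything involving $k_2$. Rescaling handles $k_3$. This explains structurally the ``automatic'' $k_3$ cancellation that the paper only observes, and it avoids the tensor bookkeeping. In exchange, the paper's route produces the explicit $\zeta$ and the perturbed $\partial\mathcal L/\partial M$, which you still need to write $G_1$ and the nondegeneracy condition.

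The $k_3$ step, however, is stated incorrectly and needs repair.
\begin{itemize}
\item The orbit $s\mapsto(s^{-1}\gamma,s\beta)$ is curved. So $\mathcal L$ is not flat along the line $\theta_*+tk_3$, and $k_3$ is not an exact null vector of the Hessian at $\theta(\delta)$: ``$k_2,k_3$ remain in the kernel exactly'' holds only for $k_2$.
\item With $\eta$ frozen, $\Phi=\eta\gamma\gamma^\T$ scales by $s^{-2}$, so this rescaling is not a symmetry of $\mathcal L(\gamma,\beta)$. The true symmetry also sends $\eta\mapsto s^2\eta$. With $\xi(\theta):=(-\gamma,\beta)$ it gives $\langle\nabla\mathcal L(\theta),\xi(\theta)\rangle=\psi(\theta):=2\eta\sum_i\pi_i\gamma_i[(P_i-\mathbb P_i)\beta]\,\gamma^\T\Var(\mathbb A_i)\gamma$.
\end{itemize}

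Here is the repair. In the paper's saturated regime ($\mathbb A_1=e_1^\T$, $\mathbb A_{i\ne1}$ supported on $\{2,3\}$), $\gamma^\T\Var(\mathbb A_i)\gamma$ vanishes to second order in $\gamma_2-\gamma_3$. Every vector of $\operatorname{Ran}(J_0)$ has equal second and third $\gamma$-entries. Hence $\psi$ and $\nabla\psi$ vanish along $\theta(\delta)$. Set $h:=\theta(\delta)-\theta_*=O(\delta)$ and $S:=\diag(-I_3,I_3)$, so $\xi(\theta(\delta))=\xi(\theta_*)+Sh$. Then
\[
\|\xi(\theta_*)\|\,k_3^\T\nabla\mathcal L(\theta(\delta))=-\langle\nabla\mathcal L(\theta(\delta)),Sh\rangle=O(\delta)\cdot O(\delta^2),
\]
which is the bound you need. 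Differentiating the identity gives $\nabla^2\mathcal L\,\xi(\theta(\delta))=-S\nabla\mathcal L(\theta(\delta))=O(\delta^3)$. So the approximate null vector is $\xi(\theta(\delta))$, not $k_3$. The discrepancy $Sh$ enters $k_1^\T\nabla^2\mathcal L\,k_3$ only at $O(\delta^2)$, because $k_1^\T J_0=0$. Hence the first-order $k_1$--$k_3$ coupling still vanishes, but for this reason rather than exact invariance.

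Similarly, in the transverse step the $\Phi$-terms are $O(\delta^2)$ not because of $\tau$-evenness. The reason is that $\partial\mathcal L/\partial\Phi=-\sum_i\pi_ie_i\,[(P_i-\mathbb P_i)\beta]\,\gamma^\T\Var(\mathbb A_i)$ is a product of two factors, each vanishing at the degenerate point.
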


\begin{proof}
% The proof proceeds in three steps. First, we verify that the first term of the range equation provides an approximate critical point. This result is nontrivial because our calculations show that the residual is of order 3. However, directly using the first term of the range equation often only provides a second-order residual. This indicates that the problem has an inherent cancel property. Next, we analyze the stability of the tangential and normal directions. By calculating the Hessian on a rank-1 manifold, we prove that the order of the unstable directions on the manifold is at most second. However, the instability of the normal direction is first-order, indicating that an embedding direction will be added.

The estimate $\|\nabla \mathcal L\|=\mathcal O(\delta^3)$ follows by inserting $\theta=Q_R\zeta(0,\delta)$ into the
kernel expansion and using the explicit expressions:
\begin{enumerate}[label=(\arabic*).]
    \item $\frac{1}{2} Q_K^\T B (q_2 y_2, q_2 y_2)$ (From Lem. \ref{lem:kernel_reduction_terms_app}): 
    \begin{equation*}
        \frac{1}{2} Q_K^\T B(q_2y_2,q_2y_2)
        =
        \frac{1}{\sqrt{\|\gamma\|^2+\|\beta\|^2}}
        \begin{pmatrix}
        0\\0\\
        \pi_1\gamma_1^2 \mathbb P_{1,2}+(1-\pi_1)\gamma_2^2 \mathbb P_{i\neq 1,2}
        \end{pmatrix}
        y_2^2.
    \end{equation*}
    \item $\delta Q_K^\T J_1 (q_2 y_2)$ (From Lem. \ref{lem::Q_K_J_1}):
    \begin{equation*}
        \delta Q_K^\T J_1 (q_2 y_2)
        =
        -\delta \frac{\sqrt{2}}{\sqrt{\|\gamma\|^2+\|\beta\|^2}}
        \begin{pmatrix}
        0\\0\\
        \pi_1 \gamma_1 (1-\lambda) + \gamma_{i \neq 1} (\lambda + (1-\pi_1) (1-\lambda))
        \end{pmatrix}
        y_2.
    \end{equation*}
    \item $\delta^2 f_2$ vanishes (From Lem. \ref{lem::f_2}).
\end{enumerate}
Substitute $y_2 = \sqrt{2}\,
\frac{\lambda \gamma_{ i \neq 1} + (1-\lambda) (\pi_1 \gamma_1 + (1- \pi_1) \gamma_{i \neq 1})}
{\pi_1 \gamma_1^2 \mathbb{P}_{1,2} + (1- \pi_1) \gamma_{i \neq 1}^2 \mathbb{P}_{i \neq 1, 2}} \delta$. We found that the second-order terms cancel each other out automatically.

For stability, we write the perturbed Hessian at the approximate critical point as
\[
\nabla_\theta^2\mathcal L(\theta,\delta)=J_0+\delta H_1+\mathcal O(\delta^2),
\qquad
H_1 := -\Big(\begin{pmatrix}0&B\\B^\T&0\end{pmatrix}
+\begin{pmatrix}0&0\\0&C\end{pmatrix}\Big)+J_1,
\]
with $(B,C)$ computed from the second-order kernel reduction (see Lem. \ref{lem::perturbed_hessian}).

We take the basis as $Q = (Q_K, Q_R)$:
    \begin{equation}
        Q^\T J Q = \left( \begin{array}{cc}
            0 & 0 \\
            0 & \Lambda
        \end{array}\right) + \delta \left( \begin{array}{cc}
            G & E \\
            E^\T & F
        \end{array}\right) + \mathcal{O} (\delta^2)
    \end{equation}
    where $\Lambda = Q_R^\T J_0 Q_R$, $G = Q_K^\T H_1 Q_K$, $E = Q_K^\T H_1 Q_R$, and $F = Q_R^\T H_1 Q_R$. Let $\lambda$ be a small eigenvalue and the corresponding eigenvector is $(x,y)$, the equation is 
    \begin{equation}
    \left\{ 
    \begin{aligned}
        \delta G x + \delta E y + \mathcal{O} (\delta^2) &= \lambda x \\
        \Lambda y + \delta E^\T x + \mathcal{O} (\delta) y &= \lambda y
    \end{aligned}
    \right.
    \end{equation}
    Since the new positive eigenvalue is small, we can solve $y$ as 
    \begin{equation}
        y = - (\Lambda - \lambda I)^{-1} \delta E^\T x + \mathcal{O} (\delta^2) = - \delta \Lambda^{-1} E^\T x + \mathcal{O} (\delta^2)
    \end{equation}
    Substitute this expression into the the first equation, we get
    \begin{equation}
        (\delta G - \delta^2 E \Lambda^{-1} E^\T) x = \lambda x + \mathcal{O} (\delta^3)
    \end{equation}
    Hence $\lambda=\mathcal O(\delta^2)$ provided $G=Q_K^\T H_1 Q_K=0$.
This vanishing is proved in Lemma~\ref{lem:QK_H1_QK_zero_app}.

% In the basis $Q=(Q_K,Q_R)$ we have
% \[
% Q^\T (J_0+\delta H_1) Q
% =
% \begin{pmatrix}
% 0&0\\0&\Lambda
% \end{pmatrix}
% +
% \delta
% \begin{pmatrix}
% G&E\\E^\T&F
% \end{pmatrix}
% +\mathcal O(\delta^2),
% \quad
% \Lambda=Q_R^\T J_0 Q_R.
% \]
% Schur complement yields that small eigenvalues satisfy
% \[
% (\delta G-\delta^2E\Lambda^{-1}E^\T)x = \lambda x + \mathcal O(\delta^3).
% \]

% Finally, the existence of a transverse $\Theta(\delta)$ positive eigenvalue follows from linearizing the full dynamics
% (Eq.~\eqref{equ::linearized_dynamics} in the main text) and using that
% $\partial \mathcal L/\partial M = \Theta(\delta)$ at the perturbed point whereas
% $\partial \mathcal L/\partial \Phi=\mathcal O(\delta^2)$.
% A detailed block-reduction argument is given in Lemma~\ref{lem:fast_transverse_app}.

Finally, we compute the eigenvalue of the normal directions. Recall the linearization of the whole dynamics is 
    \begin{equation}
    \label{equ::linearized_dynamics}
        \left\{
        \begin{aligned}
            \frac{\mathrm{d} \Delta W_0}{\mathrm{d} t} &= \Delta \left( - \frac{\partial \mathcal{L}}{\partial M} W_1^\T - \frac{\partial \mathcal{L}}{\partial \Phi} W_0 W_K W_Q^{\T} - \left(\frac{\partial \mathcal{L}}{\partial \Phi}\right)^{\T} W_0 W_Q W_K^{\T} \right) \\
            \frac{\mathrm{d} \Delta W_1}{\mathrm{d} t} &= \Delta \left(- W_0^{\T} \frac{\partial \mathcal{L}}{\partial M} \right) \\
            \frac{\mathrm{d} \Delta W_Q}{\mathrm{d} t} &=  \Delta \left(- W_0^{\T} \frac{\partial \mathcal{L}}{\partial \Phi} W_0 W_K \right)\\
            \frac{\mathrm{d} \Delta W_K}{\mathrm{d} t} &= \Delta \left(- W_0^{\T} \left(\frac{\partial \mathcal{L}}{\partial \Phi} \right)^{\T} W_0 W_Q   \right) \\
        \end{aligned}
    \right.
    \end{equation}
We find that 
\begin{equation}
    \left\{ 
    \begin{aligned}
        \frac{\mathrm{d} \Delta W_0 \alpha_{1,\perp}}{\mathrm{d} t} &= - \frac{\partial \mathcal{L}}{\partial M} \Delta W_1^\T \alpha_{1, \perp} - \frac{\partial \mathcal{L}}{\partial \Phi} W_0 W_K \Delta W_Q^\T \alpha_{1,\perp} - \left(\frac{\partial \mathcal{L}}{\partial \Phi} \right)^\T W_0 W_Q \Delta W_K^\T \alpha_{1,\perp} \\ 
        \frac{\mathrm{d} \alpha_{1,\perp}^\T \Delta W_1 }{\mathrm{d} t} &= - \alpha_{1,\perp}^\T \Delta W_0^\T \frac{\partial \mathcal{L}}{\partial M} \\
        \frac{\mathrm{d} \alpha_{1,\perp}^\T \Delta W_Q}{\mathrm{d} t} &=   -  \Delta W_0^{\T} \frac{\partial \mathcal{L}}{\partial \Phi} W_0 W_K \\
        \frac{\mathrm{d} \alpha_{1,\perp}^\T \Delta W_K}{\mathrm{d} t} &=  -  \Delta W_0^{\T} \left(\frac{\partial \mathcal{L}}{\partial \Phi} \right)^{\T} W_0 W_Q    \\
    \end{aligned}
    \right.
\end{equation}
Since $\partial_\delta \frac{\partial \mathcal{L}}{\partial \Phi} = 0$ and $\mathrm{d} \frac{\partial \mathcal{L}}{\partial \Phi} = 0$, we find that $\frac{\partial \mathcal{L}}{\partial \Phi} = \mathcal{O} (\delta^2)$. So the main term is 
\begin{equation}
    \left\{ 
    \begin{aligned}
        \frac{\mathrm{d} \Delta W_0 \alpha_{1,\perp}}{\mathrm{d} t} &= - \frac{\partial \mathcal{L}}{\partial M} \Delta W_1^\T \alpha_{1, \perp}  \\ 
        \frac{\mathrm{d} \alpha_{1,\perp}^\T \Delta W_1 }{\mathrm{d} t} &= - \alpha_{1,\perp}^\T \Delta W_0^\T \frac{\partial \mathcal{L}}{\partial M} 
    \end{aligned}
    \right.
\end{equation}
From Lemma \ref{lem:fast_transverse_app}, under some mild condition,  We find that 
\begin{equation}
    \frac{\partial \mathcal{L}}{\partial M} = \Theta (\delta)
\end{equation}
Thus, there exists positive eigenvalue at least order $\Theta(\delta)$.
\end{proof}

%%%%%%%%%%%%%%%%%%%%%%%%%%%%%%%%%%%%%%%%%%%%%%%%%%%%%%%%%%%%%%%%%%%%%%%%%%%%%%%
\subsection{Derivative toolbox}
\label{app:sec44:derivatives}

This section collects all derivative computations referenced in the proofs above.

\subsubsection{Vanishing of gradients}
\label{app:sec44:grad_vanish}

\begin{lemma}[$\partial\mathcal L/\partial M=0$ and $\partial\mathcal L/\partial \Phi=0$ at the constructed point]
\label{lem:grad_sym_zero_app}
At the symmetric degenerate minimum in Proposition~\ref{prop:bad_local_min_main},
we have
\[
\frac{\partial \mathcal L}{\partial M}=0,\qquad \frac{\partial \mathcal L}{\partial \Phi}=0.
\]
\end{lemma}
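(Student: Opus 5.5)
The plan is to substitute the defining properties of the constructed point into the population gradients \eqref{equ::gradient} and check that every summand cancels. The point has $d=3$ and lies on $\mathcal W$, with symmetric low-frequency coordinates $\gamma_2=\gamma_3$, $\beta_2=\beta_3$. It satisfies $\mathbb A_1=e_1^\T$, $\mathbb A_2=\mathbb A_3=\hat e_1^\T$ with $\hat e_1=(0,\tfrac12,\tfrac12)^\T$ (in the limiting sense of the construction), $\mathbb P_1=P_1$, and $\mathbb P_2=\mathbb P_3=\tfrac12(P_2+P_3)$.

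\textbf{The gradient in $M$.} Write
\[
-\frac{\partial\mathcal L}{\partial M}=\pi_1 e_1(P_1-\mathbb P_1)+\sum_{i=2,3}\pi_i\,\hat e_1(P_i-\mathbb P_i).
\]
The first term vanishes because $\mathbb P_1=P_1$. Under perfect symmetry $\pi_2=\pi_3$, so the remaining two terms combine into $\pi_2\hat e_1\big(P_2+P_3-2\cdot\tfrac12(P_2+P_3)\big)=0$. Hence $\partial\mathcal L/\partial M=0$.

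\textbf{The gradient in $\Phi$.} Each summand of $\partial\mathcal L/\partial\Phi$ has the form $\pi_i e_i(P_i-\mathbb P_i)M^\T\Var(\mathbb A_i)$. For $i=1$ it vanishes because $P_1=\mathbb P_1$; independently, $\Var(e_1^\T)=0$. For $i=2,3$ the two terms are not merged, since they carry different $e_i$, so each must be handled separately. Using $M=\gamma\beta^\T$ on $\mathcal W$, we have $(P_i-\mathbb P_i)M^\T\Var(\mathbb A_i)=\big((P_i-\mathbb P_i)\beta\big)\gamma^\T\Var(\hat e_1^\T)$. Compute $\Var(\hat e_1^\T)\gamma=\tfrac14(0,\gamma_2-\gamma_3,\gamma_3-\gamma_2)^\T$, which is zero when $\gamma_2=\gamma_3$. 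So each $i=2,3$ term vanishes individually, and $\partial\mathcal L/\partial\Phi=0$.

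\textbf{Main obstacle.} The real difficulty is that $\mathbb A_1=e_1^\T$ and $\mathbb A_{i\neq1}=\hat e_1^\T$ hold only as $\eta\to\infty$. At finite $\eta$ we still have $\mathbb A_2=\mathbb A_3$ by symmetry, so the residual in $\partial\mathcal L/\partial M$ becomes $\pi_1\mathbb A_1^\T(P_1-\mathbb P_1)+2\pi_2\mathbb A_2^\T\big(\tfrac{P_2+P_3}{2}-\mathbb P_2\big)$. This residual vanishes exactly if $\gamma,\beta$ are chosen so that $\mathbb P_1=P_1$ and $\mathbb P_2=\tfrac12(P_2+P_3)$ hold at that finite $\eta$.

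I would handle this by solving these two scalar matching conditions, in the two-group coordinates $m_1\Delta\beta$ and $m_2\Delta\beta$ (cf.\ \eqref{eq:phat_def}), at finite large $\eta$. A one-dimensional intermediate value argument per row, as in Proposition~\ref{prop::second_critical_point}, should give the solution. Concretely, $\hat p_i=\sigma(m_i\Delta\beta-\log 2)$ ranges over $(0,1)$, and $m_1\neq m_2$ because $\gamma_1\gamma_{i\neq1}<0$.

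With $\mathbb P$ matched exactly, the $M$-gradient argument above goes through verbatim. The $\Phi$-gradient argument does not need the limit at all, since $\Var(\mathbb A_i)\gamma\propto\Delta\gamma$ multiplies $(P_i-\mathbb P_i)\beta=r_i\Delta\beta$, and $r_1=0$. The remaining $i=2,3$ contributions combine as $r_2+r_3=0$ after symmetrization: $r_i$ depends on $P_{i,1}$, which equals $(1-\lambda)\pi_1$ for both $i=2,3$, and $\mathbb P_{2,1}=\mathbb P_{3,1}$. Checking this pairing is the step I expect to need the most care.
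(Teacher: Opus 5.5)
Your proof is correct and follows the paper's route: you substitute $\mathbb P_1=P_1$, $\mathbb P_2=\mathbb P_3=\tfrac12(P_2+P_3)$ and $\mathbb A_2=\mathbb A_3$ into the population gradients and cancel using $\pi_2=\pi_3$. Your termwise handling of $\partial\mathcal L/\partial\Phi$ is in fact the correct justification where the paper's ``cancel after summing'' is loose, since the $i=2,3$ terms sit in different rows $e_2,e_3$: each row vanishes on its own because $(P_i-\mathbb P_i)\beta=0$ (i.e.\ $r_2=r_3=0$, not merely $r_2+r_3=0$), or, in the limit, because $\Var(\hat e_1^\T)\gamma=0$.
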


\begin{proof}
By direct computation, we get  $\mathbb P_1=P_1$ and $\mathbb P_2=\mathbb P_3=\frac12(P_2+P_3)$. Thus, the terms in the gradients cancel
after summing with $\pi_2=\pi_3$.
\end{proof}

\subsubsection{First-order variations}
\label{app:sec44:first_variations}

\begin{lemma}[First-order variations with respect to parameters]
\label{lem:dA_dP_app}
At the critical point in Proposition~\ref{prop:bad_local_min_main}, the first-order variations have the following form:
\begin{enumerate}
    \item The variation of the attention proxy satisfies $\mathrm d\mathbb A_1=0$ and $
\mathrm d\mathbb A_{i\neq 1}
= \eta\gamma_{i\neq 1}
\left(0,\tfrac14(\mathrm d\gamma_2-\mathrm d\gamma_3),-\tfrac14(\mathrm d\gamma_2-\mathrm d\gamma_3)\right)
$ for $i \neq 1$.
    \item The variation of the output probability satisfies $\mathrm{d} \mathbb{P}_i = \mathbb{A}_i \mathrm{d} M \Var(\mathbb{P}_i)$.
    \item The variation of $\frac{\partial \mathcal{L}}{\partial M}$ and $\frac{\partial \mathcal{L}}{\partial \Phi}$ admit the following expression:
    \begin{equation*}
        \mathrm{d} \frac{\partial \mathcal{L}}{\partial M} = \sum_i \pi_i \mathbb{A}_i^\T \mathrm{d} \mathbb{P}_i,\qquad \mathrm{d} \frac{\partial \mathcal{L}}{\partial \Phi} = 0.
    \end{equation*}
\end{enumerate}
\end{lemma}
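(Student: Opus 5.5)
We will verify the three items of Lemma~\ref{lem:dA_dP_app} by differentiating the explicit formulas for $\mathbb A_i$, $\mathbb P_i$ and the gradient expressions in \eqref{equ::gradient}, and then evaluating everything at the symmetric critical point of Proposition~\ref{prop:bad_local_min_main}. We work on the rank-one manifold with $d=3$, so that $\Phi=\eta\gamma\gamma^\T$ and $M=\gamma\beta^\T$. At this point $\gamma_2=\gamma_3$, $\beta_2=\beta_3$, $\eta$ is large, $\mathbb A_1\approx e_1^\T$ and $\mathbb A_{i\neq1}\approx\hat e_1=(0,\tfrac12,\tfrac12)$.

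\textbf{Item 1.} We use the identity already derived in the proof of Proposition~\ref{prop:second_critical_point_property_rewrite}:
\[
\mathrm d\mathbb A_i=e_i^\T\,\mathrm d\Phi\,\Var(\mathbb A_i),
\qquad
\mathrm d\Phi=\mathrm d\eta\,\gamma\gamma^\T+\eta(\mathrm d\gamma\,\gamma^\T+\gamma\,\mathrm d\gamma^\T).
\]
\begin{enumerate}[leftmargin=*]
\item For $i=1$, $\mathbb A_1=e_1^\T$ (taken as the $\eta\to\infty$ limit), so $\Var(\mathbb A_1)=0$ and hence $\mathrm d\mathbb A_1=0$.
\item For $i\neq1$, $\Var(\hat e_1)=\tfrac14(0,1,-1)^\T(0,1,-1)$. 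Therefore only the component of $e_i^\T\mathrm d\Phi$ along $(0,1,-1)$ survives.
\item Since $\gamma_2=\gamma_3$, we have $\gamma^\T(0,1,-1)^\T=0$. This kills the $\mathrm d\eta$ term and the term $\eta\gamma_i\,\mathrm d\gamma^\T$ gives the only contribution. The result is $\eta\gamma_{i\neq1}\tfrac14(\mathrm d\gamma_2-\mathrm d\gamma_3)(0,1,-1)$, which is the claimed formula.
\end{enumerate}

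\textbf{Item 2.} Differentiating the softmax gives
\[
\mathrm d\mathbb P_i=\mathrm d(\mathbb A_iM)\,\Var(\mathbb P_i)=(\mathrm d\mathbb A_i\,M+\mathbb A_i\,\mathrm dM)\Var(\mathbb P_i).
\]
It suffices to show $\mathrm d\mathbb A_i\,M=0$. For $i=1$ this is immediate from Item 1. For $i\neq1$, Item 1 gives $\mathrm d\mathbb A_i\,M\propto(0,1,-1)\gamma\,\beta^\T=(\gamma_2-\gamma_3)\beta^\T=0$. This gives $\mathrm d\mathbb P_i=\mathbb A_i\,\mathrm dM\,\Var(\mathbb P_i)$.

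\textbf{Item 3, variation of $\partial\mathcal L/\partial M$.} By the product rule,
\[
\mathrm d\frac{\partial\mathcal L}{\partial M}=-\sum_i\pi_i\,\mathrm d\mathbb A_i^\T(P_i-\mathbb P_i)+\sum_i\pi_i\mathbb A_i^\T\,\mathrm d\mathbb P_i .
\]
The first sum vanishes:
\begin{enumerate}[leftmargin=*]
\item For $i=1$ it vanishes because $\mathrm d\mathbb A_1=0$.
\item For $i=2,3$, the two rows share the same $\mathrm d\mathbb A_i$ (Item 1, using $\gamma_2=\gamma_3$).
\item Combined with $\pi_2=\pi_3$, the terms $i=2,3$ together give $\pi_2\,\mathrm d\mathbb A_2^\T\big(P_2+P_3-2\mathbb P_{i\neq1}\big)$, which is $0$ because $\mathbb P_{i\neq1}=\tfrac12(P_2+P_3)$ (Lemma~\ref{lem:grad_sym_zero_app}).
\end{enumerate}

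\textbf{Item 3, variation of $\partial\mathcal L/\partial\Phi$.} Differentiate
\[
\frac{\partial\mathcal L}{\partial\Phi}=-\sum_i\pi_i e_i(P_i-\mathbb P_i)M^\T\Var(\mathbb A_i)
\]
and treat three kinds of terms separately.
\begin{enumerate}[leftmargin=*]
\item \emph{Terms containing $\mathrm d\mathbb P_i$, $\mathrm dM$ or $\mathrm d\Var(\mathbb A_i)$, for $i=1$.} The $\mathrm d\mathbb P_1$ and $\mathrm dM$ terms carry the factor $\Var(\mathbb A_1)=0$. The term with $\mathrm d\Var(\mathbb A_1)$ carries the factor $(P_1-\mathbb P_1)=0$.
\item \emph{Terms containing $(P_i-\mathbb P_i)$ undifferentiated, for $i\neq1$.} Here $P_i-\mathbb P_i=\pm\tfrac12(P_2-P_3)$, which is nonzero, so these terms do not vanish individually. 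We will instead show the right factor vanishes. First, $(P_2-P_3)M^\T=(P_2-P_3)\beta\,\gamma^\T$. Second, $(P_2-P_3)\beta=\lambda(\beta_2-\beta_3)=0$. Hence $(P_i-\mathbb P_i)M^\T=0$, and the same holds with $\mathrm dM^\T$ replacing $M^\T$ once we note $\gamma^\T\Var(\hat e_1)=0$.
\item \emph{Terms containing $\mathrm d\mathbb P_i$, for $i\neq1$.} These are $e_i\,\mathrm d\mathbb P_i\,M^\T\Var(\hat e_1)$. They vanish because $M^\T\Var(\hat e_1)=\beta\gamma^\T\Var(\hat e_1)=0$.
\end{enumerate}

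\textbf{Main obstacle.} The delicate step is item 2 of the $\partial\mathcal L/\partial\Phi$ variation, specifically the $\mathrm dM^\T$ term for $i\neq1$. It has the form $(P_i-\mathbb P_i)\,\mathrm dM^\T\,\Var(\hat e_1)$, and here $\mathrm dM^\T$ is a generic perturbation, so the cancellation via $\gamma^\T\Var(\hat e_1)=0$ is not automatic. Instead, the two terms $i=2$ and $i=3$ have opposite signs $\pm$ but lie in different rows $e_2$ and $e_3$, so they do not cancel against each other either. I would handle this by working strictly on the rank-one manifold, where $\mathrm dM=\mathrm d\gamma\,\beta^\T+\gamma\,\mathrm d\beta^\T$. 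The $\gamma\,\mathrm d\beta^\T$ piece dies by $\gamma^\T\Var(\hat e_1)=0$. The $\mathrm d\gamma\,\beta^\T$ piece dies by $(P_2-P_3)\beta=0$.

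Two further points need care. One is the $\eta\to\infty$ idealization that sets $\Var(\mathbb A_1)=0$ exactly. The other is that the $\mathrm d\Var(\mathbb A_i)$ terms for $i\neq1$ also need the factor $(P_i-\mathbb P_i)M^\T=0$, which item 2 above supplies.
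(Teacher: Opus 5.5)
Your proposal is correct and follows the paper's proof essentially step for step. You use $\mathrm d\mathbb A_i=e_i^\T\mathrm d\Phi\,\Var(\mathbb A_i)$ together with $\Var(e_1^\T)=0$ and $\gamma^\T(0,1,-1)^\T=0$ for item 1, $\mathrm d\mathbb A_{i\neq1}\gamma=0$ for item 2, and the $i=2,3$ cancellation via $\pi_2=\pi_3$ and $\mathbb P_{i\neq1}=\tfrac12(P_2+P_3)$ for $\mathrm d(\partial\mathcal L/\partial M)$. For $\mathrm d(\partial\mathcal L/\partial\Phi)$ you combine $(P_i-\mathbb P_i)\beta=0$ and $\gamma^\T\Var(\mathbb A_i)=0$ with the rank-one form $\mathrm dM^\T=\mathrm d\beta\,\gamma^\T+\beta\,\mathrm d\gamma^\T$, exactly as the paper does.

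Your ``main obstacle'' is resolved just as in the paper, and in fact the restriction to the manifold costs nothing. At a point with $W_0=\gamma\alpha_1^\T$ and $W_1=\alpha_1\beta^\T$, every first-order variation satisfies $\mathrm dM=\mathrm dW_0W_1+W_0\mathrm dW_1=(\mathrm dW_0\alpha_1)\beta^\T+\gamma(\mathrm dW_1^\T\alpha_1)^\T$, which already has this tangent form. So the cancellation you were worried about holds for arbitrary parameter variations, not only those along the rank-one manifold.
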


\begin{proof}
We calculate the first-order variation in sequence.
\begin{enumerate}
    \item At $\mathbb A_1=e_1$, we have
    $\diag(e_1)-e_1e_1^\T=0$, hence $\mathrm d\mathbb A_1=0$. For $i\neq 1$, using $\mathbb A_i=\hat e_1$ and $\Phi=\eta\gamma\gamma^\T$,
    \[
    e_i^\T \mathrm d\Phi
    = e_i^\T \mathrm d(\eta\gamma\gamma^\T)
    = \mathrm d(\eta\gamma_i\gamma^\T).
    \]
    Since $\Var(\mathbb{A}_{i \neq 1}) = \diag(\hat e_1)-\hat e_1\hat e_1^\T$ equals to
    \begin{equation*}
        \left( 
        \begin{array}{ccc}
            0 & 0 & 0 \\
            0 & \frac{1}{4} & - \frac{1}{4} \\
            0 & - \frac{1}{4} & \frac{1}{4}
        \end{array}
        \right),
    \end{equation*}
    we obtain the displayed vector form.
    \item By definition,
    \[
    \mathrm d\mathbb P_i
    = \mathrm d(\mathbb A_i M)\Var(\mathbb P_i)
    = (\mathrm d\mathbb A_i\,M+\mathbb A_i\,\mathrm dM)\Var(\mathbb P_i).
    \]
    Under $M=\gamma\beta^\T$,
    \[
    \mathrm d\mathbb A_i\,M = \mathrm d\mathbb A_i\,\gamma\beta^\T = (\mathrm d\mathbb A_i\,\gamma)\beta^\T.
    \]
    For $i=1$, $\mathrm d\mathbb A_1=0$, hence $\mathrm d\mathbb A_1 M=0$. For $i\neq 1$,
    $\mathrm d\mathbb A_{i\neq 1} = \eta \gamma_{i \neq 1} \left(0,\tfrac14(\mathrm d\gamma_2-\mathrm d\gamma_3),-\tfrac14(\mathrm d\gamma_2-\mathrm d\gamma_3)\right)$.
    Thus,
    \[
    \mathrm d\mathbb A_{i\neq 1}\,\gamma
    = \eta \gamma_{i \neq 1}(0,\tfrac14(\mathrm d\gamma_2-\mathrm d\gamma_3),-\tfrac14(\mathrm d\gamma_2-\mathrm d\gamma_3))\cdot(\gamma_1,\gamma_2,\gamma_3)
    = \tfrac14 \eta \gamma_{i \neq 1} (\mathrm d\gamma_2-\mathrm d\gamma_3)(\gamma_2-\gamma_3)=0,
    \]
    since $\gamma_2=\gamma_3$ at the symmetric point. Hence $\mathrm d\mathbb A_{i\neq 1}M=0$.
    Therefore $\mathrm d\mathbb P_i = \mathbb A_i\,\mathrm dM\,\Var(\mathbb P_i)$.
    
    Because $\mathbb A_2=\mathbb A_3=\hat e_1$ and $\Var(\mathbb P_2)=\Var(\mathbb P_3)$ under symmetry,
    we also have $\mathrm d\mathbb P_2=\mathrm d\mathbb P_3$.
    \item By definition of $\frac{\partial \mathcal{L}}{\partial M}$ and the chain rule,
    \begin{equation*}
        \mathrm{d} \frac{\partial \mathcal{L}}{\partial M} = - \sum_i \pi_i \mathrm d\mathbb A_i^\T (P_i-\mathbb P_i)
     - \sum_i \pi_i \mathbb A_i^\T \mathrm d(-\mathbb P_i)
    \end{equation*}
    At the symmetric point,
    $P_1-\mathbb P_1=0$ and $\sum_{i \neq 1} (P_i - \mathbb{P}_i) = 0$,
    while $\mathrm d\mathbb A_2=\mathrm d\mathbb A_3$ and $\pi_2=\pi_3$.
    Therefore the $i=2,3$ contributions cancel, giving
    $\sum_i \pi_i\,\mathrm d\mathbb A_i^\T(P_i-\mathbb P_i)=0$. It yields the claimed form.

    By the definition of $\frac{\partial \mathcal{L}}{\partial \Phi}$ and the chain rule, 
    \begin{equation*}
    \begin{aligned}
        \mathrm d\frac{\partial \mathcal L}{\partial \Phi} &= -\mathrm d\Big(\sum_i \pi_i\,e_i\,(P_i-\mathbb P_i)\,M^\T \Var(\mathbb A_i)\Big) \\
        &= - \sum_i \pi_i e_i \mathrm{d} (- \mathbb{P}_i) M^\T \Var(\mathbb{A}_i) - \sum_i \pi_i e_i (P_i - \mathbb{P}_i) \mathrm{d} M^\T \Var (\mathbb{A}_i) - \sum_i \pi_i e_i (P_i - \mathbb{P}_i) M^\T \mathrm{d} \Var(\mathbb{A}_i).
    \end{aligned}
    \end{equation*}
    Using the expression of $\Var(\mathbb{A}_i)$, we get $\gamma^\T \Var(\mathbb{A}_i) = 0$ since $\gamma_2 = \gamma_3$, which implies that the first term vanishes. Similarly, using the chain rule, we get $\mathrm{d} M^\T = \mathrm{d} \beta \gamma^\T + \beta \mathrm{d} \gamma^\T$. Combined with $(P_i - \mathbb{P}_i) \beta = 0$ and $\gamma^\T \Var(\mathbb{A}_i) = 0$, the second term vanishes. The third term vanishes due to the same reason.
\end{enumerate}
\end{proof}

% \begin{lemma}[Vanishing of $\mathrm d(\partial\mathcal L/\partial \Phi)$ at the symmetric degenerate point]
% \label{lem:dPhi_term_zero_app}
% At the constructed symmetric point, we have
% \[
% \mathrm d\frac{\partial \mathcal L}{\partial \Phi}
% = -\mathrm d\Big(\sum_i \pi_i\,e_i\,(P_i-\mathbb P_i)\,M^\T \Var(\mathbb A_i)\Big)=0.
% \]
% \end{lemma}

% \begin{proof}
% At the symmetric point, $P_1-\mathbb P_1=0$ and
% $P_2-\mathbb P_2=-(P_3-\mathbb P_3)$ with $\pi_2=\pi_3$.
% Moreover, $\mathbb A_2=\mathbb A_3=\hat e_1$ so $\Var(\mathbb A_2)=\Var(\mathbb A_3)$.
% Thus the $i=2,3$ contributions cancel already at zeroth order.

% For the differential, note that $\Var(\mathbb A_1)=0$ (since $\mathbb A_1=e_1$) so the $i=1$ term remains zero.
% For $i=2,3$, the variation $\mathrm d\mathbb P_i$ is identical by Lemma~\ref{lem:dP_simplify_app},
% and $\mathrm d\Var(\mathbb A_2)=\mathrm d\Var(\mathbb A_3)$ by symmetry, hence the linear terms also cancel.
% Therefore $\mathrm d(\partial\mathcal L/\partial\Phi)=0$.
% \end{proof}

\subsubsection{Hessian matrix $J_0$}
\label{app:sec44:J0}

\begin{lemma}[Computation of $J_0$ on the rank-one manifold]
\label{lem:J0_matrix_app}
At the critical point in Proposition \ref{prop:bad_local_min_main}, the linearization restricted to the rank-one manifold yields
the Hessian $J_0=-\nabla_\theta^2\mathcal L(\theta,0)$ in the matrix form:
\begin{equation}
\label{equ::J_0} 
J_0
= - \left( 
\begin{array}{cccc}
    c_1 & 0 & 0 & v_1  \\
    0 & c_2 & c_2 & v_2 \\
    0 & c_2 & c_2 & v_2 \\
    v_1^\T & v_2^\T & v_2^\T & C
\end{array}
\right)
\end{equation}
where $c_1 := \pi_1 \| \beta\|_{\Var(\mathbb{P}_{1})}^2$, $c_2 := \frac{1}{4} (1- \pi_1) \| \beta\|_{\Var(\mathbb{P}_{i \neq 1})}^2$, $v_1 := \pi_1 \gamma_1 \beta^\T \Var(\mathbb{P}_1)$, $v_2 := \frac{1}{2} (1-\pi_1) \gamma_{i \neq 1} \beta^\T \Var(\mathbb{P}_{ i\neq 1})$, and $C:= \pi_1 \gamma_1^2 \Var(\mathbb{P}_1) + (1-\pi_1) \gamma_{ i \neq 1}^2 \Var(\mathbb{P}_{ i \neq 1})$. Moreover, $J_0$ is negative semidefinite.
\end{lemma}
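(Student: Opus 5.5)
Since the Hessian of the loss is the Jacobian of the gradient, I would compute $J_0$ by linearizing the reduced flow $\dot\gamma=-\frac{\partial\mathcal L}{\partial M}\beta$, $\dot\beta=-(\frac{\partial\mathcal L}{\partial M})^\T\gamma$ at the constructed point. The $\Phi$-driven terms drop out at first order because, by Lemma~\ref{lem:dA_dP_app}, both $\frac{\partial\mathcal L}{\partial\Phi}$ and $\mathrm d\frac{\partial\mathcal L}{\partial\Phi}$ vanish there. By Lemma~\ref{lem:grad_sym_zero_app}, $\frac{\partial\mathcal L}{\partial M}=0$ at the point, so only the differential matters:
\[
\mathrm d\dot\gamma=-\mathrm d\Big(\frac{\partial\mathcal L}{\partial M}\Big)\beta,\qquad
\mathrm d\dot\beta=-\mathrm d\Big(\frac{\partial\mathcal L}{\partial M}\Big)^\T\gamma .
\]
Lemma~\ref{lem:dA_dP_app} gives
\[
\mathrm d\frac{\partial\mathcal L}{\partial M}=\sum_i\pi_i\mathbb A_i^\T\mathbb A_i\,\mathrm dM\,\Var(\mathbb P_i),\qquad \mathrm dM=\mathrm d\gamma\,\beta^\T+\gamma\,\mathrm d\beta^\T .
\]
Now substitute $\mathbb A_1=e_1$ and $\mathbb A_2=\mathbb A_3=\hat e_1=(0,\tfrac12,\tfrac12)$, and use $\pi_2+\pi_3=1-\pi_1$. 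This gives
\[
\mathrm d\frac{\partial\mathcal L}{\partial M}=\pi_1 e_1e_1^\T\mathrm dM\Var(\mathbb P_1)+(1-\pi_1)\hat e_1^\T\hat e_1\,\mathrm dM\Var(\mathbb P_{i\neq1}).
\]

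Next I would read off the blocks, using $e_1^\T\gamma=\gamma_1$ and $\hat e_1\gamma=\gamma_{i\neq1}$. In the $\gamma\gamma$ block, the term $e_1e_1^\T\mathrm d\gamma\,\beta^\T\Var(\mathbb P_1)\beta$ gives $c_1$ at entry $(1,1)$. The term $\hat e_1^\T\hat e_1\mathrm d\gamma$ gives the rank-one block $\tfrac14\begin{pmatrix}1&1\\1&1\end{pmatrix}$ on coordinates $2,3$, times $(1-\pi_1)\|\beta\|^2_{\Var(\mathbb P_{i\neq1})}$; this is $c_2$. For the $\gamma\beta$ block, the $\gamma\,\mathrm d\beta^\T$ part gives row $1$ equal to $\pi_1\gamma_1\beta^\T\Var(\mathbb P_1)=v_1$. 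Rows $2,3$ each equal $\tfrac12(1-\pi_1)\gamma_{i\neq1}\beta^\T\Var(\mathbb P_{i\neq1})=v_2$, since $\hat e_1^\T\hat e_1\gamma=\tfrac12\gamma_{i\neq1}(0,1,1)^\T$. For the $\beta\beta$ block, $\mathrm d\beta\mapsto\Var(\mathbb P_i)\mathrm d\beta\,\gamma^\T\mathbb A_i^\T\mathbb A_i\gamma$, which summed over $i$ gives $C$. The $\beta\gamma$ block is the transpose of the $\gamma\beta$ block, since $\Var$ is symmetric; this is consistent with $J_0$ being a Hessian. Together these give \eqref{equ::J_0}.

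For negative semidefiniteness I would avoid eigenvalue computations and write the quadratic form $-\langle(\mathrm d\gamma,\mathrm d\beta),J_0(\mathrm d\gamma,\mathrm d\beta)\rangle$ as a sum of squares:
\[
\pi_1\big\|\mathrm d\gamma_1\beta+\gamma_1\mathrm d\beta\big\|^2_{\Var(\mathbb P_1)}
+(1-\pi_1)\Big\|\tfrac{\mathrm d\gamma_2+\mathrm d\gamma_3}{2}\beta+\gamma_{i\neq1}\mathrm d\beta\Big\|^2_{\Var(\mathbb P_{i\neq1})}.
\]
This is exactly $\sum_i\pi_i\|\mathbb A_i\,\mathrm dM\|^2_{\Var(\mathbb P_i)}$, i.e.\ the Gauss--Newton form of the cross-entropy loss. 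The residual-curvature term $\sum_i\pi_i\,\mathrm d\mathbb A_i^\T(P_i-\mathbb P_i)$ vanishes here by the lemma, so no correction is needed. Each $\Var(\mathbb P_i)$ is a covariance matrix and hence positive semidefinite, so the form is nonnegative and $J_0\preceq0$.

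The step needing the most care is the bookkeeping in the off-diagonal $v_2$ entries and the factor $\tfrac14$ versus $\tfrac12$ coming from $\hat e_1$. Recognizing the sum-of-squares structure resolves this uniformly, so I would derive the block matrix from that quadratic form (by polarization) rather than entry by entry.
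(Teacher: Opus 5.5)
Your proposal is correct and follows the paper's proof essentially step for step. You linearize the reduced $(\gamma,\beta)$ flow, use $\partial\mathcal L/\partial M=0$ together with Lemma~\ref{lem:dA_dP_app} to reduce $\mathrm d(\partial\mathcal L/\partial M)$ to $\sum_i\pi_i\mathbb A_i^\T\mathbb A_i\,\mathrm dM\,\Var(\mathbb P_i)$, read off the blocks with $\mathbb A_1=e_1$ and $\mathbb A_{i\neq1}=\hat e_1$, and obtain $J_0\preceq 0$ from the same sum-of-squares identity the paper uses. One minor correction: the vanishing of $\partial\mathcal L/\partial\Phi$ itself comes from Lemma~\ref{lem:grad_sym_zero_app}, not from Lemma~\ref{lem:dA_dP_app}.
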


\begin{proof}
As shown in Eq. (\ref{equ::new_linearization}), the linearized dynamics on the rank-one manifold can be written as
\begin{equation*}
% \label{eq:lin_rank1_gamma_beta}
\begin{aligned}
\frac{\mathrm d \Delta\gamma}{\mathrm dt}
&= - \Big(\mathrm d\frac{\partial \mathcal L}{\partial M}\Big)\beta,\\
\frac{\mathrm d \Delta\beta}{\mathrm dt}
&= - \Big(\mathrm d\frac{\partial \mathcal L}{\partial M}\Big)^\T \gamma,\\
\frac{\mathrm d \Delta\eta}{\mathrm dt} &= 0,
\end{aligned}
\end{equation*}
where we used $\mathrm d(\partial \mathcal L/\partial \Phi)=0$ at the symmetric point. We calculate the Jacobian corresponding to $\frac{\mathrm{d} \Delta \gamma}{\mathrm{d} t}$ and $\frac{\mathrm{d} \Delta \beta}{\mathrm{d} t}$ respectively.

\begin{enumerate}
    \item The $\Delta\gamma$ equation. Using $\mathrm d\frac{\partial \mathcal L}{\partial M}
= \pi_1 \mathbb A_1^\T \mathrm d\mathbb P_1 + (1-\pi_1)\mathbb A_{i\neq 1}^\T \mathrm d\mathbb P_{i\neq 1}$
and $\mathbb A_1=e_1^\T$, $\mathbb A_{i\neq 1}=\hat e_1^\T=(0,\tfrac12,\tfrac12)$, we obtain
\begin{equation}
\label{eq:deltagamma_expand}
\begin{aligned}
\frac{\mathrm d \Delta \gamma}{\mathrm d t}
&= - \pi_1 \mathbb{A}_1^\T \mathrm{d} \mathbb{P}_1 \beta
   - (1 - \pi_1) \mathbb{A}_{i \neq 1}^\T \mathrm{d} \mathbb{P}_{i \neq 1} \beta  \\
&= - \pi_1 \mathbb{A}_1^\T
\Big( \mathrm{d} \gamma_1 \beta^\T \Var(\mathbb{P}_1) \beta
      + \gamma_1 \mathrm{d} \beta^\T \Var (\mathbb{P}_1) \beta \Big)\\
&\quad - (1- \pi_1) \mathbb{A}_{i \neq 1}^\T
\Big( \tfrac{1}{2}(\mathrm{d} \gamma_2 + \mathrm{d} \gamma_3)\, \beta^\T \Var(\mathbb{P}_{i \neq 1}) \beta
      + \gamma_{i \neq 1}\, \mathrm{d} \beta^\T \Var(\mathbb{P}_{i \neq 1}) \beta\Big),
\end{aligned}
\end{equation}
where we used the rank-one identity $\mathrm d M = \mathrm d(\gamma\beta^\T)
= (\mathrm d\gamma)\beta^\T + \gamma(\mathrm d\beta)^\T$ and
$\mathrm d\mathbb P_i = \mathbb A_i\,\mathrm dM\,\Var(\mathbb P_i)$ at the symmetric point.

Let $\mathrm d\theta := (\mathrm d\gamma_1,\mathrm d\gamma_2,\mathrm d\gamma_3,\mathrm d\beta)$,
where $\mathrm d\beta\in\mathbb R^3$.
Collecting the coefficients in~\eqref{eq:deltagamma_expand} gives the matrix form
\begin{equation}
\label{eq:J0_top_rows}
\frac{\mathrm d\Delta\gamma}{\mathrm dt}
=
-\left(
\begin{array}{cccc}
c_1 & 0 & 0 & v_1 \\
0& c_2 & c_2 & v_2 \\
0& c_2& c_2 & v_2
\end{array}
\right)\mathrm d\theta.
\end{equation}

\item The $\Delta\beta$ equation. Similarly,
\begin{equation}
\label{eq:deltabeta_expand}
\frac{\mathrm{d} \Delta \beta}{\mathrm{d} t}
= - \pi_1 \mathrm{d} \mathbb{P}_1^\T \mathbb{A}_1 \gamma
  - (1 - \pi_1) \mathrm{d} \mathbb{P}_{i \neq 1}^\T \mathbb{A}_{i \neq 1} \gamma.
\end{equation}
Using again $\mathrm d\mathbb P_i = \mathbb A_i\,\mathrm dM\,\Var(\mathbb P_i)$ and $\mathbb A_1=e_1^\T$,
$\mathbb A_{i\neq 1}=\hat e_1^\T$, we obtain the compact matrix form
\begin{equation}
\label{eq:J0_bottom_row}
\frac{\mathrm d\Delta\beta}{\mathrm dt}
=
-\left(
\begin{array}{cccc}
v_1^\T &
v_2^\T  &
v_2^\T&
C
\end{array}
\right)\mathrm d\theta.
\end{equation}
\end{enumerate}

Combining~\eqref{eq:J0_top_rows} and~\eqref{eq:J0_bottom_row}, the linearization reads
\[
\frac{\mathrm d}{\mathrm dt}
\binom{\Delta\gamma}{\Delta\beta}
=
J_0\,\mathrm d\theta,
\]
where $J_0$ is exactly the block matrix.

We now verify that $J_0$ is negative semidefinite.
Let $\mathrm d\theta=(\mathrm d\gamma_1,\mathrm d\gamma_2,\mathrm d\gamma_3,\mathrm d\beta)$ and define
\[
\mathrm d\gamma_+ := \tfrac12(\mathrm d\gamma_2+\mathrm d\gamma_3),
\qquad
\mathrm d\gamma_- := \tfrac12(\mathrm d\gamma_2-\mathrm d\gamma_3).
\]
A direct expansion of the quadratic form induced by~\eqref{equ::J_0} yields
\begin{equation}
\label{eq:J0_quadratic_form}
\mathrm d\theta^\T J_0\,\mathrm d\theta
=
-\pi_1\,\big\|\,\mathrm d\gamma_1\,\beta+\gamma_1\,\mathrm d\beta\,\big\|^2_{\Var(\mathbb P_1)}
-(1-\pi_1)\,\big\|\,\mathrm d\gamma_+\,\beta+\gamma_{i\neq 1}\,\mathrm d\beta\,\big\|^2_{\Var(\mathbb P_{i\neq 1})}.
\end{equation}
Indeed, for the $i=1$ block one checks
\[
-\pi_1\Big(
(\mathrm d\gamma_1)^2 \beta^\T\Var(\mathbb P_1)\beta
+2\gamma_1\,\mathrm d\gamma_1\,\mathrm d\beta^\T\Var(\mathbb P_1)\beta
+\gamma_1^2\,\mathrm d\beta^\T\Var(\mathbb P_1)\mathrm d\beta
\Big)
= -\pi_1\|\mathrm d\gamma_1\beta+\gamma_1\mathrm d\beta\|^2_{\Var(\mathbb P_1)}.
\]
For the low-frequency block, the coefficients
$\frac14(1-\pi_1)$ in the $(\mathrm d\gamma_2,\mathrm d\gamma_3)$-submatrix imply
\[
-\frac14(1-\pi_1)\|\beta\|_{\Var(\mathbb P_{i\neq 1})}^2
\big(\mathrm d\gamma_2+\mathrm d\gamma_3\big)^2
=-(1-\pi_1)\|\mathrm d\gamma_+\beta\|^2_{\Var(\mathbb P_{i\neq 1})},
\]
and the cross/$(\mathrm d\beta,\mathrm d\beta)$ terms match exactly the remaining pieces of
$-(1-\pi_1)\|\mathrm d\gamma_+\beta+\gamma_{i\neq 1}\mathrm d\beta\|^2_{\Var(\mathbb P_{i\neq 1})}$,
giving~\eqref{eq:J0_quadratic_form}.

Since $\Var(\mathbb P_1)\succeq 0$ and $\Var(\mathbb P_{i\neq 1})\succeq 0$, the right-hand side of
\eqref{eq:J0_quadratic_form} is always non-positive, hence $J_0\preceq 0$.
Moreover, $\mathrm d\gamma_-$ does not appear in~\eqref{eq:J0_quadratic_form}, which already produces
a nontrivial kernel direction; additional kernel directions arise from the scaling invariance
$(\mathrm d\gamma,\mathrm d\beta)\propto(-\gamma,\beta)$ on the rank-one parametrization.
Therefore, the equilibrium is a \emph{degenerate} local minimum restricted to the rank-one manifold.
\end{proof}

\subsubsection{Computation of $f_1$ and $Q_R^\T J_0 Q_R$ for the range equation}
\label{app:sec44:f1}

\begin{lemma}[Computation of $f_1$ and $Q_R^\T f_1$]
\label{lem:QR_f1}
At the symmetric rank-one critical point, we have
\[
\partial_\delta \frac{\partial \mathcal L}{\partial \Phi}=0,
\qquad
-\partial_\delta\!\left(\frac{\partial \mathcal L}{\partial M}\right)\beta = 0,
\qquad
-\left(\partial_\delta\!\left(\frac{\partial \mathcal L}{\partial M}\right)\right)^\T\!\gamma
=
\big(\lambda \gamma_{i\neq 1}+(1-\lambda)(\pi_1\gamma_1+(1-\pi_1)\gamma_{i\neq1})\big)\,(0,1,-1)^\T.
\]
Consequently,
\begin{equation}
\label{eq:f1_expression}
f_1=
\big(\lambda \gamma_{i\neq 1}+(1-\lambda)(\pi_1\gamma_1+(1-\pi_1)\gamma_{i\neq1})\big) 
\left(
\begin{array}{c}
0\\0\\0\\0\\1\\-1
\end{array}
\right),
\end{equation}
and for the range basis $Q_R=(q_1,q_2,q_3)$ with
$q_2=\frac{1}{\sqrt2}(0,0,0,0,1,-1)$,
\begin{equation}
\label{eq:QR_f1}
Q_R^\T f_1
=
\left(
\begin{array}{c}
0\\[1mm]
\sqrt{2}\big(\lambda \gamma_{ i \neq 1} + (1-\lambda) (\pi_1 \gamma_1 + (1- \pi_1) \gamma_{i \neq 1})\big)\\[1mm]
0
\end{array}
\right).
\end{equation}
\end{lemma}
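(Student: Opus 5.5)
We will compute $f_1=\partial_\delta(-\nabla_\theta\mathcal L)$ at the symmetric degenerate point of Proposition~\ref{prop:bad_local_min_main}. On the rank-one manifold the $\gamma$-component of $-\nabla\mathcal L$ is $-\frac{\partial\mathcal L}{\partial M}\beta-\eta\big(\frac{\partial\mathcal L}{\partial\Phi}+(\frac{\partial\mathcal L}{\partial\Phi})^\T\big)\gamma$ and the $\beta$-component is $-(\frac{\partial\mathcal L}{\partial M})^\T\gamma$. We therefore only need $\partial_\delta$ of the two driving matrices, holding $\theta$ fixed.

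The parameter $\delta$ enters in exactly two places: through the stationary weights $\pi_i$ (and hence $\mathbb A_i$, since $\mathbb A_{i,j}\propto\pi_j e^{\Phi_{ij}}$), and through the transition rows $P_i=\lambda e_i^\T+(1-\lambda)\pi^\T$. Write $\partial_\delta\pi=(0,1,-1)^\T=:w$.

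\textbf{Step 1: $\partial_\delta\frac{\partial\mathcal L}{\partial\Phi}=0$.} Differentiate $-\sum_i\pi_i e_i(P_i-\mathbb P_i)M^\T\Var(\mathbb A_i)$ term by term, reusing the cancellations from the proof of Lemma~\ref{lem:dA_dP_app}. At the point we have $\mathbb A_1=e_1$, $\mathbb A_{2,3}=\hat e_1$, $\gamma_2=\gamma_3$ and $M^\T\Var(\mathbb A_i)=\beta\gamma^\T\Var(\mathbb A_i)$.
\begin{itemize}
\item $\gamma^\T\Var(\mathbb A_i)=0$ for every $i$: for $i=1$ because $\Var(e_1)=0$, and for $i=2,3$ because $\gamma_2=\gamma_3$. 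This kills every term in which $\Var(\mathbb A_i)$ is not differentiated.
\item $\partial_\delta\mathbb A_1=0$, since $\mathbb A_1$ is a saturated softmax with $\eta$ large.
\item For $i=2,3$, $\partial_\delta\mathbb A_i$ is supported on coordinates $2,3$ and points along $w$. Then $\gamma^\T\partial_\delta\Var(\mathbb A_i)$ is proportional to $\gamma_2-\gamma_3=0$.
\end{itemize}

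\textbf{Step 2: $\partial_\delta\frac{\partial\mathcal L}{\partial M}$.} Here $\frac{\partial\mathcal L}{\partial M}=-\sum_i\pi_i\mathbb A_i^\T(P_i-\mathbb P_i)$, and $\mathbb P_i$ depends on $\delta$ only through $\mathbb A_i M$. Since $\partial_\delta\mathbb A_i\,\gamma\propto\gamma_2-\gamma_3=0$, we get $\partial_\delta\mathbb P_i=0$. The remaining terms are:
\begin{enumerate}
\item $(\partial_\delta\pi_i)\mathbb A_i^\T(P_i-\mathbb P_i)$, which is $\mathbb A_2^\T\big[(P_2-\mathbb P_2)-(P_3-\mathbb P_3)\big]=\lambda\hat e_1^\T w^\T$, using $\mathbb P_2=\mathbb P_3$;
\item $\pi_i(\partial_\delta\mathbb A_i)^\T(P_i-\mathbb P_i)$;
\item $\pi_i\mathbb A_i^\T\partial_\delta P_i=\pi_i\mathbb A_i^\T(1-\lambda)w^\T$.
\end{enumerate}
Every term has the form $v\,w^\T$ with $w^\T\perp\mathbf 1$. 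The first part of the lemma needs $w^\T\beta=0$, i.e.\ $\beta_2=\beta_3$, which holds at the symmetric point. This gives $-\partial_\delta(\frac{\partial\mathcal L}{\partial M})\beta=0$.

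For the third identity we compute $-(\partial_\delta\frac{\partial\mathcal L}{\partial M})^\T\gamma=w\sum(\ldots)^\T\gamma$, using $\mathbb A_1\gamma=\gamma_1$ and $\hat e_1\gamma=\gamma_{i\ne1}$.
\begin{itemize}
\item Term 3 contributes $(1-\lambda)\big(\pi_1\gamma_1+(1-\pi_1)\gamma_{i\ne1}\big)$.
\item Term 1 contributes $\lambda\gamma_{i\ne1}$ (with an overall factor to be tracked).
\item Term 2 pairs $\partial_\delta\mathbb A_i^\T$ with $\gamma$, giving a factor $\propto\gamma_2-\gamma_3=0$.
\end{itemize}
Summing gives the stated coefficient. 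The coordinates of $f_1$ are then immediate: zero in $\gamma$, and the coefficient times $(1,-1)$ in $(\beta_2,\beta_3)$.

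\textbf{Step 3: projection onto the range.} We take inner products with the range basis of Proposition~\ref{prop:kernel_range_app}. Since $q_1$ and $q_3$ both pair $(0,1,-1)$ against $(\beta,\text{or }0)$ with $\beta_2=\beta_3$, their components vanish. The $q_2$-component is $\sqrt2$ times the coefficient.

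\textbf{Main obstacle.} The delicate part is bookkeeping rather than concept. We must get the sign and normalization of term 1 right, since $\mathbb P_2-\mathbb P_3$ cancels only because $\mathbb P_2=\mathbb P_3=\frac12(P_2+P_3)$. We must also confirm that the $\pi$-weights inside $\mathbb A_i$ remain saturated, so that $\partial_\delta\mathbb A_1=0$. We expect term 1's factor to emerge as $\lambda\gamma_{i\neq1}$ after using $P_2-P_3=\lambda(e_2-e_3)^\T$ at $\delta=0$. If the constants disagree, we would recheck whether the $\pi_2,\pi_3$ prefactors contribute $\tfrac{1-\pi_1}{2}$ factors that recombine.
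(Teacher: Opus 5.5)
Your route is the same as the paper's: you differentiate the explicit formulas for $\partial\mathcal L/\partial M$ and $\partial\mathcal L/\partial\Phi$ term by term. You use $\partial_\delta\mathbb A_1=0$, $\partial_\delta\mathbb P_i=0$ (via $(\partial_\delta\mathbb A_i)\gamma=0$) and $\gamma^\T\Var(\mathbb A_i)=0$, then read off $f_1$ and project onto $q_1,q_2,q_3$. Your coefficient $\lambda\gamma_{i\neq1}+(1-\lambda)(\pi_1\gamma_1+(1-\pi_1)\gamma_{i\neq1})$ comes out exactly. The hedge in your last paragraph is unnecessary: term~1 carries $\partial_\delta\pi_i=\pm1$ and no $\pi_2,\pi_3$ prefactor, so $\lambda\gamma_{i\neq1}$ is already exact.

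One justification is wrong, though. For $-\partial_\delta(\partial\mathcal L/\partial M)\beta=0$ you assert that every term has the form $v\,w^\T$. Term~2, $\sum_i\pi_i(\partial_\delta\mathbb A_i)^\T(P_i-\mathbb P_i)$, instead has the form $w\,u^\T$, with $w$ on the left, so $w^\T\beta=0$ does not kill it. Your observation $(\partial_\delta\mathbb A_i)\gamma=0$ only disposes of term~2 in the transposed identity.

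The fix is the paper's. Since $\pi_2=\pi_3$ and $\partial_\delta\mathbb A_2=\partial_\delta\mathbb A_3$ (rows $2,3$ of $\Phi=\eta\gamma\gamma^\T$ coincide), term~2 equals $\pi_2(\partial_\delta\mathbb A_2)^\T\big[(P_2-\mathbb P_2)+(P_3-\mathbb P_3)\big]$. This vanishes because $\mathbb P_2=\mathbb P_3=\tfrac12(P_2+P_3)$. Alternatively, note that $(P_i-\mathbb P_i)\beta=(P_{i,1}-\mathbb P_{i,1})\Delta\beta=0$ at this point. With that line added, the argument is complete.
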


\begin{proof}
We differentiate the explicit gradient formula with respect to $\delta$. We calculate the partial derivatives of $\frac{\partial \mathcal{L}}{\partial M}$ and $\frac{\partial \mathcal{L}}{\partial \Phi}$ with respect to $\delta$, respectively.

\begin{enumerate}
    \item Computation of $\frac{\partial}{\partial \delta} \frac{\partial \mathcal{L}}{\partial M}$. By definition, 
    \begin{equation*}
    \begin{aligned}
        \frac{\partial}{\partial \delta} \frac{\partial \mathcal{L}}{\partial M} &=  \frac{\partial}{\partial \delta} \left( -\sum_i \pi_i \mathbb{A}_i^\T (P_i - \mathbb{P}_i)\right) \\
        &= -\sum_i \partial_\delta \pi_i \mathbb{A}_i^\T (P_i - \mathbb{P}_i)  - \sum_i \pi_i \partial_\delta \mathbb{A}_i^\T (P_i - \mathbb{P}_i) - \sum_i \pi_i \mathbb{A}_i^\T \partial_\delta (P_i - \mathbb{P}_i)
    \end{aligned}
    \end{equation*}
    
    Using $P_i = \lambda e_i^\T + (1- \lambda) \pi^\T$, the first term is computed as 
    \begin{equation*}
        -\sum_i \partial_\delta \pi_i \mathbb{A}_i^\T (P_i - \mathbb{P}_i) = - \mathbb{A}_{i \neq 1}^\T \left( (P_2 - \mathbb{P}_2) - (P_3 - \mathbb{P}_3) \right) = -\mathbb{A}_{i \neq 1}^\T (0, \lambda, -\lambda)
    \end{equation*}
    
    Since $\eta$ is sufficiently large and $\gamma_1 \gamma_{i \neq 1} < 0$, we get
    \begin{equation*}
        \partial_\delta \mathbb{A}_1 = (0,0,0), \quad \partial_\delta \mathbb{A}_{i \neq 1} = \left(0,\frac{1}{1-\pi_1},-\frac{1}{1-\pi_1} \right) 
    \end{equation*}
    Using $\sum_{i \neq 1} (P_i - \mathbb{P}_i) = 0$, the second term vanishes.

    For the last term, we get 
    \begin{equation*}
        \partial_\delta P_i = (0, 1- \lambda, - (1- \lambda)), \quad \partial_\delta \mathbb{P}_i = 0
    \end{equation*}
    As a result, 
    \begin{equation}
    \label{eq:delta_dLdM}
    \begin{aligned}
    \frac{\partial}{\partial \delta} \frac{\partial \mathcal{L}}{\partial M}
    &=
    - \left(\begin{array}{c}
    0\\[1mm]
    \frac{1}{2}\\[1mm]
    \frac{1}{2}
    \end{array}\right) (0, \lambda, -\lambda)
    -  \left(\begin{array}{c}
    \pi_1\\ \frac{1}{2} (1- \pi_1)\\  \frac{1}{2} (1- \pi_1)
    \end{array}\right) (0, 1-\lambda, -(1-\lambda)).
    \end{aligned}
    \end{equation}
    \item Computation of $\frac{\partial}{\partial \delta} \frac{\partial \mathcal{L}}{\partial \Phi}$. By definition,
    \begin{equation*}
    \begin{aligned}
       \frac{\partial}{\partial \delta} \frac{\partial \mathcal{L}}{\partial \Phi} &= \frac{\partial}{\partial \delta} \left(- \sum_i \pi_i e_i (P_i - \mathbb{P}_i) M^\T \Var(\mathbb{A}_i) \right)  \\
       &= - \sum_i \partial_\delta \pi_i e_i (P_i - \mathbb{P}_i) M^\T \Var(\mathbb{A}_i) - \sum_i \pi_i e_i \partial_\delta (P_i - \mathbb{P}_i) M^\T \Var(\mathbb{A}_i) - \sum_i \pi_i e_i (P_i - \mathbb{P}_i) M^\T \partial_\delta \Var(\mathbb{A}_i)
    \end{aligned}
    \end{equation*}
    Similar to the computation about $\frac{\partial}{\partial \delta} \frac{\partial \mathcal{L}}{\partial M}$, ones can check that $\frac{\partial}{\partial \delta} \frac{\partial \mathcal{L}}{\partial \Phi}$ vanishes.
\end{enumerate}

Multiplying \eqref{eq:delta_dLdM} by $\beta$ on the right yields zero because it is proportional to
$(0,1,-1)$ and $\beta_2=\beta_3$ at the symmetric point. Taking transpose and multiplying by $\gamma$
on the right yields a multiple of $(0,1,-1)^\T$, with the scalar coefficient
$\lambda \gamma_{i\neq 1}+(1-\lambda)(\pi_1\gamma_1+(1-\pi_1)\gamma_{i\neq1})$, which gives the stated
formula for $f_1$ in \eqref{eq:f1_expression} under the definition of $f_1$ in the expansion of
$-\nabla_\theta \mathcal L$.

Finally, \eqref{eq:QR_f1} follows from $q_2^\T f_1 = \sqrt2\cdot(\text{scalar})$ and
$q_1^\T f_1=q_3^\T f_1=0$ by orthogonality.
\end{proof}

\begin{lemma}[Structure of $Q_R^\T J_0 Q_R$ on the range]
\label{lem:QR_J0_QR}
Let $\Lambda_R:=Q_R^\T J_0 Q_R$. Then $\Lambda_R$ is nonsingular, and in particular,
\begin{equation}
\label{eq:LambdaR_22}
(\Lambda_R)_{22} = -c_1,
\qquad
c_1:=\pi_1 \gamma_1^2 \mathbb{P}_{1,2} + (1-\pi_1)\gamma_{i\neq 1}^2 \mathbb{P}_{i\neq 1,2} \;>\;0.
\end{equation}
Equivalently, $\Lambda_R$ has the block structure
\[
\Lambda_R
=
-\left(
\begin{array}{ccc}
* & 0 & *\\
0 & c_1 & 0\\
* & 0 & *
\end{array}
\right),
\]
where the starred entries are finite constants determined by the symmetric point, and are not needed
in Proposition~\ref{prop:zeta_range}.
\end{lemma}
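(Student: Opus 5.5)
We need to prove Lemma \ref{lem:QR_J0_QR}: the compression $\Lambda_R=Q_R^\T J_0 Q_R$ is nonsingular and its $(2,2)$ entry is $-c_1$, with zeros in the $(1,2),(2,1),(2,3),(3,2)$ slots. The plan is to use the explicit block form of $J_0$ from Lemma~\ref{lem:J0_matrix_app}, or equivalently the quadratic form \eqref{eq:J0_quadratic_form}. From that form one reads off
\[
\mathrm d\theta^\T J_0\,\mathrm d\theta=-\pi_1\|\mathrm d\gamma_1\beta+\gamma_1\mathrm d\beta\|^2_{\Var(\mathbb P_1)}-(1-\pi_1)\|\mathrm d\gamma_+\beta+\gamma_{i\neq1}\mathrm d\beta\|^2_{\Var(\mathbb P_{i\neq1})}.
\]
All entries of $\Lambda_R$ then follow by polarization, evaluated on $q_1,q_2,q_3$.

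\textbf{The middle row and column.} We use $q_2=\frac{1}{\sqrt2}(0,0,0;0,1,-1)$, which has $\mathrm d\gamma=0$ and $\mathrm d\beta=(0,1,-1)/\sqrt2$.
\begin{itemize}
\item \emph{Diagonal entry.} The form gives
\[
q_2^\T J_0 q_2=-\tfrac12\Big(\pi_1\gamma_1^2\|(0,1,-1)\|^2_{\Var(\mathbb P_1)}+(1-\pi_1)\gamma_{i\neq1}^2\|(0,1,-1)\|^2_{\Var(\mathbb P_{i\neq1})}\Big).
\]
\item \emph{Evaluating the seminorms.} At the symmetric point, $\mathbb P_{i,2}=\mathbb P_{i,3}$ for every $i$. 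For a probability vector $p$ with $p_2=p_3$, we have $(0,1,-1)\Var(p)(0,1,-1)^\T=p_2+p_3-(p_2-p_3)^2=2p_2$. This yields $(\Lambda_R)_{22}=-c_1$.
\item \emph{Positivity.} We get $c_1>0$ because $\gamma_1,\gamma_{i\neq1}\neq0$ and the softmax entries are positive.
\end{itemize}

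\textbf{Off-diagonal zeros.} The cross terms $q_2^\T J_0 q_j$ for $j=1,3$ involve $\beta^\T\Var(\mathbb P_i)(0,1,-1)^\T$ and $v^\T\Var(\mathbb P_i)(0,1,-1)^\T$ with $v=\beta$ or $\gamma$-type vectors, all of which have equal second and third entries. A vector with equal 2nd and 3rd entries is invariant under the transposition $\tau$ swapping coordinates 2 and 3. Since $\Var(\mathbb P_i)$ commutes with $\tau$ and $(0,1,-1)$ is $\tau$-odd, each such bilinear term vanishes.
\begin{itemize}
\item For $q_1$: the $\mathrm d\gamma$ part is $(-2\gamma_2,\gamma_1,\gamma_1)$, so $\mathrm d\gamma_-=0$ and $\mathrm d\beta=0$. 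The cross term is therefore a combination of $\beta^\T\Var(\mathbb P_i)(0,1,-1)^\T=0$.
\item For $q_3$: $\mathrm d\beta=\beta$, which is symmetric, giving zero in the same way.
\end{itemize}
So the middle row and column are zero apart from the diagonal, which is the claimed block structure.

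\textbf{Nonsingularity.} This is the step I expect to need most care. Since $J_0\preceq0$ and $Q_R$ spans $\operatorname{Ran}(J_0)=\ker(J_0)^\perp$ (the symmetric-matrix orthogonal decomposition, with $\ker J_0$ given in Proposition~\ref{prop:kernel_range_app}), the restriction $\Lambda_R$ is negative definite on the range, hence invertible. The obstacle is verifying that $q_1,q_2,q_3$ truly lie in $\ker(J_0)^\perp$ and that $\dim\ker J_0=3$.
\begin{itemize}
\item \emph{Orthogonality.} I would check $q_j\perp k_i$ directly, which is elementary. For $q_1\perp k_3$ and $q_3\perp k_3$, the hypotheses $\|\gamma\|=\|\beta\|$ and $\beta^\T\mathbf 1=0$ are used.
\item \emph{Injectivity.} I would then check that the quadratic form vanishes on no nonzero combination of the $q_j$. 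Vanishing would force $\mathrm d\gamma_1\beta+\gamma_1\mathrm d\beta\in\ker\Var(\mathbb P_1)=\mathrm{span}\,\mathbf 1$ and the analogous condition for the low-frequency block.
\item \emph{Solving.} These linear conditions, combined with orthogonality to $k_1,k_2,k_3$, leave only the zero solution because $\beta$ is not parallel to $\mathbf 1$ and $\gamma_1\neq\gamma_{i\neq1}$.
\end{itemize}
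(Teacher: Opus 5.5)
Your proposal is correct. For the $(2,2)$ entry you do what the paper does: substitute $q_2$ into the $\beta\beta$-block and use $q_2^\T\Var(\mathbb P_i)q_2=\mathbb P_{i,2}$ when $\mathbb P_{i,2}=\mathbb P_{i,3}$. Your parity argument under the swap of coordinates $2,3$ makes explicit why the rest of the middle row and column vanish, which the paper only asserts as ``obtained similarly.''

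The real difference is nonsingularity. The paper infers it from the block structure. That structure only reduces the question to invertibility of the starred $2\times 2$ block on $\mathrm{span}(q_1,q_3)$, which the paper never computes. Your use of $J_0\preceq 0$ closes this gap.

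Your injectivity step alone already suffices. Let $u,w$ be the coefficients of $q_1,q_3$, rescaled by their normalizations. The form on that span is $-\pi_1\|\beta\|^2_{\Var(\mathbb P_1)}(2\gamma_1 w-2\gamma_{i\neq1}u)^2-(1-\pi_1)\|\beta\|^2_{\Var(\mathbb P_{i\neq1})}(\gamma_1 u+2\gamma_{i\neq1}w)^2$. This is a sum of two rank-one negative terms whose directions have determinant $-(2\gamma_1^2+4\gamma_{i\neq1}^2)\neq 0$. So $\Lambda_R\prec 0$ follows without identifying $\mathrm{span}(Q_R)$ with $\operatorname{Ran}(J_0)$. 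The hypotheses $\|\gamma\|=\|\beta\|$ and $\beta^\T\mathbf 1=0$ are needed only for that identification, i.e.\ for the Lyapunov--Schmidt use in Proposition~\ref{prop:zeta_range}, not for the lemma itself.

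One small correction: the condition $\gamma_1\neq\gamma_{i\neq1}$ you cite plays no role. What is actually used is:
\begin{itemize}
\item $\gamma\neq 0$, which forces the $q_2$ coefficient to vanish and makes the determinant above nonzero;
\item $\beta_1\neq\beta_{i\neq1}$, so that $\beta$, $(0,1,-1)^\T$, $\mathbf 1$ are independent and $\|\beta\|_{\Var(\mathbb P_i)}>0$;
\item full support of each $\mathbb P_i$.
\end{itemize}
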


\begin{proof}
This follows by substituting the explicit expression of $J_0$ (computed from the linearization on the
rank-one manifold) into the orthonormal basis $Q_R=(q_1,q_2,q_3)$.

The key point is the $q_2$ direction. Recall
$q_2=\frac{1}{\sqrt2}(0,0,0,0,1,-1)$, i.e., it lies purely in the $\beta$-difference direction.
At the symmetric point, the $\beta$-block of $J_0$ equals
\[
J_{0,\beta\beta} = -\big(\pi_1\gamma_1^2 \Var(\mathbb P_1) + (1-\pi_1)\gamma_{i\neq 1}^2 \Var(\mathbb P_{i\neq 1})\big).
\]
A direct computation gives
\[
q_2^\T J_0 q_2
=
-\Big(\pi_1\gamma_1^2\, q_2^\T \Var(\mathbb P_1) q_2
+(1-\pi_1)\gamma_{i\neq 1}^2\, q_2^\T \Var(\mathbb P_{i\neq 1}) q_2\Big)
=
-\big(\pi_1\gamma_1^2 \mathbb P_{1,2}+(1-\pi_1)\gamma_{i\neq 1}^2 \mathbb P_{i\neq 1,2}\big),
\]
where we used $q_2^\T \Var(\mathbb P_i) q_2 = \mathbb P_{i,2}$ under the symmetric specialization
$\mathbb P_{i,2}=\mathbb P_{i,3}$ (hence $\Var(\mathbb P_i)$ acts diagonally on the $(2,-3)$ difference).
This proves \eqref{eq:LambdaR_22}. The remaining entries are obtained similarly and yield the stated
block structure, implying $\Lambda_R$ is invertible on the range.
\end{proof}

\subsubsection{Computation of cross term $J_1$}

\begin{lemma}[Derivation of the mixed operator $J_1$]
\label{lem:J1_proof}
Write $\theta=(\gamma,\beta)\in\mathbb{R}^3\times \mathbb{R}^3$, and view $J_1$ as a $2\times 2$ block
operator with respect to the $(\gamma,\beta)$-splitting. Then
\begin{equation}
\label{eq:J1_block_form}
J_1
=
- \left(
\begin{array}{cc}
J_{1,\gamma\gamma} & J_{1,\gamma\beta}\\
J_{1,\beta\gamma} & 0
\end{array}
\right)
+
\left(
\begin{array}{cc}
0 & A\\
A^\T & 0
\end{array}
\right),
\end{equation}
where
\begin{equation}
\label{eq:J1_blocks}
J_{1,\gamma\gamma}=
\left(
\begin{array}{ccc}
0 & 0 & 0\\
0 & \beta^\T \Var(\mathbb{P}_{i\neq 1})\beta & 0\\
0 & 0 & -\beta^\T \Var(\mathbb{P}_{i\neq 1})\beta
\end{array}
\right),
\qquad
J_{1,\gamma\beta}=
\left(
\begin{array}{c}
0\\
\gamma_{i\neq 1}\,\beta^\T \Var(\mathbb{P}_{i\neq 1})\\
-\gamma_{i\neq 1}\,\beta^\T \Var(\mathbb{P}_{i\neq 1})
\end{array}
\right),
\end{equation}
\[
J_{1,\beta\gamma}
=
\Big(
0,\;
\gamma_{i\neq 1}\Var(\mathbb{P}_{i\neq 1})\beta,\;
-\gamma_{i\neq 1}\Var(\mathbb{P}_{i\neq 1})\beta
\Big),
\]
and
\begin{equation}
\label{eq:A_matrix}
A=
\left(
\begin{array}{ccc}
0 & \pi_1(1-\lambda) & -\pi_1(1-\lambda)\\
0 & \frac12\big(\lambda+(1-\pi_1)(1-\lambda)\big) &
-\frac12\big(\lambda+(1-\pi_1)(1-\lambda)\big)\\
0 & \frac12\big(\lambda+(1-\pi_1)(1-\lambda)\big) &
-\frac12\big(\lambda+(1-\pi_1)(1-\lambda)\big)
\end{array}
\right).
\end{equation}
\end{lemma}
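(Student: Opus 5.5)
We need a plan for computing $J_1=\partial_\delta J_0$, i.e.\ the mixed derivative $\partial_\delta\big(-\nabla_\theta^2\mathcal L\big)$ at the symmetric degenerate point. The plan is to differentiate the linearized rank-one flow (Eq.~\eqref{equ::new_linearization}) with respect to $\delta$, instead of differentiating the loss three times. Concretely, we write $-\nabla_\theta\mathcal L(\theta,\delta)=\big(-\tfrac{\partial\mathcal L}{\partial M}\beta-\eta(\cdot)\gamma,\;-(\tfrac{\partial\mathcal L}{\partial M})^\T\gamma\big)$. We then take its first variation $\mathrm d$ in $(\gamma,\beta)$ and apply $\partial_\delta$ to the result.

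There are two sources of $\delta$-dependence. The first is the data, through $\pi_2,\pi_3$ and $P_i=\lambda e_i^\T+(1-\lambda)\pi^\T$. The second is the model quantities $\mathbb A_i$ and $\mathbb P_i$; by Lemma~\ref{lem:QR_f1}, $\partial_\delta\mathbb P_i=0$, $\partial_\delta\mathbb A_1=0$ and $\partial_\delta\mathbb A_{i\neq1}\propto(0,1,-1)$.

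The computation splits accordingly. For the $\Phi$-part, we use Lemma~\ref{lem:dA_dP_app} (which gives $\mathrm d\,\partial\mathcal L/\partial\Phi=0$) together with $\partial_\delta\partial\mathcal L/\partial\Phi=0$ from Lemma~\ref{lem:QR_f1}. The same cancellations, namely $\gamma^\T\Var(\mathbb A_i)=0$ and $(P_i-\mathbb P_i)\beta=0$, should persist after one $\delta$-derivative. So the $\Phi$-contribution to $J_1$ vanishes, and we only need $\mathrm d\,\partial_\delta(\partial\mathcal L/\partial M)$ together with $\partial_\delta(\partial\mathcal L/\partial M)$ paired with $\mathrm d\beta$ and $\mathrm d\gamma$.

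\textbf{Piece (a): the $A$-blocks.} These come from $-\partial_\delta(\partial\mathcal L/\partial M)\,\mathrm d\beta$ in the $\gamma$-row and $-(\partial_\delta\partial\mathcal L/\partial M)^\T\mathrm d\gamma$ in the $\beta$-row. Using \eqref{eq:delta_dLdM}, $\partial_\delta(\partial\mathcal L/\partial M)$ is exactly $-A$. Its rows are $\pi_1(1-\lambda)$ and $\tfrac12(\lambda+(1-\pi_1)(1-\lambda))$ times $(0,1,-1)$, which produces the symmetric off-diagonal pair $\begin{pmatrix}0&A\\A^\T&0\end{pmatrix}$.

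\textbf{Piece (b): the $J_{1,\cdot\cdot}$ blocks.} These come from $\partial_\delta$ of $\mathrm d\,\partial\mathcal L/\partial M=\sum_i\pi_i\mathbb A_i^\T\mathrm d\mathbb P_i$, with $\mathrm d\mathbb P_i=\mathbb A_i\,\mathrm dM\,\Var(\mathbb P_i)$. Differentiating in $\delta$ hits the weights $\pi_2=\tfrac{1-\pi_1}{2}+\delta$ and $\pi_3=\tfrac{1-\pi_1}{2}-\delta$, whose derivatives are $\pm1$. This yields
\[
\mathbb A_{i\neq1}^\T\big(\mathbb A_2\,\mathrm dM-\mathbb A_3\,\mathrm dM\big)\Var(\mathbb P_{i\neq1})
\]
with rows split along $e_2$ and $e_3$ with opposite signs. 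It also hits $\mathbb A_{i\neq1}$ on both sides of $\mathrm dM$.

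The derivative terms acting on the proxy attention need care. On the left factor, $\partial_\delta\mathbb A_{i\neq1}^\T\propto(0,1,-1)^\T$ gives row 2 minus row 3. On the right, $\partial_\delta\mathbb A_{i\neq1}\,\mathrm dM=\partial_\delta\mathbb A_{i\neq1}(\mathrm d\gamma\,\beta^\T+\gamma\,\mathrm d\beta^\T)$ involves $\mathrm d\gamma_2-\mathrm d\gamma_3$ and $\gamma_2-\gamma_3=0$. Evaluating against $\beta$ for the $\gamma$-equation and against $\gamma$ for the $\beta$-equation, with $\mathrm dM=\mathrm d\gamma\,\beta^\T+\gamma\,\mathrm d\beta^\T$, should give diagonal entries $\pm\beta^\T\Var(\mathbb P_{i\neq1})\beta$ in the $\gamma\gamma$ block. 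It should also give $\pm\gamma_{i\neq1}\beta^\T\Var(\mathbb P_{i\neq1})$ in the $\gamma\beta$ block and its transpose in the $\beta\gamma$ block. The $\beta\beta$ block vanishes because $\gamma^\T$-contractions of the antisymmetric $(0,1,-1)$ structure kill it, using $\gamma_2=\gamma_3$.

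\textbf{Main obstacle.} The hardest step is the bookkeeping in (b): deciding precisely which of the terms from $\partial_\delta\pi_i$, $\partial_\delta\mathbb A_i$ and $\partial_\delta P_i$ survive. Here I would first tabulate each term's action on the coordinates $\mathrm d\gamma_2\pm\mathrm d\gamma_3$ and $\mathrm d\beta_2\pm\mathrm d\beta_3$, using the symmetry $\gamma_2=\gamma_3$, $\beta_2=\beta_3$, $\mathbb P_{i,2}=\mathbb P_{i,3}$. I would then check that the surviving terms assemble into \eqref{eq:J1_blocks}, in particular that the normalizations $1/(1-\pi_1)$ from $\partial_\delta\mathbb A$ cancel against the $\pi_i$ weights to produce unit coefficients.
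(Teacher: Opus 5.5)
Your decomposition is the paper's own (its Steps 1--4): the $\Phi$-part drops out, the $A$-blocks come from $-\partial_\delta(\partial\mathcal L/\partial M)=A$ paired with $\mathrm d\beta$ and, transposed, with $\mathrm d\gamma$, and the $J_{1,\cdot\cdot}$ blocks come from $\partial_\delta\,\mathrm d(\partial\mathcal L/\partial M)$ contracted with $\beta$ and $\gamma$. Piece (a) and your $\Phi$-argument are fine.

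The bookkeeping you set up for piece (b) is wrong in two places. First, $\mathrm d\,\partial\mathcal L/\partial M=\sum_i\pi_i\mathbb A_i^\T\mathrm d\mathbb P_i$ holds only at $\delta=0$, so you cannot differentiate it in $\delta$. You must also differentiate $-\sum_i\pi_i\,\mathrm d\mathbb A_i^\T(P_i-\mathbb P_i)$. Its $\delta$-derivative is $-\big(\lambda+(1-\pi_1)(1-\lambda)\big)\,\mathrm d\mathbb A_{i\neq1}^\T(0,1,-1)$, which is a nonzero matrix. It drops out only after right-multiplication by $\beta$ (using $\beta_2=\beta_3$) and, transposed, against $\gamma$ (using $\mathrm d\mathbb A_{i\neq1}\gamma=0$). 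These are the paper's terms 1--3, and this is where your $\partial_\delta P_i$ actually enters. Second, the $\partial_\delta\pi_i$ term $\mathbb A_2^\T\mathrm d\mathbb P_2-\mathbb A_3^\T\mathrm d\mathbb P_3$ is identically zero, because $\mathbb A_2=\mathbb A_3$ and $\mathrm d\mathbb P_2=\mathrm d\mathbb P_3$ at the symmetric point (the paper's term 4). It does not produce rows split with opposite signs, and counting it as a source would corrupt the coefficients.

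All of $J_{1,\gamma\gamma}$, $J_{1,\gamma\beta}$, $J_{1,\beta\gamma}$ comes from $\partial_\delta\mathbb A_{i\neq1}=\frac{1}{1-\pi_1}(0,1,-1)$. The remaining pieces of $\partial_\delta\mathrm d\mathbb P_i$ vanish because $\partial_\delta\Var(\mathbb A_{i\neq1})=0$ and $\partial_\delta\Var(\mathbb P_i)=0$.

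\begin{itemize}
\item \emph{Left factor.} It gives $(0,1,-1)^\T\hat e_1\,\mathrm dM\,\Var(\mathbb P_{i\neq1})$. Against $\beta$ this is $(0,1,-1)^\T\big(\tfrac12(\mathrm d\gamma_2+\mathrm d\gamma_3)\|\beta\|^2_{\Var(\mathbb P_{i\neq1})}+\gamma_{i\neq1}\beta^\T\Var(\mathbb P_{i\neq1})\,\mathrm d\beta\big)$. Transposed against $\gamma$ it vanishes, since $(0,1,-1)\gamma=0$.
\item \emph{Right factor.} It gives $\hat e_1^\T(\mathrm d\gamma_2-\mathrm d\gamma_3)\beta^\T\Var(\mathbb P_{i\neq1})$. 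Against $\beta$ this is $(0,\tfrac12,\tfrac12)^\T(\mathrm d\gamma_2-\mathrm d\gamma_3)\|\beta\|^2_{\Var(\mathbb P_{i\neq1})}$. Transposed against $\gamma$ it is $(\mathrm d\gamma_2-\mathrm d\gamma_3)\gamma_{i\neq1}\Var(\mathbb P_{i\neq1})\beta$, which is $J_{1,\beta\gamma}$ and also shows the $\beta\beta$ block is zero.
\end{itemize}

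The diagonal form $\diag\big(0,\beta^\T\Var(\mathbb P_{i\neq1})\beta,-\beta^\T\Var(\mathbb P_{i\neq1})\beta\big)$ of $J_{1,\gamma\gamma}$ appears only after adding the $\tfrac12(\mathrm d\gamma_2+\mathrm d\gamma_3)$ piece from the left factor to the $\tfrac12(\mathrm d\gamma_2-\mathrm d\gamma_3)$ piece from the right factor. These blocks enter the vector field with the overall minus sign. The weight $\pi_2+\pi_3=1-\pi_1$ cancels the $1/(1-\pi_1)$, as you anticipated. With these two corrections your plan reproduces the lemma.
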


\begin{proof}
We compute the mixed differential
\[
J_1 \;=\; \partial_\delta\Big( \nabla_\theta\big[-\nabla_\theta \mathcal L(\theta,\delta)\big]\Big)\Big|_{(\theta,\delta)=(\theta_*,0)}.
\]
On the rank-one manifold, the $(\gamma,\beta)$-dynamics involve the two components
\[
-\frac{\partial \mathcal L}{\partial M}\beta,
\qquad
-\Big(\frac{\partial \mathcal L}{\partial M}\Big)^\T\gamma,
\]
while the $\Phi$-part does not contribute to $J_1$ at the symmetric point (see Step~2 below).
Therefore it suffices to compute
\[
\partial_\delta \nabla_\theta\!\Big(-\frac{\partial \mathcal L}{\partial M}\beta\Big),
\qquad
\partial_\delta \nabla_\theta\!\Big(-\Big(\frac{\partial \mathcal L}{\partial M}\Big)^\T\gamma\Big).
\]

We follow the same route as in the derivation of $J_0$:
we first compute $\mathrm d(\partial \mathcal L/\partial M)$ and $\mathrm d(\partial \mathcal L/\partial \Phi)$,
then take $\partial_\delta$ and finally reassemble the induced variation of the rank-one gradients.

\paragraph{Step 1: computing $\partial_\delta  \nabla_\theta(\partial \mathcal L/\partial M)$.}
Recall
\[
\frac{\partial \mathcal L}{\partial M}
=
-\sum_i \pi_i \mathbb A_i^\T (P_i-\mathbb P_i).
\]
Taking $\theta$-differential gives
\[
\mathrm{d} \frac{\partial \mathcal L}{\partial M}
=
-\sum_i \pi_i (\mathrm{d} \mathbb A_i^\T)(P_i-\mathbb P_i)
-\sum_i \pi_i \mathbb A_i^\T \mathrm{d}(P_i-\mathbb P_i),
\]
and since $\mathrm{d} P_i=0$, we have $\mathrm{d}(P_i-\mathbb P_i)=-\mathrm{d}\mathbb P_i$.
Differentiating w.r.t.\ $\delta$ and using the product rule yields
\begin{equation}
\label{eq:delta_d_dLdM_full}
\begin{aligned}
\partial_\delta \mathrm{d} \frac{\partial \mathcal L}{\partial M}
=&
-\sum_i (\partial_\delta \pi_i)\,(\mathrm{d}\mathbb A_i^\T)(P_i-\mathbb P_i)
-\sum_i \pi_i\,\partial_\delta \mathrm{d} \mathbb A_i^\T (P_i-\mathbb P_i)
-\sum_i \pi_i\,(\mathrm{d}\mathbb A_i^\T)\,\partial_\delta(P_i-\mathbb P_i)
\\
&-\sum_i (\partial_\delta\pi_i)\,\mathbb A_i^\T(-\mathrm{d}\mathbb P_i)
-\sum_i \pi_i\,(\partial_\delta \mathbb A_i^\T)(-\mathrm{d}\mathbb P_i)
-\sum_i \pi_i\,\mathbb A_i^\T\partial_\delta(-\mathrm{d}\mathbb P_i).
\end{aligned}
\end{equation}

We now analyze each term. (All computations are evaluated at the symmetric point.)

\begin{enumerate}
\item The term $-\sum_i (\partial_\delta \pi_i)\,(\mathrm{d}\mathbb A_i^\T)(P_i-\mathbb P_i)$:
using the structure of $\mathrm{d} \mathbb{A}_i$ and $(P_i - \mathbb{P}_i) \beta = 0$, its contribution vanishes when paired with $\beta$ and with $\gamma$, i.e.,
\[
\Big(-\sum_i (\partial_\delta \pi_i)\,(\mathrm{d}\mathbb A_i^\T)(P_i-\mathbb P_i)\Big)\beta=0,
\qquad
\Big(\cdot\Big)^\T \gamma=0.
\]

\item The term $-\sum_i \pi_i\,\partial_\delta \mathrm{d} \mathbb A_i^\T (P_i-\mathbb P_i)$: Since $\mathrm{d} \mathbb{A}_i = e_i^\T \Phi \Var(\mathbb{A}_i)$, we get $\partial_\delta \mathrm{d} \mathbb{A}_i = e_i^\T \Phi \left(\partial_\delta \Var(\mathbb{A}_i) \right)$. For $i = 1$, we have $\partial_\delta \Var(\mathbb{A}_1) = 0$ since $\mathbb{A}_1 = e_1^\T$. For $i \neq 1$,
\begin{equation*}
    \partial_\delta \Var(\mathbb{A}_i) = (1-\pi_1)\left[\left( \begin{array}{ccc}
        0 & 0 & 0 \\
        0 & 1 & 0 \\
        0 & 0 & - 1
    \end{array}\right) - \left( \begin{array}{ccc}
        0  \\
        1 \\
        - 1
    \end{array}\right) \left(0, \frac{1}{2}, \frac{1}{2}\right) - \left( \begin{array}{ccc}
        0  \\
        \frac{1}{2} \\
        \frac{1}{2}
    \end{array}\right) \left(0, 1, -1\right) \right] = 0
\end{equation*}
Hence this term is zero.

\item The term $-\sum_i \pi_i\,(\mathrm{d}\mathbb A_i^\T)\,\partial_\delta(P_i-\mathbb P_i)$:
since $\partial_\delta(P_i-\mathbb P_i)=\partial_\delta P_i$, we obtain
\[
-\sum_i \pi_i\,(\mathrm{d}\mathbb A_i^\T)\,\partial_\delta(P_i-\mathbb P_i)
=
\sum_i \pi_i (\mathrm{d}\mathbb A_i^\T)\,(0,1-\lambda,-(1-\lambda)).
\]
By the symmetric specialization $\beta_2=\beta_3$ and $\gamma_2=\gamma_3$,
this term also satisfies
\[
\Big(\cdot\Big)\beta=0,\qquad \Big(\cdot\Big)^\T\gamma=0.
\]

\item The term $-\sum_i (\partial_\delta\pi_i)\,\mathbb A_i^\T(-\mathrm{d}\mathbb P_i)$:
using $\partial_\delta\pi_2=-\partial_\delta\pi_3$ and $\mathbb A_2=\mathbb A_3$ at $\delta=0$, we get
\[
-\sum_i (\partial_\delta\pi_i)\,\mathbb A_i^\T(-\mathrm{d}\mathbb P_i)
= - \mathbb A_2^\T(-\mathrm{d}\mathbb P_2)+\mathbb A_3^\T(-\mathrm{d}\mathbb P_3)=0.
\]

\item The term $- \sum_i \pi_i \partial_\delta \mathbb{A}_i^\T (-\mathrm{d} \mathbb{P}_i)$: We have 
        \begin{equation}
        \begin{aligned}
            - \sum_i \pi_i \partial_\delta \mathbb{A}_i^\T (-\mathrm{d} \mathbb{P}_i) &= \left(\begin{array}{c}
              0 \\
              1 \\
              -1
        \end{array}
        \right) \mathbb{A}_{i \neq 1} \mathrm{d} M \Var (\mathbb{P}_i) 
        \end{aligned}
        \end{equation}
\item The term $- \sum_i \pi_i \mathbb{A}_i^\T \partial_\delta (-\mathrm{d} \mathbb{P}_i)$:
        \begin{equation*}
            - \sum_i \pi_i \mathbb{A}_i^\T \partial_\delta (-\mathrm{d} \mathbb{P}_i) = \sum_i \pi_i \mathbb{A}_i^\T \partial_\delta \left( \mathrm{d} \mathbb{A}_i M \Var(\mathbb{P}_i) +\mathbb{A}_i \mathrm{d} M \Var(\mathbb{P}_i)\right)
        \end{equation*}
        Similar to the previous computation, we have 
        \begin{equation}
            - \sum_i \pi_i \mathbb{A}_i^\T \partial_\delta (-\mathrm{d} \mathbb{P}_i)= \mathbb{A}_{i \neq 1}^\T (0, 1, -1) (\mathrm{d} \gamma) \beta^\T \Var(\mathbb{P}_{i \neq 1}).
        \end{equation}

The last two terms,
$-\sum_i \pi_i\,(\partial_\delta \mathbb A_i^\T)(-\mathrm{d}\mathbb P_i)$ and
$-\sum_i \pi_i\,\mathbb A_i^\T\partial_\delta(-\mathrm{d}\mathbb P_i)$,
produce the only nonzero contribution to $\partial_\delta \mathrm{d} (\partial \mathcal L/\partial M)\beta$
along the $(2,-3)$ antisymmetric direction. Collecting them gives
\begin{equation}
\label{eq:delta_d_dLdM_beta_final}
\partial_\delta \mathrm{d}\!\Big(\frac{\partial \mathcal L}{\partial M}\Big)\beta
=
\left(
\begin{array}{ccc|c}
0 & 0 & 0 & 0\\
0 &  \beta^\T \Var(\mathbb{P}_{i \neq 1}) \beta& 0 & \gamma_{i \neq 1} \beta^\T \Var(\mathbb{P}_{i \neq 1})\\
0 & 0& - \beta^\T \Var(\mathbb{P}_{i \neq 1}) \beta& - \gamma_{i \neq 1} \beta^\T \Var(\mathbb{P}_{i \neq 1})
\end{array}
\right)
\mathrm{d}(\gamma,\beta),
\end{equation}
which exactly corresponds to the $J_{1,\gamma\gamma}$ and $J_{1,\gamma\beta}$ blocks in
\eqref{eq:J1_blocks}.
\end{enumerate}

\paragraph{Step 2: $\partial_\delta \mathrm{d}(\partial \mathcal L/\partial \Phi)=0$.}
We differentiate
\[
\frac{\partial \mathcal L}{\partial \Phi}
=
-\sum_i \pi_i e_i (P_i-\mathbb P_i)M^\T \Var(\mathbb A_i),
\]
and check term by term (product rule) that every contribution vanishes at the symmetric point:
the $\partial_\delta\pi_i$-terms cancel by symmetry and the $\partial_\delta$-dependence of
$\Var(\mathbb A_i)$ does not contribute at $\delta=0$. Hence
$\partial_\delta \mathrm{d}(\partial \mathcal L/\partial \Phi)=0$.

\paragraph{Step 3: contribution from $\partial_\delta(\partial \mathcal L/\partial M)\,\mathrm{d}\beta$.}
Using the explicit formula of $\partial_\delta(\partial \mathcal L/\partial M)$ (computed previously),
we obtain the linear map acting on $\mathrm{d}\beta$:
\begin{equation}
\label{eq:A_contribution}
\partial_\delta\!\Big(\frac{\partial \mathcal L}{\partial M}\Big)\,\mathrm{d}\beta
=
-\left(
\begin{array}{c|ccc}
0 & \pi_1 (1-\lambda)& -\pi_1 (1- \lambda)\\
0 & \frac{1}{2} (\lambda + (1-\pi_1) (1-\lambda))& - \frac{1}{2} (\lambda + (1-\pi_1) (1-\lambda)) \\
0 & \frac{1}{2} (\lambda + (1-\pi_1) (1-\lambda)) & - \frac{1}{2} (\lambda + (1-\pi_1) (1-\lambda))
\end{array}
\right) \mathrm{d}\beta,
\end{equation}
which is exactly the $A$ block in \eqref{eq:A_matrix} (placed in the $(\gamma,\beta)$ off-diagonal).

\paragraph{Step 4: assembling $J_1$ from the two dynamics components.}
By the chain rule,
\[
\partial_\delta \mathrm{d}\!\Big(-\frac{\partial \mathcal L}{\partial M}\beta\Big)
=
-\Big(\partial_\delta \mathrm{d}\frac{\partial \mathcal L}{\partial M}\Big)\beta
-\Big(\partial_\delta\frac{\partial \mathcal L}{\partial M}\Big)\mathrm{d}\beta,
\]
so combining \eqref{eq:delta_d_dLdM_beta_final} and \eqref{eq:A_contribution} yields the
$\gamma$-equation blocks in \eqref{eq:J1_block_form}.

Similarly,
\[
\partial_\delta \mathrm{d}\!\Big(-\Big(\frac{\partial \mathcal L}{\partial M}\Big)^\T\gamma\Big)
=
-\Big(\partial_\delta \mathrm{d}\frac{\partial \mathcal L}{\partial M}\Big)^\T\gamma
-\Big(\partial_\delta\frac{\partial \mathcal L}{\partial M}\Big)^\T \mathrm{d}\gamma,
\]
which gives the $(\beta,\gamma)$ block $J_{1,\beta\gamma}$ together with the transpose $A^\T$ in
\eqref{eq:J1_block_form}.
\end{proof}

Next, we will calculate the identity needed in Theorem \ref{thm:approx_crit_and_stability_app}.

\begin{lemma}
\label{lem::Q_K_J_1}
Let $\theta=q_2y_2$ be the leading-order reduction (since $\zeta(0,\delta)\sim \delta q_2$).
Then
\[
Q_K^\T J_1 (q_2 y_2)
=
-\frac{\sqrt{2}}{\sqrt{\|\gamma\|^2+\|\beta\|^2}}
\begin{pmatrix}
0\\0\\
\pi_1 \gamma_1 (1-\lambda) + \gamma_{i \neq 1} (\lambda + (1-\pi_1) (1-\lambda))
\end{pmatrix}
y_2.
\]
\end{lemma}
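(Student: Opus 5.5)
The plan is to compute $J_1 (q_2 y_2)$ from the block form of $J_1$ in Lemma~\ref{lem:J1_proof} and then project onto the kernel basis of Proposition~\ref{prop:kernel_range_app}. By \eqref{eq:range_basis_app}, $q_2=\frac{1}{\sqrt2}((0,0,0),(0,1,-1))$ has no $\gamma$-component. Only the $\beta$-columns of $J_1$ therefore act on it, namely the blocks $-J_{1,\gamma\beta}+A$ (producing the $\gamma$-part of the output) and the zero $(\beta,\beta)$ block. As a result, $J_1 q_2$ is supported purely in the $\gamma$-coordinates, and
\[
J_1 q_2=\frac{1}{\sqrt2}\Big(\big(-J_{1,\gamma\beta}+A\big)(0,1,-1)^\T,\;0\Big).
\]

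The first step is to evaluate the two contributions. For $J_{1,\gamma\beta}$, the relevant quantity is $\beta^\T\Var(\mathbb P_{i\neq1})(0,1,-1)^\T$. At the symmetric point we have $\beta_2=\beta_3$ and $\mathbb P_{i\neq1,2}=\mathbb P_{i\neq1,3}$, so $\Var(\mathbb P_{i\neq1})(0,1,-1)^\T=\mathbb P_{i\neq1,2}(0,1,-1)^\T$ (the same identity used in Lemma~\ref{lem:QR_J0_QR}). Pairing this with $\beta$ gives $\mathbb P_{i\neq1,2}(\beta_2-\beta_3)=0$, so this block drops out. For $A$ in \eqref{eq:A_matrix}, each row is of the form $(0,s,-s)$, and applying it to $(0,1,-1)^\T$ gives $2s$. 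Hence
\[
A(0,1,-1)^\T=\big(2\pi_1(1-\lambda),\;\lambda+(1-\pi_1)(1-\lambda),\;\lambda+(1-\pi_1)(1-\lambda)\big)^\T .
\]
I will keep track of the overall sign convention relating $J_1$ to $\partial_\delta(-\nabla\mathcal L)$, since the stated result carries a leading minus sign. Step 3 of Lemma~\ref{lem:J1_proof} gives $A$ with a minus sign in \eqref{eq:A_contribution}, so I expect the net $\gamma$-output to be $-\frac{1}{\sqrt2}A(0,1,-1)^\T$.

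The second step is to project onto $k_1,k_2,k_3$. Writing $s=\lambda+(1-\pi_1)(1-\lambda)$, the $\gamma$-output is proportional to $(2\pi_1(1-\lambda),s,s)$. This vector has no $(0,1,-1)$ component, so $k_1^\T J_1q_2=0$. Since $k_2$ lives purely in the $\beta$-coordinates and the output has no $\beta$-part, $k_2^\T J_1 q_2=0$. For $k_3=(-\gamma,\beta)/\sqrt{\|\gamma\|^2+\|\beta\|^2}$, the pairing is
\[
-\frac{1}{\sqrt{\|\gamma\|^2+\|\beta\|^2}}\Big(2\pi_1(1-\lambda)\gamma_1+2s\,\gamma_{i\neq1}\Big)\cdot\Big(\mp\tfrac{1}{\sqrt2}\Big).
\]
Because $2/\sqrt2=\sqrt2$, this yields $\sqrt2\,(\pi_1\gamma_1(1-\lambda)+\gamma_{i\neq1}s)$ up to the overall sign. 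Multiplying by $y_2$ then gives the claimed vector.

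The main obstacle is sign bookkeeping. Three places fix the final sign: the minus in front of the first matrix in \eqref{eq:J1_block_form}, the sign of $A$ relative to \eqref{eq:A_contribution}, and the $-\gamma$ in $k_3$. I would fix the sign by recomputing $\partial_\delta\big(-\partial_\delta\ldots\big)$ directly from \eqref{eq:delta_dLdM} for $-\big(\partial_\delta\frac{\partial\mathcal L}{\partial M}\big)\mathrm d\beta$. As a consistency check, the resulting sign must make the $\delta^2$ terms in Theorem~\ref{thm:approx_crit_and_stability_app} cancel against $\tfrac12Q_K^\T B(q_2y_2,q_2y_2)$.
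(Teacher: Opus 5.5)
Your route is the same as the paper's, whose proof just says the identity follows from Lemma~\ref{lem:J1_proof} by direct calculation. Your structural steps are all correct:
\begin{itemize}
\item $q_2$ has only $\beta$-components, and the $(\beta,\beta)$ block of $J_1$ vanishes.
\item $J_{1,\gamma\beta}(0,1,-1)^\T=0$, because $\Var(\mathbb P_{i\neq1})(0,1,-1)^\T=\mathbb P_{i\neq1,2}(0,1,-1)^\T$ and $\beta_2=\beta_3$.
\item $A(0,1,-1)^\T=(2\pi_1(1-\lambda),s,s)^\T$.
\item The $k_1$ and $k_2$ projections vanish.
\end{itemize}

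The gap is the sign, which is the only content of the lemma beyond this bookkeeping. You get it wrong and then hedge with $\mp$. Your expectation that the $\gamma$-output is $-\frac{1}{\sqrt2}A(0,1,-1)^\T$ conflates two different objects. One is $\partial_\delta\bigl(\frac{\partial\mathcal L}{\partial M}\bigr)\mathrm d\beta=-A\,\mathrm d\beta$ from \eqref{eq:A_contribution}, which you can confirm directly from \eqref{eq:delta_dLdM}. The other is the corresponding entry of $J_1$, which is the $\delta$-derivative of the Jacobian of $-\nabla\mathcal L$.

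The $\gamma$-component of $-\nabla\mathcal L$ is $-\frac{\partial\mathcal L}{\partial M}\beta-\cdots$. So Step~4 of Lemma~\ref{lem:J1_proof} contributes $-\bigl(\partial_\delta\frac{\partial\mathcal L}{\partial M}\bigr)\mathrm d\beta=+A\,\mathrm d\beta$, which is exactly why \eqref{eq:J1_block_form} carries $+A$. Hence $J_1q_2=\frac{1}{\sqrt2}\bigl((2\pi_1(1-\lambda),s,s),0\bigr)$. Pairing with the $-\gamma$ in $k_3$ gives $-\frac{\sqrt2}{\sqrt{\|\gamma\|^2+\|\beta\|^2}}\bigl(\pi_1\gamma_1(1-\lambda)+\gamma_{i\neq1}s\bigr)y_2$, which is the stated sign. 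Your stated expectation would have produced the opposite sign. Also, only two signs matter here, not three: the minus in front of the first matrix in \eqref{eq:J1_block_form} is irrelevant, since that block annihilates $q_2$.

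Your fallback of fixing the sign by requiring the $\delta^2$ terms in Theorem~\ref{thm:approx_crit_and_stability_app} to cancel is circular, because that cancellation is proved using this lemma. It is fine as a sanity check, and it does confirm the minus sign: the numerator of $y_2$ equals $\pi_1\gamma_1(1-\lambda)+\gamma_{i\neq1}(\lambda+(1-\pi_1)(1-\lambda))$, so $c_1y_2^2-\sqrt2\,\delta\,(\cdot)\,y_2=0$. It cannot, however, serve as the argument.
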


\begin{proof}
    This can be verified using the expression in Lem. \ref{lem:J1_proof} and by direct calculation.
\end{proof}

\subsubsection{Computation of the bilinear form $B(\cdot,\cdot)$}
\label{app:B-proof}
Before proceeding with the specific calculations, let's review the following lemma.
\begin{lemma}[Second differential]
\label{lem:half-factor}
Let $f:\mathbb R^d\to\mathbb R$ be $C^2$. Then for any $h\in\mathbb R^d$,
\begin{equation}
f(\theta+h)=f(\theta)+\mathrm d f(\theta)[h]+\frac12\,\mathrm d^2 f(\theta)[h,h]+\mathcal O(\|h\|^3),
\end{equation}
where $\mathrm d^2 f(\theta)[u,v]=u^\T \nabla^2 f(\theta)\, v$ is the (symmetric) bilinear form induced by the Hessian.
The same expansion applies componentwise to vector-valued maps; in particular,
for the gradient map $g(\theta)=\nabla f(\theta)$,
\begin{equation}
g(\theta+h)=g(\theta)+Dg(\theta)\,h+\frac12\,D^2 g(\theta)[h,h]+\mathcal O(\|h\|^3).
\end{equation}
\end{lemma}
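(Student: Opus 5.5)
This is the second-order Taylor expansion with Peano/Lagrange remainder. I will first prove the scalar statement and then pass to the vector-valued case componentwise.

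For the scalar case, fix $\theta$ and $h$, and set $\phi(s):=f(\theta+sh)$ for $s\in[0,1]$. Since $f\in C^2$, the chain rule gives
\[
\phi'(s)=\nabla f(\theta+sh)^\T h,\qquad \phi''(s)=h^\T\nabla^2 f(\theta+sh)\,h .
\]
In particular $\phi'(0)=\mathrm d f(\theta)[h]$ and $\phi''(0)=\mathrm d^2 f(\theta)[h,h]$. The symmetry of this bilinear form follows from Schwarz's theorem, since $f\in C^2$.

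Taylor's theorem with integral remainder for $\phi$ then yields
\[
f(\theta+h)=f(\theta)+\mathrm d f(\theta)[h]+\tfrac12\,\mathrm d^2 f(\theta)[h,h]+\int_0^1(1-s)\,h^\T\big(\nabla^2 f(\theta+sh)-\nabla^2 f(\theta)\big)h\,\mathrm ds .
\]
The factor $\tfrac12=\int_0^1(1-s)\,\mathrm ds$ is exactly the one that appears in front of $B$ and $f_2$ in \eqref{eq:formal_expansion_app}.

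The remainder needs some care. Under $C^2$ alone, continuity of $\nabla^2 f$ makes it $o(\|h\|^2)$. The stated $\mathcal O(\|h\|^3)$ requires $\nabla^2 f$ to be locally Lipschitz, for instance $f\in C^3$. Under that assumption $\|\nabla^2 f(\theta+sh)-\nabla^2 f(\theta)\|\le L_H\, s\|h\|$, so the integral is bounded by $\tfrac{L_H}{6}\|h\|^3$. This regularity holds here: the loss is a composition of softmax, logarithm and polynomial maps, hence real-analytic. I would state that hypothesis explicitly; this is the only delicate point.

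For the vector-valued case, apply the scalar argument to each component $g_k=\partial_k f$ of the gradient map $g=\nabla f$. This needs $g\in C^2$ (respectively with Lipschitz second derivative for the cubic remainder), which again holds by analyticity of $\mathcal L$. Stacking the components gives
\[
g(\theta+h)=g(\theta)+Dg(\theta)h+\tfrac12 D^2g(\theta)[h,h]+\mathcal O(\|h\|^3),
\]
with $Dg=\nabla^2 f$ and $D^2g=\nabla^3 f$. Including $\delta$ as an extra variable then yields the mixed terms $\delta J_1\theta$ and $\tfrac12\delta^2 f_2$ in the same way.
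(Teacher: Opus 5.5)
Your proof is correct: it is the standard Taylor expansion along the segment with integral remainder, applied componentwise, and that is all the paper relies on, since it quotes this lemma as a standard fact without proof. Your regularity caveat is right and in fact necessary, because with only $f\in C^2$ the remainder is merely $o(\|h\|^2)$ (e.g.\ $f(x)=|x|^{5/2}$ at $\theta=0$), and $D^2 g$ need not even exist; so the extra smoothness you invoke ($\nabla^3 f$ locally Lipschitz for the gradient version, guaranteed by real-analyticity of $\mathcal L$ jointly in $(\theta,\delta)$) is exactly what makes the statement, and its use in \eqref{eq:formal_expansion_app}, valid.
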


Also, we have the following lemma which simplifies the computation.
\begin{lemma}[A useful identity: $\mathrm d\Var(\mathbb A_{i\neq1})=0$ for the antisymmetric direction]
\label{lem:dVarA-zero}
At $\mathbb A_{i\neq1}=\hat e_1=(0,\tfrac12,\tfrac12)$, if $\mathrm d\mathbb A_{i\neq1}=(0,a,-a)$ for some $a$,
then $\mathrm d\Var(\mathbb A_{i\neq1})=0$.
\end{lemma}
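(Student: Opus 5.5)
We need to prove Lemma \ref{lem:dVarA-zero}: with $\Var(\alpha)=\diag(\alpha)-\alpha\alpha^\T$, the first-order variation at $\alpha=\hat e_1=(0,\tfrac12,\tfrac12)$ in the direction $\mathrm d\alpha=(0,a,-a)$ vanishes. The plan is to differentiate the defining formula directly and then evaluate the resulting matrix entrywise.

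By the product rule,
\[
\mathrm d\Var(\alpha)=\diag(\mathrm d\alpha)-\mathrm d\alpha\,\alpha^\T-\alpha\,\mathrm d\alpha^\T .
\]
Substituting $\alpha=\hat e_1$ and $\mathrm d\alpha=(0,a,-a)^\T$ gives three pieces:
\begin{itemize}
\item $\diag(\mathrm d\alpha)=\diag(0,a,-a)$;
\item $\mathrm d\alpha\,\hat e_1^\T=(0,a,-a)^\T(0,\tfrac12,\tfrac12)$, whose lower $2\times2$ block is $\tfrac a2\begin{pmatrix}1&1\\-1&-1\end{pmatrix}$;
\item $\hat e_1\,\mathrm d\alpha^\T$, whose lower block is the transpose, $\tfrac a2\begin{pmatrix}1&-1\\1&-1\end{pmatrix}$.
\end{itemize}
All first-row and first-column entries vanish in every piece, because both $\hat e_1$ and $\mathrm d\alpha$ have zero first component.

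Summing the two outer-product blocks gives $\tfrac a2\begin{pmatrix}2&0\\0&-2\end{pmatrix}=\diag(a,-a)$. This exactly cancels the diagonal piece, so $\mathrm d\Var(\hat e_1)=0$.

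The same cancellation already appears in the paper's computation of $\partial_\delta\Var(\mathbb A_i)$ in Lemma~\ref{lem:J1_proof}, item 2, with $a=1/(1-\pi_1)$ up to scaling. The present lemma is that identity for general $a$.

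One point to state in the write-up is that the result holds for any perturbation whose first-order change in $\mathbb A_{i\neq1}$ has the form $(0,a,-a)$. This covers both $\partial_\delta$ and the parameter variation from Lemma~\ref{lem:dA_dP_app}, since the identity depends only on $\mathrm d\alpha$, not on what produced it.

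There is no real obstacle here. The only care needed is with sign bookkeeping in the outer products; the structural reason for the cancellation is that $\hat e_1$ puts equal mass $\tfrac12$ on coordinates $2$ and $3$.
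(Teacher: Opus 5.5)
Your proposal is correct and takes the same route as the paper: differentiate $\Var(\alpha)=\diag(\alpha)-\alpha\alpha^\T$ by the product rule, then substitute $\alpha=\hat e_1$ and $\mathrm d\alpha=(0,a,-a)$. The two outer-product terms sum to $\diag(0,a,-a)$, which cancels the diagonal term exactly; your entrywise bookkeeping just writes out the cancellation the paper asserts in one line.
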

\begin{proof}
By definition,
$\mathrm d\Var(\mathbb A)=\diag(\mathrm d\mathbb A)-(\mathrm d\mathbb A)\mathbb A^\T-\mathbb A(\mathrm d\mathbb A)^\T$.
Substituting $\mathbb A=\hat e_1$ and $\mathrm d\mathbb A=(0,a,-a)$ gives exact cancellation of all entries.
\end{proof}

\noindent
Consequently, in our regime the second differential of $\mathbb A_i$ simplifies to
\begin{equation}
\mathrm d^2\mathbb A_i
=\mathrm d^2(\eta\,\gamma_i\gamma^\T)\ \Var(\mathbb A_i),
\end{equation}
because the potentially present term $\mathrm d(\eta\gamma_i\gamma^\T)\,\mathrm d\Var(\mathbb A_i)$ vanishes
(identically for $i=1$ since $\mathrm d\mathbb A_1=0$, and by Lemma~\ref{lem:dVarA-zero} for $i\neq1$).

Now we will begin the calculation of the bilinear term $B$.
\paragraph{Second differential of $\frac{\partial\mathcal L}{\partial M}$.}
Recall
\begin{equation}
\frac{\partial\mathcal L}{\partial M}
=
-\sum_i \pi_i\,\mathbb A_i^\T(P_i-\mathbb P_i).
\end{equation}
Differentiating once (with $\pi_i$ fixed) yields
\begin{equation}
\mathrm d\frac{\partial\mathcal L}{\partial M}
=
-\sum_i \pi_i\,\mathrm d\mathbb A_i^\T(P_i-\mathbb P_i)
+\sum_i \pi_i\,\mathbb A_i^\T\,\mathrm d\mathbb P_i,
\end{equation}
since $\mathrm d(P_i-\mathbb P_i)=-\mathrm d\mathbb P_i$.
Differentiating again gives the decomposition
\begin{equation}
\label{eq:d2dM}
\mathrm d^2\frac{\partial\mathcal L}{\partial M}
=
-\sum_i \pi_i\,\mathrm d^2\mathbb A_i^\T(P_i-\mathbb P_i)
+2\sum_i \pi_i\,\mathrm d\mathbb A_i^\T\,\mathrm d\mathbb P_i
+\sum_i \pi_i\,\mathbb A_i^\T\,\mathrm d^2\mathbb P_i.
\end{equation}
The coefficient $2$ in the middle term is the standard product-rule contribution:
it comes once from differentiating $-\sum \pi_i\,\mathrm d\mathbb A_i^\T(P_i-\mathbb P_i)$
and once from differentiating $+\sum \pi_i\,\mathbb A_i^\T\mathrm d\mathbb P_i$.

\paragraph{On $\mathrm d^2\mathbb P_i$.}
Using $\mathrm d\mathbb P_i=\mathrm d(\mathbb A_i M)\Var(\mathbb P_i)$, we have
\begin{equation}
\label{eq:d2Pi-general}
\mathrm d^2\mathbb P_i
=
\mathrm d^2(\mathbb A_i M)\Var(\mathbb P_i)
+\mathrm d(\mathbb A_i M)\,\mathrm d\Var(\mathbb P_i).
\end{equation}
Moreover,
\begin{equation}
\mathrm d^2(\mathbb A_i M)
=
\mathrm d^2\mathbb A_i\,M
+2\,\mathrm d\mathbb A_i\,\mathrm d M
+\mathbb A_i\,\mathrm d^2 M.
\end{equation}

\paragraph{From $\mathrm d^2\frac{\partial\mathcal L}{\partial M}$ to the quadratic term in the vector field}
In the rank-one dynamics, the $\gamma$-component contains the factor $(\frac{\partial\mathcal L}{\partial M})\beta$.
At the critical point, $\frac{\partial\mathcal L}{\partial M}=0$, hence
\begin{equation}
\label{eq:d2-prod-rule}
\mathrm d^2\!\left(\frac{\partial\mathcal L}{\partial M}\beta\right)
=
\left(\mathrm d^2\frac{\partial\mathcal L}{\partial M}\right)\beta
+2\left(\mathrm d\frac{\partial\mathcal L}{\partial M}\right)\mathrm d\beta.
\end{equation}
An analogous identity holds for $\mathrm d^2\!\left((\frac{\partial\mathcal L}{\partial M})^\T\gamma\right)$.

\paragraph{Decomposition into explicit matrix blocks.}
We decompose the resulting bilinear form $B(\cdot,\cdot)$ into contributions
coming from the different terms in~\eqref{eq:d2dM}--\eqref{eq:d2Pi-general} and from $\mathrm d^2\frac{\partial\mathcal L}{\partial\Phi}$.
{Concretely, for each output coordinate $k$,}
\begin{equation}
\label{eq:B-decomp}
B_k(\cdot,\cdot)=\sum_{\ell} B_k^{(\ell)}(\cdot,\cdot),
\end{equation}
where $B^{(1)}$--$B^{(5)}$ come from the $M$-part and $B^{(6)}$ comes from the $\Phi$-part.

We calculate bilinear term for $1 \le k \le 3$ and $4 \le k \le 6$ respectively.

\begin{enumerate}[leftmargin = *]
    \item The computation of $B_k$ for $1 \le k \le 3$. We take the second differential of $- \frac{\partial \mathcal{L}}{\partial M} \beta - \eta \left( \frac{\partial \mathcal{L}}{\partial \Phi} + \left( \frac{\partial \mathcal{L}}{\partial \Phi} \right)^\T\right) \gamma$,
    \begin{equation*}
        - \mathrm{d}^2 \left( \frac{\partial \mathcal{L}}{\partial M} \beta \right) - \mathrm{d}^2 \left[ \eta \left( \frac{\partial \mathcal{L}}{\partial \Phi} + \left( \frac{\partial \mathcal{L}}{\partial \Phi} \right)^\T\right) \gamma \right]
    \end{equation*}
    The computation of $M$-term and $\Phi$-term is computed as follows.

    \noindent
    \paragraph{$M$-term.}
    \noindent
    \begin{enumerate}
        \item Contribution from ${2}(\mathrm d(\partial\mathcal L/\partial M))\,\mathrm d\beta$.  This produces the blocks denoted by $B_k^{(1)}$:
        \begin{equation}
        \label{equ::B_1_clean}
        \begin{aligned}
        B_1^{(1)}(\cdot,\cdot) &=
        - \pi_1 \left(
        \begin{array}{cccc}
        0&0&0&\beta^\T\Var(\mathbb P_1)\\
        0&0&0&0\\
        0&0&0&0\\
        \Var(\mathbb P_1)\beta&0&0& 2\gamma_1\Var(\mathbb P_1)
        \end{array}
        \right),\\[2mm]
        B_2^{(1)}(\cdot,\cdot)=B_3^{(1)}(\cdot,\cdot) &=
        - (1-\pi_1) \left(
        \begin{array}{cccc}
        0&0&0&0\\
        0&0&0&\frac14\beta^\T\Var(\mathbb P_{i\neq1})\\
        0&0&0&\frac14\beta^\T\Var(\mathbb P_{i\neq1})\\
        0&\frac14\Var(\mathbb P_{i\neq1})\beta&\frac14\Var(\mathbb P_{i\neq1})\beta&
        (1-\pi_1)\gamma_{i\neq1}\Var(\mathbb P_{i\neq1})
        \end{array}
        \right).
        \end{aligned}
        \end{equation}
        \item Contribution from $\sum_i \mathrm{d}^2 \mathbb{A}_i^\T (P_i - \mathbb{P}_i) \beta$. This term vanishes due to $(P_i - \mathbb{P}_i) \beta = 0$ for each $i$.
        \item Contribution from $-2 \sum_i \pi_i \mathrm{d} \mathbb{A}_i^\T \mathrm{d} \mathbb{P}_i \beta$. Using the expression of $\mathrm{d} \mathbb{A}_i$ and $\mathrm{d} \mathbb{P}_i$,
       \begin{equation*}
       \begin{aligned}
           -2 \sum_i \pi_i \mathrm{d} \mathbb{A}_i^\T \mathrm{d} \mathbb{P}_i \beta &= -2 (1- \pi_1) \eta \gamma_{i \neq 1}\left(\begin{array}{c}
                0  \\
                \frac{1}{4} (\mathrm{d} \gamma_2 - \mathrm{d} \gamma_3)  \\
                -\frac{1}{4} (\mathrm{d} \gamma_2 - \mathrm{d} \gamma_3)  
           \end{array} \right) \mathrm{d} (\mathbb{A}_i M) \Var (\mathbb{P}_i) \beta\\
           &=  -\frac{1}{2} (1- \pi_1) \eta \gamma_{i \neq 1}\left(\begin{array}{c}
                0  \\
                \mathrm{d} \gamma_2 - \mathrm{d} \gamma_3  \\
                -(\mathrm{d} \gamma_2 - \mathrm{d} \gamma_3)  
           \end{array} \right)  \mathbb{A}_i \mathrm{d} M \Var (\mathbb{P}_i) \beta
       \end{aligned}
       \end{equation*}
       This produces the blocks denoted by $B_k^{(2)}$:
        \begin{equation}
        \label{equ::B_2_clean}
        \begin{aligned}
        B_1^{(2)}(\cdot,\cdot)&=0,\\
        B_2^{(2)}(\cdot,\cdot)&=
        -\frac12(1-\pi_1)\eta\gamma_{i\neq1}
        \left(
        \begin{array}{cccc}
        0&0&0&0\\
        0&\frac12\|\beta\|_{\Var(\mathbb P_{i\neq1})}^2&0&
        \frac12\gamma_{i\neq1}\beta^\T\Var(\mathbb P_{i\neq1})\\
        0&0&-\frac12\|\beta\|_{\Var(\mathbb P_{i\neq1})}^2&
        -\frac12\gamma_{i\neq1}\beta^\T\Var(\mathbb P_{i\neq1})\\
        0&\frac12\gamma_{i\neq1}\Var(\mathbb P_{i\neq1})\beta&
        -\frac12\gamma_{i\neq1}\Var(\mathbb P_{i\neq1})\beta&0
        \end{array}
        \right),\\
        B_3^{(2)}(\cdot,\cdot)&=-B_2^{(2)}(\cdot,\cdot).
        \end{aligned}
        \end{equation}
       \item Contribution from $-\sum_i \pi_i \mathbb{A}_i^\T \mathrm{d}^2 \mathbb{P}_i \beta$. We further split into:
       \begin{itemize}
           \item Terms contributed by $-2 \sum_i \pi_i \mathbb{A}_i^\T \mathrm{d}\mathbb{A}_i \mathrm{d} M \Var(\mathbb{P}_i) \beta$:
              \begin{equation}
               \label{equ::B_3}
               \begin{aligned}
                   B_1^{(3)} (\cdot, \cdot) &= 0 \\
                   B_2^{(3)} (\cdot, \cdot) &=- \frac{1}{4} (1- \pi_1) \eta \gamma_{i \neq 1}  \left( 
                    \begin{array}{cccc}
                        0 & 0 & 0 & 0 \\
                        0 &  \beta^\T \Var (\mathbb{P}_{i \neq 1}) \beta & -\beta^\T \Var (\mathbb{P}_{i \neq 1}) \beta & 0 \\
                        0 & -\beta^\T \Var (\mathbb{P}_{i \neq 1}) \beta & \beta^\T \Var (\mathbb{P}_{i \neq 1}) \beta & 0 \\
                        0 & 0 & 0 & 0 
                    \end{array}
                    \right) \\
                    B_3^{(3)} (\cdot, \cdot) & = B_2^{(3)} (\cdot, \cdot)
               \end{aligned}
               \end{equation}
            \item Terms contributed by $- \sum_i \pi_i \mathbb{A}_i^\T \mathbb{A}_i \mathrm{d}^2 M \Var(\mathbb{P}_i) \beta$.     {The matrix form is} 
                \begin{equation}
                \begin{aligned}
                    B_1^{(4)} &= -\pi_1 \left( 
                    \begin{array}{cccc}
                        0 & 0 & 0 &  \beta^\T \Var (\mathbb{P}_1) \\
                        0 & 0 & 0 & 0 \\
                        0 & 0 & 0 & 0 \\
                         \Var (\mathbb{P}_1) \beta & 0 & 0& 0
                    \end{array}
                    \right) \\
                    B_2^{(4)} &= B_3^{(4)} = - (1 - \pi_1) \left( 
                    \begin{array}{cccc}
                        0 & 0 & 0 & 0 \\
                        0 & 0 & 0 & \frac{1}{4} \beta^\T \Var (\mathbb{P}_{i \neq 1}) \\
                        0 & 0 & 0 & \frac{1}{4} \beta^\T \Var (\mathbb{P}_{i \neq 1}) \\
                        0 & \frac{1}{4} \Var (\mathbb{P}_{i \neq 1}) \beta & \frac{1}{4} \Var (\mathbb{P}_{i \neq 1}) \beta& 0
                    \end{array}
                    \right)
                \end{aligned}
                \end{equation}
            \item  Terms contributed by $- \sum_i \pi_i \mathbb{A}_i^\T \mathrm{d} (\mathbb{A}_i M) \mathrm{d} \Var(\mathbb{P}_i) \beta$. The matrix form is of the shape 
               The matrix form is of the shape
   \begin{equation}
   \label{equ::B_4}
       B_1^{(5)} = -\pi_1 
       \left( 
       \begin{array}{cccc}
         c_1  & 0 & 0 & c_2\\
           0 &  0 & 0 & 0 \\
           0 & 0 & 0 & 0 \\
           c_2^\T & 0 & 0 & c_3
       \end{array}
       \right)
   \end{equation}
   where
   \begin{equation}
       \begin{aligned}
           c_1 &= \beta^\T \diag (\Var(\mathbb{P}_1)) \beta - 2 (\mathbb{P}_1 \beta) \| \beta\|_{\Var(\mathbb{P}_1)}^2 \\
           c_2 &= \frac{1}{2} \gamma_1 \left( \beta^{\odot 2, \T} \Var(\mathbb{P}_1) + \beta^\T \odot \beta^\T \Var(\mathbb{P}_1) - 3(\mathbb{P}_1 \beta) \beta^\T \Var(\mathbb{P}_1) - \beta^\T \Var(\mathbb{P}_1) \beta \mathbb{P}_1\right) \\
       \end{aligned}
   \end{equation}
 And  $$c_3 (\mathrm{d} \beta, \mathrm{d} \beta) = \gamma_1^2 \left( \mathrm{d} \beta^\T \diag (\mathrm{d} \beta^\T \Var(\mathbb{P}_1)) \beta - \mathrm{d} \beta^\T \mathbb{P}_1^\T \mathrm{d} \beta^\T \Var \mathbb{P}_1 \beta - \mathrm{d} \beta^\T \Var (P_1) \mathrm{d} \beta \mathbb{P}_1 \beta\right).$$ Writing into the matrix, we get
   \begin{equation}
   \label{equ::B_5_matrix_form}
   \begin{aligned}
       c_3 &= \gamma_1^2 \left(\diag(\mathbb{P}_1 \odot \beta) - \frac{1}{2} \left( (\mathbb{P}_1^\T \mathbb{P}_1 \odot \beta) + (\mathbb{P}_1^\T \mathbb{P}_1 \odot \beta)^\T\right) \right. \\
       &\left. - \frac{1}{2} \left( (\mathbb{P}_1^\T \beta^\T \Var(\mathbb{P}_1)) + (\mathbb{P}_1^\T \beta^\T \Var(\mathbb{P}_1))^\T \right) - (\mathbb{P}_1 \beta) \Var(\mathbb{P}_1) \right)
   \end{aligned}
   \end{equation}
   And 
   \begin{equation}
       B_2^{(5)} = B_3^{(5)} =- \frac{1}{2} (1- \pi_1) 
       \left( 
       \begin{array}{cccc}
          0 & 0 & 0 & 0 \\
          0 & c_4 & c_4 & c_5 \\
          0 & c_4 & c_4 & c_5 \\
          0 & c_5^\T & c_5^\T & c_6
       \end{array}
       \right)
   \end{equation}
   where $c_6 (\mathrm{d} \beta, \mathrm{d} \beta) =\gamma_{i \neq 1}^2 \left( \mathrm{d} \beta^\T \diag (\mathrm{d} \beta^\T \Var(\mathbb{P}_{i \neq 1})) \beta - \mathrm{d} \beta^\T \mathbb{P}_{i \neq 1}^\T \mathrm{d} \beta^\T \Var \mathbb{P}_{i \neq 1} \beta - \mathrm{d} \beta^\T \Var (P_1) \mathrm{d} \beta \mathbb{P}_{i \neq 1} \beta\right) $ and the matrix form is:
   \begin{equation}
   \begin{aligned}
       c_6 &= \diag(\mathbb{P}_{i \neq 1} \odot \beta) - \frac{1}{2} \left( (\mathbb{P}_{i \neq 1}^\T \mathbb{P}_{i \neq 1} \odot \beta) + (\mathbb{P}_{i \neq 1}^\T \mathbb{P}_{i \neq 1} \odot \beta)^\T\right) \\
       & - \frac{1}{2} \left( (\mathbb{P}_{i \neq 1}^\T \beta^\T \Var(\mathbb{P}_{i \neq 1})) + (\mathbb{P}_{i \neq 1}^\T \beta^\T \Var(\mathbb{P}_{i \neq 1}))^\T \right) - (\mathbb{P}_{i \neq 1} \beta) \Var(\mathbb{P}_{i \neq 1})
   \end{aligned}
   \end{equation}
       \end{itemize}
    \end{enumerate}

    \paragraph{$\Phi$-term.}
    
    Using the fact that $\frac{\partial \mathcal{L}}{\partial \Phi} = 0$ and $\mathrm{d} \frac{\partial \mathcal{L}}{\partial \Phi} = 0$, we get 
    \begin{equation*}
        - \mathrm{d}^2 \left[ \eta \left( \frac{\partial \mathcal{L}}{\partial \Phi} + \left( \frac{\partial \mathcal{L}}{\partial \Phi} \right)^\T\right) \gamma \right] = - \eta \ \mathrm{d}^2 \left( \frac{\partial \mathcal{L}}{\partial \Phi} + \left( \frac{\partial \mathcal{L}}{\partial \Phi} \right)^\T\right) \gamma 
    \end{equation*}
    
    Differentiating twice
    \(
    \frac{\partial\mathcal L}{\partial\Phi}
    =
    -\sum_i \pi_i e_i (P_i-\mathbb P_i) M^\T \Var(\mathbb A_i)
    \)
    gives
    \begin{equation}
    \mathrm d^2\frac{\partial\mathcal L}{\partial\Phi}
    =
    2\sum_i \pi_i e_i\,\mathrm d\mathbb P_i\,\mathrm d M^\T \Var(\mathbb A_i)
    -\sum_i \pi_i e_i (P_i-\mathbb P_i)\,\mathrm d^2 M^\T \Var(\mathbb A_i),
    \end{equation}
    where the second term vanishes after contraction with $\gamma$ at the critical point by the same symmetry used in the first-order analysis.
    The first term yields  $B^{(6)}$ blocks:
    \begin{equation}
    \begin{aligned}
    B_1^{(6)}(\cdot,\cdot)&=0,\\
    B_2^{(6)}(\cdot,\cdot)=-B_3^{(6)}(\cdot,\cdot)
    &=
    -\frac12\,\eta\gamma_{i\neq1}(1-\pi_1)
    \left(
    \begin{array}{cccc}
    0&0&0&0\\
    0&\frac12\|\beta\|_{\Var(\mathbb P_{i\neq1})}^2&0&
    \frac12\beta^\T\Var(\mathbb P_{i\neq1})\\
    0&0&-\frac12\|\beta\|_{\Var(\mathbb P_{i\neq1})}^2&
    -\frac12\beta^\T\Var(\mathbb P_{i\neq1})\\
    0&\frac12\Var(\mathbb P_{i\neq1})\beta&
    -\frac12\Var(\mathbb P_{i\neq1})\beta&0
    \end{array}
    \right).
    \end{aligned}
    \end{equation}

    \item The computation of $B_k$ for $4 \le k \le 6$.    Similarly, we compute the $\mathrm{d}^2 \left( \left( \frac{\partial \mathcal{L}}{\partial M}\right)^\T \gamma \right)$
   \begin{equation}
   \begin{aligned}
       \mathrm{d}^2 \left( \left( \frac{\partial \mathcal{L}}{\partial M}\right)^\T \gamma \right) &= \mathrm{d}^2 \left( \frac{\partial \mathcal{L}}{\partial M}\right)^\T \gamma + {2} \mathrm{d} \left( \frac{\partial \mathcal{L}}{\partial M}\right)^\T \mathrm{d} \gamma \\
       &= \left(2 \sum_i \pi_i \mathrm{d} \mathbb{A}_i^\T \mathrm{d} \mathbb{P}_i + \sum_i \pi_i \mathbb{A}_i^\T \mathrm{d}^2 \mathbb{P}_i \right)^\T \gamma +  {2} \left( \sum_i \pi_i \mathbb{A}_i^\T \mathbb{A}_i \mathrm{d} M \Var (\mathbb{P}_i) \right)^\T  \mathrm{d}\gamma \\
       &= \sum_i \pi_i \mathrm{d}^2 \mathbb{P}_i^\T \mathbb{A}_i \gamma + {2} \sum_i \pi_i \Var (\mathbb{P}_i) \mathrm{d} M^\T  \mathbb{A}_i^\T \mathbb{A}_i \mathrm{d} \gamma
   \end{aligned}
   \end{equation}
   For the term ${2} \sum_i \pi_i \Var(\mathbb{P}_i) \mathrm{d} M^\T \mathbb{A}_i^\T \mathbb{A}_i \mathrm{d} \gamma$, we get 
   \begin{equation}
       {2} \sum_i \pi_i \Var(\mathbb{P}_i) \mathrm{d} M^\T \mathbb{A}_i^\T \mathbb{A}_i \mathrm{d} \gamma = {2} \pi_1 \Var(\mathbb{P}_1) \mathrm{d} (\beta \gamma_1) \mathrm{d} \gamma_1 + {2} (1 - \pi_1) \Var(\mathbb{P}_{i \neq 1}) \mathrm{d} (\frac{1}{2} (\gamma_2 + \gamma_3) \beta) \mathrm{d} (\frac{1}{2} (\gamma_2 + \gamma_3))
   \end{equation}
   Writing into the matrix form, we get
   \begin{equation}
    \begin{aligned}
       B_k^{(1)} &= - {2} \left( 
       \begin{array}{cccc}
           \pi_1 \Var(\mathbb{P}_{1})_{k-3} \beta & 0 & 0 & \frac{1}{2} \pi_1 \gamma_1 \Var(\mathbb{P}_{1})_{k-3}  \\
            0 &  0 & 0 & 0 \\
            0 & 0 & 0 & 0 \\
            \frac{1}{2} \pi_1 \gamma_1 \Var(\mathbb{P}_{1})_{k-3}^\T & 0 & 0 & 0
       \end{array}
       \right) \\
       & \ - {2} \left( 
       \begin{array}{cccc}
            0 & 0 & 0 & 0 \\
            0 & \frac{1}{4} (1 - \pi_1) \Var(\mathbb{P}_{i \neq 1})_{k-3} \beta & \frac{1}{4} (1 - \pi_1) \Var(\mathbb{P}_{i \neq 1})_{k-3} \beta & \frac{1}{4} \Var(\mathbb{P}_{i \neq 1})_{k-3}\\
            0 & \frac{1}{4} (1 - \pi_1) \Var(\mathbb{P}_{i \neq 1})_{k-3} \beta & \frac{1}{4} (1 - \pi_1) \Var(\mathbb{P}_{i \neq 1})_{k-3} \beta & \frac{1}{4} \Var(\mathbb{P}_{i \neq 1})_{k-3} \\
            0 & \frac{1}{4} \Var(\mathbb{P}_{i \neq 1})_{k-3}^\T & \frac{1}{4} \Var(\mathbb{P}_{i \neq 1})_{k-3}^\T & 0
       \end{array}
       \right)  
    \end{aligned}
   \end{equation}
   Recall that $\mathrm{d}^2 \mathbb{P}_i ={2} \mathrm{d} \mathbb{A}_i \mathrm{d} M \Var (\mathbb{P}_i) + {\mathbb{A}_i \mathrm{d}^2 M \Var(\mathbb{P}_i) } + \mathrm{d}(\mathbb{A}_i M) \mathrm{d} \Var (\mathbb{P}_i)$, we have 
   \begin{equation}
   \label{equ::B_6}
       \sum_i \pi_i \mathrm{d}^2 \mathbb{P}_i^\T \mathbb{A}_i \gamma = \sum_i \pi_i ({2} \mathrm{d} \mathbb{A}_i \mathrm{d} M \Var (\mathbb{P}_i) + {\mathbb{A}_i \mathrm{d}^2 M \Var(\mathbb{P}_i) } + \mathrm{d}(\mathbb{A}_i M) \mathrm{d} \Var (\mathbb{P}_i))^\T \mathbb{A}_i \gamma
   \end{equation}
   The first term contributes to 
   \begin{equation}
       B_k^{(2)} = - \frac{1}{{2}} (1-\pi_1) \eta \gamma_{i \neq 1}^2 \left( 
       \begin{array}{cccc}
           0 & 0 & 0 & 0 \\
           0 & \Var(\mathbb{P}_{i \neq 1})_{k-3} \beta & -\Var(\mathbb{P}_{i \neq 1})_{k-3} \beta & 0\\
           0 & -\Var(\mathbb{P}_{i \neq 1})_{k-3} \beta & \Var(\mathbb{P}_{i \neq 1})_{k-3} \beta & 0 \\
           0 & 0 & 0 & 0
       \end{array}
       \right)
   \end{equation}
   The second term contributes to
   \begin{equation}
   \label{equ::B_5}
       \begin{aligned}
           B_k^{(3)} &= -{2} \pi_1 \gamma_1  \left( 
       \begin{array}{cccc}
           0 & 0 & 0 & \frac{1}{2} \Var(\mathbb{P}_1)_{k-3} \\
           0 & 0 & 0 & 0\\
           0 & 0 & 0 & 0 \\
           \frac{1}{2} \Var(\mathbb{P}_1)_{k-3}^\T & 0 & 0 & 0
       \end{array}
       \right)    \\
       &-{2} (1-\pi_1) \gamma_{i \neq 1}  \left( 
       \begin{array}{cccc}
           0 & 0 & 0 & 0 \\
           0 & 0 & 0 & \frac{1}{4} \Var(\mathbb{P}_{i\neq 1})_{k-3}\\
           0 & 0 & 0 & \frac{1}{4} \Var(\mathbb{P}_{i\neq 1})_{k-3} \\
           0 & \frac{1}{4} \Var(\mathbb{P}_{i\neq1})_{k-3}^\T & \frac{1}{4} \Var(\mathbb{P}_{i \neq 1})_{k-3}^\T & 0
       \end{array}
       \right)  
       \end{aligned} 
   \end{equation}
   We consider the action of the last term on $\mathrm{d} \beta^2$:
   \begin{equation}
   \begin{aligned}
       \left( 
       \begin{array}{c}
            B_4^{(4)}  \\
            B_5^{(4)} \\
            B_6^{(4)}
       \end{array}
       \right) (\mathrm{d} \beta, \mathrm{d} \beta) &= -\pi_1 \gamma_1^3 \left(\diag (\mathrm{d} \beta^\T \Var(\mathbb{P}_1)) -\mathbb{P}_1^\T (\mathrm{d} \beta^\T \Var(\mathbb{P}_1)) -\Var(\mathbb{P}_1) \mathrm{d} \beta \mathbb{P}_1\right) \mathrm{d} \beta \\ 
       &-(1 -\pi_1) \gamma_{i \neq 1}^3 \left(\diag (\mathrm{d} \beta^\T \Var(\mathbb{P}_{i \neq 1})) -\mathbb{P}_{i \neq 1}^\T (\mathrm{d} \beta^\T \Var(\mathbb{P}_{i \neq 1})) -\Var(\mathbb{P}_{i \neq 1}) \mathrm{d} \beta \mathbb{P}_{i \neq 1}\right) \mathrm{d} \beta \\ 
   \end{aligned}
   \end{equation}
\end{enumerate}

\begin{lemma}[Kernel-equation identities used in Theorem~\ref{thm:approx_crit_and_stability_app}]
\label{lem:kernel_reduction_terms_app}
Let $\theta=q_2y_2$ be the leading-order reduction (since $\zeta(0,\delta)\sim \delta q_2$).
Then
\[
\frac{1}{2} Q_K^\T B(q_2y_2,q_2y_2)
=
\frac{1}{\sqrt{\|\gamma\|^2+\|\beta\|^2}}
\begin{pmatrix}
0\\0\\
\pi_1\gamma_1^2 \mathbb P_{1,2}+(1-\pi_1)\gamma_2^2 \mathbb P_{i\neq 1,2}
\end{pmatrix}
y_2^2.
\]
\end{lemma}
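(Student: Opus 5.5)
Since $q_2=\frac{1}{\sqrt2}\big((0,0,0),(0,1,-1)\big)$ lies purely in the antisymmetric $\beta$-direction, I will specialize the bilinear form $B$ to the input $\mathrm d\gamma=0$, $\mathrm d\beta=\frac{y_2}{\sqrt2}(0,1,-1)^\T=:y_2 w$. All blocks of $B^{(\ell)}$ coupling $\mathrm d\gamma$ with anything then drop out, and only the $(\mathrm d\beta,\mathrm d\beta)$ blocks survive. These are the $2\gamma_1\Var(\mathbb P_1)$ and $(1-\pi_1)\gamma_{i\neq1}\Var(\mathbb P_{i\neq1})$ pieces of $B^{(1)}$, the $c_3,c_6$ pieces of $B^{(5)}$, and for the $\beta$-rows $4\le k\le 6$ the $B_k^{(4)}(\mathrm d\beta,\mathrm d\beta)$ term. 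The $\Phi$-part $B^{(6)}$ and the terms $B^{(2)},B^{(3)}$ only involve $\mathrm d\gamma$ entries, so they vanish. Hence $B(q_2y_2,q_2y_2)=y_2^2\,B(w,w)$, where $B(w,w)$ is a 6-vector obtained from finitely many explicit quadratic forms.

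Next I will project onto $Q_K=(k_1,k_2,k_3)$. For $k_1$, which lives in the $\gamma$-antisymmetric direction, I will use the symmetry $\gamma_2=\gamma_3$, $\mathbb P_{i\neq1,2}=\mathbb P_{i\neq1,3}$ and the fact that the surviving $\gamma$-row contributions are identical in rows 2 and 3 ($B_2^{(1)}=B_3^{(1)}$, $B_2^{(5)}=B_3^{(5)}$); the difference then cancels. For $k_2\propto(0,\mathbf 1)$, the $\beta$-rows are of the form $\Var(\mathbb P_i)(\cdot)$ or $\big(\diag(\cdot)-\mathbb P_i^\T(\cdot)-\dots\big)$, and I will check that $\mathbf 1^\T$ annihilates each one because $\mathbf 1^\T\Var(\mathbb P_i)=0$ and $\mathbb P_i\mathbf 1=1$. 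For $k_3\propto(-\gamma,\beta)$ I will compute $-\gamma^\T B_{\gamma}(w,w)+\beta^\T B_\beta(w,w)$. Here the key identities will be $w^\T\Var(\mathbb P_i)w=\mathbb P_{i,2}$, which holds because $\mathbb P_{i,2}=\mathbb P_{i,3}$ and is the same identity used in Lemma~\ref{lem:QR_J0_QR}, together with $\beta^\T\Var(\mathbb P_i)w=0$, which follows from $\beta_2=\beta_3$ and the symmetry of $\Var(\mathbb P_i)$ in indices $2,3$.

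Using these identities I expect most of the $c_3,c_6$ and $B^{(4)}$ terms to collapse. Terms containing $\Var(\mathbb P_i)w$ paired against $\beta$, or $\mathbb P_iw=0$, vanish. What survives should be the $B^{(1)}$ $\gamma$-row contributions $-\gamma_1\cdot\pi_1\cdot 2\gamma_1 w^\T\Var(\mathbb P_1)w$ (suitably signed) and the analogous low-frequency term. After the factor $\frac12$ and the normalization $1/\sqrt{\|\gamma\|^2+\|\beta\|^2}$ from $k_3$, these should produce exactly $\pi_1\gamma_1^2\mathbb P_{1,2}+(1-\pi_1)\gamma_2^2\mathbb P_{i\neq1,2}$. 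I will cross-check the result against $c_1=-(\Lambda_R)_{22}$ of Lemma~\ref{lem:QR_J0_QR}, since the $k_3$-component of $B(w,w)$ should equal the derivative of $q_2^\T J_0q_2$ along the scaling direction $k_3$, by homogeneity of the rank-one parametrization.

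The main obstacle is the bookkeeping of the cubic-in-$\beta$ pieces $c_3$, $c_6$ and $B^{(4)}_{4..6}$. Their stated matrix forms contain $\diag(\mathbb P_i\odot\beta)$-type terms, which do not obviously vanish on $w$, and there are factor-of-2 ambiguities from Lemma~\ref{lem:half-factor}. If the direct evaluation gets tangled, I will fall back on the homogeneity argument: the loss is invariant under $(\gamma,\beta)\mapsto(s^{-1}\gamma,s\beta)$. Differentiating the identity $w^\T\nabla^2_\beta\mathcal L\,w=c_1(\gamma,\beta)$ along the scaling orbit then yields the $k_3$-projection of the third derivative directly, and this bypasses the individual $c_3,c_6$ terms.
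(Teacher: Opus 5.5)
Your main line is the paper's proof, and your $k_1$ and $k_2$ arguments are exactly right. The paper likewise sets $\mathrm d\gamma=0$, $\mathrm d\beta=y_2w$, keeps only the $(\mathrm d\beta,\mathrm d\beta)$ pieces ($B^{(1)},B^{(5)}$ in the $\gamma$-rows and $B^{(4)}$ in the $\beta$-rows), and then projects.

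One expectation needs correcting. The identities $\mathbb P_iw=0$ and $\beta^\T\Var(\mathbb P_i)w=0$ kill only part of the cubic terms. Using $\Var(\mathbb P_i)w=\mathbb P_{i,2}w$, the surviving pieces are:
\begin{itemize}
\item row 1 of the $\gamma$-rows: $-\pi_1\gamma_1^2\mathbb P_{1,2}(\beta_2-\mathbb P_1\beta)y_2^2$;
\item rows 2 and 3 of the $\gamma$-rows: $-\tfrac12(1-\pi_1)\gamma_{i\neq1}^2\mathbb P_{i\neq1,2}(\beta_2-\mathbb P_{i\neq1}\beta)y_2^2$;
\item the $\beta$-rows: $-y_2^2\sum_i\pi_i(\mathbb A_i\gamma)^3\mathbb P_{i,2}\big((0,\tfrac12,\tfrac12)^\T-\mathbb P_i^\T\big)$.
\end{itemize}
Since $\beta_2-\mathbb P_1\beta=\mathbb P_{1,1}(\beta_2-\beta_1)\neq0$, none of these vanish individually. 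They cancel only against each other in $-\gamma^\T B_\gamma+\beta^\T B_\beta$. What remains is $2y_2^2\big(\pi_1\gamma_1^2\mathbb P_{1,2}+(1-\pi_1)\gamma_{i\neq1}^2\mathbb P_{i\neq1,2}\big)$, which comes from $B^{(1)}$.

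Also, the printed $(\mathrm d\beta,\mathrm d\beta)$ block of $B^{(1)}_2,B^{(1)}_3$ carries one factor $(1-\pi_1)$ too many. The correct row-2 contribution is $-(1-\pi_1)\gamma_{i\neq1}\mathbb P_{i\neq1,2}y_2^2$, which is what the paper's proof uses. Your cross-check against $c_1$ would catch this.

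Your fallback is a genuinely different and shorter route, and it is what explains the cancellation above. Since $B=-\nabla^3\mathcal L$ is symmetric, $k_3^\T B(q_2,q_2)=-D_{k_3}\big(q_2^\T\nabla^2\mathcal L\,q_2\big)$. Because $\beta$ enters only through $M$, the identity $q_2^\T\nabla^2\mathcal L\,q_2=\sum_i\pi_i(\mathbb A_i\gamma)^2\,w^\T\Var(\mathbb P_i)w$ holds at every point. Along $(-\gamma,\beta)$, $M$ and hence $\mathbb P_i$ are fixed, while the derivative of $\mathbb A_i\gamma$ is $-\mathbb A_i\gamma$. This gives $k_3^\T B(q_2,q_2)=2c_1/\sqrt{\|\gamma\|^2+\|\beta\|^2}$ without touching $c_3$, $c_6$ or $B^{(4)}$.

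There is one caveat. With $\eta$ fixed, $\Phi=\eta\gamma\gamma^\T$ is not invariant under $(\gamma,\beta)\mapsto(s^{-1}\gamma,s\beta)$, so $\mathcal L$ is not literally invariant. What you actually need is that $\mathbb A_i$ does not move to first order in this direction, i.e.\ $\mathrm d\mathbb A_i=-2\eta\gamma_i\gamma^\T\Var(\mathbb A_i)=0$. This holds at the paper's saturated symmetric point because $\gamma^\T\Var(\mathbb A_i)=0$ there.

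The trade-off: the paper's direct computation yields the whole vector $B(q_2y_2,q_2y_2)$. Your route yields only its kernel projections, which is all the lemma claims.
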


\begin{proof}
We first calculate $B(q_2 y_2, q_2 y_2)$, and then calculate its projection onto the kernel basis. We calculate the cases where $1 \le k \le 3$ and $4 \le k \le 6$ respectively, and then combine them into the form we want. 
\begin{enumerate}
    \item The computation of the cases where $1 \le k \le 3$. From Eq. (\ref{equ::B_1_clean}), we get
    \begin{equation}
        \left( \begin{array}{c}
             B_1^{(1)} (q_2 y_2, q_2 y_2)  \\
             B_2^{(1)} (q_2 y_2, q_2 y_2)  \\
             B_3^{(1)} (q_2 y_2, q_2 y_2)
        \end{array}\right)=-\frac{1}{2} \left( 
        \begin{array}{c}
             4 \pi_1 \gamma_1 \mathbb{P}_{1,2} \\
            2(1- \pi_1) \gamma_{i \neq 1} \mathbb{P}_{i \neq 1, 2} \\
            2(1- \pi_1) \gamma_{i \neq 1} \mathbb{P}_{i \neq 1, 2} 
        \end{array}
        \right) y_2^2
    \end{equation}
    For $l$ from $2$ to $4$ and $l=6$, their contribution vanishes. For $l = 5$, the contribution is 
    \begin{equation}
        \left( \begin{array}{c}
             B_1^{(5)} (q_2 y_2, q_2 y_2)  \\
             B_2^{(5)} (q_2 y_2, q_2 y_2)  \\
             B_3^{(5)} (q_2 y_2, q_2 y_2)
        \end{array}\right)=-\frac{1}{2} \left( 
        \begin{array}{c}
             2 \pi_1 \gamma_1^2 (\mathbb{P}_{1,2} \beta_2 - \mathbb{P}_{1,2} \mathbb{P}_1 \beta ) \\
            (1- \pi_1) \gamma_{i \neq 1}^2 (\mathbb{P}_{i \neq 1,2} \beta_2 - \mathbb{P}_{i \neq 1,2} \mathbb{P}_{i \neq 1} \beta ) \\
            (1- \pi_1) \gamma_{i \neq 1}^2 (\mathbb{P}_{i \neq 1,2} \beta_2 - \mathbb{P}_{i \neq 1,2} \mathbb{P}_{i \neq 1} \beta ) 
        \end{array}
        \right) y_2^2
    \end{equation}
    \item The computation of the cases where $4 \le k \le 6$. For $l= 1,2,3$, their contribution vanishes. Then contribution of $l = 4$ case is 
    \begin{equation}
    - \frac{1}{2} \pi_1 \gamma_1^3 \left( 
    \begin{array}{c}
         0  \\
         0 \\
         0 \\
          \left(  \begin{array}{c}
              0  \\
              \mathbb{P}_{1,2} \\
              \mathbb{P}_{1,2}
         \end{array}\right) - 2 \mathbb{P}_{1,2} \mathbb{P}_1^\T 
    \end{array}\right) - \frac{1}{2} (1- \pi_1) \gamma_{i \neq 1}^3 \left( 
    \begin{array}{c}
         0  \\
         0 \\
         0 \\
          \left(  \begin{array}{c}
              0  \\
              \mathbb{P}_{i \neq 1,2} \\
              \mathbb{P}_{i \neq 1,2}
         \end{array}\right) - 2 \mathbb{P}_{i \neq 1,2} \mathbb{P}_{i\neq 1}^\T 
    \end{array}\right)
\end{equation}
\end{enumerate}
Sum them up, we get $B (q_2 y_2, q_2 y_2)$. Then by direct computation, we get the projection onto the kernel directions.
\begin{equation}
    \frac{1}{2} Q_K^\T B(q_2y_2,q_2y_2) = 
\frac{1}{4\sqrt{\|\gamma\|^2+\|\beta\|^2}}
\begin{pmatrix}
0\\0\\
4\pi_1\gamma_1^2 \mathbb P_{1,2}+4(1-\pi_1)\gamma_2^2 \mathbb P_{i\neq 1,2}
\end{pmatrix}
y_2^2.
\end{equation}
\end{proof}

\subsubsection{Computation of second order derivative $f_2$}
The calculation about $f_2 = \partial_\delta^2 (- \nabla \mathcal{L})$ is summarized by following lemma.

\begin{lemma}
\label{lem::f_2}
    The second derivative of $- \nabla \mathcal{L}$ with respect to perturbation parameter $\delta$ vanishes, i.e. $f_2 = 0$.
\end{lemma}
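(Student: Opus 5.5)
The vector field restricted to the rank-one manifold is $-\nabla\mathcal L=(-\tfrac{\partial\mathcal L}{\partial M}\beta-\eta(\tfrac{\partial\mathcal L}{\partial\Phi}+(\tfrac{\partial\mathcal L}{\partial\Phi})^\T)\gamma,\;-(\tfrac{\partial\mathcal L}{\partial M})^\T\gamma)$. The plan is to differentiate each component twice in $\delta$ at the symmetric degenerate point $\theta_*$, holding $\theta$ fixed. The only $\delta$-dependence enters through $\pi(\delta)$: it appears in the weights $\pi_i$, in $P_i=\lambda e_i^\T+(1-\lambda)\pi^\T$, and in $\mathbb A_i$ through the $\pi_j$ factors. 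Since $\mathbb P_i$ depends on $\delta$ only via $\mathbb A_i$, it suffices to compute $\partial_\delta^2$ of $\pi_i$, $P_i$ and $\mathbb A_i$, and then assemble the result with the product rule, in the same way as in Lemma~\ref{lem:QR_f1}.

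\textbf{Step 1 (elementary second derivatives).} Because $\pi(\delta)$ is affine in $\delta$, we have $\partial_\delta^2\pi_i=0$ and $\partial_\delta^2P_i=0$. For $\mathbb A_i$ we use the regime of Proposition~\ref{prop:bad_local_min_main} ($\eta$ large, $\gamma_1\gamma_{i\neq1}<0$), where $\mathbb A_1=e_1^\T$ is insensitive to $\pi$. For $i\neq1$, $\mathbb A_i\approx(0,\pi_2,\pi_3)/(\pi_2+\pi_3)$, which is affine in $\delta$ because $\pi_2+\pi_3=1-\pi_1$ is fixed. Hence $\partial_\delta^2\mathbb A_i=0$, and consequently $\partial_\delta^2\mathbb P_i=\partial_\delta\mathbb A_iM\,\partial_\delta\Var(\mathbb P_i)+\dots$ reduces to terms built from first derivatives.

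\textbf{Step 2 (expand and cancel).} Write $\partial_\delta^2\frac{\partial\mathcal L}{\partial M}$ as a sum of cross terms of the form $2\partial_\delta\pi_i\,\partial_\delta(\mathbb A_i^\T(P_i-\mathbb P_i))+2\pi_i\,\partial_\delta\mathbb A_i^\T\partial_\delta(P_i-\mathbb P_i)+\dots$. The relevant first derivatives are antisymmetric in the indices $2,3$: $\partial_\delta\pi_2=-\partial_\delta\pi_3$, $\partial_\delta\mathbb A_{i\neq1}\propto(0,1,-1)$, and $\partial_\delta P_i\propto(0,1,-1)$. Each cross term is therefore either a product of two antisymmetric factors contracted against symmetric data, which cancels when summed over $i=2,3$ with $\mathbb A_2=\mathbb A_3$, or it becomes $(0,1,-1)$-valued after right multiplication by $\beta$ or left multiplication by $\gamma$ with $\beta_2=\beta_3$, $\gamma_2=\gamma_3$, which gives $0$. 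The main identities needed are $(0,1,-1)\beta=0$, $(0,1,-1)\gamma=0$, and $\partial_\delta\mathbb A_{i\neq1}\gamma=0$. The last one makes $\partial_\delta\mathbb P_i=0$, and by Lemma~\ref{lem:dVarA-zero} it also gives $\partial_\delta\Var(\mathbb A_{i\neq1})=0$. The same bookkeeping applied to $\frac{\partial\mathcal L}{\partial\Phi}$ should show that $\partial_\delta^2\frac{\partial\mathcal L}{\partial\Phi}$ contracted with $\gamma$ vanishes, using $\gamma^\T\Var(\mathbb A_i)=0$ and $(P_i-\mathbb P_i)\beta=0$.

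\textbf{Main obstacle.} The delicate point is the term $\partial_\delta\mathbb A_i^\T\,\partial_\delta(P_i-\mathbb P_i)$ in $(\tfrac{\partial\mathcal L}{\partial M})^\T\gamma$. This is a product of two antisymmetric vectors and does not vanish entrywise. We would show that its contraction with $\gamma$ is zero because $\partial_\delta\mathbb A_i\gamma=0$. For the companion term with $\partial_\delta\pi_i$, we would show that the sum over $i=2,3$ of $\partial_\delta\pi_i\,\partial_\delta P_i$ cancels, since $\partial_\delta P_2=\partial_\delta P_3$. If an exact cancellation fails at finite $\eta$, we would instead keep the $O(e^{-c\eta})$ corrections to $\mathbb A_1$ and note that they are negligible in the large-$\eta$ regime assumed in the construction.
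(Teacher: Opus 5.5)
Your proposal is correct and follows essentially the same route as the paper: expand $\partial_\delta^2\frac{\partial\mathcal L}{\partial M}$ by the product rule, drop the pure second-derivative terms because $\pi_i$, $P_i$ and $\mathbb A_i$ are affine in $\delta$, and kill the three surviving cross terms using the $(0,1,-1)$ shape of the first derivatives together with $\beta_2=\beta_3$, $\gamma_2=\gamma_3$ and $\mathbb A_2=\mathbb A_3$. Your explicit use of $\partial_\delta\mathbb A_i\gamma=0$ for the $\gamma$-contraction fills in the step the paper dismisses as ``similarly''. The finite-$\eta$ fallback is not needed: on the symmetric manifold the normalizer of each $\mathbb A_i$ does not depend on $\delta$, so $\partial_\delta\mathbb A_i\propto(0,1,-1)$ holds exactly, and both $\mathbb P_i$ and $(P_i-\mathbb P_i)\beta$ are $\delta$-independent.
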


\begin{proof}
    We compute $\partial_\delta^2 \frac{\partial \mathcal{L}}{\partial M}$ and $\partial_\delta^2 \frac{\partial \mathcal{L}}{\partial \Phi}$ as follows. 
    \begin{enumerate}
        \item The computation of $\partial_\delta^2 \frac{\partial \mathcal{L}}{\partial M}$. By definition, 
        \begin{equation*}
        \begin{aligned}
          \partial_\delta^2 \frac{\partial \mathcal{L}}{\partial M} &= \partial_\delta^2 \left( -\sum_i \pi_i \mathbb{A}_i^\T (P_i - \mathbb{P}_i)\right) \\
          &= \partial_\delta \left(-\sum_i \partial_\delta \pi_i \mathbb{A}_i^\T (P_i - \mathbb{P}_i) -\sum_i \pi_i \partial_\delta \mathbb{A}_i^\T (P_i - \mathbb{P}_i) -\sum_i \pi_i \mathbb{A}_i^\T \partial_\delta (P_i - \mathbb{P}_i) \right) \\
          &= -\sum_i \partial_\delta^2 \pi_i \mathbb{A}_i^\T (P_i - \mathbb{P}_i) - 2 \sum_i \partial_\delta \pi_i \partial_\delta \mathbb{A}_i^\T (P_i - \mathbb{P}_i) - 2 \sum_i \partial_\delta \pi_i \mathbb{A}_i^\T \partial_\delta (P_i - \mathbb{P}_i) \\
          &   -\sum_i  \pi_i \partial_\delta^2 \mathbb{A}_i^\T (P_i - \mathbb{P}_i) - 2 \sum_i \pi_i \partial_\delta  \mathbb{A}_i^\T \partial_\delta (P_i - \mathbb{P}_i)  -\sum_i \pi_i \mathbb{A}_i^\T \partial_\delta^2 (P_i - \mathbb{P}_i) \\
          & = - 2 \sum_i \partial_\delta \pi_i \partial_\delta \mathbb{A}_i^\T (P_i - \mathbb{P}_i)  - 2 \sum_i \partial_\delta \pi_i \mathbb{A}_i^\T \partial_\delta (P_i - \mathbb{P}_i) - 2 \sum_i \pi_i \partial_\delta  \mathbb{A}_i^\T \partial_\delta (P_i - \mathbb{P}_i)
        \end{aligned}
        \end{equation*}
            The last equality uses the second order derivative of $\pi_i$, $\mathbb{A}_i$, and $P_i - \mathbb{P}_i$ with respect to $\delta$ vanishes. Using the fact that $\partial_\delta \mathbb{A}_i$ is of the shape like $(0, a, -a)$ and $(P_i - \mathbb{P}_i) \beta = 0$. The contribution of the first term vanishes. Similarly, the third term vanishes. For the second term,
    \begin{equation*}
        -2 \sum_i \partial_\delta \pi_i \mathbb{A}_i^\T \partial_\delta (P_i -\mathbb{P}_i) = -2 \mathbb{A}_{i \neq 1}^\T (\partial_\delta P_2 -  \partial_\delta P_3 ) = 0        
    \end{equation*}
    Thus, this term makes no contribution.

    \item  The computation of $\partial_\delta^2 \frac{\partial \mathcal{L}}{\partial \Phi}$. By definition, $ \partial_\delta^2 \frac{\partial \mathcal{L}}{\partial \Phi} = \partial_\delta^2 \left( -\sum_{i} \pi_i\,e_i\big(P_i-\mathbb{P}_i\big)M^{\T} \Var (\mathbb{A}_i) \right)$. In particular, 
    \begin{equation*}
    \begin{aligned}
        &\partial_\delta \left( -\sum_{i} \partial_\delta \pi_i\,e_i\big(P_i-\mathbb{P}_i\big)M^{\T} \Var (\mathbb{A}_i) -\sum_{i} \pi_i\,e_i \partial_\delta \big(P_i-\mathbb{P}_i\big)M^{\T} \Var (\mathbb{A}_i) -\sum_{i} \pi_i\,e_i\big(P_i-\mathbb{P}_i\big)M^{\T} \partial_\delta \Var (\mathbb{A}_i)\right) \\
        & =  -\sum_{i} \partial_\delta^2 \pi_i\,e_i\big(P_i-\mathbb{P}_i\big)M^{\T} \Var (\mathbb{A}_i) - 2 \sum_{i} \partial_\delta \pi_i\,e_i \partial_\delta \big(P_i-\mathbb{P}_i\big)M^{\T} \Var (\mathbb{A}_i) - 2 \sum_{i} \partial_\delta \pi_i\,e_i\big(P_i-\mathbb{P}_i\big)M^{\T}  \partial_\delta \Var (\mathbb{A}_i) \\
        & -\sum_{i} \pi_i\,e_i \partial_\delta^2 \big(P_i-\mathbb{P}_i\big)M^{\T} \Var (\mathbb{A}_i) -2 \sum_{i} \pi_i\,e_i \partial_\delta \big(P_i-\mathbb{P}_i\big)M^{\T} \partial_\delta \Var (\mathbb{A}_i) -\sum_{i} \pi_i\,e_i\big(P_i-\mathbb{P}_i\big)M^{\T} \partial_\delta^2\Var (\mathbb{A}_i) 
    \end{aligned}
    \end{equation*}
    By direct computation, this term is zero.
    \end{enumerate}
\end{proof}

\subsubsection{Stability on the kernel directions}
\label{app:sec44:QKH1QK}

To account for stability on the manifold, we need to calculate the perturbed Hessian matrix. By the expansion of $- \nabla \mathcal{L}$, we get
\begin{equation*}
    - \nabla_\theta^2 \mathcal{L} (\theta, \delta) = J_0 + \frac{1}{2} \nabla_\theta B (\theta, \theta) + \delta J_1 + \mathcal{O} (\delta^2)
\end{equation*}
Substitute $\theta = q_2 y_2$ into the expression, we get the perturbed hessian matrix

\begin{lemma}[Perturbed hessian matrix]
\label{lem::perturbed_hessian}
    The expression of the perturbed hessian matrix is
    \begin{equation}
    \label{equ::perturb_hessian}
        J_{\text{pert}} = J_0 + H_1 + \mathcal{O} (\delta^2),
    \end{equation}
    where 
    \begin{equation}
    H_1 = - c \delta \left( \left( \begin{array}{cc}
            0 & B \\
            B^\T & 0
        \end{array}\right) + \left( \begin{array}{cc}
            0 & 0 \\
            0 & C
        \end{array}\right)\right) + \delta J_1 ,   
    \end{equation}
    in which $c = \frac{\lambda \gamma_{ i \neq 1} + (1-\lambda) (\pi_1 \gamma_1 + (1- \pi_1) \gamma_{i \neq 1})}
{\pi_1 \gamma_1^2 \mathbb{P}_{1,2} + (1- \pi_1) \gamma_{i \neq 1}^2 \mathbb{P}_{i \neq 1, 2}}$, 
    \begin{equation*}
    \begin{aligned}
        B &= \left( \begin{array}{ccc}
            0 & 2 \pi_1 \gamma_1 \mathbb{P}_{1,2} & -2 \pi_1 \gamma_1 \mathbb{P}_{1,2} \\
            0 &  (1- \pi_1) \gamma_{i \neq 1} \mathbb{P}_{i \neq 1, 2} & - (1- \pi_1) \gamma_{i \neq 1} \mathbb{P}_{i \neq 1, 2} \\
            0 &  (1- \pi_1) \gamma_{i \neq 1} \mathbb{P}_{i \neq 1, 2} & - (1- \pi_1) \gamma_{i \neq 1} \mathbb{P}_{i \neq 1, 2} 
        \end{array}\right) \\
        & +  \left( \begin{array}{ccc}
            0 & \pi_1 \gamma_1^2 (\mathbb{P}_{1,2} \beta_2 - \mathbb{P}_{1,2} (\mathbb{P}_1 \beta)) & -\pi_1 \gamma_1^2 (\mathbb{P}_{1,2} \beta_2 - \mathbb{P}_{1,2} (\mathbb{P}_1 \beta)) \\
            0 & \frac{1}{2} (1-\pi_1) \gamma_{i \neq 1}^2 (\mathbb{P}_{i \neq 1,2} \beta_2 - \mathbb{P}_{i \neq 1,2} (\mathbb{P}_{i \neq 1} \beta)) & -\frac{1}{2} (1-\pi_1) \gamma_{i \neq 1}^2 (\mathbb{P}_{i \neq 1,2} \beta_2 - \mathbb{P}_{i \neq 1,2} (\mathbb{P}_{i \neq 1} \beta)) \\
            0 & \frac{1}{2} (1-\pi_1) \gamma_{i \neq 1}^2 (\mathbb{P}_{i \neq 1,2} \beta_2 - \mathbb{P}_{i \neq 1,2} (\mathbb{P}_{i \neq 1} \beta)) & -\frac{1}{2} (1-\pi_1) \gamma_{i \neq 1}^2 (\mathbb{P}_{i \neq 1,2} \beta_2 - \mathbb{P}_{i \neq 1,2} (\mathbb{P}_{i \neq 1} \beta)) \\
        \end{array}\right),
    \end{aligned}
    \end{equation*}
    and
\begin{equation*}
        \begin{aligned}
           C =  & \pi_1 \gamma_1^3 \left( 
            \begin{array}{cc}
               0 & 0 \\
               0 & \left( \begin{array}{ccc}
                   0 &  -\mathbb{P}_{1,1} \mathbb{P}_{1,2} & \mathbb{P}_{1,1} \mathbb{P}_{1,2} \\
                   -\mathbb{P}_{1,1} \mathbb{P}_{1,2} & \mathbb{P}_{1,2} - 2\mathbb{P}_{1,2}^2 & 0 \\
                   \mathbb{P}_{1,1} \mathbb{P}_{1,2} & 0 & -(\mathbb{P}_{1,2} - 2 \mathbb{P}_{1,2}^2)
               \end{array}\right)
            \end{array}
            \right)  \\
            &+ (1-\pi_1) \gamma_{i \neq 1}^3 \left( 
            \begin{array}{cc}
               0 & 0 \\
               0 & \left( \begin{array}{ccc}
                   0 &  -\mathbb{P}_{i \neq 1,1} \mathbb{P}_{i \neq 1,2} & \mathbb{P}_{i \neq 1,1} \mathbb{P}_{i \neq 1,2} \\
                   -\mathbb{P}_{i \neq 1,1} \mathbb{P}_{i \neq 1,2} & \mathbb{P}_{i \neq 1,2} - 2\mathbb{P}_{i \neq 1,2}^2 & 0 \\
                   \mathbb{P}_{i \neq 1,1} \mathbb{P}_{i \neq 1,2} & 0 & -(\mathbb{P}_{i \neq 1,2} - 2\mathbb{P}_{i \neq 1,2}^2)
               \end{array}\right)
            \end{array}
            \right)  
        \end{aligned}
        \end{equation*}
\end{lemma}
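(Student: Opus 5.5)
The plan is to differentiate the exact expansion
\[
-\nabla_\theta\mathcal L(\theta,\delta)=J_0\theta+\tfrac12 B(\theta,\theta)+\delta J_1\theta+\delta f_1+\tfrac12\delta^2 f_2+\text{h.o.t.}
\]
once in $\theta$ and evaluate the result at the approximate critical point $\theta(\delta)=Q_R\zeta(0,\delta)$.

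\textbf{Step 1: differentiate the expansion.} Since $B$ is symmetric bilinear, the term $\tfrac12 B(\theta,\theta)$ has derivative $h\mapsto B(\theta,h)$. The term $\delta J_1\theta$ contributes $\delta J_1$. The terms $\delta f_1$ and $\tfrac12\delta^2 f_2$ are constant in $\theta$, so they drop out. The cubic remainder contributes only $O(\|\theta\|^2)=O(\delta^2)$. This gives
\[
-\nabla^2_\theta\mathcal L(\theta(\delta),\delta)=J_0+B(\theta(\delta),\cdot)+\delta J_1+O(\delta^2).
\]

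\textbf{Step 2: insert the range solution.} By Proposition~\ref{prop:zeta_range}, $\theta(\delta)=q_2y_2+O(\delta^2)$ with $y_2=\sqrt2\,c\,\delta$. Here $c$ is exactly the ratio stated in the lemma, namely the scalar from Lemma~\ref{lem:QR_f1} divided by $c_1$. The first-order correction therefore reduces to the single linear map $h\mapsto B(q_2,h)$, multiplied by $\sqrt2 c\delta$.

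Write $q_2=\tfrac1{\sqrt2}(0,e_2-e_3)$, which lies purely in the $\beta$-difference direction. The $\sqrt2$ from $y_2$ cancels the $1/\sqrt2$ in $q_2$. The correction is then $c\delta$ times the polarized bilinear form $B\big((0,e_2-e_3),h\big)$.

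\textbf{Step 3: evaluate that polarized form block by block.} I will use the explicit block decomposition $B^{(1)},\dots,B^{(6)}$ computed in the derivative toolbox.

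\emph{$\gamma$-rows ($k=1,2,3$).} Only the mixed $\gamma\beta$ entries survive when one argument is $(0,e_2-e_3)$. These come from $B^{(1)}$, $B^{(4)}$ and $B^{(5)}$.
\begin{itemize}
\item For $B^{(1)}$ and $B^{(4)}$, I evaluate $\beta^\T\Var(\mathbb P_i)(e_2-e_3)$. With $\beta_2=\beta_3$ and $\mathbb P_{i,2}=\mathbb P_{i,3}$, this equals $\beta_2\mathbb P_{i,2}-\beta_3\mathbb P_{i,3}-0$, so it reduces to terms proportional to $\mathbb P_{i,2}$. The weights $2\pi_1\gamma_1$ and $(1-\pi_1)\gamma_{i\neq1}$ should appear as the first matrix in $B$.
\item The $c_2$ and $c_5$ pieces of $B^{(5)}$ should produce the second matrix, with entries of the form $\mathbb P_{\cdot,2}\beta_2-\mathbb P_{\cdot,2}(\mathbb P_\cdot\beta)$.
\item The terms $B^{(2)}$, $B^{(3)}$ and $B^{(6)}$ only pair $\mathrm d\gamma_2-\mathrm d\gamma_3$ with the $\gamma$-difference. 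They vanish against the pure $\beta$-difference argument.
\end{itemize}
The sign convention $-c\delta(\cdots)$ is inherited from the overall minus sign in front of each $B^{(\ell)}$.

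\emph{$\beta$-rows ($k=4,5,6$).} Here the $\beta\beta$ block is the one that matters. It comes from the cubic-in-$\gamma$ term $B^{(4)}_{4..6}$, i.e.\ the $\mathrm d\Var(\mathbb P_i)$ contributions. I polarize the expression
\[
\diag(\mathrm d\beta^\T\Var)-\mathbb P^\T(\mathrm d\beta^\T\Var)-\Var\,\mathrm d\beta\,\mathbb P
\]
with one argument equal to $e_2-e_3$. This yields entries $\mathbb P_{\cdot,2}-2\mathbb P_{\cdot,2}^2$ on the diagonal and $-\mathbb P_{\cdot,1}\mathbb P_{\cdot,2}$ off the diagonal, with weights $\gamma_1^3$ and $\gamma_{i\neq1}^3$. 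This is the matrix $C$. The mixed $\beta\gamma$ entries of the $\beta$-rows are the transposes of the $B$ block, by symmetry of the Hessian.

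\textbf{Step 4: the $J_1$ term.} The $\delta J_1$ part is carried over verbatim from Lemma~\ref{lem:J1_proof}.

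\textbf{Main obstacle.} The hard part is the bookkeeping in Step 3, specifically the $B^{(5)}$ and $c_3,c_6$ pieces involving $\mathrm d\Var(\mathbb P_i)$. Polarization, index symmetrization and factors of $\tfrac12$ versus $2$ are easy to get wrong there. As a check I will verify the result against the quadratic-form value in Lemma~\ref{lem:kernel_reduction_terms_app}, since $B(q_2,q_2)$ must be recovered by contracting $H_1$ with $q_2$.
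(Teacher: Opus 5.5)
Your overall route is the paper's: differentiate the expansion once, insert $\theta=q_2y_2$ with $y_2=\sqrt2\,c\,\delta$, read off $B(q_2,\cdot)$ from the toolbox blocks, and carry $\delta J_1$ over. Your treatment of the $\beta$-rows is fine. $C$ does come from the cubic-in-$\gamma$ $\mathrm d\Var(\mathbb P_i)$ term. Obtaining the lower-left block as $B^\T$ by symmetry is also legitimate, because the reduced field is minus the gradient of the loss restricted to the rank-one manifold. That shortcut spares you the explicit $k=4,\dots,6$ cross terms the paper computes.

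The step that fails is the $\gamma$-row part of Step 3. You take the top-right block $B$ from the mixed $\gamma\beta$ entries of the bilinear forms, but those are exactly the entries that vanish against $q_2$.

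\begin{itemize}
\item Since $\mathbb P_{i,2}=\mathbb P_{i,3}$, you have $\Var(\mathbb P_i)(e_2-e_3)=\mathbb P_{i,2}(e_2-e_3)$. So the quantity you propose to evaluate is $\beta^\T\Var(\mathbb P_i)(e_2-e_3)=\mathbb P_{i,2}(\beta_2-\beta_3)=0$, not something proportional to $\mathbb P_{i,2}$.
\item For the same reason $c_2(e_2-e_3)=0$.
\item $B^{(4)}$ drops out entirely for $k\le 3$. With $s=e_2-e_3$, $\mathrm d^2M[(0,s),h]=h_\gamma s^\T$, which gives $(\mathbb A_ih_\gamma)\,s^\T\Var(\mathbb P_i)\beta=0$.
\end{itemize}

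Executed as written, Step 3 therefore returns $B=0$. What these vanishing entries actually prove is that the $\gamma\gamma$ block of the correction is zero.

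The nonzero $\gamma$-row/$\beta$-column entries come from the $\beta\beta$ blocks of the $\gamma$-row forms.

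\begin{itemize}
\item The first matrix comes from the block $-2\pi_1\gamma_1\Var(\mathbb P_1)$ of $B^{(1)}_1$ and its low-frequency analogue in $B^{(1)}_{2,3}$ (the $2\,\mathrm d(\partial\mathcal L/\partial M)\,\mathrm d\beta$ term), applied to $e_2-e_3$.
\item The second matrix comes from the $\beta\beta$ pieces $c_3,c_6$ of $B^{(5)}$, via $\mathrm d\Var(\mathbb P_i)[(0,s)]\,\beta=(\mathbb A_i\gamma)\,\mathbb P_{i,2}\,(\beta_2-\mathbb P_i\beta)\,s$.
\end{itemize}

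This is what the paper does: it evaluates $B^{(1)}_k(q_2y_2)$ and $c_3(0,1,-1)^\T$, and discards $l=2,3,4,6$.

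Your planned consistency check against $B(q_2,q_2)$ would have exposed the error. The $\gamma$-components of $B(q_2,q_2)$ are obtained by applying this top-right block to the $\beta$-part of $q_2$. They are nonzero; for example, the $B^{(1)}$ contribution is proportional to $\pi_1\gamma_1\mathbb P_{1,2}$.
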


\begin{proof}
    We just need to compute $B(q_2 y_2)$ and substitute $y_2$ as the solution of the range equation. Similar to previous computation, we divide into the cases of $1 \le k \le 3$ and $4 \le k \le 6$.

    \paragraph{The cases of $1 \le k \le 3$.}
    \begin{enumerate}
        \item The contribution from $l = 1$. Using the matrix form defined in Eq. (\ref{equ::B_1_clean}), we get 
        \begin{equation*}
            \begin{aligned}
                B_1^{(1)} (q_2 y_2) &= - \frac{y_2}{\sqrt{2}} (0, 0, 0, 0, 2 \pi_1 \gamma_1 \mathbb{P}_{1, 2}, - 2 \pi_1 \gamma_1 \mathbb{P}_{1, 2}) \\
                B_2^{(1)} (q_2 y_2) &= - \frac{y_2}{\sqrt{2}} (0, 0, 0, 0, (1-\pi_1) \gamma_{i \neq 1} \mathbb{P}_{i \neq 1, 2}, -(1-\pi_1) \gamma_{i \neq 1} \mathbb{P}_{i \neq 1, 2}) \\
                B_3^{(1)} (q_2 y_2) &= B_2^{(1)} (q_2 y_2)
            \end{aligned}
        \end{equation*}
        \item The contributions from $l = 2, 3, 4, 6$ vanish.
        \item The contribution from $l = 5$. Using the matrix defined in Eq. (\ref{equ::B_5_matrix_form}), we get
        \begin{equation*}
            c_3 \left( \begin{array}{c}
                 0  \\
                 1 \\
                 -1
            \end{array}\right) = - \pi_1 \gamma_1^2 (0, \mathbb{P}_{1,2} \beta_2 - \mathbb{P}_{1, 2} (\mathbb{P}_1 \beta), -\mathbb{P}_{1,2} \beta_2 - \mathbb{P}_{1, 2} (\mathbb{P}_1 \beta)),
        \end{equation*}
        which implies
        \begin{equation*}
            B_1^{(5)} (q_2 y_2) = - \frac{y_2}{\sqrt{2}} \pi_1 \gamma_1^2 (0, 0, 0, 0, \mathbb{P}_{1,2} \beta_2 - \mathbb{P}_{1, 2} (\mathbb{P}_1 \beta), -(\mathbb{P}_{1,2} \beta_2 - \mathbb{P}_{1, 2} (\mathbb{P}_1 \beta))) 
        \end{equation*}
        Similarly,
        \begin{equation*}
            B_2^{(5)} (q_2 y_2) = B_3^{(5)} (q_2 y_2) =  - \frac{y_2}{\sqrt{2}} \frac{1}{2} (1-\pi_1) \gamma_{i \neq 1}^2 (0, 0, 0, 0, \mathbb{P}_{i \neq 1,2} \beta_2 - \mathbb{P}_{i \neq 1, 2} (\mathbb{P}_{i \neq 1} \beta), -\mathbb{P}_{i \neq 1,2} \beta_2 - \mathbb{P}_{i \neq 1, 2} (\mathbb{P}_{i \neq 1} \beta)) 
        \end{equation*}

    \end{enumerate}

    \paragraph{The cases of $4 \le k \le 6$.}
    \begin{enumerate}
        \item The contribution from $l = 1$. By direct computation,
        \begin{equation*}
        \begin{aligned}
            B_4^{(1)}  (q_2 y_2) &= (0, 0, 0, 0, 0, 0) \\
            B_5^{(1)}  (q_2 y_2) &= -\frac{y_2}{\sqrt{2}}(\pi_1 \gamma_1 \mathbb{P}_{1,2}, \frac{1}{2} (1-\pi_1) \gamma_{i \neq 1} \mathbb{P}_{i \neq 1,2}, \frac{1}{2} (1-\pi_1) \gamma_{i \neq 1} \mathbb{P}_{i \neq 1,2}, 0, 0, 0) \\
            B_6^{(1)}  (q_2 y_2) &= - B_5^{(1)}
        \end{aligned}
        \end{equation*}
        \item The contribution from $l=2$. This term vanishes.
        \item The contribution from $l = 3$. This term makes the same contribution as the first term.
        \item The contribution from $l = 4$. By definition, 
        \begin{equation*}
            -\sum_i \pi_i (\mathrm{d} (\mathbb{A}_i M) \mathrm{d} \Var (\mathbb{P}_i))^\T \mathbb{A}_i \gamma = -\pi_1 \gamma_1 (\mathbb{A}_1 \mathrm{d} M \mathrm{d} \Var(\mathbb{P}_1))^\T - (1- \pi_1) \gamma_{i \neq 1} (\mathbb{A}_{i \neq 1} \mathrm{d} M \mathrm{d} \Var(\mathbb{P}_{i \neq 1}))^\T
        \end{equation*}
        Take the first term as an example, the cross term in $(\mathbb{A}_1 \mathrm{d} M \mathrm{d} \Var(\mathbb{P}_1))^\T $ is
        \begin{equation*}
        \begin{aligned}
            & \gamma_1 \mathrm{d} \gamma_1  \left( \diag (\beta^\T \Var(\mathbb{P}_1)) - \mathbb{P}_1^\T \beta^\T \Var(\mathbb{P}_1) - \Var(\mathbb{P}_1) \beta \mathbb{P}_1 \right) \mathrm{d} \beta \\
            &+ \gamma_1 \mathrm{d} \gamma_1 \left( \diag (\mathrm{d} \beta^\T \Var(\mathbb{P}_1)) - \mathbb{P}_1^\T \mathrm{d} \beta^\T \Var(\mathbb{P}_1) - \Var(\mathbb{P}_1) \mathrm{d} \beta \mathbb{P}_1 )\right) \beta
        \end{aligned}
        \end{equation*}
        Write in entry form, for $4 \le k \le 6$, we get
        \begin{equation*}
        \begin{aligned}
            &\gamma_1 \mathrm{d} \gamma_1 \left( (\beta^\T \Var(\mathbb{P}_1))_k \mathrm{d} \beta_k - \mathbb{P}_{1,k} \beta^\T \Var(\mathbb{P}_1) \mathrm{d} \beta - (\Var(\mathbb{P}_1) \beta)_k \mathbb{P}_1 \mathrm{d} \beta \right) \\
            &+ \gamma_1 \mathrm{d} \gamma_1 \left( \mathrm{d} \beta^\T \Var(\mathbb{P}_1)_k \beta_k - \mathbb{P}_{1,k} \mathrm{d} \beta^\T \Var(\mathbb{P}_1) \beta - \Var(\mathbb{P}_1)_k \mathrm{d} \beta \mathbb{P}_1 \beta \right)
        \end{aligned}
        \end{equation*}
        Writing into the matrix form, we get
        \begin{equation*}
        \begin{aligned}
            &\left( \begin{array}{cc}
                0 & \frac{1}{2} (\beta^\T \Var(\mathbb{P}_1))_k E_{1,k}  \\
                \frac{1}{2} (\beta^\T \Var(\mathbb{P}_1))_k E_{1,k}^\T & 0
            \end{array}\right) +  \left( \begin{array}{cc}
                0 & \frac{1}{2} \beta_k e_k \Var(\mathbb{P}_1)_k  \\
                \frac{1}{2} \beta_k  \Var(\mathbb{P}_1)_k^\T e_k^\T & 0
            \end{array}\right) \\
            & - \mathbb{P}_{1,k} \left( \begin{array}{cc}
                0 & e_1 \beta^\T \Var(\mathbb{P}_1)  \\
                \Var(\mathbb{P}_1) \beta e_1^\T & 0
            \end{array}\right)  - \left( \begin{array}{cc}
                0 & \frac{1}{2} (\Var(\mathbb{P}_1) \beta)_k e_1 \mathbb{P}_1 \\
                \frac{1}{2} (\Var(\mathbb{P}_1) \beta)_k \mathbb{P}_1^\T e_1^\T & 0 
            \end{array}\right) \\
            & - \left( \begin{array}{cc}
                0 & \frac{1}{2} (\mathbb{P}_1 \beta) e_1 \Var (\mathbb{P}_1)_k \\
                \frac{1}{2} (\mathbb{P}_1 \beta) \Var (\mathbb{P}_1)_k^\T e_1^\T & 0 
            \end{array}\right) 
        \end{aligned}
        \end{equation*}
        Multiplying the matrix form by $(0, 0, 0, 0, 1, -1)$ on the right, we get
        \begin{equation*}
        \begin{aligned}
            B_4 (q_2 y_2) &= 0 \\
            B_5 (q_2 y_2) &= - \frac{y_2}{\sqrt{2}} \pi_1 \gamma_1^2 (\frac{1}{2} (\beta^\T \Var(\mathbb{P}_1))_2 + \frac{1}{2} \beta_2 \mathbb{P}_{1,2} - \frac{1}{2} (\mathbb{P}_1 \beta) \mathbb{P}_{1,2}, 0, \dots, 0) \\
                          &= - \frac{y_2}{\sqrt{2}} \pi_1 \gamma_1^2 (\beta_2 \mathbb{P}_{1,2} - (\mathbb{P}_1 \beta) \mathbb{P}_{1,2}, 0, \dots, 0) \\
             B_6 (q_2 y_2) &= - B_5 (q_2 y_2)              
        \end{aligned}
        \end{equation*}
        Similarly, the cross term in $- (1- \pi_1) \gamma_{i \neq 1} (\mathbb{A}_{i \neq 1} \mathrm{d} M \mathrm{d} \Var(\mathbb{P}_{i \neq 1}))^\T$ contributes to
        \begin{equation*}
        \begin{aligned}
            B_4 (q_2 y_2) &= 0 \\
            B_5 (q_2 y_2) &= - \frac{y_2}{\sqrt{2}} \frac{1}{2} (1 - \pi_1) \gamma_{i \neq 1}^2 (0, \beta_2 \mathbb{P}_{i \neq 1,2} - (\mathbb{P}_{i \neq 1} \beta) \mathbb{P}_{i \neq 1,2}, \beta_2 \mathbb{P}_{i \neq 1,2} - (\mathbb{P}_{i \neq 1} \beta) \mathbb{P}_{i \neq 1,2}, 0, \dots, 0) \\
             B_6 (q_2 y_2) &= - B_5 (q_2 y_2)              
        \end{aligned}
        \end{equation*}
        Finally, we compute the contribution from quadratic form. Take the first term as an example,
        \begin{equation*}
            \gamma_1^2 \left( \diag(\mathrm{d} \beta^\T \Var(\mathbb{P}_1)) - \mathbb{P}_1^\T \Var \mathbb{P}_1 - \Var(\mathbb{P}_1) \mathrm{d} \beta \mathbb{P}_1  \right) \mathrm{d} \beta
        \end{equation*}
        Writing into the matrix form, we get 
        \begin{equation*}
            B_k = \left( \begin{array}{cc}
                0 & 0 \\
                0 & \frac{1}{2} \left( e_k \Var(\mathbb{P}_1)_k + \Var(\mathbb{P}_1)_k^\T e_k^\T \right)
            \end{array}\right) - \left( \begin{array}{cc}
                0 & 0 \\
                0 & \mathbb{P}_{1,k} \Var(\mathbb{P}_1) 
            \end{array}\right) - \left( \begin{array}{cc}
                0 & 0 \\
                0 & \frac{1}{2} \left( \mathbb{P}_1^\T \Var(\mathbb{P}_1)_k + \Var(\mathbb{P}_1)_k^\T \mathbb{P}_1 \right)
            \end{array}\right)
        \end{equation*}
        By direct computation, we get the contribution to the perturbed hessian is
        \begin{equation*}
        \begin{aligned}
            &- \frac{y_2}{\sqrt{2}} \pi_1 \gamma_1^3 \left( 
            \begin{array}{cc}
               0 & 0 \\
               0 & \left( \begin{array}{ccc}
                   0 &  -\mathbb{P}_{1,1} \mathbb{P}_{1,2} & \mathbb{P}_{1,1} \mathbb{P}_{1,2} \\
                   -\mathbb{P}_{1,1} \mathbb{P}_{1,2} & \mathbb{P}_{1,2} - 2\mathbb{P}_{1,2}^2 & 0 \\
                   \mathbb{P}_{1,1} \mathbb{P}_{1,2} & 0 & -(\mathbb{P}_{1,2} - 2 \mathbb{P}_{1,2}^2)
               \end{array}\right)
            \end{array}
            \right)  \\
            &- \frac{y_2}{\sqrt{2}} (1-\pi_1) \gamma_{i \neq 1}^3 \left( 
            \begin{array}{cc}
               0 & 0 \\
               0 & \left( \begin{array}{ccc}
                   0 &  -\mathbb{P}_{i \neq 1,1} \mathbb{P}_{i \neq 1,2} & \mathbb{P}_{i \neq 1,1} \mathbb{P}_{i \neq 1,2} \\
                   -\mathbb{P}_{i \neq 1,1} \mathbb{P}_{i \neq 1,2} & \mathbb{P}_{i \neq 1,2} - 2\mathbb{P}_{i \neq 1,2}^2 & 0 \\
                   \mathbb{P}_{i \neq 1,1} \mathbb{P}_{i \neq 1,2} & 0 & -(\mathbb{P}_{i \neq 1,2} - 2\mathbb{P}_{i \neq 1,2}^2)
               \end{array}\right)
            \end{array}
            \right)  
        \end{aligned}
        \end{equation*}
    \end{enumerate}
    We obtain the result by summing all non-zero terms.
\end{proof}

\begin{lemma}[Vanishing of the first-order perturbation on the kernel]
\label{lem:QK_H1_QK_zero_app}
On the symmetric rank-one manifold ($\gamma_2=\gamma_3$ and $\beta_2=\beta_3$),
we have $Q_K^\T H_1 Q_K=0$.
\end{lemma}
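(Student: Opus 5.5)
The plan is to compute the $3\times 3$ Gram matrix $G=Q_K^\T H_1 Q_K$ entry by entry, using the kernel basis $k_1,k_2,k_3$ of Proposition~\ref{prop:kernel_range_app}. The key structural fact is parity under the low-frequency swap $\sigma:2\leftrightarrow 3$ acting on $(\gamma,\beta)$. At the symmetric point, $k_2$ and $k_3$ are $\sigma$-even, since $\gamma_2=\gamma_3$ and $\beta_2=\beta_3$. The vector $k_1$ is $\sigma$-odd. The perturbation direction $\delta$ is also $\sigma$-odd, because it flips sign under the swap.

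Since $H_1$ is the first $\delta$-derivative of the Hessian along the (odd) perturbed critical branch $\theta=q_2y_2$, with $q_2$ odd and $y_2\propto\delta$, $H_1$ should anticommute with the representation of $\sigma$. Concretely, $\Pi_\sigma H_1 \Pi_\sigma=-H_1$. It then follows that $k_a^\T H_1 k_b=0$ whenever $k_a,k_b$ have the same parity. This kills all entries except $(1,2)$, $(1,3)$ and their transposes.

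I will verify the anticommutation directly from the explicit blocks rather than abstractly. In Lemma~\ref{lem:J1_proof}, every block of $J_1$ maps even vectors to odd ones and vice versa. Each block is built from rows and columns proportional to $(0,1,-1)$ paired with objects symmetric in $2,3$. The same holds for the $B$ and $C$ blocks of Lemma~\ref{lem::perturbed_hessian}: $B$ has columns $(0,1,-1)$-type, and $C$'s lower block is odd in the sense of the checkerboard sign pattern, since $\mathbb P_{i,2}=\mathbb P_{i,3}$.

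For the remaining odd--even entries, I compute them explicitly. For $k_1^\T H_1 k_2$, note that $k_2$ has zero $\gamma$-part and $\beta$-part $\propto\mathbf 1$. The relevant blocks annihilate $\mathbf 1$: $\Var(\mathbb P_i)\mathbf 1=0$, and $A\mathbf 1=0$ because $A$'s rows are $(0,a,-a)$. So this entry vanishes.

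For $k_1^\T H_1 k_3$, with $k_1=((0,1,-1),0)$ and $k_3\propto(-\gamma,\beta)$, the entry collects the following contributions:
\begin{itemize}
\item from $J_{1,\gamma\gamma}$: $-\|\beta\|^2_{\Var(\mathbb P_{i\neq1})}\cdot(-\gamma_{i\neq1}+\gamma_{i\neq1})$;
\item from $J_{1,\gamma\beta}$: $2\gamma_{i\neq1}\beta^\T\Var(\mathbb P_{i\neq1})\beta$;
\item from $A\beta$: this vanishes since $\beta_2=\beta_3$;
\item from the $B$-block acting on $\beta$: proportional to $\beta_2-\beta_3=0$.
\end{itemize}
The first two must cancel. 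The diagonal $J_{1,\gamma\gamma}$ entries with opposite signs on coordinates 2 and 3 give $\mp\gamma_{i\neq1}\|\beta\|^2$. I expect these to cancel against the $J_{1,\gamma\beta}\beta$ term together with the $-cB$ term, using the range solution value of $c$.

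The main obstacle is this last cancellation. It is not forced by parity alone. It relies on the specific value of $y_2$ from Proposition~\ref{prop:zeta_range}, analogous to the cancellation that gave the $O(\delta^3)$ residual in Theorem~\ref{thm:approx_crit_and_stability_app}.

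My first attempt would be to use the scaling invariance. The direction $k_3$ generates the orbit $(\gamma,\beta)\mapsto(s^{-1}\gamma,s\beta)$, under which $\mathcal L$ is exactly invariant for all $\delta$. Differentiating the identity $\nabla^2\mathcal L\cdot(-\gamma,\beta)=-(\text{gradient terms})$ along the perturbed branch shows that $H_1k_3$ is controlled by the first-order gradient, which vanishes by the range equation. This would give $G k_3=0$ without explicit computation, and hence $G=0$.
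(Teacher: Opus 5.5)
Your parity reduction is correct, and it organizes the argument more cleanly than the paper does. Every block entering $H_1$ is odd under the $2\leftrightarrow3$ swap: $J_{1,\gamma\gamma}=\diag(0,b,-b)$, $J_{1,\gamma\beta}$, $J_{1,\beta\gamma}$, $A$, $B$, and the checkerboard $C$. So only the $(1,2)$ and $(1,3)$ entries and their transposes can survive. Your $(1,2)$ argument is fine once you add $B\mathbf 1=0$, which holds by the same column structure as $A\mathbf 1=0$.

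The paper checks $B$, $C$, $A$ block by block, using $Bu=B\beta=0$, $s^\T B=0$, and the sign pattern of $C$. It then simply asserts $Q_K^\T\widetilde J_1Q_K=0$. Your reduction shows that the $(1,3)$ entry is the only nontrivial content of that assertion.

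That entry is where your proposal has a gap. Your first bullet, as written, evaluates to $0$, which is a sign slip. With $s=(0,1,-1)^\T$ and $k_{3,\gamma}\propto-\gamma$, one gets $s^\T J_{1,\gamma\gamma}(-\gamma)=-b(\gamma_2+\gamma_3)=-2b\gamma_{i\neq1}$, where $b:=\beta^\T\Var(\mathbb P_{i\neq1})\beta$.

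This cancels \emph{exactly} against the $J_{1,\gamma\beta}$ contribution $2b\gamma_{i\neq1}$ that you computed correctly. The reason is the identity $J_{1,\gamma\gamma}\gamma=J_{1,\gamma\beta}\beta=b\,\gamma_{i\neq1}(0,1,-1)^\T$. Hence the $\gamma$-component of $J_1(-\gamma,\beta)$ is $J_{1,\gamma\gamma}\gamma-J_{1,\gamma\beta}\beta+A\beta=0$. Add two more facts: $B\beta=0$, and $C$ acts only on $\beta$-components while $k_{1,\beta}=0$. Then $k_1^\T H_1k_3=0$ follows directly.

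So your diagnosis that this cancellation ``relies on the specific value of $y_2$'' and on the $-cB$ term is wrong. The constant $c$ cannot enter at all, since, as your own last bullet notes, the $B$-contribution is $B\beta=0$. As written, you never actually establish this entry.

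Your fallback does not close the gap either. On the rank-one manifold, $\Phi=\eta\gamma\gamma^\T$ is multiplied by $s^{-2}$ under $(\gamma,\beta)\mapsto(s^{-1}\gamma,s\beta)$ at fixed $\eta$. So $\mathcal L(\cdot,\delta)$ is \emph{not} exactly invariant, and the identity $\nabla^2\mathcal L\,X=-DX^\T\nabla\mathcal L$ with $X=(-\gamma,\beta)$ is not available as stated. To use it, you could co-scale $\eta\mapsto s^2\eta$. You would then have to show that the resulting $\eta$-cross terms, which are absent from the $(\gamma,\beta)$-Hessian defining $H_1$, have vanishing kernel projection after differentiation along the branch. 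Alternatively, you could adopt a saturated-attention idealization explicitly.

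With that justification the argument would go through. The branch $q_2y_2(\delta)$ is critical to first order, since $\Lambda_R\,\partial_\delta\zeta(0,0)+Q_R^\T f_1=0$ and $Q_K^\T f_1=0$, and $Q_K^\T J_0=0$. But the one-line identity $J_{1,\gamma\gamma}\gamma=J_{1,\gamma\beta}\beta$ is simpler and is what the lemma actually needs.
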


\begin{proof}
We write any $z\in \mathbb{R}^6$ as $z=(z_\gamma,z_\beta)$ with $z_\gamma,z_\beta\in \mathbb{R}^3$.
Let
\[
s:=(0,1,-1)^\T,\qquad u:=(1,1,1)^\T.
\]
Then $k_{1,\gamma}=\frac1{\sqrt2}s$, $k_{1,\beta}=0$; $k_{2,\gamma}=0$, $k_{2,\beta}=\frac{1}{\sqrt{3}} u$; and
$k_{3,\gamma}\propto -\gamma$, $k_{3,\beta}\propto \beta$.

\paragraph{Step 1: the off-diagonal block with respect to $B$}
For any $x=(x_\gamma,x_\beta)$ and $y=(y_\gamma,y_\beta)$,
\[
x^\T \left( \begin{array}{cc}
    0 & B \\
    B^\T & 0
\end{array}\right)y
= x_\gamma^\T B y_\beta + x_\beta^\T B^\T y_\gamma.
\]
From Eq.~(171), $B$ has the structural identities
\[
B_{\cdot,1}=0,\qquad B_{\cdot,3}=-B_{\cdot,2},\qquad B_{2,\cdot}=B_{3,\cdot}.
\]
Hence
\[
Bu = B(e_1+e_2+e_3)=0,\qquad
B\beta=\beta_1B_{\cdot,1}+\beta_2(B_{\cdot,2}+B_{\cdot,3})=0\ \ (\text{since }\beta_2=\beta_3),
\]
and
\[
s^\T B = (0,1,-1)B = B_{2,\cdot}-B_{3,\cdot}=0.
\]
Combining these, every matrix element $ k_i^\T \left( \begin{array}{cc}
    0 & B \\
    B^\T &  0
\end{array}\right) k_j$ vanishes:
either a factor $k_{1,\beta}=0$ appears, or a factor $s^\T B=0$ appears, or a factor
$B u=0$ / $B\beta=0$ appears.

\paragraph{Step 2: the lower-right block $C$.}
Here
\[
x^\T\begin{pmatrix}0&0\\0&C\end{pmatrix}y = x_\beta^\T C y_\beta.
\]
Each summand of $C$ in Eq.~(172) has the pattern
\[
C^{(\ell)}=
\begin{pmatrix}
0&-p_\ell&p_\ell\\
-p_\ell&a_\ell&0\\
p_\ell&0&-a_\ell
\end{pmatrix}
\quad\text{for some }(p_\ell,a_\ell),
\qquad C=\sum_\ell \omega_\ell C^{(\ell)}.
\]
A direct expansion gives, for any $x,y\in \mathbb{R}^3$,
\[
x^\T C^{(\ell)}y
= p_\ell(x_3-x_2)y_1 + p_\ell x_1(y_3-y_2) + a_\ell(x_2y_2-x_3y_3).
\]
Therefore, if $x_2=x_3$ and $y_2=y_3$, then $x^\T C^{(\ell)}y=0$ and hence $x^\T C y=0$.
On the symmetric manifold we have $(k_{2,\beta})_2=(k_{2,\beta})_3$ and $(k_{3,\beta})_2=(k_{3,\beta})_3$
(since $\beta_2=\beta_3$), while $k_{1,\beta}=0$. Thus, $k_i^\T \left( \begin{array}{cc}
    0 & 0 \\
    0 & C
\end{array}\right) k_j = 0$ for all $i,j$.

\paragraph{Step 3: the $J_1$ term.}
By Proposition~7.4,
\[
J_1=\widetilde J_1+\begin{pmatrix}0&A\\A^\T&0\end{pmatrix},
\]
where $A$ in Eq.~(109) satisfies the same cancellation identities as $B$:
\[
A_{\cdot,1}=0,\qquad A_{\cdot,3}=-A_{\cdot,2},\qquad A_{2,\cdot}=A_{3,\cdot}.
\]
Hence $Au=0$, $A\beta=0$ (since $\beta_2=\beta_3$), and $s^\T A=0$.
Repeating Step~1 with $B$ replaced by $A$, we get
\[
Q_K^\T\begin{pmatrix}0&A\\A^\T&0\end{pmatrix}Q_K=0.
\]
For the remaining part $\widetilde J_1$, the explicit computation
shows that its action on the kernel directions has no kernel component, i.e.
$Q_K^\T \widetilde J_1 Q_K=0$.
Therefore $Q_K^\T J_1 Q_K=0$.

\paragraph{Conclusion.}
Combining Step~1--3 yields $Q_K^\T H_1 Q_K=0$.
\end{proof}

\subsubsection{Fast transverse instability: $\Theta(\delta)$ eigenvalue}
\label{app:sec44:fast}

\begin{lemma}[A transverse eigenvalue of order $\Theta(\delta)$]
\label{lem:fast_transverse_app}
Let $c = \frac{\lambda \gamma_{ i \neq 1} + (1-\lambda) (\pi_1 \gamma_1 + (1- \pi_1) \gamma_{i \neq 1})}
{\pi_1 \gamma_1^2 \mathbb{P}_{1,2} + (1- \pi_1) \gamma_{i \neq 1}^2 \mathbb{P}_{i \neq 1, 2}} $ and assume that 
\begin{equation}
    c (1 - \pi_1) \gamma_{i \neq 1} \mathbb{P}_{i \neq 1, 2} - (\lambda + (1- \pi_1) (1-\lambda)) \neq 0.
\end{equation}
At the perturbed point, $\partial \mathcal L/\partial \Phi=\mathcal O(\delta^2)$ while
$\partial\mathcal L/\partial M=\Theta(\delta)$. Consequently, linearizing the full dynamics
(Eq.~\eqref{equ::linearized_dynamics}) yields a transverse positive eigenvalue of size $\Theta(\delta)$.
\end{lemma}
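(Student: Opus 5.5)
The proof has three steps. We evaluate $\partial\mathcal L/\partial M$ and $\partial\mathcal L/\partial\Phi$ at the perturbed point $\theta(\delta)=\theta_*+Q_R\zeta(0,\delta)$ to first order in $\delta$. We then note that the transverse linearization in \eqref{equ::linearized_dynamics} reduces, to leading order, to a symmetric block system driven by $G:=\partial\mathcal L/\partial M$. Finally, we read off its spectrum.

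\textbf{Step 1: the gradients at $\theta(\delta)$.} At $\theta_*$ both gradients vanish (Lemma~\ref{lem:grad_sym_zero_app}). By Proposition~\ref{prop:zeta_range}, $\theta(\delta)-\theta_*=\delta c\sqrt2\,q_2+O(\delta^2)$, i.e.\ $\Delta\beta=c\delta\,(0,1,-1)^\T$ with $\gamma$ unchanged. For $\Phi$, Lemma~\ref{lem:dA_dP_app} gives $\mathrm d(\partial\mathcal L/\partial\Phi)=0$ and Lemma~\ref{lem:QR_f1} gives $\partial_\delta(\partial\mathcal L/\partial\Phi)=0$, so $\partial\mathcal L/\partial\Phi=O(\delta^2)$. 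For $M$, we write
\[
G=\delta\Big(\partial_\delta\frac{\partial\mathcal L}{\partial M}+c\,\mathrm d\frac{\partial\mathcal L}{\partial M}[(0,s)]\Big)+O(\delta^2),\qquad s=(0,1,-1)^\T .
\]
Using \eqref{eq:delta_dLdM}, the first term is $-\big(\tfrac12\hat u\lambda+\tilde\pi(1-\lambda)\big)s^\T$, where $\hat u=(0,1,1)^\T$ and $\tilde\pi=(\pi_1,\tfrac{1-\pi_1}2,\tfrac{1-\pi_1}2)^\T$. Using $\mathrm d\mathbb P_i=\mathbb A_i\,\mathrm dM\,\Var(\mathbb P_i)$ with $\mathrm dM=\gamma\,(cs)^\T$, the second term is $\sum_i\pi_i\mathbb A_i^\T(\mathbb A_i\gamma)\,c\,s^\T\Var(\mathbb P_i)$. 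The symmetry $\mathbb P_{i,2}=\mathbb P_{i,3}$ gives $s^\T\Var(\mathbb P_i)=\mathbb P_{i,2}s^\T$. Hence $G=\delta\,g\,s^\T+O(\delta^2)$ for an explicit vector $g\in\mathbb R^3$. Its low-frequency components are proportional to
\[
c(1-\pi_1)\gamma_{i\neq1}\mathbb P_{i\neq1,2}-\big(\lambda+(1-\pi_1)(1-\lambda)\big),
\]
up to a factor $\tfrac12$. This is exactly the quantity the hypothesis assumes nonzero, so $g\ne0$ and $\|G\|=\Theta(\delta)$. I expect this bookkeeping to be the main obstacle: the $\pi_1$-row and the low-frequency rows combine differently, and the nondegeneracy condition must be matched precisely.

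\textbf{Step 2: reduction of the transverse system.} Fix $\alpha_{1,\perp}$ and set $x=\Delta W_0\alpha_{1,\perp}$ and $y=\Delta W_1^\T\alpha_{1,\perp}$. Since $W_0,W_1,W_Q,W_K$ are all rank one along $\alpha_1$, the $\Phi$-coupled terms in \eqref{equ::linearized_dynamics} either carry a factor $\partial\mathcal L/\partial\Phi=O(\delta^2)$ or involve $\Delta W_Q^\T\alpha_{1,\perp}$ and $\Delta W_K^\T\alpha_{1,\perp}$. These attention perturbations obey equations with coefficients $O(\delta^2)$. Dropping the $O(\delta^2)$ couplings leaves
\[
\dot x=-G\,y,\qquad \dot y=-G^\T x .
\]
The error is $O(\delta^2)$ in operator norm, and by Weyl's inequality it cannot cancel a $\Theta(\delta)$ eigenvalue.

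\textbf{Step 3: spectrum.} The matrix $\begin{pmatrix}0&-G\\-G^\T&0\end{pmatrix}$ is symmetric with eigenvalues $\pm\sigma_k(G)$. Because $G=\delta\,g s^\T+O(\delta^2)$ has leading singular value $\delta\|g\|\,\|s\|=\sqrt2\,\delta\|g\|$, there is an eigenvalue $+\sqrt2\,\delta\|g\|+O(\delta^2)=\Theta(\delta)$. The corresponding eigenvector is $(x,y)\propto(g,-s)$ up to sign, which is genuinely normal to $\alpha_1$. This gives the claimed fast transverse instability.
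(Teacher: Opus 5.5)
Your proposal is correct and follows the same route as the paper. You obtain the $\Theta(\delta)$ part of $\partial\mathcal L/\partial M$ by adding $\partial_\delta(\partial\mathcal L/\partial M)$ to the first variation along the range shift $\Delta\beta=\delta c\,(0,1,-1)^\T$, match the low-frequency entries of $g$ to one half of the assumed nonzero quantity, and use $\partial\mathcal L/\partial\Phi=O(\delta^2)$ to reduce the transverse linearization to $\dot x=-Gy$, $\dot y=-G^\T x$; your explicit spectral reading ($\pm\sigma_k(G)$ with eigenvector $(g,-s)$) and the Weyl control of the $O(\delta^2)$ attention couplings make precise what the paper asserts in one line.
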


\begin{proof}
Since $\mathrm{d} \frac{\partial \mathcal{L}}{\partial \Phi} = 0$ and $\partial_\delta \frac{\partial \mathcal{L}}{\partial \Phi} = 0$, we get $\frac{\partial \mathcal{L}}{\partial \Phi} = \mathcal{O} (\delta^2)$ after perturbation.

Next we compute perturbed $\frac{\partial \mathcal{L}}{\partial M}$. By Lemma \ref{lem:dA_dP_app}, we get the new term is 
\begin{equation*}
c \pi_1 \gamma_1 \left( \begin{array}{c}
     1 \\
     0 \\
     0
\end{array}\right) \left( 0, \mathbb{P}_{1,2}, -\mathbb{P}_{1,2} \right) + c (1- \pi_1) \gamma_{i \neq 1} \left( \begin{array}{c}
     0 \\
     \frac{1}{2} \\
     \frac{1}{2}
\end{array}\right) \left( 0, \mathbb{P}_{i \neq 1,2}, -\mathbb{P}_{i \neq 1,2} \right)  
\end{equation*}
where $c = \frac{\lambda \gamma_{ i \neq 1} + (1-\lambda) (\pi_1 \gamma_1 + (1- \pi_1) \gamma_{i \neq 1})}
{\pi_1 \gamma_1^2 \mathbb{P}_{1,2} + (1- \pi_1) \gamma_{i \neq 1}^2 \mathbb{P}_{i \neq 1, 2}} \delta$.

However, from Eq. (\ref{eq:delta_dLdM}), we find that 
\begin{equation*}
    \begin{aligned}
    \frac{\partial}{\partial \delta} \frac{\partial \mathcal{L}}{\partial M}
    &=
    - \left(\begin{array}{c}
    0\\[1mm]
    \frac{1}{2}\\[1mm]
    \frac{1}{2}
    \end{array}\right) (0, \lambda, -\lambda)
    -  \left(\begin{array}{c}
    \pi_1\\ \frac{1}{2} (1- \pi_1)\\  \frac{1}{2} (1- \pi_1)
    \end{array}\right) (0, 1-\lambda, -(1-\lambda)).
    \end{aligned}
    \end{equation*}

The two terms cannot cancel each other out by our assumption (A parameter that does not meet the condition is a zero test set), thus resulting in an $\Theta (\delta)$ term.
\end{proof}

\section{Detailed Experiment Setup}\label{sec:appendix_setup}
\subsection{Detailed Synthetic Experiment Setup}
\paragraph{Dataset}
To better induce an exponential decay in the stationary distribution, and to more clearly illustrate the phase transition, we adopt an exponentially decaying form
\[
\pi_0 = \bigl(1/2,\;1/4,\;\ldots,\;1/2^d\bigr),
\qquad
\pi = \pi_0 / \|\pi_0\|.
\]
Following \citep{makkuva2025attention}, diagonal dominant transition matrices are unfavorable local minima during optimization, we set $\lambda = 0.8$ to ensure diagonal dominance. For a fixed sequence length of $20$, we sample $100{,}000$ sequences $\{X_i\}_{i=1}^{100{,}000}$ from the resulting Markov chain, and use the last token $X_i[-1]$ as the training label. By the Markov property, this label is completely determined by the second-to-last token $X_i[-2]$. Accordingly, we group both the training and test sets by the value of $X_i[-2]$, denoting the group indexed by state $k$ as $S_k$.

\paragraph{Model}
We follow exactly the model specification in Def.~\ref{defi::model}, with embedding dimension $m=256$. Since our theoretical analysis is derived under small initialization, we adopt the initialization scheme of \citep{zhangzhongwang, zhang2025complexity}, initializing each weight independently as $\mathcal{N}(0, 1/m^2)$.

\paragraph{Training}
We train the model using the Adam optimizer with a fixed learning rate of $1.5\times 10^{-4}$ and do not use any learning-rate scheduler.

\subsection{Analysis Tools}

\paragraph{Condensation Heatmap}
To quantify parameter condensation, we compute the pairwise cosine similarity between the input-weight vectors of neurons in the weight matrix $W$. Specifically, for the $i$-th and $j$-th neurons, we define
\[
C(i,j)=\frac{W[i,:]\cdot W[j,:]}{\|W[i,:]\|_2\,\|W[j,:]\|_2}.
\]
For clearer visualization, we permute the rows and columns of the similarity matrix $C$ and display the reordered matrix in Fig.~\ref{fig:TotalPlot1}(A).

\paragraph{Embedding Visualization}
Let $W_{0,t}$ denote the embedding parameters at training epoch $t$ for $t=0,\ldots,T$. We form the collection of embedding snapshots $\{W_{0,0},\ldots,W_{0,T}\}$ and apply principal component analysis (PCA) to obtain the leading eigen-directions $\hat e_1$ and $\hat e_2$. We then project the embedding vectors onto $\hat e_1$ and $\hat e_2$ to produce the two-dimensional visualization shown in Fig.~\ref{fig:TotalPlot1}(B).

\end{document}